\documentclass{article}
\usepackage{fullpage}
\usepackage{amsmath}
\usepackage{amsthm,amsfonts,amssymb}
\usepackage{xspace}
\usepackage{bbm}
\usepackage{graphicx}
\usepackage{bbm}
\usepackage{hyperref}   
\usepackage[noend]{algorithmic}
\usepackage[ruled,vlined]{algorithm2e}
\usepackage{comment}
\usepackage{natbib}
\usepackage{float}

	\hypersetup{
			colorlinks=true,
			linkcolor=[rgb]{0,0,.5},
			urlcolor=[rgb]{0,0,.5},
			citecolor=[rgb]{0,0,.5},
			pdfstartview=FitH}

\usepackage{scrextend}

\usepackage{accents}

\usepackage{graphicx} 
\usepackage{booktabs} 

\usepackage{caption}

\usepackage{xcolor}

\usepackage{pgfplots}

\pgfplotsset{compat=1.10}


\usepackage{tikz}
\usepackage{authblk}

\usepackage{enumitem}
\usepackage[caption=false]{subfig}
\usepackage{mathtools}
\usetikzlibrary{quotes,positioning}

\newtheorem{theorem}{Theorem}[section]
\newtheorem{lemma}{Lemma}[section]
\newtheorem{corollary}{Corollary}[section]
\newtheorem{definition}{Definition}[section]
\newtheorem{remark}{Remark}[section]

\newtheorem{obs}{Observation}[section]

\usepackage{xcolor}

\newcommand{\ind}{\mathbbm{1}}

\newcommand{\E}{\mathop{\mathbb{E}}}
\newcommand{\argmin}{\mathop{\mathrm{argmin}}}
\newcommand{\argmax}{\mathop{\mathrm{argmax}}}
\newcommand{\cX}{\mathcal{X}}
\newcommand{\cY}{\mathcal{Y}}

\newcommand{\cP}{\mathcal{P}}
\newcommand{\cPmean}{\cP_{\text{mean}}}

\newcommand{\cPinterval}{\cP_{\text{interval}}}
\newcommand{\cG}{\mathcal{G}}

\newcommand{\cD}{\mathcal{D}}
\newcommand{\bmu}{\overline{\mu}}
\newcommand{\mk}{m^k}
\newcommand{\bmk}{\overline{m}^k}
\newcommand{\bmkp}{\overline{m}^{k\prime}}
\newcommand{\hbmu}{h^{\bmu}}
\newcommand{\hbmk}{h^{\bmk}}

\newcommand{\tth}{^\text{th}}

\newcommand{\bell}{\overline{\ell}}

\newcommand{\bH}{\overline{H}}

\newcommand{\bu}{\overline{u}}

\newcommand{\Binv}{B^{-1}}
\newcommand{\Bm}{B_n}
\newcommand{\Bminv}{B_n^{-1}}
\newcommand{\Bmp}{B_{n'}}
\newcommand{\Bmmp}{B_{n, n'}}

\newcommand{\centerbucket}{\hat\mu_i}
\newcommand{\centerbucketmu}{\hat\mu_{\bmu}}
\newcommand{\centerbucketmut}{\hat\mu_{\bmu_t}}

\newcommand{\tL}{\tilde{L}}
\newcommand{\cdotk}{ }
\newcommand{\tV}{\tilde{V}}
\newcommand{\tpi}{\tilde{\pi}}
\newcommand{\tZ}{\tilde{Z}}
\newcommand{\ty}{\tilde{y}}

\newcommand{\cPmm}{\mathcal{P}_{\text{(mean,moment)}}}

\usepackage{thmtools} 
\usepackage{thm-restate}

\usepackage{amsmath}
\usepackage{pgfplots}

\title{Online Multivalid Learning:\\ Means, Moments, and  Prediction Intervals}

\author[1]{Varun Gupta}
\author[1]{Christopher Jung}
\author[1]{Georgy Noarov}
\author[2]{Mallesh M. Pai}
\author[1]{Aaron Roth}
\affil[1]{University of Pennsylvania Department of Computer and Information Science}
\affil[2]{Rice University Department of Economics}

\begin{document}
\maketitle

\begin{abstract}
We present a general, efficient technique for providing contextual predictions that are ``multivalid'' in various senses, against an online sequence of adversarially chosen examples $(x,y)$. This means that the resulting estimates correctly predict various statistics of the labels $y$ not just \emph{marginally} --- as averaged over the sequence of examples --- but also conditionally on $x \in G$ for any $G$ belonging to an arbitrary intersecting collection of groups $\cG$. 

We provide three instantiations of this framework. The first is mean prediction, which corresponds to an online algorithm satisfying the notion of multicalibration from \cite{multicalibration}. The second is variance and higher moment prediction, which corresponds to an online algorithm satisfying the notion of mean-conditioned moment multicalibration from \cite{momentmulti}. Finally, we define a new notion of prediction interval multivalidity, and give an algorithm for finding prediction intervals which satisfy it. Because our algorithms handle adversarially chosen examples, they can equally well be used to predict statistics of the residuals of arbitrary point prediction methods, giving rise to very general techniques for quantifying the uncertainty of predictions of black box algorithms, even in an online adversarial setting. When instantiated for prediction intervals, this solves a similar problem as conformal prediction, but in an adversarial environment and with multivalidity guarantees stronger than simple marginal coverage guarantees. 
\end{abstract}

\thispagestyle{empty} \setcounter{page}{0}
\clearpage
\tableofcontents
\thispagestyle{empty} \setcounter{page}{0}
\clearpage

\section{Introduction}
Consider the problem of making predictions about the prognoses of patients with an infectious disease at the early stages of a pandemic. To be able to guide the allocation of medical interventions, we may want to predict, from each patient's observable features $x$, things such as the expected severity of the disease $y$ in two days' time. And since we will be using these predictions to allocate scarce resources, we will want to be able to quantify the uncertainty of our predictions: perhaps by providing estimates of the variance of outcomes, or perhaps by providing prediction intervals at a desired level of confidence.  

This is an \emph{online} problem because we must start making predictions before we have much data, and the predictions are needed immediately upon the arrival of a patient. It is also a problem in which the environment is rapidly changing: the distribution of patients changes as the disease spreads through different populations, and the conditional distribution on outcomes given features changes as we learn how to better treat the disease. 

How can we approach this problem? The  \emph{conformal prediction} literature \citep{conformal} aims to equip arbitrary regression and classification procedures for making point predictions with prediction intervals that contain the true label with (say) 95\% probability. But for the application in our example, conformal prediction has two well-known shortcomings:
\paragraph{Marginal Guarantees:} 
Conformal prediction only gives \emph{marginal} prediction intervals: in other words, it provides guarantees that (e.g.) 95\% of the prediction intervals produced over a sequence of predictions cover their labels. But these guarantees are averages over what are typically large, heterogeneous populations, and therefore provide little guidance for making decisions about individuals. For example, it would be entirely consistent with the guarantee of a 95\% marginal prediction interval $[\ell_t,u_t]$ if for individuals from some demographic group $G$ making up less than $5\%$ of the population, their labels $y_t$ fall outside of $[\ell_t,u_t]$ 100\% of the time.%
\footnote{Even more insidious reversals, albeit not in the context of conformal prediction, have been observed on real world data---see the Wikipedia entry for Simpson's paradox (\url{https://en.wikipedia.org/wiki/Simpson\%27s_paradox}) for several examples.}
One could run many parallel algorithms for different demographic groups $G_i$, but then there would be no clear way to interpret the many different predictions one would receive for an individual belonging to several demographic groups at once ($x \in G_i$ for multiple groups $G_i$); for example, prediction intervals corresponding to different demographic groups could be disjoint. To see that marginal guarantees on their own are extremely weak, consider a batch (distributional) setting in which labelled points are drawn from a fixed distribution $\cD$: $(x,y) \sim \cD$. Then we could provide valid 95\% marginal prediction intervals by entirely ignoring the features and giving a fixed prediction interval of $[\ell,u]$ for every point, where $[\ell,u]$ is such that $\Pr_{(x,y) \sim \cD}[y \not\in [\ell,u]] = 0.05$. 

\paragraph{Distributional Assumptions:} 
The conformal prediction literature almost exclusively assumes that the data is drawn from an \emph{exchangeable} distribution (for example, i.i.d. data satisfies this property), and does not offer any guarantees when the data can quickly change in unanticipated or adversarial ways.

\paragraph{}
In this paper we give techniques for dealing with both of these problems (and similar issues that arise for the problem of predicting label means and higher moments) by drawing on ideas from the literature on \emph{calibration} \cite{dawid1982well,FV98}. Calibration is similar to conformal prediction in that it aims to give point estimates in nonparametric settings that satisfy marginal rather than conditional guarantees (i.e. that agree with the true distribution as averaged over the data rather than conditioned on the features of a particular data point). But calibration is concerned with predicting label expectations, rather than giving prediction intervals. Informally speaking, calibrated predictions satisfy that when averaging over all rounds over which the prediction was (approximately) $p$, the realized labels average to (approximately) $p$, for all $p$. Note that in a distributional setting, if a learner truly was predicting the conditional label expectations conditional on features $p_x = \E_{(x,y) \sim \cD}[y|x]$, then the forecasts would be calibrated --- but just as with marginal prediction intervals, calibration on its own is a very weak condition in a distributional setting. For example, a learner could achieve calibration simply by making a single, constant prediction of $p = \E_{(x,y)\sim \cD}[y]$ for every point, and so calibrated predictions need not convey much information.  Thus, just like the conformal prediction literature, the calibration literature is primarily focused on the online prediction setting. But from early on, the calibration literature has focused on the \emph{adversarial} setting in which no distributional assumptions need to be made at all \cite{FV98,FL99,sandroni2003calibration}. 

Calibration also suffers from the weaknesses that come with marginal guarantees: namely that calibrated predictions may have little to do with the conditional label expectations for members of structured sub-populations. \cite{multicalibration} proposed an elegant solution to this problem in the batch setting, when predicting expectations, which they termed ``multicalibration''. Informally speaking, a guarantee of multicalibration is parameterized by a large collection of potentially intersecting subsets of the feature space $\cG$ (corresponding e.g. to demographic groups or other categories relevant for the prediction task at hand). Multicalibration asks for predictions that are not just calibrated over the full distribution $\cP$, but are also simultaneously calibrated over all of the induced distributions that are obtained by conditioning on membership in a set $G \in \cG$. Moreover, \citet{multicalibration} showed how to obtain multicalibrated estimators in the batch, distributional setting with sample complexity that depends only logarithmically on $|\cG|$. \citet{momentmulti} showed how to extend the notion of (multi)calibration from expectations to  variances and other higher moments --- and derived algorithms for obtaining such estimates in the batch setting. 

\subsection{Our Results and Techniques}
In this paper, we give a general method for obtaining different kinds of ``multivalid'' predictions in an online, adversarial setting. This includes mean estimates that satisfy the notion of mean multicalibration from \cite{multicalibration}, moment estimates that satisfy the notion of mean-conditioned moment multicalibration from \citet{momentmulti}, and prediction intervals which satisfy a new notion of multivalidity, defined in this paper. The latter asks for tight marginal prediction intervals, which are simultaneously valid over each demographic group $G \in \cG$. We give a formal definition in Section~\ref{sec:prelims} (and review the definitions of mean and moment multicalibration), but informally, multivalidity for prediction intervals asks, given a target coverage probability $1-\delta$, that for each group $G \in \cG$ there be roughly a $1-\delta$-fraction of points $(x_t,y_t)$  with $x_t \in G$ whose label is contained within the predicted interval ($y_t \in [\bell_t,\bu_t)$). In fact, we ask for the stronger calibration-like guarantee, that these marginal coverage guarantees hold even conditional on the prediction interval, which (among other things) rules out the trivial solution to marginal coverage that predicts the full interval with probability $1-\delta$ and an empty interval with probability $\delta$. Because our algorithms handle adversarially selected examples, they can equally well be used to augment arbitrary point prediction procedures which give predictions $f_t(x_t) = \hat y_t$, independently of how they are trained: We can simply feed our algorithms for multivalid predictions with the \emph{residuals} $\hat y_t - y_t$. For example, we can get variance estimates or prediction intervals for the residuals to endow the \emph{predictions} of $f_t$ with uncertainty estimates. Endowing point predictors with prediction intervals in this way provides an alternative to conformal prediction that gives stronger-than-marginal (multivalid) guarantees, under much weaker assumptions (adversarially chosen examples). In general, for each of our techniques, if we instantiate them with the trivial group structure (i.e. one group, containing all points), then we recover standard (or slightly stronger) marginal guarantees: i.e. simple calibrated predictions and simple marginal prediction intervals.%
\footnote{In fact, even with the trivial group structure, our guarantees (with appropriately set parameters) remain stronger than marginal coverage. This is because our prediction intervals remain valid even conditioning on the prediction that we made. For example, a prediction interval $[\ell,u)$ is valid not just as averaged over all rounds $t$, but also as averaged over all rounds $t$ for which we made that specific prediction: $t : [\bell_t,\bu_t) = [\ell,u)$.} But as we  enrich our collection of sets $\cG$, our guarantees become correspondingly stronger.

\paragraph{The General Strategy} 
We derive our online algorithms using a general strategy that dates back to \citet{FL99}, who used it to give online algorithms for the problem of simple calibration in a setting without features (see also the argument by Sergiu Hart, communicated in \citet{FV98} and more recently elaborated on in \cite{Hart20}). In our context, the general strategy proceeds as follows:
\begin{enumerate}
    \item Define a surrogate loss function, such that if the surrogate loss is small at the end of $T$ rounds, then the learner's predictions satisfy our chosen notion of multivalidity over the  empirical distribution of the history of the interaction.
    \item Argue that if at each round $t$, the adversary's chosen distribution over labelled examples were known to the learner, then there would be some prediction that the learner could make that would guarantee that the expected increase in the surrogate loss function at that round would be small. This step is often straightforward, because once we fix a known data distribution $\cD$, ``true distributional quantities'' like conditional label expectations, conditional label variances, conditional label quantiles, etc, generally satisfy our corresponding multivalidity desideratum by design. \label{step2} 
    \item Appeal to the minimax theorem to conclude that there must therefore exist a randomized prediction strategy for the learner that guarantees that the expected increase in the surrogate loss function is small for \emph{any} choice of the adversary. \label{step3}
\end{enumerate}

On its own, carrying out this strategy for a particular notion of multivalidity proves the \emph{existence} of an algorithm that can obtain the appropriate notion of multivalidity against an adversary; but turning it into an actual (and efficient) algorithm requires the ability to \emph{compute} at each round the equilibrium strategy whose existence is shown in Step~\ref{step3} above. 

We instantiate this general strategy in Section~\ref{sec:onlinemeanmulti} for the case of mean multicalibration, which also serves as a template for our derivation and analysis of algorithms for moment multicalibration in Section~\ref{sec:onlinemoment} and prediction interval multivalidity in Section~\ref{sec:onlinemultivalid}. The framework of our analysis is the same in each case, but the details differ: to carry out Step~\ref{step2}, we must bound the value of a different game, and to carry out Step~\ref{step3}, we must solve for the equilibrium of a different game. In each case, we obtain efficient online algorithms for obtaining  high probability $\alpha$-approximate multivalidity bounds (of different flavors), with $\alpha$ scaling roughly as $\alpha \approx \sqrt{\log |\cG|/T}$, over interactions of length $T$ --- but see Sections \ref{sec:meanexistential}, \ref{sec:momentexistential}, and \ref{sec:intervalexistential} for exact theorem statements. In all cases, our algorithms have per-round runtime that is linear in $|\cG|$, and polynomial in the other parameters of the problem. In fact, both our run-time and our convergence bounds can be improved if each individual appears in only a bounded number of groups. Our algorithms can at each step $t$ be implemented in time linear in the number of groups $G \in \cG$  that \emph{contain the current example $x_t$}. This is linear in $|\cG|$ in the worst case, but can be substantially smaller. Similarly, we show in Appendix \ref{sec:moregroups} that if each individual appears in at most $d$ groups, then the $\log|\cG|$ term in our convergence bounds can be replaced with $\log (d)$, which gives informative bounds even if $\cG$ is infinitely large.  Without assumptions of this sort, running time that is polynomial in $|\cG|$ (rather than logarithmic in $|\cG|$, as our convergence bounds are) is necessary in the worst case, even for mean multicalibration in the offline setting, as shown by \citet{multicalibration}. 

Adapting the original approach of \cite{FL99} runs into several obstacles, stemming from the fact that the \emph{action space} of both the learner and the adversary and the \emph{number of constraints} defining our calibration desideratum are both much larger in our setting. Consider the case of mean prediction --- in which the goal is to obtain calibrated predictions.  In the featureless setting studied by \cite{FL99}, the action space for the learner corresponds to a discretization of the real unit interval $[0,1]$, and the action space of the adversary is binary. In our setting, in which data points are endowed with features from a large feature space $\cX$, the learner's action space corresponds to the set of all \emph{functions} mapping $\cX$ to $[0,1]$, and the adversary's action space corresponds to the set of all labelled examples $\cX\times [0,1]$. Similarly, for simple calibration, the number of constraints is equal to the chosen discretization granularity of the unit interval $[0,1]$, whereas in our case, the number of constraints also grows linearly with $|\cG|$, the number of groups over which we want to be able to promise guarantees.

\paragraph{Convergence Rates and Sample Complexity}
The surrogate loss function used by \cite{FL99} bounds the $\ell_2$ calibration error --- i.e. the average squared violation of all of the constraints used to define calibration. Because all of the notions of multivalidity that we consider consist of a set of constraints of size scaling linearly with $|\cG|$, if we were to attempt to bound the $\ell_2$ violation of our multivalidity constraints, we would necessarily obtain convergence bounds that scale polynomially with $|\cG|$. Instead we use a different surrogate loss function  --- a sign-symmetrized version of an exponential soft-max  --- that can be used to bound the $\ell_\infty$ violation of our multivalidity constraints, and allows us to obtain bounds that scale only logarithmically with $|\cG|$. For moment multicalibration, we face the further complication of needing to define a potential function bounding a linear surrogate for what is ultimately a nonlinear measure of distributional fidelity. An outline of the specific new ideas needed here can be found in Section~\ref{sec:moment-outline}. For interval multivalidity, we face the further complication that tight prediction intervals need not exist even in the distributional setting, for worst-case distributions. An outline of the new ideas we need to overcome this can be found in Section~\ref{sec:interval-outline}. Finally, we note that $\ell_\infty$ violation is consistent  with how the existing literature on batch multicalibration \citep{multicalibration} has quantified approximation guarantees.  In fact, by using standard online-to-offline reductions, we are able to derive new, optimal sample complexity bounds for mean and moment multicalibration for the \emph{batch distributional} setting in Appendix~\ref{sec:batch} that improve on the sample complexity bounds given in \citet{multicalibration,momentmulti}. This is because when applied to the batch setting, our online algorithms take only a single pass through the data, and avoid issues related to adaptive data re-use that complicated previous algorithms in the batch setting. 

\paragraph{Computation of Equilibrium Strategies}
To compute equilibria of the large action space games we define,  we do not attempt to directly compute or represent the function that we use at each round $t$ to map features to labels. Instead, we represent this function implicitly by ``lazily'' solving a smaller equilibrium computation problem only after we have observed the adversary's choice of feature vector $x$ (but before we have observed the label $y$) to compute a distribution over predictions. We show in each of our three settings that this computation is tractable. In the case of mean multicalibration, we are able to analytically derive a simple algorithm for sampling from this equilibrium strategy, presented in Section~\ref{sec:meanalg}. For mean-conditioned $k\tth$ moment multicalibration we show that the equilibrium can be found using a linear program with polynomially many variables and $2^k+1$ constraints. For the most interesting cases, $k$ is a small constant (e.g. for variance, $k = 2$, and so the linear program has only 5 constraints). Even when $k$ is large, we show that this linear program has a separation oracle that runs in time $O(k)$, and so it can be solved efficiently via the Ellipsoid algorithm. We show in Appendix~\ref{sec:momentellipsoid} that there always exists an equilibrium for the learner with support over at most $k+1$ many predictions, limiting the extent to which it needs to deploy randomization.  Finally, for prediction interval multivalidity, we show in Section~\ref{sec:intervalalg} that we can express the equilibrium computation problem as a linear program. Although the linear program is naively defined by infinitely many constraints, we show that it can ultimately be represented with only finitely many constraints, and that it has an efficient separation oracle, so can be solved in polynomial time using the Ellipsoid algorithm. 

\paragraph{Advantages of Conformal Prediction}
We have thus far emphasized the advantages that our techniques have over conformal prediction ---  but we also want to highlight the strengths of conformal prediction relative to our work, and directions for future improvement. 
Conformal prediction aims to obtain  marginal coverage with respect to some (unknown) underlying distribution. As a result of the distributional assumption, it is able to obtain coverage (over the randomness of the distribution) at a rate of coverage $1 - \delta + O(1/T)$ \citep{lei2018distribution}. In contrast, in our setting, there is no underlying distribution. We therefore give guarantees on  \emph{empirical coverage} --- i.e the fraction of labels that our predicted intervals have covered in the realized sequence of examples. As a result, our coverage bounds necessarily have error terms that tend to $0$ at a rate of $O(1/\sqrt{T})$, over sequences of length $T$. We note that conformal prediction methods also obtain \emph{empirical} coverage on the order of $1-\delta \pm O(1/\sqrt{T})$, as our methods do \citep{lei2018distribution}. Conformal prediction methods  naturally give one sided coverage error on the distribution (i.e. the coverage probability is always $\geq 1-\delta$), whereas as we present our bounds, our empirical coverage has two sided error. We note that there is a simple but inelegant way to use our techniques to obtain one sided coverage: run our algorithms with coverage parameter $1-\delta' = 1-\delta/2$, and predict trivial coverage intervals until our error bounds are $\leq \delta/2$\footnote{Restarting periodically with $\delta'$ closer to $\delta$ if we want to asymptotically converge to  exact coverage}.  Techniques from the conformal prediction literature also can be applied to very general label domains $\cY$, and can be used to produce very general kinds of prediction \emph{sets}. In our paper, we restrict attention to real-valued labels $\cY = [0,1]$ and prediction \emph{intervals}. We do not believe that there are any fundamental obstacles to generalizing our techniques to other label domains and prediction sets, and this is an interesting direction for future work. Finally, the conformal prediction literature has developed a number of very simple, practical techniques. In this paper, we give polynomial time algorithms, of varying complexity. Our algorithm for mean multicalibration in Section~\ref{sec:onlinemeanmulti} is very simple to implement, but our algorithm for multivalid interval prediction in Section~\ref{sec:onlinemultivalid} requires solving a linear program with a separation oracle. Another important direction for future work is reducing the complexity of our techniques, and doing empirical evaluations.

\subsection{Additional Related Work}
Work on calibrated mean prediction dates back to \cite{dawid1982well}. \cite{FV98} were the first to show that in the online setting without features, it is possible to obtain asymptotic calibration even against an adversary. Once this initial result was proven, a number of proofs of it were given using different techniques, including Blackwell's approachability theorem \citep{Fos99} and a non-constructive minimax argument (originally communicated verbally by Sergiu Hart, appearing first in \cite{FV98}, and more recently formalized in \cite{Hart20}). This argument was ``non-constructive'' because it was a minimax argument over the entire algorithm design space. \cite{FL99} gave a more tractable per-round minimax argument, which we adapt to our work --- although they were satisfied with an existential argument, and do not derive a concrete algorithm. The algorithm we give for online multicalibration is similar to the algorithm given by \cite{foster2019forecast} for the simple calibration problem in the special case of a featureless setting and the trivial group structure.   \cite{lehrer2001any,sandroni2003calibration} (and in a slightly different context, \cite{fudenberg1999conditional}) generalized this literature and showed that it was possible to extend these ideas in order to satisfy dramatically more demanding notions of calibration (e.g. calibration on all computable subsequences of rounds).
This line of work primarily gives limit results via non-constructive arguments without establishing rates. There are two notable exceptions. \cite{foster2011complexity} give a non-constructive argument establishing that it is possible to obtain mean calibration loss $\tilde O(\sqrt{\frac{\log K}{T}})$ with respect to a set of $K$ ``checking rules'' which define subsequences over which the algorithm must be calibrated. These results are derived in a setting without features $x$, but we believe their techniques could be used to establish the same convergence bounds that we do, for mean multicalibration: $ \alpha = \tilde O(\sqrt{\frac{\log |\cG|}{T}})$.  \cite{foster2006calibration}  give an efficient algorithm based on ridge-regression which can be used to achieve what we call \emph{mean consistency}\footnote{This is also what is known as \emph{multi-accuracy} in \citep{multicalibration,multiaccuracy}.} on a collection of sets $\cG$ with error rates converging as  $\alpha = \tilde O(\sqrt{\frac{|\cG|}{T}})$. Their algorithm is deterministic, which in particular means it cannot be used to achieve the standard notion of calibration, which can only be achieved by randomized algorithms in adversarial environments \citep{oakes1985self}. It can be used to achieve what is called ``weak calibration'' by \cite{kakade2004deterministic} and ``smooth calibration'' by \cite{foster2018smooth}  --- a relaxation that can be obtained by deterministic algorithms. In comparison, our algorithm for mean multicalibration achieves the standard notion of calibration with the optimal sample complexity dependence on $\log |\cG|$, while simultaneously being explicitly defined and computationally efficient.

There has also been a recent resurgence of interest in calibration in the computer science community, in part motivated by fairness concerns \citep{KMR16,Cho17,pleiss2017fairness}. It is from this literature that the original proposal for multicalibration arose \citep{multicalibration}, as well as the related notion of multiaccuracy \citep{multicalibration,multiaccuracy}.   \cite{multiUC} prove uniform convergence bounds for multicalibrated predictors, \cite{dwork2019learning} draw connections between multicalibrated predictors and notions of fair rankings, and \cite{dwork2020outcome} define a notion of outcome indistinguishability related to distribution testing, and show close connections to multicalibration. \cite{momentmulti} extend the notion of mean calibration to variances and higher moments, and give efficient algorithms for learning moment multicalibrated predictors. \cite{momentmulti} also show that their moment predictors can be used to derive \emph{conservative} multivalid prediction intervals, using Chebyshev's inequality and generalizations to higher moments. In general, however, these moment-based inequalities give intervals that may cover their label much more frequently than the target $1-\delta$ coverage probability, and cannot achieve the kinds of tight multicoverage guarantees that we obtain in this work. All of this work operates in the batch, distributional setting. Recently, \citet{qiao2020stronger} proved lower bounds for simple mean calibration in the online setting, showing that no algorithm can obtain rates better than $O(T^{-0.472})$ against an adversary. At first blush, our upper bounds appear to contradict these lower bounds --- but they do not, because we study convergence in the $\ell_{\infty}$ sense, whereas they study it in the $\ell_1$ sense. 

Conformal prediction is motivated similarly to calibration, but aims to produce marginal prediction intervals rather than mean estimates --- see \cite{conformal} for an overview. The problems that we highlight --- namely, that marginal guarantees are weak, and that this literature relies on strong distributional assumptions --- have been noted before. For example, \cite{barber2019limits} prove that even in the distributional setting, \emph{conditional} prediction intervals are impossible to provide, and aim instead for a goal that is similar to ours: providing marginal prediction intervals that are valid as averaged over a large number of subgroups $\cG$. They take a  conservative approach, by using a holdout set to estimate empirical prediction intervals separately for each group, and then taking the union of all of these prediction intervals over the demographic groups of a new individual. The result is that their prediction intervals --- unlike ours --- do not become tight, even in the limit.  \cite{chernozhukov2018exact} consider the problem of conformal prediction for time series data, for which the exchangeability assumption may not hold. They show that if the data comes from a rapidly mixing process (so that, in particular, points that are well separated in the sequence are approximately independent) then it is still possible to obtain approximate marginal coverage guarantees.  \cite{tibshirani2019conformal} consider the problem of conformal prediction under \emph{covariate shift}, in which the marginal distribution on features $\cX$ differs between the training and test distributions, but the conditional distribution on labels $\cY | \cX$ remains the same. They show how to adapt techniques from conformal prediction when the likelihood ratio between the training and test distribution is known.  In the distributional setting, \cite{gupta2020distribution} have proven close relationships between calibration, confidence intervals, and prediction intervals.

Finally, the notion of multicalibration is related to subgroup fairness notions \citep{gerrymandering,subgroup,cbawareness} that ask for statistical ``fairness'' constraints of various sorts (beyond calibration) to hold across all subgroups defined by some rich class $\cG$. See \cite{fairsurvey} for a survey.

\section{Preliminaries}
\label{sec:prelims}
\subsection{Notation}
We write $\cX$ to denote a feature domain and $\cY = [0,1]$ to denote a label domain. We write $\cG \subseteq 2^\cX$ to denote a collection of subsets of $\cX$. Given any $x \in \cX$, we write $\cG(x)$ for the set of groups that contain $x$, i.e.  $\cG(x) = \{G \in \cG: x \in G\}$. Given an integer $T$ we write $[T]$ to denote the set of integers $[T] = \{1,\ldots,T\}$.  In general, we denote our random variables with tildes (e.g. $\tilde X$, $\tilde Y$) to distinguish them from their realizations (denoted e.g. $X$, $Y$). Given a finite set $A$, we write $\Delta A$ for the probability simplex over the elements in $A$. 

\subsection{Online Prediction}
Online (contextual) prediction proceeds in rounds that we index by $t \in [T]$, for a given finite horizon $T$. In each round, an interaction between a \emph{learner} and an \emph{adversary} proceeds as follows. In each round $t$:
\begin{enumerate}
    \item The \emph{adversary} chooses a joint distribution over feature vectors $x_t \in \cX$ and labels $y_t \in \cY$. The learner receives $x_t$ (a realized feature vector), but no information about $y_t$ is revealed. 
    \item The \emph{learner} chooses a distribution over predictions $p_t \in \cP$. (We will consider several different kinds of predictions in this paper, and so are agnostic to the domain of the prediction for now --- we use $\cP$ as a generic domain).
    \item The learner observes $y_t$ (a realized label). 
\end{enumerate}

For an index $s \in [T]$, we denote by $\pi_s$ the \emph{transcript} of the interaction in rounds $t =1$ through $s$: $\pi_s = ((x_t,p_t,y_t))_{t=1}^{s}$.  We write $\Pi^*$ as the domain of all transcripts.

Formally, the adversary is modelled as a probabilistic mapping $\mathrm{Adv}:\Pi^*\rightarrow \Delta(\cX \times \cY)$ from transcripts to distributions over labelled data points, and the learner is modeled as a mapping $\mathrm{Learn}:\Pi^*\rightarrow (\cX \rightarrow \Delta \cP)$ from transcripts to a probabilistic mapping from feature vectors to distributions over predictions. An adversary may be either unconstrained (free to play any point in $\Delta(\cX\times \cY)$) or constrained to choose from some specified subset of $\Delta(\cX\times \cY)$. Fixing both a learner and an adversary induces a probability distribution over transcripts. Our goal is to derive particular learning algorithms, and to prove that various kinds of bounds hold either in expectation, or with high probability over the randomness of the transcript, in the worst case over transcript distributions, where we quantify over all possible adversaries.

Given a transcript $\pi_T$, a group $G \in \cG$ and a set of rounds $S \subseteq [T]$, we write 
\[
    G_S = \{ t \in S: x_t \in G\}.
\]
In words, this is the set of rounds in $S$ in which the realized feature vectors in the transcript belonged to $G$.  When it is clear from context, we sometimes overload notation, and for a group $G \in \cG$, and a period $s \leq T$, write $G_s$ to denote the set of data points (indexed by their rounds) in a transcript $\pi_s$ that are members of the group $G$:
\[
    G_s = \{t \in [s]: x_t \in G\}.
\]

\subsubsection{Types of Predictions, and Notions of Validity}
We consider three types of predictions in this paper: Mean predictions, pairs of mean and higher moment predictions (e.g. variance), and prediction intervals. 

\paragraph{Mean Predictions} 
For mean predictions, the prediction domain will be the unit interval: $\cPmean = [0,1]$. 
The learner will select $p_t \equiv \bmu_t \in \cPmean$ in each round $t$, with the goal of predicting the conditional label expectation $\E[y_t | x_t]$.  For any subset of days $S \subseteq [T]$, we write 
\[
    \mu(S) = \frac{1}{|S|}\sum_{t \in S} y_t, \quad \bmu(S) = \frac{1}{|S|}\sum_{t \in S} \bmu_t
\]
to denote the true label population mean conditional on $t \in S$ and the average of our mean estimates over days $t \in S$, respectively. We will ask for our predictions to satisfy large numbers of \emph{mean consistency} constraints: that the conditional label averages be (approximately) equal to conditional prediction averages over different sets $S$. 

\begin{definition}[Mean Consistency]
Given a transcript $\pi_T$, we say that the mean predictions $\{\bmu_t\}_{t=1}^T$ are $\alpha$-mean consistent on $S \subseteq [T]$ , if 
\[
    |\mu(S) - \bmu(S)| \le \alpha \frac{T}{|S|}.
\]
\end{definition}

\begin{remark}
Note the scaling with both $T$ and $|S|$. If $S = [T]$, then this condition simply asks for the true label mean and the average prediction to be within $\alpha$ of one another, as averaged over the entire transcript. For smaller sets, the allowable error grows with the inverse of $\frac{|S|}{T}$ --- i.e. the measure of $S$ within the uniform distribution over the transcript. Even in a distributional setting, estimates inevitably degrade with the size of the set we are conditioning on, and our formulation corresponds exactly to how mean consistency is defined in \cite{momentmulti}. Our definitions are also consistent with how the literature on online calibration quantifies calibration error with respect to subsequences. \citet{multicalibration} handle this issue slightly differently, by asking for uniform bounds, but in the end proving bounds only for sets $S$ that have sufficient mass $\gamma$ in the underlying probability distribution. In the batch setting, our formulation can recover bounds that are strictly stronger than those of \citet{multicalibration} after a reparametrization $\alpha \leftarrow \gamma \alpha$.
\end{remark}

Next, we define multicalibration in our setting. Informally, a sequence of mean predictions is \emph{calibrated} if the average realized label $y_t$ on all days for which $\bmu_t$ is (roughly) $p$ is (roughly) $p$. The need to consider days in which the prediction was \emph{roughly} $p$ arises from the fact that a learning algorithm will not necessarily ever make the same prediction twice. More generally, by bucketing predictions at a fixed granularity, we can guarantee that the average number of predictions within each bucket grows linearly with $T$.  

To collect mean predictions $\bmu_t$ that are approximately equal to $p$ for each $p$, we group real-valued predictions into $n$ buckets of width $\tfrac{1}{n}$. Here $n$ is a parameter controlling the coarseness of our calibration guarantee. For any coarseness parameter $n$ and bucket index $i \in [n-1]$, we write $\Bm(i) = \left[\frac{i-1}{n}, \frac{i}{n}\right)$ and $\Bm(n ) = \left[\frac{n-1}{n}, 1 \right]$ so that these buckets partition the unit interval. Conversely, given a $\bmu \in [0,1]$, define $\Bminv (\bmu) \in [n]$ in the obvious way i.e. $\Bminv (\bmu) = i$ where $i$ is such that $\bmu \in \Bm(i)$.  When clear from the context, we elide the subscript $n$ and write $B(i)$ and $\Binv(\bmu)$.

For any $S \subseteq [T]$ and $i \in [n]$, we write
\[
    S(i) = \left\{t \in S: \bmu_t \in \Bm(i) \right\}.
\]
In words, $S(i)$ corresponds to the subset of rounds in $S$ where the mean prediction falls in the $i\tth$ bucket. 

(Simple) calibration asks for the sequence of predictions to be $\alpha$-mean-consistent on all sets $[T](i)$ for $i \in [n]$ --- i.e. for the subset of rounds in which the prediction fell into the $i\tth$ bucket, for all $i$. Multicalibration asks for the predictions to be  calibrated not just on the overall sequence, but also simultaneously on all the subsequences corresponding to each group $G \in \cG$. In  our notation, it asks for mean consistency on each set $G(i)$, for every group $G \in \cG$ and $i \in [n]$. 
\begin{definition}[Mean-Multicalibration]
\label{def:meanmulti}
Given a transcript $\pi_T$, we say that the mean predictions $\{\bmu_{t}\}_{t=1}^T$ are \emph{($\alpha,n$)-mean multicalibrated} with respect to $\cG$ if we have that for every $G \in \cG$ and $i \in [n]$, the mean-predictions are $\alpha$-mean consistent on $G_T(i)$: 
\[
    |\mu(G_T(i)) - \bmu(G_T(i))| \leq \alpha \frac{T}{|G_T(i)|}.
\]
\end{definition}

\begin{remark}
Note that we define mean multicalibration (and our other notions of multivalidity, shortly) to have two parameters: $n$, which controls the coarseness of the guarantee, and $\alpha$, which controls the error of the guarantee. These parameters can be set independently --- in the sense that we will be able to achieve $(\alpha,n)$ mean multicalibration for any pair $(\alpha,n)$ --- but they should be interpreted together. For example, to avoid the trivial solution in which the learner simply selects uniformly at random at each iteration (thereby guaranteeing that $|G_T(i)| \leq T/n$ for all $G, i$), we should set $\alpha \ll \tfrac1n$.
\end{remark}

\paragraph{(Mean, Moment) Predictions}
In this case, the prediction domain is the product of the unit interval with itself: $\cPmm = [0,1] \times [0,1]$. In each round $t$, the learner selects $p_t = (\bmu_t, \bmk_t)$ with the goal of matching $\E[y_t | x_t]$ and $\E[(y_t - \E[y_t|x_t])^k | x_t]$ respectively --- the conditional label expectation, and its conditional $k\tth$ central moment. For simplicity, we assume throughout that $k$ is even, so the $k\tth$ moment has nonnegative range, but there is no obstacle other than notation to handling odd moments as well. 

We group continuous predictions $(\bmu,\bmk)$ into a finite set of discrete buckets---again, defined with respect to a pair of discretization parameters $n$ and $n'$.  Recall our bucketing notation for mean prediction:
for any $i \in [n-1]$, we wrote $\Bm(i) = \left[\frac{i-1}{n}, \frac{i}{n}\right)$ and $\Bm(n) = \left[\frac{n-1}{n}, 1 \right]$. Here we generalize this notation to pairs, and write for any $i\in [n]$ and $j \in [n']$:
\[
    \Bmmp(i,j) = \left\{(a,b) \in [0,1]\times[0,1] : a \in  \Bm(i), b \in  \Bmp(j) \right\}.
\] 
If $n = n'$, we will write $\Bm(i,j)$. Once again, when $n$ and $n'$ are clear from the context, we may elide the subscript $(n,n')$ entirely. 

Analogously to our notation for mean prediction, for any $S \subseteq [T]$ we write
\[
\mk(S) = \frac{1}{|S|} \sum_{t \in S} (y_t- \mu(S))^k, \quad \bmk(S) = \frac{1}{|S|} \sum_{t \in S} \bmk_t
\]
for the empirical $k^{\text{th}}$ central moment of the label distribution on the subsequence $S$, and for the average of the moment prediction on $S$, respectively. Just as with mean consistency, moment consistency asks that these two quantities be approximately equal on a set $S$. 

\begin{definition}[Moment Consistency]
Given a transcript $\pi_T$, we say that moment predictions $\{\bmk_t\}_{t=1}^T$ are $\alpha$-moment consistent on set $S \subseteq [T]$ if 
\[ 
|\mk(S) - \bmk(S)| \le \alpha \frac{T}{|S|}.
\]
\end{definition}

It is not sensible to ask for moment consistency on arbitrary sets $S$, because higher central moments are not linear, and so even true conditional label moments would not satisfy moment consistency conditions on arbitrary sets $S$. True conditional label moments \emph{do} satisfy moment consistency on sets of points $x$ that share the same label mean, however, and so this is what we will ask of our predictions as well (See \cite{momentmulti} for an extensive discussion of this condition and its applications). To that end, for any $S \subseteq [T]$ and $i \in [n], j \in [n']$, we write
\[
    S(i,j) = \left\{t \in S: (\bmu_t,\bmk_t) \in \Bmmp (i,j) \right\}.
\]
In words, $S(i,j)$ corresponds to the subset of rounds in $S$ in which our predicted mean and moment fall into the bucket $\Bmmp(i,j)$.  

\begin{definition}[Mean-Conditioned Moment Multicalibration] \label{def:mean-conditioned-moment-multi}
Given a transcript $\pi_T$, we say that the (mean, moment) predictions $\{(\bmu_t, \bmk_t)\}_{t=1}^T$ are \emph{$(\alpha,\beta, n, n')$-mean-conditioned moment multicalibrated with respect to $\cG$}, if for every $i \in [n], j \in [n']$ and $G \in \cG$, we have that the mean predictions are $\alpha$-mean consistent on $G_T(i,j)$ and the moment predictions are $\beta$-moment consistent on $G_T(i,j)$:
\begin{align*}
&    |\mu(G_T(i,j)) - \bmu(G_T(i,j))| \le \alpha \frac{T}{|G_T(i,j)|},\\
&    |\mk(G_T(i,j)) - \bmk(G_T(i,j))| \le \beta \frac{T}{|G_T(i,j)|}.
\end{align*}
\end{definition}

\paragraph{Interval Predictions}
 In this case, the prediction domain is the set of ordered pairs of endpoints in the unit interval: $\cPinterval = \{(\ell, u): \ell \leq u, \; u, \ell \in [0,1]\}$. 
Given a pair $(\ell,u) \in \cPinterval$, we say that it \emph{covers} a label $y \in [0,1]$ if $y$ falls between $\ell$ and $u$, which we write as $\mathrm{Cover}((\ell,u),y) = 1$. To avoid issues of ``double counting'', we define coverage in the same manner as we defined our bucketing, using intervals that are closed on the left but open on the right, with the exception of $u = 1$:
$$\mathrm{Cover}((\ell,u), y) = \begin{cases} \ind(y \in [\ell,u)) &\mbox{if } u < 1, \\
\ind(y \in [\ell,u]) & \mbox{if } u = 1. \end{cases}$$

In each round $t$, we will predict an interval $p_t = (\bell_t, \bu_t)$ with the goal of achieving $\E[\mathrm{Cover}((\bell_t,\bu_t),y) | x_t] = 1-\delta$ for some target coverage probability $1-\delta \in [0,1]$. 
We again bucket our coverage intervals using a discretization parameter $n$, using the same notation as for moment predictions. 

For any $S \subseteq [T]$ and $i \leq j \in [n]$, we write
\[
    S(i,j) = \left\{t \in S: (\bell_t,\bu_t) \in \Bm(i,j) \right\}.
\]
In words, $S(i,j)$ corresponds to the subset of rounds in $S$ in which our predicted interval's endpoints are in buckets $i$ and $j$, respectively. We can now define multivalidity analogously to how we defined multicalibration.

For any $S \subseteq [T]$, we write
\[  
    \bH(S) = \frac{1}{|S|} \sum_{t \in S} \mathrm{Cover}((\bell_t,\bu_t),y_t).
\]

\begin{definition}
We say that interval predictions $\{(\bell_t, \bu_t)\}_{t=1}^T$ are \emph{$\alpha$-consistent} on set $S$ with respect to failure probability $\delta \in (0,1)$, if the following holds:
\[
    |\bH(S) - (1-\delta)| \le \alpha \frac{T}{|S|}.
\]
\end{definition}

\begin{definition}
\label{def:multivalid}
Given a transcript $\pi_T$, we say that the interval predictions are \emph{$(\alpha,n)$-multivalid with respect to $\delta$ and $\cG$}, if for every $i\le j \in [n]$ and $G \in \cG$, we have that the interval predictions are  $\alpha$-consistent on $G_T(i,j)$ with respect to coverage probability $1-\delta$:
\[
    |\bH(G_T(i,j)) - (1-\delta)| \le \alpha \frac{T}{|G_T(i,j)|}.
\]
\end{definition}

\subsection{Zero-sum Games}
Our analysis will hinge on properties of zero-sum games, and in particular on the minimax theorem. 
\begin{definition}\label{def:zerosumgame}
A zero-sum game is defined by:
\begin{enumerate}
    \item A \emph{minimization} player with a convex and compact strategy space $\mathcal{Q}_1 \subseteq \mathbb{R}^{d_1}$ for some $d_1 \in (0, \infty)$.
    \item A \emph{maximization} player with a convex and compact strategy space $\mathcal{Q}_2 \subseteq \mathbb{R}^{d_2}$ for some $d_2 \in (0, \infty)$.
    \item An \emph{objective function} $u:\mathcal{Q}_1\times \mathcal{Q}_2\rightarrow \mathbb{R}$, concave in its first argument and convex in its second argument. 
\end{enumerate}
\end{definition}

Zero-sum games are often defined by endowing each player with a finite set of \emph{pure} strategies $X_1, X_2$. The convex compact strategy sets $\mathcal{Q}_1$ and $\mathcal{Q}_2$ are then formed by allowing players to randomize over their pure strategies and taking $\mathcal{Q}_1 = \Delta X_1$, $\mathcal{Q}_2 = \Delta X_2$ to be the probability simplices over the pure strategies of each player. An objective function $u:X_1 \times X_2 \rightarrow \mathbb{R}$ can be linearly extended to $\mathcal{Q}_1$ and $\mathcal{Q}_2$ in the natural way (i.e. by taking expectations over the randomized strategies of each player) -- i.e. for any $Q_1 \in \mathcal{Q}_1$ and $Q_2 \in \mathcal{Q}_2$, we write $u(Q_1, Q_2) = \E_{x_1 \sim Q_1, x_2 \sim Q_2}[u(x_1, x_2)]$.

In a zero-sum game, the minimization player chooses some action $Q_1 \in \mathcal{Q}_1$ and the maximization player chooses some action $Q_2 \in \mathcal{Q}_2$, resulting in objective value $u(Q_1,Q_2)$. The goal of the minimization player is to minimize the objective value, and the goal of the maximization player is to maximize it. The key property of zero-sum games, first proved by von Neumann for the case of games with finite sets of pure strategies and generalized to general zero-sum games of the form considered in Definition~\ref{def:zerosumgame} by Sion, is that the order of play does not affect the objective value that each player can guarantee. This is captured in the minimax theorem, which says that whether the minimization player \emph{first} gets to observe the strategy of the maximization player, and \emph{then} best respond, or whether she must first announce her strategy and allow the maximization player to best respond, she is able to guarantee herself the same value. 
\begin{theorem}[Sion's Minimax Theorem]
\label{thm:minimax}
For any zero-sum game $(\mathcal{Q}_1,\mathcal{Q}_2,u)$:
$$\min_{Q_1 \in \mathcal{Q}_1}\max_{Q_2 \in \mathcal{Q}_2} u(Q_1,Q_2) = \max_{Q_2 \in \mathcal{Q}_2}\min_{Q_1 \in \mathcal{Q}_1} u(Q_1,Q_2).$$
\end{theorem}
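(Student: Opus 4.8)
The plan is to establish the nontrivial inequality $\min_{Q_1}\max_{Q_2} u(Q_1,Q_2) \le \max_{Q_2}\min_{Q_1} u(Q_1,Q_2)$, since the reverse direction is immediate. Indeed, for every $Q_1 \in \mathcal{Q}_1$ and $Q_2 \in \mathcal{Q}_2$ we have $\min_{Q_1'} u(Q_1',Q_2) \le u(Q_1,Q_2) \le \max_{Q_2'} u(Q_1,Q_2')$; dropping the middle term and then maximizing the left side over $Q_2$ and minimizing the right side over $Q_1$ yields $\max_{Q_2}\min_{Q_1} u \le \min_{Q_1}\max_{Q_2} u$. The heart of the argument is therefore to exhibit a \emph{saddle point}: a pair $(Q_1^\star, Q_2^\star) \in \mathcal{Q}_1\times\mathcal{Q}_2$ with $u(Q_1^\star, Q_2) \le u(Q_1^\star, Q_2^\star) \le u(Q_1, Q_2^\star)$ for all $Q_1, Q_2$. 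Granting such a point, $\min_{Q_1}\max_{Q_2} u \le \max_{Q_2} u(Q_1^\star, Q_2) = u(Q_1^\star, Q_2^\star) = \min_{Q_1} u(Q_1, Q_2^\star) \le \max_{Q_2}\min_{Q_1} u$, which combined with the easy direction gives equality.

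To produce the saddle point I would apply Kakutani's fixed point theorem. Define the best-response correspondences $\mathrm{BR}_1(Q_2) = \argmax_{Q_1 \in \mathcal{Q}_1} u(Q_1, Q_2)$ and $\mathrm{BR}_2(Q_1) = \argmin_{Q_2 \in \mathcal{Q}_2} u(Q_1, Q_2)$, and consider the joint correspondence $\Phi(Q_1,Q_2) = \mathrm{BR}_1(Q_2)\times \mathrm{BR}_2(Q_1)$ mapping the nonempty convex compact set $\mathcal{Q}_1\times\mathcal{Q}_2$ into its own subsets. Each value $\mathrm{BR}_1(Q_2)$ is nonempty by compactness of $\mathcal{Q}_1$, and convex because $u(\cdot,Q_2)$ is concave; symmetrically for $\mathrm{BR}_2$ using convexity of $u(Q_1,\cdot)$. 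An application of Berge's maximum theorem (using continuity of $u$) shows $\Phi$ has closed graph and is therefore upper hemicontinuous. Kakutani then produces a fixed point $(Q_1^\star,Q_2^\star) \in \Phi(Q_1^\star,Q_2^\star)$, which is precisely a saddle point, completing the proof.

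The step I expect to be the main obstacle is the regularity bookkeeping required to legitimately invoke Kakutani --- chiefly, that the best-response maps are convex-valued and upper hemicontinuous. Convex-valuedness is a clean consequence of the concavity/convexity hypotheses in Definition~\ref{def:zerosumgame}, but upper hemicontinuity needs $u$ to be appropriately continuous (or one-sidedly semicontinuous in each argument), which is not spelled out in the definition but holds in all of our applications, where $\mathcal{Q}_1$ and $\mathcal{Q}_2$ are finite-dimensional simplices (or products thereof) and $u$ is continuous --- indeed multilinear after the natural extension to mixtures. In that finite-strategy special case, which is all we actually use, one can bypass Kakutani entirely: the relevant games reduce to bilinear games over probability simplices, for which this is von Neumann's theorem, provable either by linear programming duality or --- fitting the spirit of this paper --- by running a no-regret dynamic such as multiplicative weights for both players and observing that the time-averaged play has vanishing regret, so its objective value is squeezed between $\max_{Q_2}\min_{Q_1}u$ and $\min_{Q_1}\max_{Q_2}u$, forcing them to coincide in the limit.
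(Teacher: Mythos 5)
First, note that the paper does not prove this theorem --- it cites it, crediting von Neumann for the finite-strategy case and Sion for the general form --- so there is no internal proof to compare against; the only question is whether your argument is sound, and unfortunately there is a genuine error in the Kakutani step.

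You state the target saddle-point inequality in the right direction, $u(Q_1^\star, Q_2) \le u(Q_1^\star, Q_2^\star) \le u(Q_1, Q_2^\star)$, which says $Q_1^\star$ minimizes against $Q_2^\star$ and $Q_2^\star$ maximizes against $Q_1^\star$. But the correspondences you feed to Kakutani, $\mathrm{BR}_1(Q_2) = \argmax_{Q_1} u(Q_1, Q_2)$ and $\mathrm{BR}_2(Q_1) = \argmin_{Q_2} u(Q_1, Q_2)$, treat $Q_1$ as the maximizer's variable and $Q_2$ as the minimizer's. A fixed point of the induced $\Phi$ therefore satisfies $u(Q_1, Q_2^\star) \le u(Q_1^\star, Q_2^\star) \le u(Q_1^\star, Q_2)$ --- the reversed inequalities --- and that fixed point proves $\min_{Q_2}\max_{Q_1} u = \max_{Q_1}\min_{Q_2} u$, a different (and not in general equivalent) identity from the one in the theorem statement. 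To manufacture the saddle point you actually invoke, you would need $\mathrm{BR}_1(Q_2) = \argmin_{Q_1} u(\cdot, Q_2)$ and $\mathrm{BR}_2(Q_1) = \argmax_{Q_2} u(Q_1, \cdot)$; those, however, are convex-valued exactly when $u$ is convex in $Q_1$ and concave in $Q_2$, which is the \emph{opposite} of what Definition~\ref{def:zerosumgame} stipulates. In other words, Definition~\ref{def:zerosumgame} almost certainly has the concavity/convexity labels swapped relative to the min/max roles; your choice of best-response maps inherits that swap, and in doing so ends up establishing the wrong minimax identity. The issue is invisible in the paper's applications because there $u$ is always bilinear --- linear in each argument, hence simultaneously convex and concave --- which is also why the fallback you sketch via LP duality or no-regret dynamics handles every game the paper actually instantiates. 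Your remaining observations (that continuity of $u$ is an unstated but needed hypothesis for Berge/Kakutani, and that $\max_{Q_2}\min_{Q_1} \le \min_{Q_1}\max_{Q_2}$ is the trivial direction) are both correct.
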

The minimax theorem justifies the following definitions:
\begin{definition}[Value, Equilibrium, and Best Response]
The \emph{value} of a zero-sum game $(\mathcal{Q}_1,\mathcal{Q}_2,u)$ is the unique $v \in \mathbb{R}$ such that 
$$\min_{Q_1 \in \mathcal{Q}_1}\max_{Q_2 \in \mathcal{Q}_2} u(Q_1,Q_2) = \max_{Q_2 \in \mathcal{Q}_2}\min_{Q_1 \in \mathcal{Q}_1} u(Q_1,Q_2) = v.$$
We say that a strategy for the minimization player $Q^*_1 \in \mathcal{Q}_1$ is a (minimax) equilibrium strategy if it guarantees that the objective value is at most the value of the game, for any strategy $Q_2 \in \mathcal{Q}_2$ of the maximization player:
$$\max_{Q_2 \in \mathcal{Q}_2} u(Q^*_1,Q_2) = v.$$ 
We say that $Q_2$ is a best response for the maximization player in response to $Q^*_1$ if it realizes the above maximum. 
\end{definition}

In our analysis, we will identify the Learner with the minimization player and the Adversary with the maximization player, and so will denote their strategy spaces as $\mathcal{Q}^L$ and $\mathcal{Q}^A$ respectively. 


\section{Online Mean Multicalibration}
\label{sec:onlinemeanmulti}
In this section, we show how to obtain mean multicalibrated estimators  in an online adversarial setting. Our derivation also serves as a warm up example of our general technique, which we also instantiate (in somewhat more involved settings) in Sections \ref{sec:onlinemoment} and \ref{sec:onlinemultivalid} to derive online algorithms for mean-conditioned moment multicalibrated estimators and for multivalid prediction intervals respectively. 

\subsection{An Outline of Our Approach}\label{sec:mean-outline}
At a high level, the derivation of our algorithm and its proof of correctness proceeds as follows:
\begin{enumerate}
    \item For each group $G \in \cG$,  $i \in [n]$, and transcript $\pi_s$ up to period $s$, we define an empirical quantity $V_s^{G,i}$  (Definition~\ref{def:meanerror}) which represents the calibration error that our algorithm has incurred with respect to group $G$ over those of the rounds $1$ through $s$ when the $i\tth$ bucket was predicted. These quantities are defined so that if for each $G$ and $i$, $|V_T^{G,i}|$ is small, then our algorithm is approximately multicalibrated with respect to $\cG$ across $T$ rounds. 
    
    The premise of our algorithm will be to greedily make decisions at each round $s$ so as to minimize the \emph{maximum possible increase} of these quantities ($\max_{G,i}|V_{s+1}^{G,i}| -\max_{G,i}|V_{s}^{G,i}|$), in the worst case over the choices of the adversary. If we could bound this quantity at every round, then by telescoping, we would have a bound on $\max_{G,i}|V_T^{G,i}|$ at the end of the interaction, and therefore a guarantee of mean multicalibration. 

\item The increase in the maximum value of $|V_{s+1}^{G,i}|$ is inconvenient to work with, and so we instead define a smooth potential function $L_s$ (Definition~\ref{def:meansurrogate}) corresponding to a soft-max function which upper bounds $\max_{G,i} |V_{s}^{G,i}|$. Our design goal instead becomes to upper bound the increase in our potential function from round to round: $\Delta_{s+1} = L_{s+1}-L_s$. We view this as defining a zero-sum game, in which the learner's goal is to minimize this increase, and the adversary's goal is to maximize it. 
\item We show that for each fixed distribution that the adversary could employ at each round $s+1$, there is a prediction the learner could employ (if only she knew the adversary's distribution) that would guarantee that the increase in potential $\Delta_{s+1}$ is small. Intuitively, this is because if we knew the true joint distribution over feature label pairs, then we could predict the true conditional expectations, $\bmu_{s+1} = \E[y_{s+1} | x_{s+1}]$, which would be perfectly calibrated on all groups. Of course, the learner does not have the luxury of knowing the adversary's distribution before choosing her own. But this thought experiment establishes the value of the game, and so we can conclude via the minimax theorem that there must be some fixed \emph{distribution} over prediction rules that the learner can play that will guarantee $\Delta_{s+1}$ being small against \emph{all} actions of the adversary. \label{step:minimax}

\item Step~\ref{step:minimax} suffices to argue for the \emph{existence} of an algorithm obtaining multicalibration guarantees (Algorithm~\ref{alg:meanexistential}). However, to actually derive an implementable algorithm we need to find a way to compute the equilibrium strategy at each round, whose existence was argued in Step~\ref{step:minimax}. A priori, this seems daunting because the learner's strategy space consists of all randomized mappings between $\cX$ and $\cY$, and the adversary's strategy space consists of all joint distributions on $\cX\times \cY$. However, we derive a simple algorithm in Section~\ref{sec:meanalg} that implements the optimal equilibrium strategy needed to realize Step~\ref{step:minimax}. Informally, we are able to do so by representing the mapping between $\cX$ and $\cY$ only implicitly, and delaying all computation until $x_t$ has been chosen. We then show that the equilibrium strategy for the learner has a simple structure and randomizes over only at most 2 predictions. Our final algorithm (Algorithm~\ref{alg:meantemplate}) simply computes the relevant portion of the equilibrium strategy at each round and then samples from it. 
\item To apply the minimax theorem, and to derive a concrete algorithm, we need to restrict our algorithm to making predictions in $[0,1]$ that are discretized at units of $1/rn$ for some $r > 1$. This parameter $r$ appears in our final bounds, but neither the runtime of our algorithm nor our convergence rate has any dependence on $r$, and so it can be imagined to be arbitrarily small. Taking it to be $r = 1/\sqrt{T}$ causes it to become a low order term in our final bounds. 
\end{enumerate}

Finally, in Appendix~\ref{sec:batch}, we give a standard online-to-offline conversion to show how to use our Algorithm~\ref{alg:meantemplate} to solve offline (batch) multicalibration problems. This gives optimal sample complexity bounds for the offline problem, yielding an improvement over those proven in \cite{multicalibration,momentmulti}. The crux of the improvement is that unlike the algorithms given in \cite{multicalibration,momentmulti}, our algorithm takes only a single pass over the data, and so avoids complications that arise from data re-use. However, unlike previous batch algorithms which make deterministic predictions, the batch algorithm that we obtain through this reduction makes randomized predictions.

\subsection{An Existential Derivation of the Algorithm and Multicalibration Bounds}
\label{sec:meanexistential}

We begin by defining notation $V_s^{G,i}$ for the (unnormalized) \emph{portion} of the mean calibration error corresponding to each group $G \in \cG$ and bucket $i \in [n]$: 
\begin{definition}
\label{def:meanerror}
Given a transcript $\pi_s = ((x_t,\bmu_t,y_t))_{t=1}^s$,  we define the mean calibration error for a group $G \in \cG$ and bucket $i \in [n]$ at time $s$ to be:
\begin{align}
\label{meanSurrogateLoss}
V_s^{G,i}(\pi_s) = \left|G_s(i)\right| \left(\mu\left(G_s(i)\right) -  \bmu\left(G_s(i)\right)\right) = \sum_{t=1}^s \ind[\bmu_t \in B(i), x_t \in G] \left( y_t - \bmu_t\right)
\end{align}
When the transcript is clear from context we will sometimes simply write $V^{G,i}_s$. 
\end{definition}

Observe that our definition of mean multicalibration (Definition~\ref{def:meanmulti}) corresponds to asking that $|V_s^{G,i}|$ be small for all $i ,G$.

\begin{obs} 
\label{obs:meancalibration}
Fix a transcript $\pi_T$. 
If for all $G \in \cG$, $i \in [n]$, we have that:
$$\left|V_T^{G,i}\right| \leq \alpha T,$$
then the corresponding sequence of predictions is $(\alpha,n)$-mean multicalibrated with respect to $\cG$. 
\end{obs}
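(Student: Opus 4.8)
The plan is essentially a direct unwinding of definitions, so the argument is a one-line computation together with a remark on the degenerate case. Recall from Definition~\ref{def:meanerror} that $V_T^{G,i}$ has the product form $V_T^{G,i} = |G_T(i)|\bigl(\mu(G_T(i)) - \bmu(G_T(i))\bigr)$; one checks this coincides with the stated sum $\sum_{t=1}^T \ind[\bmu_t \in B(i),\, x_t \in G](y_t - \bmu_t)$ simply by expanding $\mu(G_T(i))$ and $\bmu(G_T(i))$ as averages over the index set $G_T(i)$ and cancelling the $1/|G_T(i)|$ factors.

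The first step is to fix an arbitrary $G \in \cG$ and $i \in [n]$ and consider the case $|G_T(i)| \ge 1$. Dividing the hypothesis $|V_T^{G,i}| \le \alpha T$ through by $|G_T(i)|$ via the product form yields exactly
\[
    |\mu(G_T(i)) - \bmu(G_T(i))| \le \alpha \frac{T}{|G_T(i)|},
\]
which is precisely the assertion that the predictions are $\alpha$-mean consistent on $G_T(i)$, i.e. the constraint indexed by $(G,i)$ in Definition~\ref{def:meanmulti}. The second step handles $|G_T(i)| = 0$: here $G_T(i) = \emptyset$, the sum defining $V_T^{G,i}$ is empty (so the hypothesis holds trivially, $0 \le \alpha T$), and the corresponding multicalibration constraint is vacuous, since it quantifies over an empty set of rounds. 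Since $G$ and $i$ were arbitrary, all constraints defining $(\alpha,n)$-mean multicalibration with respect to $\cG$ are satisfied, which is the claim.

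I do not expect any genuine obstacle here: the observation is pure bookkeeping, engineered so that the quantity $V_s^{G,i}$ the algorithm actually controls in the subsequent sections translates cleanly into the target multicalibration guarantee. The only two points worth stating with care are (i) the equivalence of the two expressions for $V_T^{G,i}$ in Definition~\ref{def:meanerror}, and (ii) the treatment of empty buckets, for which one adopts the standard convention that a consistency constraint over an empty index set imposes nothing.
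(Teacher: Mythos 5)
Your proof is correct and is exactly the unwinding of Definitions~\ref{def:meanerror} and~\ref{def:meanmulti} that the paper intends; the paper states this as an unproved observation precisely because it is this one-line division by $|G_T(i)|$ together with the vacuity convention for empty buckets. Nothing to add.
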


We next define a surrogate loss function that we can use to bound our calibration error. 
\begin{definition}[Surrogate loss function]
\label{def:meansurrogate}
Fixing a transcript $\pi_s \in \Pi^*$ and a parameter $\eta \in [0,\frac12]$, define a surrogate calibration loss function at day $s$ as:
$$L_s(\pi_s) = \sum_{\substack{G \in \cG,\\i \in [n]}}\left(\exp(\eta V_s^{G,i}) + \exp(-\eta V_s^{G,i}) \right).$$
When the transcript $\pi_s$ is clear from context, we will sometimes simply write $L_s$. 
\end{definition}
We will leave $\eta$ unspecified for now, and choose it later to optimize our bounds.  Observe that this ``soft-max style" function allows us to tightly upper bound our calibration loss:
\begin{obs} For any transcript $\pi_T$, and any $\eta \in [0,\frac12]$, we have that:
$$\max_{G \in \cG, i \in [n]}\left| V_T^{G,i}\right|\leq \frac{1}{\eta}\ln(L_T) \leq \max_{G \in \cG, i \in [n]}\left| V_T^{G,i}\right| + \frac{\ln\left(2 |\cG|n\right)}{\eta}.$$
\end{obs}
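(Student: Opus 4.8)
This is the standard two-sided ``soft-max sandwich,'' and the proof is a short direct calculation with no real obstacle; I will just make the two bounding steps explicit. Throughout, write $M = \max_{G \in \cG, i \in [n]} |V_T^{G,i}|$ and let $(G^*,i^*)$ be a pair achieving the maximum. Recall $\eta \ge 0$, so $x \mapsto e^{\eta x}$ is increasing and all summands of $L_T$ are positive.

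\textbf{Lower bound.} For the left inequality, I would simply drop all but the $(G^*,i^*)$ term: since $L_T$ is a sum of nonnegative terms,
\[
L_T \;\ge\; \exp(\eta V_T^{G^*,i^*}) + \exp(-\eta V_T^{G^*,i^*}) \;\ge\; \exp\!\big(\eta |V_T^{G^*,i^*}|\big) \;=\; \exp(\eta M),
\]
using that one of $\pm V_T^{G^*,i^*}$ equals $|V_T^{G^*,i^*}| = M \ge 0$ and that the other exponential term is nonnegative. Taking logarithms and dividing by $\eta > 0$ gives $M \le \tfrac1\eta \ln(L_T)$. (The degenerate case $\eta = 0$ is trivial since then both sides of the original inequality chain collapse appropriately, or one restricts to $\eta \in (0,\tfrac12]$.)

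\textbf{Upper bound.} For the right inequality, I would bound each summand individually: for any $G,i$,
\[
\exp(\eta V_T^{G,i}) + \exp(-\eta V_T^{G,i}) \;\le\; 2\exp\!\big(\eta |V_T^{G,i}|\big) \;\le\; 2\exp(\eta M),
\]
again because $\eta \ge 0$ and $|V_T^{G,i}| \le M$. Summing over all $|\cG|\,n$ pairs $(G,i)$ yields $L_T \le 2|\cG|n \exp(\eta M)$, and taking logarithms gives $\ln(L_T) \le \ln(2|\cG|n) + \eta M$, i.e. $\tfrac1\eta \ln(L_T) \le M + \tfrac{\ln(2|\cG|n)}{\eta}$. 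Combining the two displays gives the claimed chain of inequalities. The only thing to be slightly careful about is the sign of $\eta$ (needed so that exponentials are monotone and the log/division steps preserve direction), which holds by hypothesis.
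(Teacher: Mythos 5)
Your proof is correct, and it is precisely the standard argument the paper leaves implicit (the Observation is stated without an explicit proof). Both bounds -- keeping only the dominant summand for the lower bound, and bounding each of the $|\cG|n$ summands by $2\exp(\eta M)$ for the upper bound -- are exactly what is needed, and the monotonicity caveat about $\eta \ge 0$ is handled appropriately.
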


 Part of our analysis will depend on viewing the transcript as a random variable: in this case, in keeping with our convention for random variables, we refer to it as $\tpi$. The associated random variables tracking calibration and surrogate loss are denoted $\tV$ and  $\tL$ respectively.

Our goal is to find a strategy for the learner that guarantees that our surrogate loss $L_T$ remains small. Towards this end,  we define $\Delta_{s+1}(\pi_s, x_{s+1}, \bmu_{s+1})$ to be the expected increase in the surrogate loss function in the event that the adversary plays feature vector $x_{s+1}$ \emph{and} the learner plays prediction $\bmu_{s+1}$.  Here the expectation is over the only remaining source of randomness after the conditioning --- the distribution over labels $y_{s+1}$ (which we observe is determined, once we fix $\pi_s$ and $x_{s+1}$).

\begin{definition}[Conditional Change in Surrogate Loss]
\[\Delta_{s+1}(\pi_s,x_{s+1}, \bmu_{s+1}) = \E_{\ty_{s+1}}\left[\tL_{s+1}-L_s \middle\vert x_{s+1}, \bmu_{s+1}, \pi_s\right].\]
\end{definition}

We begin with a simple bound on $\Delta_{s+1}(\pi_s,x_{s+1}, \bmu_{s+1})$:
\begin{lemma}
\label{lem:boundincrease}
For any transcript $\pi_s \in \Pi^*$, any $x_{s+1} \in \cX$, and any $\bmu_{s+1} \in \cPmean$ such that $\bmu_{s+1} \in B(i)$ for some $i \in [n]$:
    \[
    \Delta_{s+1}(\pi_s,x_{s+1}, \bmu_{s+1}) \leq \eta \left(\E_{\tilde{y}_{s+1}}[\tilde{y}_{s+1}]-\bmu_{s+1}\right) C_s^i(x_{s+1}) + 2\eta^2\cdotk  L_{s},
    \]
where for each $i \in [n]$:
\begin{align}\label{eqn:csi}
C_s^{i}(x_{s+1})  \equiv \sum_{\cG(x_{s+1})}\exp(\eta V_s^{G,i}) -  \exp(-\eta V_s^{G,i}).
\end{align}

\end{lemma}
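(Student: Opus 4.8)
The plan is to track exactly which coordinates of the surrogate loss move once the learner commits to $\bmu_{s+1} \in B(i)$ and the adversary reveals $x_{s+1}$, and then to control each moved coordinate with a second-order estimate of the exponential.

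First I would observe, from the additive update in Definition~\ref{def:meanerror}, that $V_{s+1}^{G,j} = V_s^{G,j}$ for every pair $(G,j)$ except those with $j = i$ and $G \in \cG(x_{s+1})$, for which $V_{s+1}^{G,i} = V_s^{G,i} + r_{s+1}$ where $r_{s+1} := \ty_{s+1} - \bmu_{s+1}$. Hence, by Definition~\ref{def:meansurrogate}, only these terms contribute to the change in the potential:
\[
\tL_{s+1} - L_s = \sum_{G \in \cG(x_{s+1})} \Big[\, e^{\eta(V_s^{G,i} + r_{s+1})} + e^{-\eta(V_s^{G,i} + r_{s+1})} - e^{\eta V_s^{G,i}} - e^{-\eta V_s^{G,i}} \,\Big].
\]

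Next I would invoke the elementary inequality $e^{x} \le 1 + x + x^2$, valid for all $|x| \le 1$ (e.g.\ because $g(x) = 1 + x + x^2 - e^x$ has $g(0) = g'(0) = 0$ and $g' \le 0$ on $(-\infty,0]$, $g' \ge 0$ on $[0,\ln 2]$). Since $\ty_{s+1}, \bmu_{s+1} \in [0,1]$ we have $|r_{s+1}| \le 1$, and since $\eta \le \tfrac12$ we have $|\pm\eta r_{s+1}| \le 1$; writing $e^{\pm\eta(V+r)} = e^{\pm\eta V}e^{\pm\eta r} \le e^{\pm\eta V}(1 \pm \eta r + \eta^2 r^2) \le e^{\pm\eta V}(1 \pm \eta r + \eta^2)$ and summing the $+$ and $-$ versions, each bracket above is at most
\[
\eta r_{s+1}\big(e^{\eta V_s^{G,i}} - e^{-\eta V_s^{G,i}}\big) + \eta^2\big(e^{\eta V_s^{G,i}} + e^{-\eta V_s^{G,i}}\big).
\]
Summing this over $G \in \cG(x_{s+1})$, the first piece is exactly $\eta r_{s+1} C_s^i(x_{s+1})$ by \eqref{eqn:csi}, and the second piece is a sum of nonnegative terms all appearing in $L_s$, hence is at most $\eta^2 L_s \le 2\eta^2 L_s$. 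Finally, taking $\E_{\ty_{s+1}}[\,\cdot \mid x_{s+1}, \bmu_{s+1}, \pi_s\,]$ — under which the only randomness is $\ty_{s+1}$, entering linearly through $r_{s+1}$ in the surviving term while the $2\eta^2 L_s$ bound is deterministic — yields $\Delta_{s+1}(\pi_s, x_{s+1}, \bmu_{s+1}) \le \eta\big(\E_{\ty_{s+1}}[\ty_{s+1}] - \bmu_{s+1}\big) C_s^i(x_{s+1}) + 2\eta^2 L_s$, which is the claim.

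I do not expect a genuine obstacle here; this is a routine calculation. The only points requiring care are (i) correctly identifying that exactly the coordinates $(G,i)$ with $G \ni x_{s+1}$ move, and move by precisely $r_{s+1}$, so that the partial sum telescopes into $C_s^i(x_{s+1})$ and a sub-sum of $L_s$; and (ii) verifying the domain condition $|\eta r_{s+1}| \le 1$ that licenses the quadratic bound on $e^x$. I would likely isolate the inequality $e^x \le 1 + x + x^2$ for $|x| \le 1$ as a one-line sub-claim; the slack between the $\eta^2 L_s$ I obtain and the stated $2\eta^2 L_s$ is harmless and could also absorb a cruder estimate of the exponential if preferred.
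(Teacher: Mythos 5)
Your proof is correct and follows essentially the same route as the paper's: isolate the coordinates $(G,i)$ that change, expand the exponential increment with a quadratic bound, and split the increment into the linear-in-$r$ part (giving $C_s^i$) and the quadratic remainder (absorbed into $L_s$). The only cosmetic difference is that you use the slightly tighter inequality $e^x \le 1+x+x^2$ (valid for $|x|\le 1$) where the paper uses $e^x \le 1+x+2x^2$ (for $|x|<\tfrac12$), which is why you land on $\eta^2 L_s$ before relaxing to the stated $2\eta^2 L_s$.
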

\begin{proof}
Fix any transcript $\pi_s \in \Pi^*$ (which defines $L_s$), feature vector $x_{s+1} \in \cX$, and $\bmu_{s+1}$ such that $\bmu_{s+1} \in B(i)$ for some $i \in [n]$. By direct calculation, we obtain:
\begin{align*}
&\,\,\;\Delta_{s+1}(\pi_s,x_{s+1}, \bmu_{s+1})\\ 
=& \E_{\ty_{s+1}}\left[\sum_{G \in \cG(x_{s+1})}\exp(\eta V_s^{G,i})\left(\exp(\eta (\ty_{s+1}-\bmu_{s+1}))-1\right) + \exp(-\eta V_s^{G,i})\left(\exp(-\eta (\ty_{s+1}-\bmu_{s+1}))-1\right)\right], \\
\leq& \E_{\ty_{s+1}}\left[\sum_{G \in \cG( x_{s+1})}\exp(\eta V_s^{G,i})\left(\eta (\ty_{s+1}-\bmu_{s+1})+2\eta^2\right) + \exp(-\eta V_s^{G,i})\left(-\eta (\ty_{s+1}-\bmu_{s+1})+2\eta^2\right)\right], \\
=& \eta \left(\E_{\ty_{s+1}}[\ty_{s+1}]-\bmu_{s+1}\right) \sum_{G \in \cG(x_{s+1})} \left(\exp(\eta V_s^{G,i}) \!-\!  \exp(-\eta V_s^{G,i})\right) + 2\eta^2 \!\! \sum_{G \in \cG(x_{s+1})}\left(\exp(\eta V_s^{G,i}) + \exp(-\eta V_s^{G,i}) \right), \\
\leq& \eta \left(\E_{\ty_{s+1}}[\ty_{s+1}]-\bmu_{s+1}\right)\cdotk \left(\sum_{G \in \cG(x_{s+1})}\exp(\eta V_s^{G,i}) -  \exp(-\eta V_s^{G,i})\right) + 2\eta^2\cdotk  L_{s},\\
=&\eta \left(\E_{\ty_{s+1}}[\ty_{s+1}]-\bmu_{s+1}\right) C_s^i(x_{s+1})  + 2\eta^2\cdotk  L_{s}.
\end{align*}
Here, the first inequality follows from the fact that for $0 < |x| < \frac{1}{2}$, $\exp(x) \leq 1+x+2x^2$. 
\end{proof}

Using this bound, we define a zero-sum game between the learner and the adversary and use the minimax theorem to conclude that the learner always has a strategy that guarantees that the per-round increase in surrogate loss can be bounded. To satisfy the convexity and compactness requirements of the minimax theorem, it will be convenient for us to imagine that the learner's pure strategy space is a finite, discrete subset of $\cPmean = [0,1]$. To this end, we define the following discretization for any $r \in \mathbb{N}$ (here $n$ is the discretization parameter we use to define the coarseness of our bucketing): 
\[
    \cP^{rn} = \left\{0, \frac{1}{rn}, \frac{2}{rn}, \dots, 1\right\}.
\]

We use this discretization also in our algorithm in Section~\ref{sec:meanalg} --- but we remark at the outset that the need to discretize is only for technical reasons, and our algorithm will have no dependence --- neither in runtime nor in its convergence rate --- on the value of $r$ that we choose, so we can imagine the discretization to be arbitrarily fine.

To simplify notation, for each $\bmu \in \cP^{rn}$, define $C^{\bmu}_s \equiv C^i_s$ where $i \in [n]$ s.t. $\bmu \in \Bm(i)$.

\begin{lemma}
\label{lem:exists}
For any transcript $\pi_s \in \Pi^*$, any $x_{s+1} \in \cX$, and any $r \in \mathbb{N}$ there exists a distribution over predictions for the learner $Q^L_{s+1} \in \Delta \cP^{rn}$, such that regardless of the adversary's choice of distribution of $y_{s+1}$ over $\Delta \cY$,  we have that:
\[
    \E_{\bmu \sim Q^L_{s+1}}\left[\Delta_{s+1}(\pi_s,x_{s+1}, \bmu)\right] \le L_s\cdotk \left(\frac{\eta}{rn}+2\eta^2\right).
\]
\end{lemma}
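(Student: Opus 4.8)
The plan is to recast the choice faced by the learner at round $s+1$ as a zero-sum game and apply Sion's minimax theorem (Theorem~\ref{thm:minimax}), after first replacing $\Delta_{s+1}$ by the linear upper bound from Lemma~\ref{lem:boundincrease}. Fix $\pi_s$, $x_{s+1}$, and $r$. Since every adversary distribution for $y_{s+1}$ enters both $\Delta_{s+1}$ and the bound of Lemma~\ref{lem:boundincrease} only through its mean $m = \E[\ty_{s+1}] \in [0,1]$, we may take the adversary's (effective) strategy space to be the convex compact set $[0,1]$ and the learner's to be $\Delta\cP^{rn}$, with payoff
\[
u(Q^L, m) = \E_{\bmu \sim Q^L}\!\left[\eta\,(m - \bmu)\,C_s^{\bmu}(x_{s+1}) + 2\eta^2 L_s\right],
\]
which is bilinear, hence concave in $Q^L$ and convex in $m$, on a product of convex compact sets. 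By Lemma~\ref{lem:boundincrease}, for any adversary distribution with mean $m$ we have $\E_{\bmu \sim Q^L}[\Delta_{s+1}(\pi_s, x_{s+1}, \bmu)] \le u(Q^L, m)$, so it suffices to find $Q^L$ with $\max_{m \in [0,1]} u(Q^L, m) \le L_s\!\left(\frac{\eta}{rn} + 2\eta^2\right)$.

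By Theorem~\ref{thm:minimax}, $\min_{Q^L}\max_m u(Q^L,m) = \max_m \min_{Q^L} u(Q^L, m)$; since $u$ is linear in $Q^L$, the inner minimum over the simplex is attained at a vertex, so the right-hand side equals $\max_{m \in [0,1]} \min_{\bmu \in \cP^{rn}} \left[\eta(m - \bmu) C_s^{\bmu}(x_{s+1}) + 2\eta^2 L_s\right]$. As the $2\eta^2 L_s$ term is constant in $\bmu$ and $m$, and $\eta \ge 0$ (the case $\eta = 0$ being trivial), the lemma reduces to the pointwise claim that for every $m \in [0,1]$ there is a grid point $\bmu \in \cP^{rn}$ with $(m - \bmu)\,C_s^{\bmu}(x_{s+1}) \le L_s/(rn)$.

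To establish this claim I would use the alignment between the prediction grid and the bucketing. If $m$ is itself a grid point, take $\bmu = m$, making the left-hand side $0$. Otherwise $m$ lies strictly between two consecutive grid points $\bmu^- < m < \bmu^+$ with $\bmu^+ - \bmu^- = 1/(rn)$; the key point is that every bucket boundary $i/n$ is a grid point of $\cP^{rn}$, because $i/n = ir/(rn)$ with $ir \in \mathbb{N}$, so $\bmu^-$ and $\bmu^+$ lie in the same bucket $\Bm(i)$ and therefore $C_s^{\bmu^-}(x_{s+1}) = C_s^{\bmu^+}(x_{s+1}) = C_s^{i}(x_{s+1})$ (recalling $C_s^{\bmu} \equiv C_s^i$ for $\bmu \in \Bm(i)$). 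If $C_s^{i}(x_{s+1}) \ge 0$ choose $\bmu = \bmu^+$, so $(m - \bmu^+) C_s^i(x_{s+1}) \le 0$; if $C_s^i(x_{s+1}) < 0$ choose $\bmu = \bmu^-$, so $(m - \bmu^-) C_s^i(x_{s+1}) \le 0$. In either case the product is $\le 0 \le L_s/(rn)$ since $L_s > 0$, proving the claim and hence the lemma.

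I do not expect a genuine obstacle here; the one place that needs care is exactly the grid/bucket alignment. If the prediction grid and the bucket partition were not aligned, $\bmu^-$ and $\bmu^+$ could straddle a bucket boundary across which $C_s^i(x_{s+1})$ changes sign, and the simple ``pick the side that makes the product nonpositive'' argument would break --- one would then have to fall back on a mixture of $\bmu^-$ and $\bmu^+$ (incurring an $O(1/(rn))$ error) or a continuity argument in $\bmu$. Since $\cP^{rn}$ is defined precisely so that all bucket endpoints $i/n$ are grid points, the clean argument applies. I would also double-check that the reduction of the adversary to a point mass at $m$ is legitimate, which is immediate from linearity of $u$ in $m$, and that $C_s^{\bmu}$ as used above is verbatim the quantity appearing in Lemma~\ref{lem:boundincrease}.
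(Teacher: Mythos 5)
Your overall structure matches the paper's: cast the round as a zero-sum game with the learner over $\Delta\cP^{rn}$ and the adversary over $\E[\ty_{s+1}] \in [0,1]$, use bilinearity plus Sion's theorem, and bound the value via $\max_m \min_{\bmu}$. That part is fine. The gap is in the claim that $\bmu^-$ and $\bmu^+$ always lie in the same bucket. Bucket boundaries being grid points rules out a boundary lying \emph{strictly between} two consecutive grid points, but it does \emph{not} rule out a boundary coinciding with one of them --- and since the buckets are half-open, $\Bm(i) = \left[\tfrac{i-1}{n}, \tfrac{i}{n}\right)$, the point $\tfrac{i}{n}$ belongs to $\Bm(i+1)$, not $\Bm(i)$. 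Concretely, whenever $m \in \left(\tfrac{i}{n} - \tfrac{1}{rn},\, \tfrac{i}{n}\right)$, you get $\bmu^- = \tfrac{i}{n} - \tfrac{1}{rn} \in \Bm(i)$ and $\bmu^+ = \tfrac{i}{n} \in \Bm(i+1)$, so $C_s^{\bmu^-} = C_s^{i}$ and $C_s^{\bmu^+} = C_s^{i+1}$ are different quantities and can have opposite signs. If, say, $C_s^{i} > 0 > C_s^{i+1}$, then $(m-\bmu^-)C_s^{\bmu^-} > 0$ and $(m-\bmu^+)C_s^{\bmu^+} > 0$: neither pure response makes the product nonpositive, so your ``pick the side that makes the product $\le 0$'' step fails. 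This is exactly the case the paper's Algorithm~\ref{alg:meantemplate} handles by \emph{randomizing} between $\tfrac{i^*}{n} - \tfrac{1}{rn}$ and $\tfrac{i^*}{n}$; the need for that randomization is a symptom of the same phenomenon.

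Fortunately the gap is shallow. You are trying to prove the pointwise claim with slack $0$, but the lemma only needs slack $L_s/(rn)$, and that weaker bound holds unconditionally: for whichever of $\bmu^\pm$ you take, $|m - \bmu| < \tfrac{1}{rn}$ and $|C_s^{\bmu}| \le L_s$ (immediate from the definitions of $C$ and $L$), so $(m - \bmu)C_s^{\bmu} \le |m-\bmu|\,|C_s^\bmu| \le L_s/(rn)$ regardless of the sign pattern. This is precisely the estimate the paper uses --- it never attempts to hit $0$, only to stay within $L_s/(rn)$ --- and once you substitute it, your proof goes through and coincides with the paper's. So: correct framework, but the ``same bucket'' step is false because of the half-open bucketing; replace the sign argument with the crude bound $|m-\bmu|\cdot |C_s^\bmu| \le L_s/(rn)$.
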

\begin{proof}

 We define a zero-sum game played between the learner (the minimization player) and the adversary (the maximization player). The learner's pure strategy space is the set of discrete predictions $X_1 = \cP^{rn}$. The adversary's pure strategy space is (a priori) the set of all distributions over labels in $[0,1]$. However, we will observe in a moment that the objective function of our game depends only on the \emph{expected value} of the label, and so without loss of generality, we will be able to take the adversary's full strategy space to be the set of all pure strategies, i.e., $\mathcal{Q}^A = [0,1]$ (which is closed and convex), because it already spans the set of realizable expectations. As usual, we take the learner's full strategy space to be the set of distributions over pure strategies: $\mathcal{Q}^L = \Delta \cP^{rn}$.

Fix the transcript $\pi_s$ and the feature vector $x_{s+1}$. We define the objective of this game to be the upper bound we proved on $ \Delta_{s+1}(\pi_s,x_{s+1}, \bmu)$ in Lemma~\ref{lem:boundincrease}. For each $\bmu \in \cP^{rn}$ and each $y \in [0,1]$, we let:
\[
    u(\bmu, y) = \eta \left(y-\bmu\right) C^{\bmu}_s(x_{s+1})   + 2\eta^2\cdotk  L_{s}.
\]
Note that for any distribution over labels $y$ of the adversary, the expected objective value depends on his strategy only through $\E[\ty]$ because the above objective function is linear in $y$: that is, $\E_{\ty}[u(\bmu,\ty)] = u(\bmu, \E[\ty])$.
Thus we are justified in our reduced-form representation of the adversary's full strategy as choosing $\E[\ty]$ in the interval $[0,1]$.

We now establish the value of this game. Observe that for any strategy of the adversary (which fixes $\E[\ty])$, the learner can respond by playing $\bmu^* = \argmin_{\bmu \in \cP^{rn}} |\E[\ty] - \bmu|$, and that because of our discretization, $\min |\E[\ty] - \bmu^*| \leq \frac{1}{rn}$. Therefore, the value of the game is at most: 
\begin{eqnarray*}
\max_{y \in [0,1]} \min_{\bmu^* \in \cP^{rn}} u(\bmu^*,y) &\leq& \max_{\bmu \in \cP^{rn}} \frac{\eta}{rn} \left|C^{\bmu}_s(x_{s+1}) \right| + 2\eta^2\cdotk  L_{s}, \\
&\leq& L_s\cdotk \left(\frac{\eta}{rn} + 2\eta^2\right).
\end{eqnarray*}
Here the latter inequality follows since $C^{\bmu}_s(x_{s+1})  \leq L_s$ for all $\bmu \in \cP^{rn}$, by observation.  We can now apply the minimax theorem (Theorem~\ref{thm:minimax}) to conclude that there exists a fixed distribution $Q^L_{s+1} \in \mathcal{Q}^L$ for the learner that guarantees that simultaneously for \emph{every} label $y \in [0,1]$ that might be chosen by the adversary:
$$\E_{\bmu \sim Q^L_{s+1}}\left[u(\bmu,y)\right] \leq L_s\cdotk \left(\frac{\eta}{rn}+2\eta^2\right),$$
as desired.
\end{proof}

\begin{corollary}\label{cor:exists-mean}
For every $r \in \mathbb{N}$, $s \in [T]$, $\pi_s \in \Pi^*$, and $x_{s+1} \in \cX$ (which fixes $L_s$ and $Q^L_{s+1}$), and any distribution over $\cY$:

\[
    \E_{\bmu_{s+1} \sim Q^L_{s+1}}[\tL_{s+1}|\pi_s] = L_s + \E_{\bmu_{s+1}\sim Q^L_{s+1}}[\Delta_{s+1}(\pi_{s},x_{s+1},\bmu_{s+1})] \leq L_s\cdotk\left(1 + \frac{\eta}{rn} + 2\eta^2\right).
\]
\end{corollary}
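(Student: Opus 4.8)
The plan is to derive this corollary as an essentially immediate consequence of Lemma~\ref{lem:exists}, together with the definition of $\Delta_{s+1}$ and the tower property of conditional expectation. First I would establish the claimed equality. By the Definition of Conditional Change in Surrogate Loss, $\Delta_{s+1}(\pi_s,x_{s+1},\bmu_{s+1}) = \E_{\ty_{s+1}}[\tL_{s+1} - L_s \mid x_{s+1}, \bmu_{s+1}, \pi_s]$, and since $L_s = L_s(\pi_s)$ is a deterministic function of the transcript $\pi_s$ (hence constant under all of the relevant conditioning), this rearranges to $\E_{\ty_{s+1}}[\tL_{s+1} \mid x_{s+1}, \bmu_{s+1}, \pi_s] = L_s + \Delta_{s+1}(\pi_s,x_{s+1},\bmu_{s+1})$. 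Now I take the expectation of both sides over $\bmu_{s+1} \sim Q^L_{s+1}$ — this is legitimate because the corollary's hypotheses fix $\pi_s$ and $x_{s+1}$, which is precisely what pins down $L_s$ and the distribution $Q^L_{s+1}$ — and apply the tower property to collapse the nested expectations into $\E_{\bmu_{s+1}\sim Q^L_{s+1}}[\tL_{s+1} \mid \pi_s]$. Using linearity of expectation, and again that $L_s$ passes through unchanged, the right-hand side becomes $L_s + \E_{\bmu_{s+1}\sim Q^L_{s+1}}[\Delta_{s+1}(\pi_s,x_{s+1},\bmu_{s+1})]$, which is exactly the asserted equality.

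For the inequality, I would simply invoke Lemma~\ref{lem:exists}: for the distribution $Q^L_{s+1} \in \Delta \cP^{rn}$ whose existence it guarantees, and regardless of the adversary's choice of label distribution over $\cY$, we have $\E_{\bmu \sim Q^L_{s+1}}[\Delta_{s+1}(\pi_s,x_{s+1},\bmu)] \le L_s\left(\frac{\eta}{rn} + 2\eta^2\right)$. Substituting this bound into the equality just established gives $\E_{\bmu_{s+1}\sim Q^L_{s+1}}[\tL_{s+1} \mid \pi_s] \le L_s + L_s\left(\frac{\eta}{rn} + 2\eta^2\right) = L_s\left(1 + \frac{\eta}{rn} + 2\eta^2\right)$, as claimed.

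There is no genuine obstacle here: the corollary is a bookkeeping restatement of Lemma~\ref{lem:exists}. The only point requiring a small amount of care is the conditioning structure — specifically, making explicit that $L_s$ is $\pi_s$-measurable so that it factors through all the expectations, and that fixing $\pi_s$ together with $x_{s+1}$ is exactly what makes both $L_s$ and $Q^L_{s+1}$ well-defined, so that the inner expectation over $\ty_{s+1}$ and the outer expectation over $\bmu_{s+1}$ compose correctly via the tower property to yield $\E_{\bmu_{s+1}\sim Q^L_{s+1}}[\tL_{s+1}\mid\pi_s]$.
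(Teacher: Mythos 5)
Your argument is correct and is precisely the derivation the paper intends: the corollary is left unproved in the text because it follows from the definition of $\Delta_{s+1}$, the $\pi_s$-measurability of $L_s$, the tower property, and Lemma~\ref{lem:exists} in exactly the way you describe. Your care in noting that fixing $\pi_s$ and $x_{s+1}$ is what makes $L_s$ and $Q^L_{s+1}$ well-defined, so that the inner expectation over $\ty_{s+1}$ and outer expectation over $\bmu_{s+1}$ compose correctly, addresses the only subtlety in the bookkeeping.
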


Lemma~\ref{lem:exists} defines (existentially) an algorithm that the learner can use to make predictions---Algorithm~\ref{alg:meanexistential}. We will now show that Algorithm~\ref{alg:meanexistential} (if we could compute the distributions $Q^L_t$) results in multicalibrated predictions. In Section~\ref{sec:meanalg} we show a simple and efficient method for sampling from $Q^L_t$.

\begin{algorithm}[H]
\SetAlgoLined
\begin{algorithmic}
\FOR{$t=1, \dots, T$}
	\STATE Observe $x_t$. Given $\pi_{t-1}$ and $x_t$, let $Q^L_t \in \mathcal{Q}^L_t$ be the distribution over predictions whose existence \\ 
	is established in Lemma~\ref{lem:exists}.
	\STATE Sample $\bmu \sim Q^L_t$ and predict $\bmu_t = \bmu$
\ENDFOR
\end{algorithmic}
\caption{A Generic Multicalibrator}
\label{alg:meanexistential}
\end{algorithm}

We now prove two convergence bounds for Algorithm~\ref{alg:meanexistential}. The first will bound its multicalibration error \emph{in expectation}, and the other will provide a high probability bound. To show these bounds, we first state a helper theorem that will be useful not just in this section, but also in deriving the final convergence bounds for the algorithms presented in Sections \ref{sec:onlinemoment} and \ref{sec:onlinemultivalid}. The proof is in Appendix \ref{app3}.

\newcommand{\tX}{\tilde{X}}
\begin{restatable}{theorem}{helperthm}
\label{thm:general-bounds}
Consider a nonnegative random process $\tX_t$ adapted to the filtration $\mathcal{F}_t= \sigma(\pi_t)$, where $\tX_0$ is constant a.s.
Suppose we have that for any period $t,$ and any $\pi_{t-1}$, $\E[\tX_t|\pi_{t-1}] \leq X_{t-1} (1 + \eta c + 2\eta^2)$ for some $\eta \in [0,\frac12], c \in [0,1]$.
Then we have that: 
\begin{equation}
    \E_{\tpi_T}[\tX_T] \leq X_0 \exp \left(T\eta c+ 2T\eta^2 \right). \label{eqn:expectedbound}
\end{equation}
Further, define a process $\tZ_t$ adapted to the same filtration by $\tZ_t = Z_{t-1} + \ln \tX_t - \E [\ln (\tX_t)|\pi_{t-1}]$. Suppose that $|Z_t - Z_{t-1}| \leq 2\eta$, where $Z_0=0$ a.s. Then, with probability $1-\lambda$,  
\begin{equation}
    \ln(X_T(\pi_T)) \le \ln(X_0) + T \left(\eta c + 2\eta^2\right) + \eta \sqrt{8 T \ln\left(\frac{1}{\lambda}\right) }.  \label{eqn:hpbound}
\end{equation}
\end{restatable}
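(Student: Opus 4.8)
The plan is to prove the two bounds separately; both follow from standard martingale bookkeeping once the roles of the drift and the noise are separated.

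For the in-expectation bound \eqref{eqn:expectedbound} I would just apply the tower rule. Since $X_{t-1}$ is $\mathcal{F}_{t-1}$-measurable, taking expectations over $\tpi_{t-1}$ on both sides of the hypothesis $\E[\tX_t \mid \pi_{t-1}] \le X_{t-1}(1+\eta c + 2\eta^2)$ gives $\E[\tX_t] \le (1+\eta c + 2\eta^2)\,\E[\tX_{t-1}]$. Iterating this from $t=T$ down to $t=1$ and using that $\tX_0 = X_0$ is a.s.\ constant yields $\E[\tX_T] \le X_0 (1+\eta c + 2\eta^2)^T$, and then $1+z \le e^z$ gives exactly \eqref{eqn:expectedbound}.

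For the high-probability bound \eqref{eqn:hpbound} I would telescope $\ln X_T - \ln X_0 = \sum_{t=1}^T \bigl(\ln \tX_t - \ln X_{t-1}\bigr)$ and split each summand as a predictable ``drift'' piece $\E[\ln \tX_t \mid \pi_{t-1}] - \ln X_{t-1}$ plus a conditionally mean-zero ``noise'' piece $\ln \tX_t - \E[\ln \tX_t \mid \pi_{t-1}]$. The drift is controlled by Jensen's inequality applied conditionally on $\pi_{t-1}$: by concavity of $\ln$ and the hypothesis, $\E[\ln \tX_t \mid \pi_{t-1}] \le \ln \E[\tX_t \mid \pi_{t-1}] \le \ln X_{t-1} + \ln(1+\eta c + 2\eta^2) \le \ln X_{t-1} + \eta c + 2\eta^2$, so the drift pieces contribute at most $T(\eta c + 2\eta^2)$ in total, deterministically. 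The sum of the noise pieces is, by construction (and because $Z_0 = 0$), exactly $\tZ_T$ — this is precisely why $\tZ_t$ is defined the way it is. The increments of $\tZ_t$ are conditionally mean-zero and, by hypothesis, satisfy $|Z_t - Z_{t-1}| \le 2\eta$, so Azuma--Hoeffding gives $\Pr[\tZ_T \ge \epsilon] \le \exp\bigl(-\epsilon^2/(8T\eta^2)\bigr)$; choosing $\epsilon = \eta\sqrt{8T\ln(1/\lambda)}$ makes the right side equal to $\lambda$. Combining the deterministic drift bound with this high-probability bound on $\tZ_T$ gives \eqref{eqn:hpbound}.

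I do not expect a genuine obstacle: the content is telescoping, one invocation of Jensen, and one invocation of Azuma--Hoeffding. The only points that need care are (i) applying Jensen conditionally on $\pi_{t-1}$ \emph{before} summing, so that the hypothesized conditional bound on $\E[\tX_t\mid \pi_{t-1}]$ is actually usable, and (ii) noting that $\ln \tX_t$ is well-defined because in all of our applications $\tX_t$ is a sum of exponentials and hence strictly positive. The remaining steps — measurability of $X_{t-1}$ with respect to $\mathcal{F}_{t-1}$, the elementary inequalities $\ln(1+z) \le z$ and $1+z \le e^z$, and checking that $\tZ_t$ has mean-zero increments — are routine.
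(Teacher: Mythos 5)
Your proof is correct and takes essentially the same approach as the paper: the in-expectation bound via the tower rule and iteration, and the high-probability bound by telescoping $\ln X_T - \ln X_0$ into a predictable drift part (controlled by conditional Jensen and $\ln(1+z)\le z$) plus the martingale $\tZ_T$ (controlled by Azuma--Hoeffding with increment bound $2\eta$). The only cosmetic difference is that the paper states the drift bound as a separate lemma, whereas you fold it inline.
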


We are now ready to bound our multicalibration error. As a straightforward consequence of Corollary~\ref{cor:exists-mean} and the first part of Theorem~\ref{thm:general-bounds}, we have the following Corollary. 
\begin{corollary}
\label{cor:surrogateloss-mean}
Against any adversary, Algorithm~\ref{alg:meanexistential} instantiated with discretization parameter $r$ results in surrogate loss satisfying:
\[
    \E_{\tpi_T}[\tL_T] \le 2|\cG|n\cdotk \exp\left(\frac{T\eta}{rn}+2T\eta^2\right).
\]
\end{corollary}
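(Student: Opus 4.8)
The plan is to instantiate the first part of the helper theorem (Theorem~\ref{thm:general-bounds}) with the random process $\tX_t = \tL_t$, the surrogate loss evaluated on the random transcript $\tpi_t$ generated by running Algorithm~\ref{alg:meanexistential} against the given adversary. This process is nonnegative (a sum of exponentials), and adapted to the filtration $\mathcal{F}_t = \sigma(\tpi_t)$, since $L_t$ is a deterministic function of $\pi_t$. The initial value $X_0 = L_0$ is a constant: on the empty transcript every calibration-error term vanishes, $V_0^{G,i} = 0$, so each summand contributes $\exp(0) + \exp(0) = 2$, and since there are $|\cG|\cdot n$ pairs $(G,i)$ we get $L_0 = 2|\cG|n$.

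Next I would verify the one-step multiplicative drift hypothesis $\E[\tL_t \mid \pi_{t-1}] \le L_{t-1}\left(1 + \eta c + 2\eta^2\right)$ with $c = \tfrac{1}{rn} \in (0,1]$ (using $r,n \ge 1$) and $\eta \in [0,\tfrac12]$ (as fixed in Definition~\ref{def:meansurrogate}). Corollary~\ref{cor:exists-mean} gives exactly this bound, but conditioned on both $\pi_{t-1}$ \emph{and} the adversary's realized feature vector $x_t$; the point is that the right-hand side $L_{t-1}\left(1 + \tfrac{\eta}{rn} + 2\eta^2\right)$ does not depend on $x_t$. Hence, taking a further expectation over the adversary's (arbitrary) choice of feature distribution at round $t$ and applying the tower property, the same bound survives when we condition on $\pi_{t-1}$ alone. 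This uniformity over the adversary's move $x_t$ is the only place requiring a small amount of care, and it is immediate from the form of Corollary~\ref{cor:exists-mean}.

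With both hypotheses in hand, inequality~\eqref{eqn:expectedbound} of Theorem~\ref{thm:general-bounds} yields $\E_{\tpi_T}[\tL_T] \le L_0 \exp\left(T\eta c + 2T\eta^2\right) = 2|\cG|n \cdot \exp\left(\tfrac{T\eta}{rn} + 2T\eta^2\right)$, which is the claimed bound. I do not expect any genuine obstacle here: the substantive work was already done in Lemma~\ref{lem:exists}, Corollary~\ref{cor:exists-mean}, and Theorem~\ref{thm:general-bounds}, and this corollary is just the bookkeeping step that chains them together, correctly identifying $c = \tfrac{1}{rn}$, the range of $\eta$, and the initial potential $L_0 = 2|\cG|n$.
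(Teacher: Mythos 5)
Your proposal is correct and takes the same route as the paper: instantiate the first part of Theorem~\ref{thm:general-bounds} with $X_0 = L_0 = 2|\cG|n$ and $c = \tfrac{1}{rn}$, using Corollary~\ref{cor:exists-mean} to supply the one-step drift hypothesis. The paper's proof is a one-liner, whereas you additionally spell out the tower-property step over the adversary's choice of $x_t$; this is a real (if routine) gap the paper elides, and your handling of it is correct since the bound in Corollary~\ref{cor:exists-mean} is uniform in $x_{s+1}$.
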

\begin{proof}
Note that the first part of Theorem~\ref{thm:general-bounds} applies to the process $L$ with $L_0 = 2 |\cG|n$ and $c = \frac{1}{rn}$. The bound follows by plugging these values into \eqref{eqn:expectedbound}. 
\end{proof}

Next, we can convert this into a bound on Algorithm~\ref{alg:meanexistential}'s expected calibration error:

\begin{theorem}
\label{thm:meanmulti}
When Algorithm~\ref{alg:meanexistential} is run using $n$ buckets for calibration, discretization $r \in \mathbb{N}$, and $\eta = \sqrt{\frac{\ln(2|\cG|n)}{2T}} \in (0, 1/2)$, then against any adversary, its sequence of mean predictions is $(\alpha,n)$-multicalibrated with respect to $\cG$, where:
\begin{equation*}
    \E[\alpha] \leq \frac{1}{rn} + 2\cdotk \sqrt{\frac{2\ln(2|\cG|n)}{T}}.
\end{equation*}
For $r = \frac{\sqrt{T}}{\epsilon n\sqrt{2\ln(2|\cG|n})}$ this gives:
$$\E[\alpha] \leq \left( 2+\epsilon\right) \cdotk \sqrt{\frac{2}{T} \ln\left(2|\cG|n \right)}.$$
Here the expectation is taken over the randomness of the transcript $\pi_T$.
\end{theorem}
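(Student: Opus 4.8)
The plan is to feed the surrogate-loss bound of Corollary~\ref{cor:surrogateloss-mean} through the soft-max inequality $\max_{G,i}|V_T^{G,i}| \le \tfrac1\eta\ln L_T$ (the observation following Definition~\ref{def:meansurrogate}) and then tune $\eta$ and $r$. Concretely, for each realized transcript $\pi_T$ I would set $\alpha(\pi_T) := \tfrac1T\max_{G\in\cG,\,i\in[n]}|V_T^{G,i}(\pi_T)|$; by Observation~\ref{obs:meancalibration} the predictions produced by Algorithm~\ref{alg:meanexistential} are then $(\alpha(\pi_T),n)$-mean multicalibrated with respect to $\cG$ on that transcript, so it suffices to control $\E_{\tpi_T}[\alpha(\tpi_T)]$.

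The core estimate is
\[
  \E_{\tpi_T}\!\Big[\max_{G,i}|\tV_T^{G,i}|\Big]
  \;\le\; \tfrac1\eta\,\E_{\tpi_T}[\ln\tL_T]
  \;\le\; \tfrac1\eta\,\ln\E_{\tpi_T}[\tL_T]
  \;\le\; \tfrac1\eta\Big(\ln(2|\cG|n) + \tfrac{T\eta}{rn} + 2T\eta^2\Big),
\]
where the second inequality is Jensen applied to the concave function $\ln$ and the third is Corollary~\ref{cor:surrogateloss-mean}. Dividing by $T$ gives
\[
  \E[\alpha] \;\le\; \frac{\ln(2|\cG|n)}{\eta T} + \frac{1}{rn} + 2\eta .
\]
I would then minimize the right-hand side over $\eta$: the $\eta$-dependent part $\tfrac{\ln(2|\cG|n)}{\eta T}+2\eta$ is minimized at $\eta = \sqrt{\ln(2|\cG|n)/(2T)}$, where it equals $2\sqrt{2\ln(2|\cG|n)/T}$, yielding the first claimed bound $\E[\alpha]\le \tfrac1{rn}+2\sqrt{2\ln(2|\cG|n)/T}$. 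One has to verify that this $\eta$ lies in $(0,\tfrac12)$ as the statement asserts, which holds once $T>2\ln(2|\cG|n)$; I would flag this (mild) regime restriction. For the second bound, substituting $r=\sqrt{T}/(\epsilon n\sqrt{2\ln(2|\cG|n)})$ turns $\tfrac1{rn}$ into $\epsilon\sqrt{2\ln(2|\cG|n)/T}$, and the bound collapses to $(2+\epsilon)\sqrt{(2/T)\ln(2|\cG|n)}$.

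Essentially all the substance was already established in Lemmas~\ref{lem:boundincrease}--\ref{lem:exists} and Corollary~\ref{cor:surrogateloss-mean}, so this step is mostly bookkeeping and I do not expect a single hard obstacle. The one place meriting care is the Jensen step: we need an upper bound on $\E[\ln\tL_T]$ rather than on $\ln\E[\tL_T]$, and it is precisely concavity of $\ln$ that makes the inequality point the right way. (By contrast, the high-probability companion of this bound cannot route through Jensen and instead must use the martingale concentration inequality~\eqref{eqn:hpbound} of Theorem~\ref{thm:general-bounds}.) I would also double-check that interpreting the theorem's $\alpha$ as the transcript-dependent random variable $\tfrac1T\max_{G,i}|\tV_T^{G,i}|$ is the intended reading — i.e.\ that ``$(\alpha,n)$-multicalibrated with $\E[\alpha]\le\cdots$'' means this quantity is bounded in expectation — and confirm the low-order term $\tfrac1{rn}$ is handled consistently with the claim that the convergence rate has no dependence on $r$.
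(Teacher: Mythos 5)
Your proposal is correct and takes essentially the same approach as the paper: the paper exponentiates, applies Jensen to the convex function $\exp$, bounds by $\E[\tL_T]$ via Corollary~\ref{cor:surrogateloss-mean}, and takes logs, which is the same chain of inequalities as your route through $\max_{G,i}|V_T^{G,i}| \le \tfrac1\eta\ln L_T$ followed by Jensen on the concave $\ln$. Your side remarks — that the high-probability version cannot go through Jensen and instead uses the martingale bound, and that $\eta\in(0,\tfrac12)$ imposes a mild regime restriction on $T$ — are both accurate and consistent with the paper.
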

\begin{proof}
From Observation~\ref{obs:meancalibration}, it suffices to show that $$\frac{1}{T} \E_{\tpi_T}\left[\max_{G \in \cG, i \in [n]}|\tV_T^{G,i}|\right] \leq \frac{1}{rn} + 2\cdotk \sqrt{\frac{2\ln(2|\cG|n)}{T}}.$$ 

We begin by computing a bound on the (exponential of) the expectation of this quantity:
\begin{eqnarray*}
\exp\left(\eta \cdotk \E_{\tpi_T}\left[\max_{G,i}|\tV_T^{G,i}|\right]\right)&\leq&\E_{\tpi_T}\left[\exp\left(\eta\cdotk \max_{G,i}|\tV_T^{G,i}|\right)\right], \\
&=&\E_{\tpi_T}\left[\max_{G,i}\exp\left(\eta\cdotk|\tV_T^{G,i}|\right)\right], \\
&\leq& \E_{\tpi_T}\left[\max_{G,i}\left(\exp\left(\eta\cdotk \tV_T^{G,i}\right)+\exp\left(-\eta\cdotk \tV_T^{G,i}\right)\right)\right], \\
&\leq& \E_{\tpi_T}\left[\sum_{G,i}\left(\exp\left(\eta\cdotk \tV_T^{G,i}\right)+\exp\left(-\eta\cdotk \tV_T^{G,i}\right)\right)\right] ,\\
&=& \E_{\tpi_T}[\tL_T], \\
&\leq& 2|\cG|n\cdotk \exp\left(\frac{T\eta}{rn}+2T\eta^2\right).
\end{eqnarray*}
Here the first step is by Jensen's inequality and the last one follows from Corollary~\ref{cor:surrogateloss-mean}. Taking the logarithm of both sides and dividing by $\eta\cdotk T$, we have
$$\frac{1}{T}\E_{\tpi_T}\left[\max_{G,i}|\tV_T^{G,i}|\right] \leq \frac{\ln(2|\cG|n)}{\eta\cdotk T} + \frac{1}{rn} + 2\eta.$$
Choosing $\eta = \sqrt{\frac{\ln(2|\cG|n)}{2T}}$, we thus obtain the desired inequality
$$ \frac{1}{T}\E_{\tpi_T}\left[\max_{G,i}|\tV_T^{G,i}|\right]\leq \frac{1}{rn} + 2\cdotk \sqrt{\frac{2\ln(2|\cG|n)}{T}}. \qedhere $$\end{proof}

    
Now, given $\tilde{L}$, let us define its associated martingale process $\tilde{Z}$ as in the second part of Theorem~\ref{thm:general-bounds}. The next lemma shows that the increments of $\tilde{Z}$ are uniformly bounded over all rounds $t$. The proof is in Appendix \ref{app3}.
\begin{restatable}{lemma}{martingalebounded}
\label{lem:martingale-bounded-mean}
At any round $t \in [T]$ and for any realized transcript $\pi_t$, $|Z_t - Z_{t-1}|\le 2\eta.$
\end{restatable}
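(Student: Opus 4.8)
The plan is to reduce the bound on the martingale increment $|Z_t - Z_{t-1}|$ to a deterministic, one-round multiplicative stability estimate for the surrogate loss. Since the process is defined (with $\tX_t = \tL_t$) by $\tZ_t = Z_{t-1} + \ln \tL_t - \E[\ln \tL_t \mid \pi_{t-1}]$, we have $Z_t - Z_{t-1} = \ln \tL_t - \E[\ln \tL_t \mid \pi_{t-1}]$, and $L_{t-1} = L_{t-1}(\pi_{t-1})$ is deterministic given $\pi_{t-1}$. So it suffices to show that for every realization of round $t$ (every $x_t$, $\bmu_t$, $y_t$ consistent with $\pi_{t-1}$) we have $|\ln L_t - \ln L_{t-1}| \le \eta$; averaging over $\ty_t$ (and the learner's randomness) then gives $|\E[\ln \tL_t \mid \pi_{t-1}] - \ln L_{t-1}| \le \eta$ as well, and the triangle inequality yields $|Z_t - Z_{t-1}| \le 2\eta$.

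To establish $|\ln L_t - \ln L_{t-1}| \le \eta$, I would track how each summand of $L$ changes from round $t-1$ to round $t$. By Definition~\ref{def:meanerror}, $V_t^{G,i} = V_{t-1}^{G,i} + \ind[\bmu_t \in B(i),\, x_t \in G]\,(y_t - \bmu_t)$, so $V_t^{G,i}$ differs from $V_{t-1}^{G,i}$ only for those $(G,i)$ with $i = \Binv(\bmu_t)$ and $G \in \cG(x_t)$, and then only by $y_t - \bmu_t \in [-1,1]$. Hence $\exp(\pm \eta V_t^{G,i}) = \exp(\pm \eta V_{t-1}^{G,i})\cdot \exp(\pm \eta(y_t - \bmu_t))$ for the changed terms and equals $\exp(\pm \eta V_{t-1}^{G,i})$ otherwise; since $\eta \in [0,\tfrac12]$ and $|y_t - \bmu_t| \le 1$, every such multiplicative factor lies in $[e^{-\eta}, e^{\eta}]$ (with factor exactly $1$ for the unchanged terms). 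A per-term multiplicative bound in $[e^{-\eta},e^{\eta}]$ passes to the sum, so $e^{-\eta} L_{t-1} \le L_t \le e^{\eta} L_{t-1}$, i.e.\ $|\ln L_t - \ln L_{t-1}| \le \eta$, as required.

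I do not expect a genuine obstacle: the argument is a direct computation, and the only care needed is bookkeeping which summands actually change and observing that the per-term ratio bound survives summation. This same template — one-round multiplicative stability of the potential implies bounded martingale increments — is exactly what will be reused in Sections~\ref{sec:onlinemoment} and~\ref{sec:onlinemultivalid}, with only the per-round change in the relevant error quantities (hence the constant multiplying $\eta$) differing.
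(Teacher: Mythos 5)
Your proposal is correct and matches the paper's proof in essence: both hinge on the same per-round multiplicative stability $e^{-\eta}L_{t-1} \le L_t \le e^{\eta}L_{t-1}$ (driven by $|y_t - \bmu_t| \le 1$), with the only stylistic difference being that you split $|Z_t - Z_{t-1}|$ via the triangle inequality through $\ln L_{t-1}$, whereas the paper bounds the ratio $L_t(\pi_t)/L_t(\pi_t')$ between two one-step extensions directly before taking the expectation.
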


We can now use the second part of Theorem~\ref{thm:general-bounds} to prove a high probability bound on the multicalibration error of Algorithm~\ref{alg:meanexistential}.

\begin{theorem}\label{thm:hpcalibration}
When Algorithm~\ref{alg:meanexistential} is run using $n$ calibration buckets, discretization $r \in \mathbb{N}$ and $\eta = \sqrt{\frac{\ln(2|\cG|n)}{2T}} \in (0, 1/2)$, then against any adversary, its sequence of mean predictions is $\alpha$-multicalibrated, with respect to $\cG$ with probability $1-\lambda$ over the randomness of the transcript $\pi_T$, for
\[ 
    \alpha \leq \frac{1}{rn} +  4 \cdotk \sqrt{\frac{2}{T} \ln\left(\frac{2|\cG|n}{\lambda} \right)}.
\]
Choosing $r = \frac{\sqrt{T}}{\epsilon n\sqrt{2\ln(2|\cG|n/\lambda})}$, this gives:
$$\alpha \leq \left( 4 +\epsilon\right) \cdotk \sqrt{\frac{2}{T} \ln\left(\frac{2|\cG|n}{\lambda} \right)}.$$
\end{theorem}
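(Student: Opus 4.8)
The plan is to run the same argument that proved the in-expectation bound (Theorem~\ref{thm:meanmulti}), but to invoke the \emph{second}, high-probability half of the helper theorem (Theorem~\ref{thm:general-bounds}) in place of the first. Concretely, I would apply Theorem~\ref{thm:general-bounds} to the surrogate-loss process $\tX_t := \tL_t$, adapted to the filtration $\mathcal{F}_t = \sigma(\pi_t)$, with the constant initial value $X_0 = L_0 = 2|\cG|n$ and parameters $c = \tfrac{1}{rn}$ and $\eta = \sqrt{\tfrac{\ln(2|\cG|n)}{2T}}$. The two hypotheses of that theorem hold verbatim: the per-round multiplicative increase $\E[\tL_t \mid \pi_{t-1}] \le L_{t-1}(1 + \eta c + 2\eta^2)$ is exactly Corollary~\ref{cor:exists-mean}, and the bounded-increment condition $|Z_t - Z_{t-1}| \le 2\eta$ for the associated martingale $\tZ_t = Z_{t-1} + \ln\tL_t - \E[\ln\tL_t \mid \pi_{t-1}]$ is exactly Lemma~\ref{lem:martingale-bounded-mean}. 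Inequality~\eqref{eqn:hpbound} then yields, with probability at least $1-\lambda$ over the transcript, $\ln(L_T) \le \ln(2|\cG|n) + T\bigl(\tfrac{\eta}{rn} + 2\eta^2\bigr) + \eta\sqrt{8T\ln(1/\lambda)}$.

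Next I would convert this surrogate-loss bound back into a calibration-error bound exactly as in the proof of Theorem~\ref{thm:meanmulti}: by the soft-max observation, $\max_{G,i}|V_T^{G,i}| \le \tfrac{1}{\eta}\ln(L_T)$, and by Observation~\ref{obs:meancalibration} it suffices to bound $\alpha := \tfrac{1}{T}\max_{G,i}|V_T^{G,i}|$. Dividing the displayed inequality through by $\eta T$ gives $\alpha \le \tfrac{\ln(2|\cG|n)}{\eta T} + \tfrac{1}{rn} + 2\eta + \sqrt{\tfrac{8\ln(1/\lambda)}{T}}$. Plugging in $\eta = \sqrt{\tfrac{\ln(2|\cG|n)}{2T}}$ makes the first and third terms each equal to $\sqrt{\tfrac{2\ln(2|\cG|n)}{T}}$; writing $\sqrt{\tfrac{8\ln(1/\lambda)}{T}} = 2\sqrt{\tfrac{2\ln(1/\lambda)}{T}}$ and applying the elementary bound $\sqrt{a} + \sqrt{b} \le 2\sqrt{a+b}$ together with $\ln(2|\cG|n) + \ln(1/\lambda) = \ln(2|\cG|n/\lambda)$, the three radical terms collapse to $4\sqrt{\tfrac{2}{T}\ln\tfrac{2|\cG|n}{\lambda}}$, giving $\alpha \le \tfrac{1}{rn} + 4\sqrt{\tfrac{2}{T}\ln\tfrac{2|\cG|n}{\lambda}}$. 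Finally, substituting the prescribed $r = \tfrac{\sqrt{T}}{\epsilon n\sqrt{2\ln(2|\cG|n/\lambda)}}$ turns $\tfrac{1}{rn}$ into $\epsilon\sqrt{\tfrac{2}{T}\ln\tfrac{2|\cG|n}{\lambda}}$, producing the stated $(4+\epsilon)$ form.

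I do not expect any genuine obstacle here: all of the analytic content — the per-round potential/minimax argument, the Azuma-type concentration for the surrogate-loss supermartingale, and the martingale-increment bound — has already been established upstream, so this theorem is essentially an assembly step. The only places requiring care are (i) confirming that the two hypotheses of Theorem~\ref{thm:general-bounds} apply to $\tL_t$ exactly as stated (which they do, via Corollary~\ref{cor:exists-mean} and Lemma~\ref{lem:martingale-bounded-mean}), and (ii) the constant-chasing in the last step — in particular, pairing $\tfrac{\ln(2|\cG|n)}{\eta T}$ with $2\eta$ (rather than trying to fold everything under a single radical) and consolidating the $\lambda$-dependent term via $\sqrt{a}+\sqrt{b}\le 2\sqrt{a+b}$. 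A sharper constant is available if one keeps the terms separate, at the cost of a less clean statement.
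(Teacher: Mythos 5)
Your proof is correct and follows essentially the same route as the paper: invoke the high-probability half of Theorem~\ref{thm:general-bounds} (justified by Corollary~\ref{cor:exists-mean} and Lemma~\ref{lem:martingale-bounded-mean}), translate the bound on $\ln L_T$ into a bound on $\tfrac{1}{T}\max_{G,i}|V_T^{G,i}|$ via the soft-max observation, plug in $\eta$, and merge the two radical terms with $\sqrt{a}+\sqrt{b}\le 2\sqrt{a+b}$. The constant-chasing and the choice of which terms to balance are identical to the paper's argument.
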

\begin{proof}
By Lemma~\ref{lem:martingale-bounded-mean}, the second part of Theorem~\ref{thm:general-bounds} applies; plugging in  $L_0 = 2|\cG|n$ and $c = \frac{1}{rn}$, we have:
 \begin{align*}
& \ln(L_T(\pi_T)) \le \ln(2|\cG|n) + T \left(\frac{\eta}{rn} + 2\eta^2\right) + \eta \sqrt{8 T \ln\left(\frac{1}{\lambda}\right) }.
\end{align*}
Now, note that 
\begin{align*}
\exp\left(\eta\cdotk \max_{G,i}|V_T^{G,i}|\right) &=\max_{G,i}\exp\left(\eta\cdotk|V_T^{G,i}|\right), \\
&\leq \max_{G,i}\left(\exp\left(\eta\cdotk V_T^{G,i}\right)+\exp\left(-\eta\cdotk V_T^{G,i}\right)\right), \\
&\leq \sum_{G,i}\left(\exp\left(\eta\cdotk V_T^{G,i}\right)+\exp\left(-\eta\cdotk V_T^{G,i}\right)\right), \\
&= L_T(\pi_T). 
\end{align*}
Taking log on both sides and dividing both sides by $\eta T$, we get
\begin{align*}
    \frac{1}{T} \max_{G,i} |V^{G,i}_{T}| \le \frac{1}{\eta T} \ln(L_T(\pi_T))
    \le \frac{\ln(2|\cG|n)}{\eta\cdotk T} + \frac{1}{rn} + 2\eta + \sqrt{\frac{8  \ln\left(\frac{1}{\lambda}\right)}{T}}.
\end{align*}
Choosing $\eta = \sqrt{\frac{\ln(2|\cG|n)}{2T}}$, we thus obtain the desired inequality
\[ 
\frac{1}{T} \max_{G,i}|V_T^{G,i}| \le \frac{1}{rn} + 2 \sqrt{\frac{2\ln(2|\cG|n)}{T}} + \sqrt{\frac{8  \ln\left(\frac{1}{\lambda}\right)}{T}} \le \frac{1}{rn} +  4 \cdotk \sqrt{\frac{2}{T} \ln\left(\frac{2|\cG|n}{\lambda} \right)}. \qedhere
\]
\end{proof}

\begin{remark}
In both Theorems \ref{thm:meanmulti} and \ref{thm:hpcalibration}, the dependence on $\log(|\cG|)$ can be replaced with a dependence on $\log(d)$ under the assumption that $|\cG(x_t)| \leq d$ for all $t$ --- i.e. that each observed data point is contained in only boundedly many groups. This gives us non-trivial guarantees even when $\cG$ is infinitely large. See Appendix \ref{sec:moregroups} for details. 
\end{remark}

\subsection{Deriving an Efficient Algorithm via Equilibrium Computation}
\label{sec:meanalg}

\begin{algorithm}[H]
\SetAlgoLined
\begin{algorithmic}
\FOR{$t=1, \dots, T$}
	\STATE Observe $x_t$ and compute for each $i \in [n]$	$C^i_{t-1}(x_{t}) $ as defined in \eqref{eqn:csi}.
    \IF {$C^i_{t-1}(x_{t}) > 0$ for all $i \in [n]$}
        \STATE Predict $\bmu_t= 1$.
    \ELSIF {$C^i_{t-1}(x_{t}) < 0$ for all $i \in [n]$}
        \STATE Predict $\bmu_t = 0$.  
    \ELSE 
        \STATE Find $i^* \in [n-1]$ such that $C^{i^*}_{t-1}(x_{t}) \cdot C^{i^*+1}_{t-1}(x_{t}) \leq 0$
        \STATE Define $0 \leq q_t \leq 1$ such that $q_t C^{i^*}_{t-1}(x_t) + (1-q_t) C^{i^*+1}_{t-1}(x_t) = 0$. In other words, define it as follows (using the convention that 0/0 = 1):
        $$q_t = \frac{|C^{i^*+1}_{t-1}(x_{t})|}{|C^{i^*+1}_{t-1}(x_{t})| + |C^{i^*}_{t-1}(x_{t})|}.$$
       
        \STATE Predict $\bmu_t = \frac{i^*}{n}- \frac{1}{rn}$ with probability $q_t$ and $\bmu_t = \frac{i^*}{n}$ with probability $1-q_t$.
\ENDIF
\ENDFOR
\end{algorithmic}
\caption{Von Neumann's Mean Multicalibrator($\eta,n,r$)}
\label{alg:meantemplate}
\end{algorithm}

\bigskip

In Section~\ref{sec:meanexistential}, we derived Algorithm~\ref{alg:meanexistential} and proved that it results in mean multicalibrated predictions. However, Algorithm~\ref{alg:meanexistential} was not defined explicitly: it relies on the distributions $Q^L_t$, whose existence we showed in Lemma~\ref{lem:exists} but which we did not explicitly construct. In this section, we derive a scheme for sampling from these distributions $Q^L_t$, which leads to Algorithm~\ref{alg:meantemplate} --- an explicit, efficient implementation of Algorithm~\ref{alg:meanexistential}.

\begin{theorem}
Algorithm~\ref{alg:meantemplate} implements Algorithm~\ref{alg:meanexistential}. In particular it obtains the multicalibration guarantees proven in Theorems \ref{thm:meanmulti} and \ref{thm:hpcalibration}. 
\end{theorem}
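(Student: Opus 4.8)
The plan is to show that the \emph{explicit} distribution $Q^L_t$ from which Algorithm~\ref{alg:meantemplate} samples is a legitimate choice for the distribution whose existence Lemma~\ref{lem:exists} establishes (non-constructively) via the minimax theorem; once that is done, running Algorithm~\ref{alg:meantemplate} is by definition an execution of Algorithm~\ref{alg:meanexistential}, so the guarantees of Theorems~\ref{thm:meanmulti} and~\ref{thm:hpcalibration} transfer verbatim. Fix a round $t$, a transcript $\pi_{t-1}$ (which determines $L_{t-1}$ and all the quantities $C^i_{t-1}(x_t)$), and the feature vector $x_t$. By Lemma~\ref{lem:boundincrease}, $\Delta_t(\pi_{t-1},x_t,\bmu)\le \eta\big(\E_{\ty_t}[\ty_t]-\bmu\big)C^{\bmu}_{t-1}(x_t)+2\eta^2 L_{t-1}$, and the right-hand side is affine in $\E_{\ty_t}[\ty_t]$, so it suffices to prove that the $Q^L_t$ produced by Algorithm~\ref{alg:meantemplate} satisfies
\[
    \E_{\bmu\sim Q^L_t}\big[(y-\bmu)\,C^{\bmu}_{t-1}(x_t)\big]\;\le\;\frac{L_{t-1}}{rn}\qquad\text{for every }y\in[0,1];
\]
multiplying by $\eta$ and adding $2\eta^2 L_{t-1}$ then recovers exactly the per-round bound demanded by Lemma~\ref{lem:exists}. (If $\eta=0$ everything is trivial, so assume $\eta>0$.)

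Next I would record a few elementary facts that make the rest routine. First, since $r\in\mathbb N$ we have $r\ge1$, so both $\tfrac{i^*}{n}-\tfrac{1}{rn}$ and $\tfrac{i^*}{n}$ lie in the grid $\cP^{rn}$, and moreover $\tfrac{i^*}{n}-\tfrac{1}{rn}\in B(i^*)$ while $\tfrac{i^*}{n}\in B(i^*+1)$; hence on the support of $Q^L_t$ the step function $\bmu\mapsto C^{\bmu}_{t-1}(x_t)$ takes only the values $C^{i^*}_{t-1}(x_t)$ and $C^{i^*+1}_{t-1}(x_t)$ (or just $C^n_{t-1}(x_t)$, resp.\ $C^1_{t-1}(x_t)$, in the two degenerate branches). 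Second, in the ``otherwise'' branch an index $i^*\in[n-1]$ with $C^{i^*}_{t-1}(x_t)\cdot C^{i^*+1}_{t-1}(x_t)\le0$ really does exist: not all of $C^1_{t-1}(x_t),\dots,C^n_{t-1}(x_t)$ are strictly positive and not all are strictly negative, so this finite sequence either contains a zero entry or changes sign between consecutive indices, and in either case some consecutive pair has nonpositive product (assuming $n\ge2$; for $n=1$ the branch forces $C^1_{t-1}(x_t)=0$ and every prediction trivially meets the display). Third, $|C^i_{t-1}(x_t)|\le L_{t-1}$ for every $i$, by the triangle inequality applied to~\eqref{eqn:csi} together with the nonnegativity of the summands of $L_{t-1}$.

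It then remains to dispatch the three branches. When every $C^i_{t-1}(x_t)>0$, $Q^L_t$ is the point mass at $\bmu=1\in B(n)$ and the left side of the display equals $(y-1)\,C^n_{t-1}(x_t)\le0\le L_{t-1}/(rn)$ since $y\le1$; the all-negative branch is symmetric, with point mass at $\bmu=0$ and $y\ge0$. In the ``otherwise'' branch, the defining equation of $q_t$ says exactly that $\E_{\bmu\sim Q^L_t}[C^{\bmu}_{t-1}(x_t)]=q_tC^{i^*}_{t-1}(x_t)+(1-q_t)C^{i^*+1}_{t-1}(x_t)=0$, so the coefficient of $y$ vanishes, and substituting $\bmu\in\{\tfrac{i^*}{n}-\tfrac1{rn},\tfrac{i^*}{n}\}$ gives
\[
    \E_{\bmu\sim Q^L_t}\big[(y-\bmu)\,C^{\bmu}_{t-1}(x_t)\big]\;=\;-\,\E_{\bmu\sim Q^L_t}\big[\bmu\,C^{\bmu}_{t-1}(x_t)\big]\;=\;\frac{q_t\,C^{i^*}_{t-1}(x_t)}{rn}\;\le\;\frac{|C^{i^*}_{t-1}(x_t)|}{rn}\;\le\;\frac{L_{t-1}}{rn},
\]
using $0\le q_t\le1$ in the penultimate step. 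This verifies the display in all cases, so $Q^L_t$ witnesses the conclusion of Lemma~\ref{lem:exists}, and Algorithm~\ref{alg:meantemplate} implements Algorithm~\ref{alg:meanexistential}.

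I expect the only genuine friction to be the third-branch bookkeeping: confirming that the grid points $\tfrac{i^*}{n}-\tfrac1{rn}$ and $\tfrac{i^*}{n}$ land in buckets $i^*$ and $i^*+1$ respectively (this is the one place $r\ge1$ is needed) and that the two-point mixture simultaneously zeros out the coefficient of $y$ \emph{and} leaves $\E_{\bmu\sim Q^L_t}[\bmu\,C^{\bmu}_{t-1}(x_t)]$ equal to $-q_tC^{i^*}_{t-1}(x_t)/(rn)$ rather than something larger. Everything else is immediate from Lemma~\ref{lem:boundincrease} and the affinity of the per-round objective in the label.
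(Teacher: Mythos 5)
Your proof is correct and takes essentially the same route as the paper's: both verify that the explicit distribution output by Algorithm~\ref{alg:meantemplate} attains the per-round bound of Lemma~\ref{lem:exists}, treating the same three cases and using the same key identity $\E_{\bmu\sim Q^L_t}[C^{\bmu}_{t-1}(x_t)]=0$ in the mixing branch. The only cosmetic difference is organizational: the paper first isolates the adversary's best-response value as $\bigl(\E[C^{\bmu}]\bigr)^+-\E[\bmu C^{\bmu}]$ (its Observation on best responses) and then evaluates it per case, whereas you skip that step and directly bound $\E_{\bmu\sim Q^L_t}\bigl[(y-\bmu)C^{\bmu}_{t-1}(x_t)\bigr]$ uniformly over $y\in[0,1]$; these are equivalent. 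Two small things you get slightly more precisely than the paper: your value $\tfrac{q_t C^{i^*}_{t-1}(x_t)}{rn}$ in the third branch carries the factor $q_t$ that the paper's displayed chain elides (harmlessly, since $0\le q_t\le 1$), and you justify existence of the sign-change index $i^*$ and handle the degenerate $n=1$ case, which the paper leaves implicit.
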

\begin{proof}
Recall that Algorithm~\ref{alg:meanexistential} samples at every round $s+1$ from a distribution $Q^L_{s+1}$ that is a minimax equilibrium strategy of a game between the learner and the adversary, with objective function
\begin{align*}
&u(\bmu, y) = \eta \left(y-\bmu\right)\cdotk C^{\bmu}_s(x_{s+1}) + 2\eta^2 L_s.\\
\intertext{The equilibrium structure of the game is preserved under positive affine transformations, so instead we consider}
&u(\bmu, y) =  \left(y-\bmu\right)\cdotk C^{\bmu}_s(x_{s+1}).
\end{align*}
We wish to find a distribution $Q^L_{s+1} \in \mathcal{Q}^L$ that guarantees --- against any strategy of the adversary --- an objective value that is at most the bound on the value of the game we proved in Lemma~\ref{lem:exists}. For the transformed game, this bound is:
$$\max_{y \in [0,1]} \E_{\bmu \sim Q_{s+1}}[u(\bmu,y)] \leq \frac{1}{rn}  L_s.$$

We can start by characterizing the best response of the adversary. 
\begin{obs} \label{obs:br}
    For any $Q^L \in \mathcal{Q}^L$:
    $$\max_{y \in [0,1]}\E_{\bmu \sim Q^L}[u(\bmu,y)]   = \left( \E_{\bmu \sim Q^L}[C^{\bmu}_s(x_{s+1})]\right)^+ - \E_{\bmu \sim Q^L}\left[\bmu C^{\bmu}_s(x_{s+1})\right],$$
    where $\left( x \right)^+ = \max(x,0)$.
\end{obs}
\begin{proof}
Note that:
\begin{align*}
u(\mu, y)   &=  \left(y-\bmu\right)C^{\bmu}_s(x_{s+1}) \\
&= y C^{\bmu}_s(x_{s+1}) - \bmu C^{\bmu}_s(x_{s+1}).
\end{align*}
Observe that only the first term depends on $y$. Therefore, if the learner plays according to $Q^L$, then the adversary will choose $y$ so as to maximize the linear expression $y \E_{\bmu \sim Q^L}[C^{\bmu}_s(x_{s+1})].$ This is  always maximized either at $y =0$ or $y=1$. It is maximized at $y = 1$ when $ \E_{\bmu \sim Q^L}[C^{\bmu}_s(x_{s+1})] > 0$, and at $y = 0$ otherwise.
\end{proof}

Finally, we can reduce the analysis to three disjoint cases:
\begin{enumerate}
    \item $C_s^i(x_{s+1})>0$ for all $i \in [n]$: Then for any distribution $Q^L$, by Observation~\ref{obs:br} we have:
    $$\max_{y \in [0,1]}\E_{\bmu \sim Q^L}[u(\bmu,y)]  = \E_{\bmu \sim Q^L}[C^{\bmu}_s(x_{s+1})] -  \E_{\bmu \sim Q^L}\left[\bmu C^{\bmu}_s(x_{s+1})\right].$$
In this case, letting $Q^L$ be a point mass on  $\bmu=1$  achieves a value of $ 0 < \frac{1}{rn} L_s$. 
    \item $C_s^i(x_{s+1})<0$ for all $i \in [n]$: Then for any distribution $Q^L$, by Observation~\ref{obs:br} we have:
    $$\max_{y \in [0,1]}\E_{\bmu \sim Q^L}[u(\bmu,y)]  = - \E_{\bmu \sim Q^L}\left[\bmu C^{\bmu}_s(x_{s+1})\right] $$
    In this case, letting $Q^L$ be a point mass on  $\bmu=0$  achieves a value of $ 0 < \frac{1}{rn} L_s$. 

    \item In the remaining case, there must exist some index $i^* \in [n-1]$ such that either  $C^{i^*}_s(x_{s+1})$ and $C^{i^*+1}_s(x_{s+1})$ have opposite signs, or such that at least one of them takes value exactly zero.  Randomizing as in the algorithm results in: 
    \begin{align*}
& \max_{y \in [0,1]} \E_{\bmu \sim Q^L_{s+1}}[u(\bmu,y)]  \\
=& \left( \E_{\bmu \sim Q^L_{s+1}}\left[C_s^{\bmu}(x_{s+1})\right]\right)^+ -  E_{\bmu \sim Q^L_{s+1}}\left[\bmu C_s^{\bmu}(x_{s+1})\right] \\
=&  \left( q_{s+1} \cdotk C_s^{i^*}(x_{s+1}) + (1-q_{s+1}) C_s^{i^*+1}(x_{s+1})\right)^+ - \left( q_{s+1}\left(\tfrac{i^*}{n} -\tfrac{1}{rn}\right) C_s^{i^*}(x_{s+1}) + (1-q_{s+1}) \tfrac{i^*}{n}C_s^{i^*+1}(x_{s+1}) \right) \\
=&  \frac{1}{rn} C_s^{i^*}(x_{s+1})\\
\leq& \frac{1}{rn} L_s.
\end{align*} 
\end{enumerate}
Algorithm~\ref{alg:meantemplate} plays according to this distribution $Q^L_{s+1}$ at every round, which completes the proof. 
\end{proof}

\paragraph{Running Time} 
Our algorithm is elementary, and given values for $C_{t-1}^i(x_t)$, it runs in time per iteration which is linear in the number of buckets $n$. For large collections of groups $\cG$, the bulk of the computational cost is due to the first step of Algorithm~\ref{alg:meantemplate}, in which we compute the quantities $C_{t-1}^i(x_t)$ as in Equation~\ref{eqn:csi}:
$$C_{t-1}^{i}(x_t) \equiv \sum_{\cG(x_{t})}\exp(\eta V_{t-1}^{G,i}) -  \exp(-\eta V_{t-1}^{G,i})$$
These quantities are a sum over every group $G \in \cG$ such that $x_{t} \in G$. In the worst case, we can compute this by enumerating over all such groups, and we obtain runtime that is linear in $|\cG|$. However, for any class $\cG$ such that we can efficiently enumerate the set of groups containing $x_{t}$ (i.e. $\cG(x_{t})$), our per-round runtime is only linear in $|\cG(x_{t})|$, which may be substantially smaller than $|\cG|$. For example, this property holds for collections $\cG$ of groups induced by conjunctions or disjunctions of binary features. Finally, we observe that our runtime is entirely independent of the choice of the discretization parameter $r$. 

\section{Online Moment Multicalibration}\label{sec:onlinemoment}
\subsection{An Outline of Our Approach}
\label{sec:moment-outline}

In this section, we derive an online algorithm for supplying mean and $k\tth$-moment predictions that are mean-conditioned moment multicalibrated with respect to some collection of groups $\cG$, as defined in Definition~\ref{def:mean-conditioned-moment-multi}. We follow the same basic strategy that we developed in Section~\ref{sec:onlinemeanmulti} for making multicalibrated mean predictions. In particular, 
the first few steps of our approach exactly mirror the approach in Section~\ref{sec:onlinemeanmulti}: Analogously to Steps 1 and 2 of Section~\ref{sec:mean-outline} we define calibration losses and a convenient soft-max style surrogate loss function and bound the increase to that surrogate loss function at each round. However, we make a couple of important deviations.
\begin{enumerate}

\item The first complication that arises is that moment consistency is not a linearly separable constraint across rounds (because moments are nonlinear). However, we are able to define linearly separable ``pseudo-moment'' consistency losses $M$ and prove in Lemma~\ref{lem:nonlinear} that if both our pseudo-moment consistency losses $M$ and our mean consistency losses $V$ are small then our predictions are mean-conditioned moment multicalibrated. 

\item The next complication arises when we attempt to define a zero-sum game using our bound on the per-round increase of the surrogate loss. The bound on the loss that we obtain for mean-conditioned moment multicalibration is nonlinear in both the learner's (mean) prediction and the adversary's choice of label $y$. We cannot directly apply a minimax theorem because the necessary concavity and convexity conditions are not satisfied. Our argument instead requires a change of variables: we show that in the game we define, the adversary's payoff, fixing the strategy of the learner, is linear in the first $k$ (uncentered) moments of the distribution over the labels chosen by the adversary.  We also expand the strategy space of the adversary to allow him to pick $k$ arbitrary real numbers, representing the first $k$ centered moments of his label distribution, unencumbered by the requirement that these chosen values actually correspond to the moments of any real label distribution. Enlarging the adversary's strategy space in this way can only \emph{increase} the value of the game, and so the upper bounds we prove on the value of this simplified game continue to hold for the original game. Moreover, a minimax theorem applies to this transformed game, and therefore guarantees the \emph{existence} of a prediction strategy for the learner that is approximately mean-conditioned moment multicalibrated. 

\item In order to implement this strategy with an explicit efficient prediction algorithm, we need to  solve a game in   which the learner has $r^2 n n'$ pure strategies. Doing this naively would inherit a running time dependence on $r$, a discretization parameter that we want to take to be very small. However, we prove a ``structure theorem'' about the enlarged game described above: that without loss of generality, the learner need only randomize over a support of at most $4nn'$ pure strategies. With this structure theorem in hand, we show that the equilibrium computation problem can be cast as a  linear program with $4n n'$ variables and $2^k+1$ constraints. If $k$ is a small constant (e.g. $k = 2$ for variance multicalibration), then this linear program can be explicitly described and solved. But even when $k$ is too large to enumerate all $2^k$ constraints, we show that there is a separation oracle that runs in time $O(k)$, allowing us to efficiently solve this linear program  using the Ellipsoid algorithm. In Appendix~\ref{sec:momentellipsoid}, we show that there exist solutions to the learner's problem that have small support---in which the learner mixes over at most $k+1$ strategies. 
\end{enumerate}

\subsection{An Existential Derivation of the Algorithm and Moment Multicalibration Bounds}
\label{sec:momentexistential}
We will calibrate our mean predictions $\{\bmu_{t}\}_{t=1}^T$ over $n$ buckets, and $k\tth$ moment predictions $\{\bmk\}_{t=1}^T$ over $n' < n$ buckets. 
As before, we introduce  notation to denote the \emph{portion} of the mean calibration error corresponding to each pair of buckets $(i,j)$ and group $G$, and consider a similar quantity that serves as a proxy for the portion of the moment calibration error corresponding to each group $G \in \cG$ and buckets $i \in [n]$, $j\in [n']$. We will need an extra piece of notation: for any $i \in [n]$, define $\centerbucket \equiv \frac{2i-1}{2n}$. For any $i \in [n]$ and $\bmu \in \Bm(i)$,  we abuse notation and write $\centerbucketmu = \centerbucket$.  

\begin{definition}\label{def:meanmomentlosses}
Given a transcript $\pi_s = ((x_t, (\bmu_t, \bmk_t), y_t))_{t=1}^s$,  for each group $G \in \cG$ and buckets $i \in [n], j \in [n']$ at time $s$, we write 
\begin{align*}
    &V^{G, i, j}_s(\pi_s) = \sum_{t=1}^s \ind[\bmu_t \in \Bm(i), \bmk_t \in \Bm(j), x_t\in G] \cdotk \left(y_t - \bmu_t \right), \\
    &M^{G, i, j}_s(\pi_s) = \sum_{t=1}^s \ind[\bmu_t \in \Bm(i), \bmk_t \in \Bm(j), x_t\in G] \cdotk \left(\left(y_t - \centerbucket\right)^k - \bmk_t \right).
\end{align*}
When the transcript $\pi_s$ is clear from context we will simply write $V^{G, i, j}_s, M^{G, i,j}_s$. 
\end{definition}
In words, $V^{G,i,j}_s$ calculates the difference between the true mean and the mean of our predictions over the subset of periods up to $s$ in which  the realized feature vector was in group $G$ and the learner predicted a mean $\bmu \in \Bm(i)$ and a moment $\bmk \in \Bmp(j)$. $M^{G,i,j}_s$ defines a similar quantity for moments --- but not exactly. Instead of calculating the empirical moment around the empirical mean (i.e. $(y_t - \mu(G_s(i,j)))^k$), we center around $\centerbucket$, i.e.\ the middle of the bucket $\Bm(i)$. We do this to make $M^{G,i,j}_s$ linearly separable across rounds.

We show, using an argument similar\footnote{$(y_t - \centerbucket)^k$ roughly corresponds to what is referred to as a pseudo-moment in \cite{momentmulti}.} to  \cite{momentmulti}, that if our mean predictions are sufficiently calibrated --- which ensures $\centerbucket \approx \mu(G_T(i,j))$ --- then we can still bound the mean-conditioned moment multicalibration error through our proxy quantity $M^{G,i,j}_s$.%

\begin{lemma}
\label{lem:nonlinear}
    For a given $i \in [n], j \in [n']$ and $G \in \cG$, if $\frac{1}{T}|V^{G,i,j}_T| \le \alpha, \frac{1}{T}|M^{G,i,j}_T| \le \beta$, 
    then we have
            \begin{align}
                &\left\vert \mu(G_T(i,j)) - \bmu(G_T(i,j)) \right\vert \le \frac{\alpha T}{|G_T(i,j)|},\tag{Mean Consistency}\\
                &\left\vert m^k(G_T(i,j)) -  \bmk(G_T(i,j)) \right\vert \le \frac{(\beta + k\alpha + \frac{k}{2n})T}{|G_T(i,j)|}. \tag{Moment Consistency}            \end{align}
\end{lemma}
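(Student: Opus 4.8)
The plan is to derive Moment Consistency from the hypotheses by expanding the difference between the empirical $k\tth$ central moment $m^k(G_T(i,j))$ (centered at the true empirical mean $\mu(G_T(i,j))$) and the pseudo-moment used in $M^{G,i,j}_T$ (centered at the bucket midpoint $\centerbucket$), and then controlling each error term using the calibration bounds. The Mean Consistency statement is immediate: the hypothesis $\frac{1}{T}|V^{G,i,j}_T| \le \alpha$ together with Definition~\ref{def:meanmomentlosses} gives $|V^{G,i,j}_T| = |G_T(i,j)|\cdot|\mu(G_T(i,j)) - \bmu(G_T(i,j))|$ (this identity is exactly the analogue of \eqref{meanSurrogateLoss}), so dividing through yields the claimed bound. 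So the real content is the Moment Consistency half.

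For the moment half, first I would write, abbreviating $S = G_T(i,j)$, $\bmu = \bmu(S)$, $\mu = \mu(S)$, and $\hat\mu_i = \centerbucket$:
\[
    \frac{1}{|S|} M^{G,i,j}_T = \frac{1}{|S|}\sum_{t \in S}\left(y_t - \hat\mu_i\right)^k - \bmk(S).
\]
Then I would compare $\frac{1}{|S|}\sum_{t\in S}(y_t - \hat\mu_i)^k$ with the true empirical central moment $m^k(S) = \frac{1}{|S|}\sum_{t\in S}(y_t-\mu)^k$. The plan is to use the elementary Lipschitz-type bound $|a^k - b^k| \le k|a-b|$ valid for $a,b \in [-1,1]$ (which applies since $y_t, \hat\mu_i, \mu$ all lie in $[0,1]$, so the relevant differences are in $[-1,1]$), applied pointwise with $a = y_t - \hat\mu_i$ and $b = y_t - \mu$, giving
\[
    \left|\frac{1}{|S|}\sum_{t\in S}\left(y_t-\hat\mu_i\right)^k - m^k(S)\right| \le k\,\bigl|\mu - \hat\mu_i\bigr|.
\]
Now $|\mu - \hat\mu_i|$ is controlled by the triangle inequality: $|\mu - \hat\mu_i| \le |\mu - \bmu| + |\bmu - \hat\mu_i|$. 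The first term is at most $\frac{\alpha T}{|S|}$ by Mean Consistency (already established), and the second term is at most $\frac{1}{2n}$ because every $\bmu_t$ with $t \in S$ lies in the bucket $\Bm(i)$, hence so does their average $\bmu$, and $\hat\mu_i$ is the midpoint of that width-$\tfrac1n$ bucket. Combining, $|\mu - \hat\mu_i| \le \frac{\alpha T}{|S|} + \frac{1}{2n}$.

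Putting the pieces together: by the triangle inequality,
\[
    \bigl|m^k(S) - \bmk(S)\bigr| \le \left|\frac{1}{|S|}\sum_{t\in S}\left(y_t-\hat\mu_i\right)^k - m^k(S)\right| + \left|\frac{1}{|S|}M^{G,i,j}_T\right| \le k\left(\frac{\alpha T}{|S|} + \frac{1}{2n}\right) + \frac{\beta T}{|S|},
\]
where the last $\frac{1}{|S|}M^{G,i,j}_T$ term is bounded using $\frac1T|M^{G,i,j}_T|\le\beta$, i.e. $\frac{1}{|S|}|M^{G,i,j}_T| \le \frac{\beta T}{|S|}$. Rearranging gives $|m^k(S) - \bmk(S)| \le \frac{(\beta + k\alpha + \frac{k}{2n})T}{|S|}$, as claimed. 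The main obstacle — really the only nontrivial point — is justifying the pointwise inequality $|a^k - b^k| \le k|a - b|$ on the relevant range and correctly tracking that all arguments stay in $[-1,1]$ so that the derivative bound $|k c^{k-1}| \le k$ holds; everything else is bookkeeping with the triangle inequality. One should also double-check the edge convention for the top bucket $\Bm(n) = [\frac{n-1}{n},1]$, but the midpoint/width-$\tfrac1n$ reasoning is unaffected.
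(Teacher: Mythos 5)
Your proof is correct and follows essentially the same route as the paper's: compare $m^k(S)$ to the bucket-centered pseudo-moment $\frac{1}{|S|}\sum_{t\in S}(y_t-\centerbucket)^k$ via the pointwise Lipschitz bound $|a^k-b^k|\le k|a-b|$, control $|\mu(S)-\centerbucket|$ by $|\mu(S)-\bmu(S)|+|\bmu(S)-\centerbucket|$, and finish with the triangle inequality against $\frac1T|M^{G,i,j}_T|\le\beta$. One small improvement on your side: you correctly state the Lipschitz bound on $[-1,1]$ (where the differences $y_t-\mu$, $y_t-\centerbucket$ actually live), whereas the paper's proof invokes it ``for $a,b\in[0,1]$'' even though its own instantiation has arguments in $[-1,1]$ --- a harmless but imprecise statement that your version avoids.
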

\begin{proof}
It is easy to see mean-consistency:
    \begin{align*}
        &\frac{|G_T(i,j)|}{T}\left\vert \bmu(G_T(i,j))- \mu(G_T(i,j)) \right\vert =  \frac{1}{T} \left\vert \sum_{t \in G_T(i,j)} \left(  \bmu_t - y_t\right)\right\vert = \frac{1}{T} |V^{G,i,j}_T|  \le \alpha.
    \end{align*}
Now, we show that we achieve mean-conditioned moment consistency. First note that
\begin{align*}
    & \frac{1}{T} |M^{G,i,j}_T| =   \frac{1}{T}\left\vert\sum_{t \in G_T(i,j)} \bmk_t -  \left(\centerbucket - y_t\right)^k \right\vert    \le \beta.\\
    \intertext{Now,}
    &\left\vert m^k(G_T(i,j)) - \frac{1}{|G_T(i,j)|}\sum_{t \in G_T(i,j)} \left(y_t - \centerbucket\right)^k \right\vert \\
    =& \left\vert \frac{1}{|G_T(i,j)|}\sum_{t \in G_T(i,j)}   \left(\left(y_t -\centerbucket\right) +\left(\centerbucket- \mu(G_T(i,j))\right) \right)^k - \left(y_t - \centerbucket\right)^k \right\vert, \\
     \le& \frac{k}{|G_T(i,j)|}   \sum_{t \in G_T(i,j)}  \left\vert \centerbucket - \mu(G_T(i,j)) \right\vert, \\
     =& \frac{k}{|G_T(i,j)|}   \sum_{t \in G_T(i,j)}  \left\vert \centerbucket - \bmu(G_T(i,j)) + \bmu(G_T(i,j)) -  \mu(G_T(i,j)) \right\vert, \\
     \leq& \frac{k}{|G_T(i,j)|}   \sum_{t \in G_T(i,j)}  \left\vert \centerbucket - \bmu(G_T(i,j))\right\vert + \left\vert\bmu(G_T(i,j)) -  \mu(G_T(i,j)) \right\vert, \\
     \le& \frac{Tk(\alpha  + \frac{1}{2n})} {|G_T(i,j)|},
\end{align*}
where the first inequality follows from the fact that $|a^k - b^k| \le k |a-b|$ for any $a,b \in [0,1]$ with $a=\left(y_t -\centerbucket\right) +\left(\centerbucket- \mu(G_T(i,j))\right) $ and $b=y_t - \centerbucket$. The last inequality follows from the guarantee of mean consistency as shown above in the proof and the fact that $\bmu(G_T(i,j)) \in \Bm(i)$ and $|\centerbucket - x| \le \frac{1}{2n}$ for any $x \in \Bm(i)$.

Therefore, we can invoke the triangle inequality to conclude
\begin{align*}
    &\left\vert m^k(G_T(i,j)) -  \bmk(G_T(i,j)) \right\vert \\
    \le& \left\vert m^k(G_T(i,j)) -  \frac{1}{|G_T(i,j)|}\sum_{t \in G_T(i,j)} \left(y_t - \centerbucket\right)^k \right\vert + \left\vert \frac{1}{|G_T(i,j)|}\sum_{t \in G_T(i,j)} \left(y_t - \centerbucket\right)^k - \bmk(G_T(i,j)) \right\vert \\
    \le&  \frac{(\beta + k\alpha + \frac{k}{2n})T}{|G_T(i,j)|}. \qedhere
\end{align*}
\end{proof}

This lemma implies that if we can force each term  $V^{G,i,j}_s, M^{G,i,j}_s$ to be small, then we will have achieved our desired goal of mean-conditioned moment multicalibration (Definition~\ref{def:mean-conditioned-moment-multi}). 
\begin{obs}\label{obs:momentcalibration}
Suppose a transcript $\pi_T$ is such that for all $i \in [n],j \in [n']$ and $G \in \cG$, we have that $|V^{G,i,j}_T|, |M^{G,i,j}_T| \leq \alpha T$.  Then the predictions are $(\alpha, \beta,n,n')$-mean-conditioned moment multicalibrated in the sense of Definition~\ref{def:mean-conditioned-moment-multi} for $\beta = (k+1)\alpha + \frac{k}{2n}$. 
\end{obs}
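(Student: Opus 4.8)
The plan is to simply invoke Lemma~\ref{lem:nonlinear} term by term; this observation is a bookkeeping corollary of that lemma. Fix any $i \in [n]$, $j \in [n']$, and $G \in \cG$. By the hypothesis of the observation we have $\frac1T |V^{G,i,j}_T| \le \alpha$ and $\frac1T |M^{G,i,j}_T| \le \alpha$, so the preconditions of Lemma~\ref{lem:nonlinear} are satisfied with that lemma's two parameters both instantiated to $\alpha$ (i.e.\ we use the common bound $\alpha$ in the role of ``$\alpha$'' for the mean error and in the role of ``$\beta$'' for the pseudo-moment error).

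The conclusion of Lemma~\ref{lem:nonlinear} then immediately gives $\left|\mu(G_T(i,j)) - \bmu(G_T(i,j))\right| \le \frac{\alpha T}{|G_T(i,j)|}$, which is exactly $\alpha$-mean consistency on $G_T(i,j)$, and $\left|m^k(G_T(i,j)) - \bmk(G_T(i,j))\right| \le \frac{(\alpha + k\alpha + \frac{k}{2n})T}{|G_T(i,j)|}$. Since $\alpha + k\alpha + \frac{k}{2n} = (k+1)\alpha + \frac{k}{2n} = \beta$, the second inequality is precisely $\beta$-moment consistency on $G_T(i,j)$ for this value of $\beta$.

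Because $i, j, G$ were arbitrary, both inequalities hold simultaneously for every bucket pair $(i,j)$ and every group $G \in \cG$, which is exactly what Definition~\ref{def:mean-conditioned-moment-multi} demands for the predictions to be $(\alpha,\beta,n,n')$-mean-conditioned moment multicalibrated with respect to $\cG$. There is no real obstacle here: the only point requiring a moment's care is that $\beta$ in the statement is not a free parameter but the specific value $(k+1)\alpha + \frac{k}{2n}$ that arises from substituting the common bound $\alpha$ (for both $V$ and $M$) into the moment-consistency conclusion of Lemma~\ref{lem:nonlinear}.
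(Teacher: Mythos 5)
Your proof is correct and is exactly the intended argument: the paper states this as an observation precisely because it is an immediate instantiation of Lemma~\ref{lem:nonlinear} with the lemma's two error parameters both set to the common bound $\alpha$, yielding the stated value $\beta = (k+1)\alpha + \frac{k}{2n}$.
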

\begin{remark}
Note that with this parametrization, we can take $\alpha$ as small as we like relative to $n$, and by choosing an appropriately large value of $n$, we can take $\beta = (k+1)\alpha + \frac{k}{2n}$ as small as we like relative to $n'$.
\end{remark}

As before, we define a surrogate  loss function at each round $s$. 

\begin{definition}[Surrogate Loss]\label{def:meanmomentloss}
Fixing a transcript $\pi_s \in \Pi^*$ and a parameter $\eta \in [0,\frac12]$, define:
\begin{align*}
    L_s(\pi_s) =& \sum_{\substack{G \in \cG,\\i \in [n],j \in [n']}}\left(\exp(\eta V_s^{G,i,j}) + \exp(-\eta V_s^{G,i,j}) + \exp(\eta M_s^{G,i,j}) + \exp(-\eta M_s^{G,i,j}) \right),
\end{align*}
where $V$ and $M$ are functions of $\pi_s$ as defined in Definition~\ref{def:meanmomentlosses}. When the transcript $\pi_s$ is clear from context we will sometimes simply write $L_s$. 
\end{definition}

As before, our goal is to find a strategy for the learner that guarantees that our surrogate loss $L_T$ remains small. Towards this end, we define $\Delta_{s+1}(\pi_s,x_{s+1},\bmu, \bmk)$ to be the expected increase in the surrogate loss function in the event that the adversary plays feature vector $x_{s+1}$ \emph{and} the learner predicts $(\bmu, \bmk)$. Here the expectation is over the only remaining source of randomness after the conditioning --- the distribution over labels $y_{s+1}$, which for any adversary is defined once we fix $\pi_s$ and $x_{s+1}$. 

\begin{definition}[Conditional Change in Surrogate Loss]
\[
\Delta_{s+1}(\pi_s,x_{s+1},\bmu, \bmk) = \E_{\ty_{s+1}}\left[\tL_{s+1}-L_s \Bigr\vert \pi_s, x_{s+1},  \bmu, \bmk \right].
\] 
\end{definition}

We again show a simple bound on $\Delta_{s+1}(\pi_s,x_{s+1},\bmu, \bmk)$:
\begin{lemma}
\label{lem:boundincrease2}
For any transcript $\pi_s \in \Pi^*$, any $x_{s+1} \in \cX$, and any predictions $\bmu, \bmk \in [0,1]$ such that $\bmu \in \Bm(i)$ and $\bmk \in \Bmp(j)$ for some $i \in [n]$ and $j \in [n']$:
\[
    \Delta_{s+1}(\pi_s,x_{s+1},\bmu, \bmk) \leq \eta \left(\E_{\ty_{s+1}}[\ty_{s+1}]-\bmu\right) C^{\bmu,\bmk}_s(x_{s+1})
    +\eta \left( \E_{\ty}\left(\ty_{s+1} - \centerbucketmu\right)^k - \bmk \right)D^{\bmu,\bmk}_s(x_{s+1})
    + 2\eta^2\cdotk  L_{s},
\]
where
\begin{align}
    C^{\bmu, \bmk}_s(x_{s+1}) &= C^{i,j}_s(x_{s+1}) = \sum_{G \in \cG(x_{s+1})}\exp(\eta V_s^{G,i,j}) -  \exp(-\eta V_s^{G,i,j}),\label{eqn:def-C}\\
    D^{\bmu, \bmk}_s(x_{s+1}) &= D^{i,j}_s(x_{s+1}) = \sum_{G \in \cG(x_{s+1})}\exp(\eta M_s^{G,i,j}) -  \exp(-\eta M_s^{G,i,j}). \label{eqn:def-D}
\end{align}

For economy of notation, we will generally elide the dependence on $x_{s+1}$ for the $C$ and $D$ quantities and simply write $C^{i,j}_s, D^{i,j}_s$ when the feature vector is clear from context.
\end{lemma}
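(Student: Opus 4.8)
The proof of Lemma~\ref{lem:boundincrease2} follows the same template as the proof of Lemma~\ref{lem:boundincrease}: expand $\tL_{s+1} - L_s$ term by term, use the second-order bound on $\exp$, and collect terms into the $C$ and $D$ coefficients. First I would fix $\pi_s$, $x_{s+1}$, and predictions $(\bmu,\bmk)$ with $\bmu \in \Bm(i)$, $\bmk \in \Bmp(j)$. The key observation is that among all the summands in $L_s$ (Definition~\ref{def:meanmomentloss}), only those indexed by $(G,i,j)$ with $G \in \cG(x_{s+1})$ can change at round $s+1$: for any group $G$ with $x_{s+1} \notin G$, the indicator $\ind[\bmu_{s+1} \in \Bm(i'), \bmk_{s+1}\in\Bmp(j'), x_{s+1}\in G]$ vanishes, so $V^{G,i',j'}_{s+1} = V^{G,i',j'}_s$ and $M^{G,i',j'}_{s+1} = M^{G,i',j'}_s$; and for the groups containing $x_{s+1}$, only the $(i,j)$ bucket matching the learner's prediction is incremented. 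So the change is
\[
\tL_{s+1} - L_s = \sum_{G \in \cG(x_{s+1})} \Big[ e^{\eta V_s^{G,i,j}}(e^{\eta(\ty_{s+1}-\bmu)} - 1) + e^{-\eta V_s^{G,i,j}}(e^{-\eta(\ty_{s+1}-\bmu)} - 1) + e^{\eta M_s^{G,i,j}}(e^{\eta((\ty_{s+1}-\centerbucketmu)^k - \bmk)} - 1) + e^{-\eta M_s^{G,i,j}}(e^{-\eta((\ty_{s+1}-\centerbucketmu)^k - \bmk)} - 1) \Big].
\]

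**Main step.** Next I would apply the inequality $\exp(x) \le 1 + x + 2x^2$ valid for $|x| \le \tfrac12$ to each of the four inner exponential factors. For this to be legitimate I need the relevant arguments to have magnitude at most $\tfrac12$: since $\eta \le \tfrac12$, $\ty_{s+1} \in [0,1]$, and $\bmu \in [0,1]$, we have $|\eta(\ty_{s+1}-\bmu)| \le \tfrac12$; and since $(\ty_{s+1}-\centerbucketmu)^k \in [0,1]$ (as $k$ is even and $\ty_{s+1}, \centerbucketmu \in [0,1]$) and $\bmk \in [0,1]$, we also get $|\eta((\ty_{s+1}-\centerbucketmu)^k - \bmk)| \le \tfrac12$. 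Applying the bound, the linear terms contribute $\eta(\ty_{s+1}-\bmu)\big(e^{\eta V_s^{G,i,j}} - e^{-\eta V_s^{G,i,j}}\big)$ and $\eta\big((\ty_{s+1}-\centerbucketmu)^k - \bmk\big)\big(e^{\eta M_s^{G,i,j}} - e^{-\eta M_s^{G,i,j}}\big)$, while the quadratic terms are each bounded by $2\eta^2$ times the corresponding $\exp(\pm \eta V)$ or $\exp(\pm \eta M)$. Taking expectation over $\ty_{s+1}$ (linear in $\ty$ and in $(\ty - \centerbucketmu)^k$, so expectations pass through), summing over $G \in \cG(x_{s+1})$, and recognizing the definitions \eqref{eqn:def-C}, \eqref{eqn:def-D} of $C^{i,j}_s, D^{i,j}_s$, the linear contributions become exactly the first two terms in the claimed bound. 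For the quadratic remainder, the sum over $G \in \cG(x_{s+1})$ of the four terms $e^{\pm\eta V_s^{G,i,j}} + e^{\pm\eta M_s^{G,i,j}}$ is at most the full sum $L_s$ (which ranges over all groups and all buckets, all terms being positive), yielding the $2\eta^2 L_s$ slack.

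**The obstacle.** I expect the only genuinely delicate point to be verifying that $(\ty_{s+1} - \centerbucketmu)^k \in [0,1]$ so that the second-order $\exp$ bound applies to the moment terms — this is where the assumption that $k$ is even and the convention $\centerbucketmu = \centerbucket = \tfrac{2i-1}{2n} \in (0,1)$ get used, together with $\ty_{s+1} \in [0,1]$ forcing $|\ty_{s+1} - \centerbucketmu| \le 1$. Everything else is bookkeeping: the key structural fact (only the $(G,i,j)$ summands with $x_{s+1}\in G$ and $(i,j)$ the predicted bucket move) and the passage of $\E_{\ty_{s+1}}$ through the linear-in-$\ty$ and linear-in-$(\ty-\centerbucketmu)^k$ expressions are routine, and the final upper bound of the quadratic terms by $2\eta^2 L_s$ is immediate since every term of $L_s$ is nonnegative and the quadratic terms form a subset of them (up to the $2\eta^2$ factor).
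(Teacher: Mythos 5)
Your proof is correct and takes essentially the same route as the paper's: expand $\tL_{s+1} - L_s$ over the affected summands (groups containing $x_{s+1}$ in the predicted bucket $(i,j)$), apply $\exp(x)\le 1+x+2x^2$ for $|x|\le\tfrac12$ to all four exponential factors, take expectations, and collect linear terms into $C^{i,j}_s,D^{i,j}_s$ while bounding the quadratic remainder by $2\eta^2 L_s$. Your explicit verification that $(\ty_{s+1}-\centerbucketmu)^k\in[0,1]$ (via $k$ even and $|y-\centerbucketmu|\le1$), so that the moment-term argument of $\exp$ has magnitude at most $\eta\le\tfrac12$, is a point the paper leaves implicit but is well worth making.
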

\begin{proof}
To see this, observe that by definition:
\begin{align*}
 &\hphantom{\E_{y_{s+1}}\Bigg[}\Delta_{s+1}(\pi_s,x_{s+1}, \bmu, \bmk) \\
 =& \E_{\ty_{s+1}}\Bigg[\sum_{\cG(x_{s+1})} \underbrace{\exp(\eta V_s^{G,i,j})\left(\exp\left(\eta \left(\ty_{s+1} - \bmu\right)\right)-1\right) + \exp(-\eta V_s^{G,i,j})\left(\exp\left(-\eta \left(\ty_{s+1} - \bmu\right)\right)-1\right)}_{*}\\
 &\hphantom{\E_{y_{s+1}}\Bigg[}+ \underbrace{\exp(\eta M_s^{G,i,j})\exp\left(\eta \left(\left(\ty_{s+1} - \centerbucketmu\right)^k - \bmk\right)-1\right) + \exp(-\eta M_s^{G,i,j})\exp\left(-\eta \left(\left(\ty_{s+1} - \centerbucketmu\right)^k - \bmk \right) - 1 \right)}_{**}\Bigg]. 
\end{align*}

 Using the fact that for $0 < |x| < \frac{1}{2}$, $\exp(x) \leq 1+ x+2x^2$, we have that
 \begin{align*}
     * &\le \exp(\eta V_s^{G,i,j})\left(\eta \left(y_{s+1} - \bmu\right) + 2\eta^2 \right) + \exp(-\eta V_s^{G,i,j})\left(-\eta \left(y_{s+1} - \bmu\right) + 2\eta^2\right),\\
     ** &\le \exp(\eta M_s^{G,i,j})\left(\eta \left(\left(y_{s+1} - \centerbucketmu\right)^k -\bmk \right) + 2\eta^2 \right) + \exp(-\eta M_s^{G,i,j})\left(-\eta \left(\left(\ty_{s+1} - \centerbucketmu\right)^k - \bmk\right) + 2\eta^2 \right).
 \end{align*}
 Now, using the linearity of expectation and distributing the outer expectation to each relevant term where $\ty_{s+1}$ appears, we get
 \begin{align*}
     &\Delta_{s+1}(\pi_s,x_{s+1}, \bmu, \bmk)  \\
     \le& \sum_{\cG(x_{s+1})} \exp(\eta V_s^{G,i,j})\left(\eta \left(\E[\ty_{s+1} ]-\bmu\right) + 2\eta^2 \right) + \exp(-\eta V_s^{G,i,j})\left(-\eta\left(\E[\ty_{s+1}] - \bmu\right) + 2\eta^2\right) \\
     & \hphantom{\sum_{\cG(x_{s+1})}} +\exp(\eta M_s^{G,i,j})\left(\eta \left(\E\left[\left(\ty_{s+1} - \centerbucketmu\right)^k \right] - \bmk\right) + 2\eta^2 \right) + \exp(-\eta M_s^{G,i,j})\left(-\eta \left( \E\left[\left(\ty_{s+1} - \centerbucketmu\right)^k\right] - \bmk\right) + 2\eta^2 \right).
 \end{align*}
 
Collecting terms appropriately and observing that
\begin{align*}
     \sum_{\cG(x_{s+1})}\left(\exp(\eta V_s^{G,i,j}) + \exp(-\eta V_s^{G,i,j}) + \exp(\eta M_s^{G,i,j}) + \exp(-\eta M_s^{G,i,j}) \right) \leq L_s,
\end{align*}
 we have the desired bound. 
\end{proof}

As before, we proceed by defining a zero-sum game between the learner and the adversary and using the minimax theorem to conclude that the learner always has a strategy that guarantees a bounded per-round increase in surrogate loss. To satisfy the convexity and compactness requirements of the minimax theorem, we will again consider a game where the learner's pure strategy space is a finite subset of $\cPmm$. To this end, we define the following grids for any $r \in \mathbb{N}$ ($n$ and $n'$ are the coarseness parameters of our bucketings from above): 
\newcommand{\cPmomentr}{\cP_{\text{moment}}^r}
\begin{align*}
&\cP^{rn} = \left\{0, \frac{1}{rn}, \frac{2}{rn}, \dots, 1\right\},\\
&\cP^{rn'} = \left\{0, \frac{1}{rn'}, \frac{2}{rn'}, \dots, 1\right\}.
\end{align*}

As in the previous section, the need to discretize is only for technical reasons, and our algorithm has no dependence --- neither in runtime nor in its convergence rate --- on the value of $r$ that we choose, so we can imagine the discretization to be arbitrarily fine.

\begin{lemma}
\label{lem:exists2}
For any transcript $\pi_s \in \Pi^*$ and any $x_{s+1} \in \cX$, there exists a distribution over predictions for the learner $Q_{s+1}^L \in \Delta (\cP^{rn} \times \cP^{rn'})$, such that regardless of the adversary's choice of distribution of $y_{s+1}$ over $\Delta \cY$,  we have that:
\[
    \E_{(\bmu, \bmk) \sim Q^L_{s+1}}\left[\Delta_{s+1}(\pi_s,x_{s+1}, \bmu, \bmk)\right] \le L_s\cdotk \left(\frac{\eta}{rn} + \frac{\eta}{rn'} +2\eta^2\right).
\]
\end{lemma}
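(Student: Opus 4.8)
The plan is to mirror the argument of Lemma~\ref{lem:exists} from the mean case, but with the key twist flagged in the outline: the surrogate-loss bound from Lemma~\ref{lem:boundincrease2} is \emph{not} bilinear in the learner's prediction $(\bmu,\bmk)$ and the adversary's label $y_{s+1}$, because the term $\E_{\ty}(\ty_{s+1}-\centerbucketmu)^k$ is a degree-$k$ polynomial in the label. So first I would define a zero-sum game in which the learner's pure strategies are the grid points $X_1 = \cP^{rn}\times\cP^{rn'}$, and the adversary's strategy is \emph{not} a distribution over labels but rather a vector of ``pretend moments'' $(\theta_1,\dots,\theta_k) \in \R^k$ — where $\theta_1$ plays the role of $\E[\ty_{s+1}]$ and more generally $\theta_m$ plays the role of $\E[(\ty_{s+1}-\centerbucketmu)^m]$. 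Crucially, fixing the learner's pure strategy $(\bmu,\bmk)$ determines which bucket center $\centerbucketmu$ appears, so the objective
\[
u\bigl((\bmu,\bmk),(\theta_1,\dots,\theta_k)\bigr) = \eta(\theta_1 - \bmu)\,C^{\bmu,\bmk}_s(x_{s+1}) + \eta(\theta_k - \bmk)\,D^{\bmu,\bmk}_s(x_{s+1}) + 2\eta^2 L_s
\]
is linear in $(\theta_1,\dots,\theta_k)$ — actually only $\theta_1$ and $\theta_k$ enter — and this upper-bounds $\Delta_{s+1}$ whenever $(\theta_1,\dots,\theta_k)$ are the genuine centered moments of the adversary's label distribution. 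Since every genuine label distribution on $[0,1]$ gives a valid moment vector in the compact convex set $[0,1]^k$ (moments of a $[0,1]$-valued random variable lie in $[0,1]$), I can take the adversary's full strategy space to be $\mathcal{Q}^A = [0,1]^k$, which only \emph{enlarges} it; hence an upper bound on the value of this enlarged game is an upper bound on $\E[\Delta_{s+1}]$ for any real adversary.

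Next I would bound the value of this game by exhibiting a good best response for the learner to an arbitrary adversary strategy $(\theta_1,\dots,\theta_k)$. The natural choice is to pick $\bmu^* \in \cP^{rn}$ closest to $\theta_1$ and $\bmk^* \in \cP^{rn'}$ closest to the pseudo-moment target $\theta_k$ (noting $\theta_k \in [0,1]$ so this is well-defined), giving $|\theta_1 - \bmu^*| \le \tfrac{1}{rn}$ and $|\theta_k - \bmk^*| \le \tfrac{1}{rn'}$. Plugging in, and using $|C^{i,j}_s| \le L_s$ and $|D^{i,j}_s| \le L_s$ (immediate from the definitions~\eqref{eqn:def-C},~\eqref{eqn:def-D} and nonnegativity of each exponential term, all of which sum to at most $L_s$), the objective at this response is at most
\[
\eta \cdot \tfrac{1}{rn} L_s + \eta \cdot \tfrac{1}{rn'} L_s + 2\eta^2 L_s = L_s\Bigl(\tfrac{\eta}{rn} + \tfrac{\eta}{rn'} + 2\eta^2\Bigr).
\]
So $\max_\theta \min_{(\bmu,\bmk)} u \le L_s(\tfrac{\eta}{rn}+\tfrac{\eta}{rn'}+2\eta^2)$. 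Then I would apply Sion's minimax theorem (Theorem~\ref{thm:minimax}): $\mathcal{Q}^L = \Delta(\cP^{rn}\times\cP^{rn'})$ is a simplex (convex, compact), $\mathcal{Q}^A = [0,1]^k$ is convex and compact, and $u$ extended bilinearly is linear — hence concave in the learner's mixed strategy and convex (linear) in the adversary's — so the game has a value and there exists $Q^L_{s+1} \in \mathcal{Q}^L$ with $\max_{\theta} \E_{(\bmu,\bmk)\sim Q^L_{s+1}}[u] \le L_s(\tfrac{\eta}{rn}+\tfrac{\eta}{rn'}+2\eta^2)$. Since for every actual adversary distribution over $y_{s+1}$ its true moment vector $(\theta_1,\dots,\theta_k)$ lies in $[0,1]^k$ and $\Delta_{s+1} \le u$ at that vector by Lemma~\ref{lem:boundincrease2}, we conclude $\E_{(\bmu,\bmk)\sim Q^L_{s+1}}[\Delta_{s+1}(\pi_s,x_{s+1},\bmu,\bmk)] \le L_s(\tfrac{\eta}{rn}+\tfrac{\eta}{rn'}+2\eta^2)$, as desired.

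The main obstacle — and the only place real care is needed beyond transcribing the mean-case proof — is the change of variables justifying that the adversary's payoff is linear in a compact convex parameter. Two subtleties must be handled. First, $\centerbucketmu$ depends on the learner's pure strategy (it is $\hat\mu_i$ for the bucket $i$ containing $\bmu$), so when we linearly extend $u$ to the learner's mixed strategy, different atoms of $Q^L_{s+1}$ reference different centers; this is fine because $u$ is still defined pointwise on pure-strategy pairs and extended by expectation, exactly as in the mean case, but I would state it explicitly so the reader sees the adversary's single parameter vector $(\theta_1,\dots,\theta_k)$ interacts with a learner-dependent $\centerbucketmu$ coherently. Second, I should note that although the original bound in Lemma~\ref{lem:boundincrease2} formally involves $\E_{\ty}(\ty_{s+1}-\centerbucketmu)^k$, which by the binomial theorem is a fixed linear combination of the genuine centered moments $\E[(\ty_{s+1}-\centerbucketmu)^m]$ — all in $[0,1]$ for a $[0,1]$-valued label — replacing this linear combination by a single free variable $\theta_k \in [0,1]$ is exactly the ``enlargement'' step: it can only increase the game value, so the upper bound survives, and no matching lower bound is claimed here. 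A brief remark to this effect, plus invoking the already-established compactness/convexity of the relevant sets, completes the proof; everything else is routine and parallels Lemma~\ref{lem:exists}.
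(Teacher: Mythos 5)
There is a genuine gap in the change of variables, and it sits exactly at the point you flag as a "subtlety" but then wave through. You parameterize the adversary by the vector $(\theta_1,\dots,\theta_k)$ with $\theta_k$ playing the role of $\E[(\ty_{s+1}-\centerbucketmu)^k]$. But $\centerbucketmu$ is determined by the \emph{learner's} pure strategy $\bmu$: different atoms of the learner's mixed strategy $Q^L_{s+1}$ reference different bucket centers $\hat\mu_i$. Consequently a single label distribution over $[0,1]$ does \emph{not} determine a single value of $\theta_k$; it determines a \emph{family} of values $\E[(\ty-\hat\mu_i)^k]$, one per bucket $i$. Your final step — ``for every actual adversary distribution over $y_{s+1}$ its true moment vector $(\theta_1,\dots,\theta_k)$ lies in $[0,1]^k$ and $\Delta_{s+1}\le u$ at that vector'' — therefore does not deliver $\E_{Q^L_{s+1}}[\Delta_{s+1}] \le \max_{\theta} u(Q^L_{s+1},\theta)$, because the $\theta_k$ needed to dominate each atom is different. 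Concretely, the needed inequality reduces to $\E_{Q^L}\bigl[\E[(\ty-\centerbucketmu)^k]\,D^{\bmu,\bmk}_s\bigr] \le \bigl(\E_{Q^L}[D^{\bmu,\bmk}_s]\bigr)^+$ (plus analogous $C$ terms), and this fails when $D^{\bmu,\bmk}_s$ changes sign across the support of $Q^L$ and $\E[(\ty-\centerbucketmu)^k]$ varies with the bucket — both of which happen generically. So the enlarged game's value is \emph{not} an upper bound on the quantity you need to bound, and the minimax step does not close.

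The fix (and this is the paper's route) is to use the \emph{uncentered} moments $\psi_\ell = \E[\ty^\ell]$, $\ell\in[k]$, as the adversary's free variables: these are independent of the learner's action, so a single $\psi\in[0,1]^k$ is assigned to every label distribution unambiguously. The $\bmu$-dependence is then pushed into the \emph{coefficients} of the objective via the binomial expansion $\E[(\ty-\centerbucketmu)^k] = \centerbucketmu^k + \sum_{\ell=1}^k \binom{k}{\ell}(-\centerbucketmu)^{k-\ell}\psi_\ell$, giving
\[
u((\bmu,\bmk),\psi) = \eta(\psi_1-\bmu)C^{\bmu,\bmk}_s + \eta\Bigl(\bigl(\centerbucketmu^k + \sideset{}{_{\ell=1}^k}\sum\textstyle\binom{k}{\ell}(-\centerbucketmu)^{k-\ell}\psi_\ell\bigr)-\bmk\Bigr)D^{\bmu,\bmk}_s + 2\eta^2 L_s,
\]
still linear in $\psi$ for fixed $(\bmu,\bmk)$. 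Now the learner's best response picks $\bmu$ within $\tfrac{1}{rn}$ of $\psi_1$ and then $\bmk$ within $\tfrac{1}{rn'}$ of the $\bmu$-dependent target $\centerbucketmu^k + \sum_{\ell}\binom{k}{\ell}(-\centerbucketmu)^{k-\ell}\psi_\ell$; Sion's minimax theorem applies as you wrote, and the enlargement to $\psi\in[0,1]^k$ is a legitimate one-sided relaxation precisely because the map from label distributions to $\psi$ is well-defined and lands in $[0,1]^k$. Everything else in your sketch — the discretization bound, $|C^{\bmu,\bmk}_s|,|D^{\bmu,\bmk}_s|\le L_s$, the compactness/convexity checks — is correct and carries over unchanged once the parameterization is repaired.
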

\begin{proof}

Fix the transcript $\pi_s$ and the feature vector $x_{s+1}$. As before, we define a zero-sum game played between the learner (the minimization player) and the adversary (the maximization player), where the objective function of the game equals the upper bound on $\Delta_{s+1}(\pi_s,x_{s+1}, \bmu, \bmk)$ from Lemma~\ref{lem:boundincrease2}. Then, we again show that for every strategy of the adversary (i.e. distribution over $y$), there  exists a best response for the learner that guarantees the objective function of the game is small. Finally, we appeal to the minimax theorem to conclude that there always exists a strategy for the learner that guarantees small objective value against any strategy of the adversary.

More precisely, consider the following objective function for the game:
\begin{align*}
    u((\bmu, \bmk), y) &=\eta \left(y- \bmu\right) C^{\bmu,\bmk}_s
    +\eta \left( \left(y-\centerbucketmu\right)^k - \bmk\right)D^{\bmu,\bmk}_s + 2\eta^2\cdotk  L_{s}\\
    &= \eta \left(y- \bmu\right) C^{\bmu,\bmk}_s
    +\eta \left(\left(\sum_{\ell=0}^k {k \choose \ell}  (-\centerbucketmu)^{k-\ell} y^{\ell}\right)-\bmk \right)D^{\bmu,\bmk}_s + 2\eta^2\cdotk  L_{s}
\end{align*}
where the pure strategy space for the learner is $X_1 = \cP^{rn} \times \cP^{rn'}$ and that of the adversary is (a priori) the set of all distributions over $[0,1]$. However, we observe that the expected value of the objective for any label distribution  over $[0,1]$ is linear in  $\E[y], \ldots, \E[y^k]$. So the payoff for any mixed strategy of the adversary is determined only by the associated $k$ terms: $\E[y], \ldots, \E[y^k]$.  

With this observation in mind, we perform a change of variables and define a new game with an enlarged strategy space for the adversary.  In the new game, the strategy space for the learner remains $\mathcal{Q}^L = \Delta (\cP^{rn} \times \cP^{rn'})$. The strategy space for the adversary becomes $\mathcal{Q}^A = [0,1]^k$, representing a choice for each of the values $\E[y],\ldots\E[y^k]$. Note that this strategy space for the adversary is unencumbered by the requirement that these chosen values actually correspond to any feasible label distribution over $[0,1]$. The objective function of the game is obtained by replacing each term $\E[y^\ell]$ from our previous objective function with $\psi_\ell$:
\begin{align*}
u((\bmu, \bmk), \psi) = \hphantom{+}\eta \left(\psi_1 - \bmu\right) C^{\bmu,\bmk}_s
    +\eta \left(\left(\centerbucketmu^k + \sum_{\ell=1}^k {k \choose \ell}(-\centerbucketmu)^{k-\ell} \psi_\ell\right) - \bmk\right)D^{\bmu,\bmk}_s + 2\eta^2\cdotk  L_{s}.
\end{align*}
As we have noted, in the original game, the set of achievable moments $\E[y], \dots, \E[y^k]$ is a strict subset of $[0,1]^k$. However, enlarging the strategy space of the maximization player can only increase the ($\max\min$) value of the game, so the upper bound we are about to prove on the game value against this more powerful adversary also applies to the adversary who is implicitly choosing  moments $\E[y], \dots, \E[y^k]$ via some distribution over $[0,1]$. 

Note that $u$ thus defined is linear in both players' strategies, and the strategy spaces for both players $\mathcal{Q}^L$ and $\mathcal{Q}^A$ are compact and convex. Hence, Sion's minimax theorem (Theorem~\ref{thm:minimax}) applies to this game. We now establish (a bound on) the value of this game. Observe that for any strategy of the adversary, the learner can pick $\bmu \!\in\! \cP^{rn}$ as close as possible to $\psi_1$, and then pick $\bmk \!\in\! \cP^{rn'}$ as close as possible to $\centerbucketmu^k + \sum_{\ell=1}^k {k \choose \ell}(\!-\centerbucketmu)^{k-\ell} \psi_\ell$. Therefore, since $C_s^{\bmu,\bmk}, D_s^{\bmu, \bmk} \leq L_s$ by definition, we have that: 
\begin{align*}
    \forall \psi \in [0,1]^k, \exists (\bmu, \bmk) \in (\cP^{rn} \times \cP^{rn'}) \text{   s.t. }  u((\bmu, \bmk), \psi) \leq L_s \left( \frac{\eta}{rn} + \frac{\eta}{rn'} + 2\eta^2 \right). 
\end{align*}
We can now apply the minimax theorem (Theorem~\ref{thm:minimax}) to conclude that there exists a fixed distribution $Q^L_{s+1} \in\mathcal{Q}^L$ for the learner that guarantees objective value that is at most  the above bound for every choice of the adversary, i.e. 
\begin{align*}
    \exists Q^L_{s+1}\in \mathcal{Q}^L  \text{  s.t. }  \forall \psi \in [0,1]^k:\;  u(Q^L_{s+1},\psi) \leq L_s \left( \frac{\eta}{rn} + \frac{\eta}{rn'} + 2\eta^2\right),
\end{align*}
as desired.
\end{proof}

\begin{corollary}
\label{cor:exists-moment}
For every $s \in [T]$, $\pi_s \in \Pi^*$, $x_{s+1} \in \cX$ (which fixes $L_s$ and $Q^L_{s+1}$), and every adversary (which fixes a distribution over $\cY$):
\[
    \E_{Q^L_{s+1}}[\tL_{s+1}|\pi_s] = L_s + \E_{Q^L_{s+1}}[\Delta_{s+1} (\pi_s, x_{s+1}, \bmu, \bmk) | \pi_s] \leq L_s\cdotk\left(1 + \frac{\eta}{rn} +\frac{\eta}{rn'}+  2\eta^2\right).
\]  
\end{corollary}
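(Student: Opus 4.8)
The plan is to observe that this corollary is an essentially immediate bookkeeping consequence of Lemma~\ref{lem:exists2}, exactly paralleling how Corollary~\ref{cor:exists-mean} was derived from Lemma~\ref{lem:exists} in the mean case. First I would fix $s \in [T]$, a transcript $\pi_s \in \Pi^*$, a feature vector $x_{s+1} \in \cX$, and the adversary's induced label distribution over $\cY$; these together pin down $L_s = L_s(\pi_s)$ and the equilibrium distribution $Q^L_{s+1} \in \mathcal{Q}^L$ whose existence is guaranteed by Lemma~\ref{lem:exists2}.

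Next I would unwind the definition of the conditional change in surrogate loss. By the definition of $\Delta_{s+1}$, for each fixed pair $(\bmu, \bmk)$ in the support of $Q^L_{s+1}$ we have $\E_{\ty_{s+1}}[\tL_{s+1} \mid \pi_s, x_{s+1}, \bmu, \bmk] = L_s + \Delta_{s+1}(\pi_s, x_{s+1}, \bmu, \bmk)$, since $L_s$ is a constant given $\pi_s$. Taking the expectation over $(\bmu,\bmk) \sim Q^L_{s+1}$ and using linearity of expectation --- the learner's randomization and the label draw being the only remaining sources of randomness once $\pi_s$ and $x_{s+1}$ are fixed --- yields the claimed identity $\E_{Q^L_{s+1}}[\tL_{s+1} \mid \pi_s] = L_s + \E_{Q^L_{s+1}}[\Delta_{s+1}(\pi_s, x_{s+1}, \bmu, \bmk) \mid \pi_s]$.

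Finally I would plug in the bound from Lemma~\ref{lem:exists2}, namely $\E_{(\bmu,\bmk) \sim Q^L_{s+1}}[\Delta_{s+1}(\pi_s, x_{s+1}, \bmu, \bmk)] \le L_s\,(\tfrac{\eta}{rn} + \tfrac{\eta}{rn'} + 2\eta^2)$, into the identity above, obtaining $\E_{Q^L_{s+1}}[\tL_{s+1} \mid \pi_s] \le L_s\,(1 + \tfrac{\eta}{rn} + \tfrac{\eta}{rn'} + 2\eta^2)$, which is exactly the assertion. There is no genuine obstacle in this step: all of the real work has already been done in Lemma~\ref{lem:exists2} (which rests on the change-of-variables trick enlarging the adversary's strategy space to $[0,1]^k$, the resulting bilinearity of the payoff, and Sion's minimax theorem). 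This corollary merely repackages that per-round guarantee into the multiplicative recursion $\E[\tL_{s+1}\mid\pi_s] \le L_s(1+\eta c + 2\eta^2)$ with $c = \tfrac1{rn}+\tfrac1{rn'}$ that Theorem~\ref{thm:general-bounds} will subsequently consume to produce the final convergence bounds.
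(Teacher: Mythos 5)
Your proposal is correct and matches the paper's (implicit) derivation: the corollary is indeed stated without proof precisely because it is the immediate bookkeeping consequence of Lemma~\ref{lem:exists2} you describe, mirroring how Corollary~\ref{cor:exists-mean} follows from Lemma~\ref{lem:exists}. The two steps --- rewriting $\E_{Q^L_{s+1}}[\tL_{s+1}\mid\pi_s]$ as $L_s + \E_{Q^L_{s+1}}[\Delta_{s+1}\mid\pi_s]$ by the definition of the conditional change in surrogate loss, and then substituting the per-round bound from Lemma~\ref{lem:exists2} --- are exactly what is intended.
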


Lemma~\ref{lem:exists2} defines (existentially) an algorithm that the learner can use to make predictions---Algorithm~\ref{alg:meanexistential2}. We will now show that Algorithm~\ref{alg:meanexistential2} (if we could compute the distributions $Q^L_t$) results in mean-conditioned moment multicalibrated predictions. In Section~\ref{sec:momentalg} we show how to compute $Q^L_t$.

\begin{algorithm}[H]
\SetAlgoLined
\begin{algorithmic}
\FOR{$t=1, \dots, T$}
	\STATE Observe $x_t$. Given $\pi_{t-1}$ and $x_t$, let $Q^L_t \in \Delta (\cP^{rn} \times \cP^{rn'})$ be the distribution over predictions whose existence is established in Lemma~\ref{lem:exists2}.
	\STATE Sample $\bmu, \bmk \sim Q^L_t$ and predict $(\bmu_t,\bmk_t) = (\bmu,\bmk)$. 
\ENDFOR
\end{algorithmic}
\caption{A Generic Mean Moment Multicalibrator}
\label{alg:meanexistential2}
\end{algorithm}

\bigskip 

We are now ready to bound our multicalibration error. The results that follow mirror the structure of Section~\ref{sec:meanexistential}: essentially, we apply Theorem~\ref{thm:general-bounds} to the surrogate loss function of this section.  As a straightforward consequence of Corollary~\ref{cor:exists-moment} and the first part of Theorem~\ref{thm:general-bounds}, we have the following result. 
\begin{corollary}
\label{cor:surrogateloss-mean-moment}
Against any adversary, Algorithm~\ref{alg:meanexistential2} instantiated with discretization parameter $r$ results in surrogate loss satisfying:
\[
    \E_{\tpi_T}[\tL_T] \le 4 |\cG|n \cdot n'  \cdot \exp\left(\frac{T\eta}{rn} + \frac{T\eta}{rn'} +2T\eta^2\right).
\]
\end{corollary}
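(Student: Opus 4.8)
The plan is to invoke the first part of Theorem~\ref{thm:general-bounds} with the random process $\tX_t := \tL_t$, mirroring exactly the proof of Corollary~\ref{cor:surrogateloss-mean} in the mean-only setting. First I would check the hypotheses of that theorem. The surrogate loss $\tL_t$ is a finite sum of exponentials, hence nonnegative, and it is adapted to the filtration $\mathcal{F}_t = \sigma(\pi_t)$ since it is a deterministic function of the transcript $\pi_t$. At $t = 0$ the transcript is empty, so every $V^{G,i,j}_0 = M^{G,i,j}_0 = 0$; each of the $|\cG|\,n\,n'$ summands in Definition~\ref{def:meanmomentloss} therefore contributes $\exp(0)+\exp(0)+\exp(0)+\exp(0) = 4$, giving the constant value $\tX_0 = L_0 = 4|\cG| n n'$.

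Next I would establish the per-round contraction hypothesis $\E[\tX_t \mid \pi_{t-1}] \le X_{t-1}(1 + \eta c + 2\eta^2)$. Corollary~\ref{cor:exists-moment} states precisely that, when the learner plays the equilibrium distribution $Q^L_{s+1}$ of Lemma~\ref{lem:exists2} (which is what Algorithm~\ref{alg:meanexistential2} does), $\E_{Q^L_{s+1}}[\tL_{s+1} \mid \pi_s, x_{s+1}] \le L_s\bigl(1 + \tfrac{\eta}{rn} + \tfrac{\eta}{rn'} + 2\eta^2\bigr)$ for every choice of $x_{s+1}$ and every adversary distribution over $\cY$. Since $L_s$ depends only on $\pi_s$ and the right-hand side is independent of $x_{s+1}$, I can take a further expectation over the adversary's choice of $\tilde x_{s+1}$ (and the learner's sampling from $Q^L_{s+1}$ and the label $\tilde y_{s+1}$) to obtain $\E[\tL_{s+1}\mid\pi_s]\le L_s\bigl(1 + \eta c + 2\eta^2\bigr)$ with $c = \tfrac{1}{rn} + \tfrac{1}{rn'}$. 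With $L_0 = 4|\cG| n n'$ and this value of $c$, plugging into \eqref{eqn:expectedbound} of Theorem~\ref{thm:general-bounds} yields
\[
\E_{\tpi_T}[\tL_T] \le L_0\exp\bigl(T\eta c + 2T\eta^2\bigr) = 4|\cG| n n'\exp\Bigl(\tfrac{T\eta}{rn} + \tfrac{T\eta}{rn'} + 2T\eta^2\Bigr),
\]
which is exactly the claimed bound.

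The only mild wrinkle is the requirement $c \in [0,1]$ in Theorem~\ref{thm:general-bounds}: here $c = \tfrac{1}{rn} + \tfrac{1}{rn'}$, which is at most $1$ as soon as $r \ge 2$ (and in the regime of interest one takes $r$ large, e.g. $r = \Theta(\sqrt T)$, so $c$ is negligible). Beyond this bookkeeping I do not expect any real obstacle — all of the substantive work has already been carried out in Lemma~\ref{lem:boundincrease2}, Lemma~\ref{lem:exists2}, and Corollary~\ref{cor:exists-moment}; the present statement is merely the telescoping/supermartingale packaging supplied by Theorem~\ref{thm:general-bounds}, so the "proof" is a two-line computation identifying $X_0$ and $c$ and substituting.
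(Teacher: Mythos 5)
Your proposal matches the paper's proof exactly: identify the process $\tX_t = \tL_t$, read off $L_0 = 4|\cG|nn'$ and $c = \tfrac{1}{rn}+\tfrac{1}{rn'}$, invoke Corollary~\ref{cor:exists-moment} for the per-round contraction, and plug into the first part of Theorem~\ref{thm:general-bounds}. The paper states this in two sentences; you have simply spelled out the same bookkeeping, including the (correct) observation that $c \le 1$ requires $r \ge 2$, which the paper leaves implicit.
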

\begin{proof}
Note that the first part of Theorem~\ref{thm:general-bounds} applies in this case to the process $L$, with $L_0 = 4 |G|n \cdot n'$ and $c = \frac{1}{rn} + \frac{1}{rn'}$. The bound follows by plugging these values into~\eqref{eqn:expectedbound}. 
\end{proof}

Next, we can convert this into a bound on Algorithm~\ref{alg:meanexistential}'s expected calibration error, using Theorem~\ref{thm:general-bounds}. The proof mirrors the argument in Section~\ref{sec:onlinemeanmulti} and can be found in the Appendix. 
\newcommand{\tM}{\tilde{M}}
\begin{restatable}{theorem}{momentmulti}
\label{thm:momentmulti}
When Algorithm~\ref{alg:meanexistential2} is run using bucketing coarseness parameters $n$ and $n'$, discretization parameter $r \in \mathbb{N}$, and $\eta = \sqrt{\frac{\ln(4|\cG|n \cdot n')}{2T}} \in (0, 1/2)$, then against any adversary, its sequence of mean-moment predictions is $(\alpha,\beta,n,n')$-mean-conditioned moment multicalibrated with respect to $\cG$, where $\beta = (k+1)\alpha + \frac{k}{2n}$ and:
$$\E[\alpha] \leq \frac{1}{r n} + \frac{1}{rn'} + 2\cdotk \sqrt{\frac{2\ln(4|\cG|n\cdot n')}{T}}.$$
For $r = \frac{\sqrt{T}(n+n')}{\varepsilon n \cdot n'\cdot \sqrt{2\ln(4|\cG|n \cdot n'})}$, this gives:
$$\E[\alpha] \leq \left( 2+\varepsilon\right) \cdotk \sqrt{\frac{2}{T} \ln\left(4|\cG|n\cdot n' \right)}.$$
Here the expectation is taken over the randomness of the transcript $\pi_T$.
\end{restatable}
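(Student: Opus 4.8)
The plan is to follow, essentially line for line, the proof of Theorem~\ref{thm:meanmulti}, with the one change that the potential now tracks \emph{both} families of quantities $\{V^{G,i,j}\}$ and $\{M^{G,i,j}\}$ from Definition~\ref{def:meanmomentlosses}. By Observation~\ref{obs:momentcalibration} it suffices to produce an $\alpha$ for which
\[
    \frac{1}{T}\,\E_{\tpi_T}\!\left[\max_{G\in\cG,\,i\in[n],\,j\in[n']}\ \max\!\left(|\tV_T^{G,i,j}|,\ |\tM_T^{G,i,j}|\right)\right]\ \le\ \alpha,
\]
because then the predictions are automatically $(\alpha,\beta,n,n')$-mean-conditioned moment multicalibrated with $\beta=(k+1)\alpha+\tfrac{k}{2n}$.

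First I would bound that left-hand side by the surrogate loss. Applying Jensen to $\exp(\eta\,\cdot)$ and then using, for each index $(G,i,j)$, the elementary bounds $\exp(\eta|\tV_T^{G,i,j}|)\le\exp(\eta\tV_T^{G,i,j})+\exp(-\eta\tV_T^{G,i,j})$ and the analogous one for $\tM$, one gets $\exp\!\big(\eta\,\E_{\tpi_T}[\max\nolimits_{G,i,j}\max(|\tV|,|\tM|)]\big)\le\E_{\tpi_T}[\tL_T]$, since every term needed appears in the sum of $2|\cG|nn'$ pairs defining $\tL_T$ (Definition~\ref{def:meanmomentloss}). Then I invoke Corollary~\ref{cor:surrogateloss-mean-moment}, namely $\E_{\tpi_T}[\tL_T]\le 4|\cG|nn'\exp(\tfrac{T\eta}{rn}+\tfrac{T\eta}{rn'}+2T\eta^2)$ — which is itself just the first half of Theorem~\ref{thm:general-bounds} with $L_0=4|\cG|nn'$ and $c=\tfrac1{rn}+\tfrac1{rn'}$, resting on Lemma~\ref{lem:exists2}. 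Taking logs and dividing by $\eta T$ gives
\[
    \frac1T\,\E_{\tpi_T}\big[\max\nolimits_{G,i,j}\max(|\tV|,|\tM|)\big]\ \le\ \frac{\ln(4|\cG|nn')}{\eta T}+\frac1{rn}+\frac1{rn'}+2\eta,
\]
and plugging in $\eta=\sqrt{\ln(4|\cG|nn')/(2T)}$ (admissible by hypothesis) balances the first and last terms, yielding $\E[\alpha]\le\frac1{rn}+\frac1{rn'}+2\sqrt{2\ln(4|\cG|nn')/T}$. The stated value of $r$ then comes from solving $\frac1{rn}+\frac1{rn'}=\frac{n+n'}{rnn'}=\varepsilon\sqrt{2\ln(4|\cG|nn')/T}$, which turns the discretization term into an $\varepsilon$-fraction of the leading term and gives $\E[\alpha]\le(2+\varepsilon)\sqrt{(2/T)\ln(4|\cG|nn')}$.

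I do not expect a real obstacle here: the genuine difficulties were already handled upstream — Lemma~\ref{lem:nonlinear} lets us control the nonlinear moment error via the linearly separable pseudo-moment error $M$, and Lemma~\ref{lem:exists2} supplies the per-round equilibrium bound feeding Corollary~\ref{cor:surrogateloss-mean-moment}. What requires care is purely bookkeeping: remembering that the relevant maximum ranges over both the $V$- and the $M$-quantities (so $L_0$ carries the factor $4$, not $2$, and the constant $c$ picks up \emph{both} $\tfrac1{rn}$ and $\tfrac1{rn'}$), checking $\eta\in(0,1/2)$ and $c\le 1$ so that Theorem~\ref{thm:general-bounds} applies, and the final one-line algebra identifying $r$. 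Only the first half of Theorem~\ref{thm:general-bounds} is needed for this in-expectation statement; the bounded-martingale-increments argument (as in Lemma~\ref{lem:martingale-bounded-mean}/Theorem~\ref{thm:hpcalibration}) would only be required for a high-probability companion.
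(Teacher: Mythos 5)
Your proof is correct and tracks the paper's argument essentially line for line: exponentiate the scaled error, apply Jensen, bound by the surrogate loss $\tL_T$, invoke Corollary~\ref{cor:surrogateloss-mean-moment}, take logs, divide by $\eta T$, optimize $\eta$, then choose $r$. The only difference is that the paper bounds $\E[\max_{G,i,j}|\tV_T^{G,i,j}|]$ and $\E[\max_{G,i,j}|\tM_T^{G,i,j}|]$ separately, each by the target constant, whereas you bound the joint quantity $\E[\max_{G,i,j}\max(|\tV_T^{G,i,j}|,|\tM_T^{G,i,j}|)]$ in one shot. Your version is actually the tidier one to cite: Observation~\ref{obs:momentcalibration} is a per-transcript statement, so the random parameter entering $\E[\alpha]$ is exactly $\alpha(\tpi_T)=\tfrac1T\max_{G,i,j}\max(|\tV_T^{G,i,j}|,|\tM_T^{G,i,j}|)$, and bounding the two expectations separately only yields $\E[\alpha]\le 2c$ in general (since $\E[\max(X,Y)]$ can exceed $\max(\E[X],\E[Y])$). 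Your argument avoids this slack at no extra cost, because feeding the joint maximum into the exponential still lands inside $\sum_{G,i,j}(e^{\eta V}+e^{-\eta V}+e^{\eta M}+e^{-\eta M})=L_T$, and the rest of the computation is unchanged.
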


We can similarly use the second part of Theorem~\ref{thm:general-bounds} to prove a high probability bound on the multicalibration error of Algorithm~\ref{alg:meanexistential2}. The proof is in the Appendix.
\begin{restatable}{theorem}{momenthp}\label{thm:hpcalibration2}
When Algorithm~\ref{alg:meanexistential2} is run using bucketing coarseness parameters $n$ and $n'$, discretization $r \in \mathbb{N}$ and $\eta = \sqrt{\frac{\ln(4|\cG|n \cdot n')}{2T}} \in (0, 1/2)$, then against any adversary, with probability $1-\lambda$ over the randomness of the transcript, its sequence of predictions is $(\alpha,\beta,n,n')$-mean-conditioned moment multicalibrated with respect to $\cG$ for $\beta = (k+1)\alpha + \frac{k}{2n}$ and:
\[ 
    \alpha \leq \frac{1}{rn} + \frac{1}{rn'} +  4 \cdotk \sqrt{\frac{2}{T} \ln\left(\frac{4|\cG|n \cdot n'}{\lambda} \right)}.
\]
For $r = \frac{\sqrt{T} (n+n')}{\epsilon n \cdot n' \sqrt{2\ln(4|\cG|n \cdot n'/\lambda})}$, this gives:
\[\alpha \leq \left(4 +\epsilon\right) \cdotk \sqrt{\frac{2}{T} \ln\left(\frac{4|\cG|n \cdot n'}{\lambda} \right)}.\]
\end{restatable}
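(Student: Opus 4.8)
The plan is to replay, essentially verbatim, the high-probability argument from Section~\ref{sec:meanexistential} that proved Theorem~\ref{thm:hpcalibration}, now with the surrogate loss $L_s$ of Definition~\ref{def:meanmomentloss} and with the per-round drift supplied by Corollary~\ref{cor:exists-moment}. The first thing I would do is establish a moment analogue of Lemma~\ref{lem:martingale-bounded-mean}: letting $\tL$ be the surrogate-loss process generated by Algorithm~\ref{alg:meanexistential2} and $\tZ$ its associated martingale from the second half of Theorem~\ref{thm:general-bounds} (so that $\tZ_t = Z_{t-1} + \ln\tL_t - \E[\ln\tL_t \mid \pi_{t-1}]$), I claim $|Z_t - Z_{t-1}| \le 2\eta$ at every round $t$ and every realized transcript. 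The reason is that when $(x_t,y_t)$ arrives and the learner has predicted $(\bmu_t,\bmk_t)$ with $\bmu_t \in B(i^\ast)$ and $\bmk_t \in B(j^\ast)$, the only summands of $L_t$ that differ from the corresponding summands of $L_{t-1}$ are those indexed by a group $G \in \cG(x_t)$ together with $(i,j) = (i^\ast,j^\ast)$: the term $\exp(\eta V^{G,i^\ast,j^\ast})$ is multiplied by $\exp(\eta(y_t-\bmu_t))$, the term $\exp(\eta M^{G,i^\ast,j^\ast})$ by $\exp\!\big(\eta((y_t-\hat\mu_{i^\ast})^k-\bmk_t)\big)$, and the two sign-flipped terms by the corresponding reciprocals. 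Since $|y_t-\bmu_t| \le 1$, and since $(y_t-\hat\mu_{i^\ast})^k-\bmk_t \in [-1,1]$ — using that $k$ is even and $y_t,\hat\mu_{i^\ast},\bmk_t \in [0,1]$ — every summand is rescaled by a factor in $[e^{-\eta},e^{\eta}]$. Hence $|\ln L_t - \ln L_{t-1}| \le \eta$ for every realization of $y_t$, and therefore $|\ln\tL_t - \E[\ln\tL_t \mid \pi_{t-1}]| \le 2\eta$.

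With that lemma in hand, the second part of Theorem~\ref{thm:general-bounds} applies to the process $\tL$ with $L_0 = 4|\cG| n n'$ and $c = \tfrac1{rn} + \tfrac1{rn'}$ (the drift hypothesis being exactly Corollary~\ref{cor:exists-moment}), yielding, with probability at least $1-\lambda$,
\[
\ln L_T(\pi_T) \;\le\; \ln\!\big(4|\cG| n n'\big) + T\!\left(\tfrac{\eta}{rn} + \tfrac{\eta}{rn'} + 2\eta^2\right) + \eta\sqrt{8T\ln(1/\lambda)}.
\]
Next I would pass from $L_T$ back to the raw errors: for each fixed $G, i, j$ the pair $\exp(\eta V_T^{G,i,j})+\exp(-\eta V_T^{G,i,j})$ — and likewise the analogous $M$-pair — is one of the summands of $L_T$, so $\exp\!\big(\eta\max_{G,i,j}\{|V_T^{G,i,j}|,|M_T^{G,i,j}|\}\big) \le L_T$. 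Taking logarithms, dividing by $\eta T$, and setting $\eta = \sqrt{\ln(4|\cG| n n')/(2T)} \in (0,\tfrac12)$ — exactly the choice made for Theorem~\ref{thm:hpcalibration}, and folding the two $O(1/\sqrt T)$ terms together in the same way — gives
\[
\tfrac1T\max_{G,i,j}|V_T^{G,i,j}|,\quad \tfrac1T\max_{G,i,j}|M_T^{G,i,j}| \;\le\; \tfrac1{rn}+\tfrac1{rn'}+4\sqrt{\tfrac2T\ln\!\Big(\tfrac{4|\cG| n n'}{\lambda}\Big)} \;=:\; \alpha.
\]
On this $1-\lambda$ event we then have $|V_T^{G,i,j}| \le \alpha T$ and $|M_T^{G,i,j}| \le \alpha T$ for all $G \in \cG$, $i \in [n]$, $j \in [n']$, so Observation~\ref{obs:momentcalibration} (which follows from Lemma~\ref{lem:nonlinear}) yields $(\alpha,\beta,n,n')$-mean-conditioned moment multicalibration with $\beta = (k+1)\alpha + \tfrac{k}{2n}$. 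Plugging in the stated value $r = \tfrac{\sqrt T(n+n')}{\epsilon n n'\sqrt{2\ln(4|\cG| n n'/\lambda)}}$ makes $\tfrac1{rn}+\tfrac1{rn'} = \tfrac{n+n'}{rnn'} = \epsilon\sqrt{\tfrac2T\ln(4|\cG| n n'/\lambda)}$, so $\alpha \le (4+\epsilon)\sqrt{\tfrac2T\ln(4|\cG| n n'/\lambda)}$.

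I do not anticipate a genuine obstacle: the whole thing is the Section~\ref{sec:meanexistential} template with a larger surrogate loss, carrying the moment terms $M$ along for the ride. The one step needing real care — which I would isolate as a separate restated lemma — is the martingale-increment bound from the first paragraph; its crux is the claim that appending one data point rescales every term of $L_s$ by a multiplicative factor in $[e^{-\eta},e^{\eta}]$, and for the $M$-terms that relies on the confinement $(y_t-\hat\mu_{i^\ast})^k-\bmk_t \in [-1,1]$ (evenness of $k$, together with all of $y_t,\hat\mu_{i^\ast},\bmk_t$ lying in $[0,1]$). Everything downstream — the master inequality, the surrogate-to-error translation, the optimization over $\eta$, and the appeal to Observation~\ref{obs:momentcalibration} — is mechanical bookkeeping.
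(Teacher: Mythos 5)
Your proposal is correct and matches the paper's proof essentially step for step: isolate a martingale-increment lemma showing $|Z_t - Z_{t-1}| \le 2\eta$ (the paper's Lemma~\ref{lem:martingale-bounded-moment}, proved with the same observation that $|y_t - \bmu_t| \le 1$ and $(y_t - \centerbucketmut)^k - \bmk_t \in [-1,1]$ so that $L_t$ is multiplicatively rescaled within $[e^{-\eta}, e^{\eta}]$), apply the second part of Theorem~\ref{thm:general-bounds} with $L_0 = 4|\cG|nn'$ and $c = \tfrac{1}{rn}+\tfrac{1}{rn'}$ via Corollary~\ref{cor:exists-moment}, then pass from $L_T$ to the raw $V,M$ maxima, optimize over $\eta$, and invoke Observation~\ref{obs:momentcalibration}. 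No gaps.
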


\subsection{Deriving an Efficient Algorithm via Equilibrium Computation}
\label{sec:momentalg}
Previously, we derived Algorithm~\ref{alg:meanexistential2} and proved that it results in mean-conditioned moment multicalibrated predictions. But Algorithm~\ref{alg:meanexistential2} is not explicitly defined, as it relies on the distributions $Q^L_t$ whose existence we showed in Lemma~\ref{lem:exists2} but which we did not explicitly construct. In this section, we show how to efficiently solve for this distribution $Q^L_t$ using a linear program with $4n\cdot n'$ variables and $2^k+1$ constraints. If $k$ is a small constant (e.g. $k = 2$ for variance multicalibration), then this linear program can be explicitly described and solved. But even when $k$ is too large to enumerate all $2^k$ constraints, we show that there is a separation oracle that runs in time $O(k)$, allowing us to efficiently solve this linear program (i.e. in time polynomial in $n, n',T,|\cG|$,  and $k$) using the Ellipsoid algorithm.

Recall that in our simplified game, the learner has pure strategies $(\bmu, \bmk) \in  \cP^{rn} \times \cP^{rn'}$, and the adversary has strategy space $\mathcal{Q}^A = [0,1]^k$. Since the objective function is linear in the adversary's action $\psi$, we can view this as the set of mixed strategies over the $2^k$ pure strategies $\psi \in \{0,1\}^k$. We recall the objective function: 
\begin{align}
&u((\bmu, \bmk), \psi) = \hphantom{+}\eta \left(\psi_1 - \bmu\right) C^{\bmu,\bmk}_s
    +\eta \left(\left(\centerbucketmu^k + \sum_{\ell=1}^k {k \choose \ell}(-\centerbucketmu)^{k-\ell} \psi_\ell\right) - \bmk\right)D^{\bmu,\bmk}_s + 2\eta^2\cdotk  L_{s}.\nonumber
\intertext{Since the equilibrium structure stays the same under positive affine transformations of the objective function, for the purposes of computing equilibria, we may redefine the objective function to be:}
&u((\bmu, \bmk), \psi) = \hphantom{+} \left(\psi_1 - \bmu\right) C^{\bmu,\bmk}_s
    + \left(\left(\centerbucketmu^k + \sum_{\ell=1}^k {k \choose \ell}(-\centerbucketmu)^{k-\ell} \psi_\ell\right) - \bmk\right)D^{\bmu,\bmk}_s .\label{eqn:objfn-moment}
\end{align}

The specific values of $C^{\bmu,\bmk}_s$, $\centerbucketmu$ and $D^{\bmu,\bmk}_s$ do not matter for the analysis that follows---but what is relevant is that by definition, they are constant for any two $(\bmu, \bmk)$ and $(\bmu', \bmkp)$ both in the same bucket --- in other words, if  $\exists i \in [n], j \in [n']$ such that $(\bmu, \bmk),(\bmu', \bmkp) \in B_{n,n'}(i,j)$.
We wish to find a minimax strategy for the learner in this game, i.e. to find a solution to
$$\argmin_{Q^L \in \mathcal{Q}^L} \max_{Q^A \in \mathcal{Q}^A} u(Q^L, Q^A).$$

A priori, the learner has $r^2 n'n$ pure strategies (i.e. $|\cP^{rn} \times \cP^{rn'}| = r^2 n'n$), and a minimax strategy could potentially be supported over all of them (causing our algorithm to  have running time depending on $r$). However, we  prove that we can without loss of generality reduce the size of the learner's pure strategy space  to $4n'n$ (Lemma~\ref{lem:reducingstrats-moment}), which will eliminate any running time dependence on $r$ and allow us to choose as fine a discretization as we like. We also show in Appendix~\ref{sec:momentellipsoid} that the learner always has a minimax strategy that randomizes over a support of at most $k+1$ actions. Thus, as with mean multicalibration, we need only make limited use of randomness (at least for $k$ small).

\newcommand{\cPmeanreduced}{\hat{\cP}}
\newcommand{\cPmomentreduced}{\hat{\cP}}
\newcommand{\learnerstrat}{\hat{\mathcal{Q}}^L_{r,n,n'}}

We first reduce the space of ``relevant'' pure strategies for the learner --- intuitively, points that are at---or just barely below---the boundary of a bucket:
\begin{align*}
    \cPmeanreduced^{r,n} &= \bigcup_{i \in [n-1]} \left\{\frac{i-1}{n}, \frac{i}{n}- \frac{1}{rn}\right\} \bigcup \left\{\frac{n-1}{n}, 1\right\} \subset \cP^{rn},\\
    \cPmomentreduced^{r,n'} &= \bigcup_{i \in [n'-1]}\left\{\frac{i-1}{n'}, \frac{i}{n'}- \frac{1}{rn'}\right\} \bigcup \left\{ \frac{n'-1}{n'},1\right\} \subset \cP^{rn'}.
\end{align*}
Given these sets, define $\learnerstrat \equiv \Delta \left( \cPmeanreduced^{r,n} \times \cPmeanreduced^{r,n'}\right)\subset \mathcal{Q}^L$.

\begin{lemma}\label{lem:reducingstrats-moment}
In the game with objective function $u$ as defined in \eqref{eqn:objfn-moment}, the value of the game is unaffected if the learner is restricted to mixed strategies in $\learnerstrat$, a set of distributions which in particular have support over at most $4nn'$ actions. In other words:
\[
 \min_{Q^L \in \mathcal{Q}^L} \max_{Q^A \in \mathcal{Q}^A} u(Q^L, Q^A) = \min_{\hat{Q}^L \in \learnerstrat} \max_{Q^A \in \mathcal{Q}^A} u(\hat{Q}^L, Q^A).  
\]
\end{lemma}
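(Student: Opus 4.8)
The plan is to show that every learner mixed strategy $Q^L \in \mathcal{Q}^L$ can be replaced, without changing the payoff against \emph{any} adversary strategy, by a strategy supported on $\cPmeanreduced^{r,n} \times \cPmomentreduced^{r,n'}$ (a set of size at most $2n\cdot 2n' = 4nn'$); since $\learnerstrat \subseteq \mathcal{Q}^L$ gives the reverse inequality for free, the two min--max values then coincide.

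The one structural input I will use is the fact recorded just before the lemma: $C^{\bmu,\bmk}_s$, $D^{\bmu,\bmk}_s$, and $\centerbucketmu$ are constant on each bucket $B_{n,n'}(i,j)$; call these common values $C^{i,j}_s$, $D^{i,j}_s$, $\centerbucket$. Reading off \eqref{eqn:objfn-moment}, this makes $u((\bmu,\bmk),\psi)$, for each fixed $\psi$, an \emph{affine} function of $(\bmu,\bmk)$ on each bucket, since the only dependence on $(\bmu,\bmk)$ is through the linear terms $-\bmu C^{i,j}_s - \bmk D^{i,j}_s$:
\[
u((\bmu,\bmk),\psi)=\Big[\psi_1 C^{i,j}_s+\big(\centerbucket^k+\textstyle\sum_{\ell=1}^k \binom{k}{\ell}(-\centerbucket)^{k-\ell}\psi_\ell\big)D^{i,j}_s\Big]-\bmu\,C^{i,j}_s-\bmk\,D^{i,j}_s .
\]
Moreover $u$ is linear in $\psi$, so the adversary's best response is attained at a vertex $\psi\in\{0,1\}^k$; hence it suffices to preserve the learner's payoff against each of these finitely many $\psi$.

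So the construction is: given $Q^L$, split it by buckets---let $p_{i,j}$ be the mass $Q^L$ assigns to $B_{n,n'}(i,j)$ and $Q^L_{i,j}$ the conditional law---and form the bucketwise conditional mean $(\bar\bmu_{i,j},\bar\bmk_{i,j})=\E_{Q^L_{i,j}}[(\bmu,\bmk)]$. The grid points of $\cP^{rn}\times\cP^{rn'}$ lying in $B_{n,n'}(i,j)$ all lie in the closed sub-rectangle $R_{i,j}$ with $\bmu$-range $[\tfrac{i-1}{n},\tfrac{i}{n}-\tfrac{1}{rn}]$ for $i<n$ (and $[\tfrac{n-1}{n},1]$ for $i=n$), and analogously in the $\bmk$-coordinate; the $\le 4$ corners of $R_{i,j}$ are exactly the elements of $\cPmeanreduced^{r,n}\times\cPmomentreduced^{r,n'}$ attached to bucket $(i,j)$. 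Hence $(\bar\bmu_{i,j},\bar\bmk_{i,j})\in R_{i,j}\subseteq B_{n,n'}(i,j)$ is a convex combination of those corners; let $\hat Q^L\in\learnerstrat$ route the mass $p_{i,j}$ to the corners with the corresponding weights. Finally I verify the payoffs match, vertex by vertex: affineness of $u(\cdot,\psi)$ on bucket $(i,j)$ gives that the mixture over corners has the same payoff as the point $(\bar\bmu_{i,j},\bar\bmk_{i,j})$, which (affineness again) equals $\E_{Q^L_{i,j}}[u((\bmu,\bmk),\psi)]$; averaging over buckets with weights $p_{i,j}$ yields $\E_{\hat Q^L}[u(\cdot,\psi)]=\E_{Q^L}[u(\cdot,\psi)]$ for every $\psi$, so $\max_{Q^A}u(\hat Q^L,Q^A)=\max_{Q^A}u(Q^L,Q^A)$. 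I do not anticipate any real obstacle; the only care needed is the half-open/closed bucket bookkeeping---in particular that $\tfrac{i}{n}-\tfrac{1}{rn}$ really is the largest point of $\cP^{rn}$ in $B_n(i)$ (and the symmetric statement for $n'$ and for the top buckets)---so that the conditional means land inside $R_{i,j}$ and the corner set of $R_{i,j}$ is exactly the reduced strategy set.
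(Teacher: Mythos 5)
Your proof is correct, and it reaches the same reduction as the paper—strategies supported on the $\le 4nn'$ bucket corners—while exploiting the same key facts: $C^{\bmu,\bmk}_s$, $D^{\bmu,\bmk}_s$, $\centerbucketmu$ are constant within each bucket, so for each fixed $\psi$ the objective is affine in $(\bmu,\bmk)$ on each bucket, and it is linear in $\psi$. Where you diverge from the paper is in how you redistribute the learner's within-bucket mass. The paper keeps the bucket masses fixed (so the adversary's best response is unchanged, since the $\psi$-dependent part depends on $Q^L$ only through bucket masses), then within each bucket moves \emph{all} mass to the single corner minimizing the $\psi$-independent term, yielding a weakly better $\hat Q^L$ and hence the inequality $\max_{Q^A} u(\hat Q^L,Q^A)\le\max_{Q^A} u(Q^L,Q^A)$. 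You instead construct a \emph{mean-preserving} redistribution over the corners of each bucket, so by affineness $\hat Q^L$ is \emph{payoff-equivalent} to $Q^L$ against every $\psi$ (and hence every $Q^A$). Your version avoids needing to reason about preservation of the adversary's best response and directly gives equality of payoffs; the paper's version is marginally leaner in not having to realize the conditional mean as a specific convex combination of corners. Both routes are clean and correct; yours proves a slightly stronger statement (payoff equivalence rather than weak improvement), which is a perfectly good way to establish the lemma.
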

\begin{proof}
Fix any strategy $Q^L \in \mathcal{Q}^L$. Since $\learnerstrat \subseteq \mathcal{Q}^L$, it is sufficient to show that there exists a strategy $\hat{Q}^L \in \learnerstrat$ such that:
\begin{align*}
    \max_{Q^A \in \mathcal{Q}^A} u(Q^L, Q^A) \geq \max_{Q^A \in \mathcal{Q}^A} u(\hat{Q}^L, Q^A). 
\end{align*}
To see this, first observe that we can regroup terms in the objective function \eqref{eqn:objfn-moment} and write it as:
\begin{align}
u((\bmu, \bmk), \psi) &= -\bmu C^{\bmu,\bmk}_s  + \centerbucketmu^k D^{\bmu,\bmk}_s - \bmk D^{\bmu,\bmk}_s  + \sum_{\ell=1}^{k} \psi_\ell F_\ell^{\bmu,\bmk} \label{eqn:obj-moment-rewrite}\\
\text{where }
F_1^{\bmu,\bmk} &= C^{\bmu,\bmk}_s - k \centerbucketmu^{k-1} C^{\bmu,\bmk}_s,\label{eq:def-F1}\\
\forall \ell >1, \ell \in [n]: \; F_\ell^{\bmu,\bmk} &= {k \choose \ell}(-\centerbucketmu)^{k-\ell} D^{\bmu,\bmk}_s.\label{eq:def-Fk}
\end{align}
Further, by definition for any $\bmu, \bmu' \in \Bm(i)$ for some $i \in [n]$ and $\bmk, {\bmk}' \in \Bmp(j)$, we have, for $X = C,D$, 
\begin{align*}
&X^{\bmu,\bmk}_s = X^{\bmu',{\bmk}'}_s = X^{i,j}_s,\\
&\centerbucketmu = \hat{\mu}_{\bmu'},
\end{align*}
and therefore this equality holds for $X = F$ as well. Against a given strategy $Q^L$ for the learner, the adversary' payoff from pure strategy $\psi$ is:
\begin{align*}
   u(Q^L, \psi) =& \sum_{(\bmu, \bmk)} Q^L(\bmu, \bmk) \left( -\bmu C^{\bmu,\bmk}_s  + \centerbucketmu^k D^{\bmu,\bmk}_s - \bmk D^{\bmu,\bmk}_s  + \sum_{\ell=1}^{k} \psi_\ell F_\ell^{\bmu,\bmk} \right), 
\intertext{which, given the previous fact about $F$, can be rewritten as}
 u(Q^L, \psi) =& \underbrace{\sum_{(\bmu, \bmk)} Q^L(\bmu, \bmk) \left( -\bmu C^{\bmu,\bmk}_s  + \centerbucketmu^k D^{\bmu,\bmk}_s - \bmk D^{\bmu,\bmk}_s \right)}_{(*)}\\
 &+ \underbrace{\sum_{\ell=1}^{k} \psi_\ell \sum_{\substack{i \in [n],\\ j \in [n']}}F_\ell^{i,j}\left( \sum_{(\bmu, \bmk) \in B(i,j)} Q^L(\bmu,\bmk) \right)}_{(**)}.
\end{align*}
Observe that term $(*)$ is independent of $\psi$. Therefore, fixing a $Q^L$, it is equivalent for the adversary to maximize $(**)$. By observation, for any mixed strategy of the learner $Q^L$, the adversary's incentives are only affected through the induced distribution over buckets.

So, given $Q^L$, the best response of the adversary is preserved for any other strategy $\hat{Q}^L$ that maintains the same mass on each bucket, i.e. for all $i \in [n]$ and $j \in [n']$, $\sum_{(\bmu, \bmk)\in B(i,j)}\left(Q^L(\bmu, \bmk) - \hat{Q}^L(\bmu, \bmk)\right)=0.$ Consider the learner's problem of minimizing the objective value among strategies of this form, i.e. preserving the mass on each bucket. This reduces to solving, for each $i \in [n], j\in [n']$, the optimization problem
\begin{align*}
    \min_{\hat{Q}^L \geq 0}&\sum_{(\bmu, \bmk)\in B(i,j)} \hat{Q}^L(\bmu, \bmk) \left( -\bmu C^{i,j}_s  + \centerbucket^k D^{i,j}_s - \bmk D^{i,j}_s \right)\\
    \text{s.t. }&\sum_{(\bmu, \bmk)\in B(i,j)}\left(Q^L(\bmu, \bmk) - \hat{Q}^L(\bmu, \bmk)\right)=0.
\end{align*}
Within a bucket, the coefficients $\left( -\bmu C^{i,j}_s  + \centerbucket^k D^{i,j}_s - \bmk D^{i,j}_s \right)$ are linear in $\bmu, \bmk$ and therefore there must exist a solution that puts all mass $\sum_{(\bmu, \bmk)\in B(i,j)}Q^L(\bmu, \bmk)$ on an extreme point of the bucket. For example, if $i \in [n-1]$, $j \in [n'-1]$; all mass can be placed without loss of generality on one of the four points in $\left\{\frac{i-1}{n}, \frac{i}{n} - \frac{1}{rn}\right\} \times \left\{\frac{j-1}{n'}, \frac{j}{n} - \frac{1}{rn'}\right\}$. If $i = n$,   the corresponding set is $\{\frac{n-1}{n},1\}$, and if $j = n'$, the corresponding set is  $\{\frac{n'-1}{n'},1\}$.  Moving all the mass in each bucket to the optimal corner point, we have that for any strategy $Q^L$ of the learner, there exists $\hat{Q}^L \in \learnerstrat$ such that 
$    \max_{Q^A \in \mathcal{Q}^A} u(Q^L, Q^A) \geq \max_{Q^A \in \mathcal{Q}^A} u(\hat{Q}^L, Q^A), $ as desired. This concludes the proof. 
\end{proof}

The result is that to compute the equilibrium strategy for the learner, it suffices to solve:
$$\argmin_{Q^L \in \learnerstrat} \max_{\psi \in \{0,1\}^k} u(Q^L, \psi).$$

We can directly express this as a linear program with $4nn'$ variables and $2^k+1$ constraints --- see Linear Program~\ref{lp:moments}.
\begin{figure}[H]
\begin{align*}
    &\min_{Q^L\in \learnerstrat} \,\,  \gamma  \text{ s.t.}\\
\forall \psi \in \{0,1\}^k:   & u(Q^L,\psi)  \leq \gamma, \\ 
    &\sideset{}{_{(\bmu,\bmk) \in \cPmeanreduced^{r,n}\times \cPmeanreduced^{r,n'}}}\sum  Q^L((\bmu,\bmk)) = 1, \\
 \forall \,  (\bmu,\bmk) \in \cPmeanreduced^{r,n}\times \cPmeanreduced^{r,n'}:    & \,\, Q^L((\bmu,\bmk)) \geq 0 .
\end{align*}
\caption{A Linear Program for Computing a Minimax Equilibrium Strategy for the Learner at Round $t$. }
\label{lp:moments}
\end{figure}

This is a linear program in $4nn'+1$  variables, with $2^k+1$ constraints. If $k$ is a constant, this is a polynomially sized linear program that can be solved explicitly. If $k$ is superconstant, we will see that we can still solve the linear program with the Ellipsoid algorithm, because we can efficiently find violated constraints. 

\begin{algorithm}[H]
\SetAlgoLined
\begin{algorithmic}
\STATE \textbf{INPUT}: $\epsilon > 0$.
\FOR{$t=1, \dots, T$}
	\STATE Observe $x_t$ and compute  $C_{t-1}^{\bmu, \bmk}(x_t), D_{t-1}^{\bmu,\bmk}(x_t), (F_{\ell, t-1}^{\bmu,\bmk}(x_t))_{\ell=1}^{n}$ for each $(\bmu,\bmk) \in \cPmeanreduced^{r,n} \times \cPmomentreduced^{r,n'}$ as in Equations~(\ref{eqn:def-C}, \ref{eqn:def-D}, \ref{eq:def-F1}, \ref{eq:def-Fk}).
    \STATE Find an $\epsilon$-approximate solution to the linear program from Figure~\ref{lp:moments}, to obtain solution $Q_t^L \in \learnerstrat$. 
   \STATE Predict $(\bmu_t,\bmk_t) = (\bmu, \bmk)$ with probability $Q^L_t((\bmu,\bmk))$.
\ENDFOR
\end{algorithmic}
\caption{Von Neumann's Mean Moment Multicalibrator}
\label{alg:momentalg}
\end{algorithm}

We thus obtain the following theorem:

\begin{theorem} \label{thm:momentalg-efficient}
Algorithm~\ref{alg:momentalg} implements Algorithm~\ref{alg:meanexistential2}. In particular, it obtains multivalidity guarantees arbitrarily close to those of Theorems \ref{thm:momentmulti} and \ref{thm:hpcalibration2}. Namely, for any desired $\epsilon > 0$, we have the following.

Choosing $\eta = \sqrt{\frac{\ln(4|\cG|n \cdot n' +\epsilon)}{2T}} \in (0, 1/2)$, against any adversary, over the randomness of the transcript, the sequence of mean-moment predictions produced by Algorithm~\ref{alg:momentalg} is $(\alpha,\beta,n,n')$-mean-conditioned moment multicalibrated with respect to $\cG$ where $\beta = (k+1)\alpha + \frac{k}{2n}$ and:
\[\E [\alpha] \leq \frac{1}{rn} +\frac{1}{rn'} + 2\sqrt{\frac{2\ln(4|\cG|n \cdot n' +\epsilon)}{T}}.\]
For $r = \frac{\sqrt{T}(n+n')}{\epsilon' n \cdot n'\cdot \sqrt{2\ln(4|\cG|n \cdot n' + \epsilon)}}$, this gives:
$$\E[\alpha] \leq \left( 2+\epsilon' \right) \cdotk \sqrt{\frac{2}{T} \ln\left(4|\cG|n\cdot n' + \epsilon \right)}.$$

Moreover, choosing $\eta = \sqrt{\frac{\ln(4|\cG|n \cdot n') + \epsilon T}{2T}} \in (0, 1/2)$, with probability $1-\lambda$ over the randomness of the transcript $\pi_T$ we have
\[\alpha \leq \frac{1}{rn} +\frac{1}{rn'} + 4 \cdotk \sqrt{\frac{2}{T} \ln\left(\frac{4|\cG|n \cdot n'}{\lambda} \right) + 2\epsilon}.\]
For $r = \frac{(n+n')}{\epsilon' n \cdot n' \sqrt{\frac{2}{T}\ln(4|\cG|n \cdot n'/\lambda) + 2\epsilon}}$, this gives:
\[\alpha \leq \left(4 +\epsilon' \right) \cdotk \sqrt{\frac{2}{T} \ln\left(\frac{4|\cG|n \cdot n'}{\lambda} \right) + 2\epsilon}.\]

The runtime of Algorithm~\ref{alg:intervalalg} scales as $O(|\cG|)$ with the total number of groups $|\cG|$, and is polynomial in $n, n', T, k$, and $\log(\frac{1}{\epsilon})$ (and is independent of $r$).
\end{theorem}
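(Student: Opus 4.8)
The plan is to prove the three assertions in order — that Algorithm~\ref{alg:momentalg} implements Algorithm~\ref{alg:meanexistential2}, that it attains the (slightly weakened) multivalidity bounds, and that it runs in the claimed time — leaning on Lemma~\ref{lem:reducingstrats-moment}, Lemma~\ref{lem:exists2} and Theorem~\ref{thm:general-bounds}, which do essentially all the work. The conceptual heart is just observing that the linear program of Figure~\ref{lp:moments} exactly encodes the equilibrium computation needed at each round.

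First I would argue that an \emph{exact} solution of the linear program returns a minimax equilibrium strategy of the game underlying Lemma~\ref{lem:exists2}. By Lemma~\ref{lem:reducingstrats-moment}, restricting the learner to the reduced mixed-strategy set $\learnerstrat$ (supported on at most $4nn'$ actions) leaves the value of the game with objective~\eqref{eqn:objfn-moment} unchanged, and any minimizer over $\learnerstrat$ is a global minimax strategy. Since $u((\bmu,\bmk),\psi)$ is affine in $\psi$ over $[0,1]^k$, for every fixed learner strategy $Q^L$ the adversary has a best response at a vertex $\psi\in\{0,1\}^k$, so $\max_{Q^A\in\mathcal{Q}^A}u(Q^L,Q^A)=\max_{\psi\in\{0,1\}^k}u(Q^L,\psi)$. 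Hence the program ``minimize $\gamma$ subject to $u(Q^L,\psi)\le\gamma$ for all $\psi\in\{0,1\}^k$ and $Q^L\in\learnerstrat$'' — exactly the linear program of Figure~\ref{lp:moments} — computes $\min_{Q^L\in\learnerstrat}\max_{\psi\in\{0,1\}^k}u(Q^L,\psi)$, and any optimal $Q^L_t$ is a minimax strategy of the original game. Because the passage from the $\Delta_{s+1}$-upper bound of Lemma~\ref{lem:boundincrease2} to~\eqref{eqn:objfn-moment} is a positive affine transformation, such a $Q^L_t$ satisfies $\E_{(\bmu,\bmk)\sim Q^L_t}[\Delta_{s+1}(\pi_s,x_{s+1},\bmu,\bmk)]\le L_s(\tfrac{\eta}{rn}+\tfrac{\eta}{rn'}+2\eta^2)$, which is precisely the per-round guarantee in Lemma~\ref{lem:exists2} defining Algorithm~\ref{alg:meanexistential2}. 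So with exact solves the two algorithms coincide and Theorems~\ref{thm:momentmulti} and~\ref{thm:hpcalibration2} apply verbatim.

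Next I would handle the $\epsilon$-approximation. An $\epsilon$-approximate optimum $Q^L_t$ of the linear program achieves objective value within $\epsilon$ of the LP value, so undoing the affine rescaling gives $\E[\tL_{s+1}\mid\pi_s]\le L_s(1+\tfrac{\eta}{rn}+\tfrac{\eta}{rn'}+2\eta^2)+O(\eta\epsilon)$. Using $L_s\ge L_0=4|\cG|nn'\ge 1$ (each of the $|\cG|nn'$ summands of $L_s$ is at least $4$), this extra slack folds back into the multiplicative form $X_{t-1}(1+\eta c+2\eta^2)$ required by Theorem~\ref{thm:general-bounds}; carrying it through the potential-function argument exactly as in the proofs of Theorems~\ref{thm:momentmulti} and~\ref{thm:hpcalibration2}, and re-checking — as in the appendix proof of Theorem~\ref{thm:hpcalibration2} — that the increments of the associated martingale $\tZ$ are still bounded by $2\eta$ (nothing in that estimate changes), reproduces the stated expectation and high-probability bounds, with the $\epsilon$-dependent terms in $\eta$ and in the final $\alpha$ exactly accounting for this slack.

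Finally, for the runtime: per round, forming $C^{\bmu,\bmk}_{t-1}(x_t)$, $D^{\bmu,\bmk}_{t-1}(x_t)$ and $(F^{\bmu,\bmk}_{\ell,t-1}(x_t))_{\ell=1}^{k}$ for each of the $O(nn')$ reduced pure strategies (Equations~\eqref{eqn:def-C},~\eqref{eqn:def-D},~\eqref{eq:def-F1},~\eqref{eq:def-Fk}) costs $O(|\cG(x_t)|\cdot nn'\cdot k)$, hence $O(|\cG|)\cdot\mathrm{poly}(n,n',k)$ in the worst case. The LP has $4nn'+1$ variables and $2^k+1$ constraints, so for constant $k$ it is polynomial-size; for general $k$ we run the Ellipsoid method with a separation oracle: given $(Q^L,\gamma)$, the nontrivial constraints ask $\max_{\psi\in\{0,1\}^k}u(Q^L,\psi)\le\gamma$, but by the regrouping~\eqref{eqn:obj-moment-rewrite} the $\psi$-dependent part of $u(Q^L,\psi)$ equals $\sum_{\ell=1}^{k}\psi_\ell\,g_\ell$ with $g_\ell=\sum_{i\in[n],j\in[n']}F^{i,j}_\ell\big(\sum_{(\bmu,\bmk)\in B(i,j)}Q^L(\bmu,\bmk)\big)$, which is separable across coordinates, so the maximizer is $\psi^\star_\ell=\ind[g_\ell>0]$ and computing the $g_\ell$ and $\psi^\star$ (hence certifying feasibility or returning a violated constraint) takes $O(nn'k)$ time; Ellipsoid then solves the LP to accuracy $\epsilon$ in $\mathrm{poly}(n,n',k,\log\tfrac1\epsilon)$ time, with no dependence on $r$ — which is exactly what Lemma~\ref{lem:reducingstrats-moment} buys. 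Summing over the $T$ rounds gives the claimed $O(|\cG|)\cdot\mathrm{poly}(n,n',T,k,\log\tfrac1\epsilon)$. The theorem is essentially a corollary of the three cited results; the only places needing care are verifying the $O(nn'k)$ separation oracle (bookkeeping, resting on the separability already exposed by~\eqref{eqn:obj-moment-rewrite}) and, the more delicate point, propagating the additive $O(\epsilon)$ error through the potential/martingale analysis without disturbing the $2\eta$ increment bound required for the high-probability statement.
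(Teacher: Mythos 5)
Your overall route matches the paper's: establish via Lemma~\ref{lem:reducingstrats-moment} and the linearity of the objective in $\psi$ that an exact solution of LP~\ref{lp:moments} realizes the per-round guarantee of Lemma~\ref{lem:exists2}, then propagate the $\epsilon$-approximation error through the potential argument, then bound the runtime with the Ellipsoid algorithm and the $O(k)$-per-coordinate separation oracle exposed by the regrouping~\eqref{eqn:obj-moment-rewrite}. The separation-oracle argument is correct and identical in substance to the paper's.

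There is, however, one genuine gap. The LP coefficients contain $\exp(\eta V_{t-1}^{G,i,j})$ and $\exp(\eta M_{t-1}^{G,i,j})$, so the feasible region is \emph{not} a rational polyhedron, and the Ellipsoid guarantee the paper invokes (Theorem~\ref{thm:ellipsoid}, Schrijver Cor.~14.1a) requires rational data together with a bound on constraint bit-complexity. You write that ``Ellipsoid then solves the LP to accuracy $\epsilon$ in $\mathrm{poly}(n,n',k,\log\frac1\epsilon)$ time,'' but that does not follow from the cited machinery without an intermediate step: truncate every coefficient to $O(\log\frac1\epsilon)$ bits, show (Lemma~\ref{lem:lpapprox}) that the exact optimum of the resulting rational LP is $\epsilon$-close to the optimum of the true LP, and then verify that the bit-length of every constraint in the truncated LP is polynomial in $n,n',k,T,\log|\cG|,\log\frac1\epsilon$ so that Theorem~\ref{thm:ellipsoid} applies. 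Without this, the ``$\mathrm{poly}(\ldots,\log\frac1\epsilon)$'' claim is unsupported. This is the one missing idea.

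A smaller point: your plan to ``fold'' the additive $\epsilon$ slack ``back into the multiplicative form $X_{t-1}(1+\eta c+2\eta^2)$'' is not quite how the paper proceeds and is not immediately legitimate, since it amounts to replacing $c$ by $c+\epsilon/\eta$, which must be checked to stay in $[0,1]$ for Theorem~\ref{thm:general-bounds} to apply, and even then produces $X_0\exp(T\epsilon)$ rather than the paper's tighter $X_0+\epsilon T$. The paper instead keeps the additive $+\epsilon$ through the telescoping step to obtain $\E[\tL_T]\le(4|\cG|nn'+\epsilon T)\exp(T\eta c+2T\eta^2)$ and then sets $\epsilon\leftarrow\epsilon'/T$ for the in-expectation bound, and separately modifies Lemma~\ref{lem:bounded-expected-increase-main} with a $+\epsilon$ term for the high-probability bound (picking $\eta=\sqrt{(\ln(4|\cG|nn')+\epsilon T)/2T}$). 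Your observation that the $|Z_t-Z_{t-1}|\le 2\eta$ martingale bound is unaffected is correct, but you should work through the explicit choice of $\eta$ in each case to match the two stated forms of the final guarantee.
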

\begin{remark}
As before, if $|\cG(x_t)|$ is efficiently enumerable, then the running time dependence on $|\cG|$ can be replaced with a dependence on $|\cG(x_t)|$. 
\end{remark}
\begin{proof}
First consider the running time of the algorithm. The quantities $C_{t-1}^{\bmu, \bmk}(x_t), D_{t-1}^{\bmu,\bmk}(x_t),F_{\ell, t-1}^{\bmu,\bmk}(x_t)$ are simple sums, which can be computed in time linear in $|\cG|$ (or $|\cG(x_t)|$ if it is efficiently enumerable) and $T$. The linear program has $4nn'+1$ variables, and $2^k+1$ constraints. If $k$ is a constant, this is polynomially sized. Now consider the case in which $k$ is large. In this case we will solve the linear program by applying the Ellipsoid algorithm to its ``rational" modification (see below). The runtime of this approach is polynomial under several well-known conditions, which are given in the following theorem:
\begin{theorem}[\cite{schrijver98}, Corollary 14.1a] \label{thm:ellipsoid}
For an optimization program of a linear objective with rational coefficients over a rational polyhedron $P$ in $\mathbb{R}^q$ for which we are given a separation oracle, the Ellipsoid algorithm solves it exactly in time polynomial in the following parameters: the number of variables $q$, the largest bit complexity $\phi$ of any linear inequality defining $P$, the bit complexity $c$ of the objective function, and the runtime of a separation oracle.
\end{theorem}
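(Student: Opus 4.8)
The plan is: (i) show that at each round $t$ Linear Program~\ref{lp:moments} computes an (approximate) minimax strategy $Q^L_t$ for the learner in the game of Lemma~\ref{lem:exists2}, so Algorithm~\ref{alg:momentalg} is a small perturbation of the existential Algorithm~\ref{alg:meanexistential2}; (ii) carry the resulting approximation error through the convergence machinery of Section~\ref{sec:momentexistential}; and (iii) bound the per-round runtime, using a separation oracle and Theorem~\ref{thm:ellipsoid} when $k$ is large. For (i), two earlier facts suffice. Lemma~\ref{lem:reducingstrats-moment} says that restricting the learner to distributions in $\learnerstrat$ --- supported on the at most $4nn'$ bucket-corner points --- does not change the value of the game with objective \eqref{eqn:objfn-moment}, so it is legitimate to use only those $4nn'$ variables. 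And since $u((\bmu,\bmk),\psi)$ is affine in $\psi$, we have $\max_{\psi\in[0,1]^k}u(Q^L,\psi)=\max_{\psi\in\{0,1\}^k}u(Q^L,\psi)$, so the $2^k$ vertex inequalities $u(Q^L,\psi)\le\gamma$ together with the objective $\min\gamma$ exactly encode $\min_{Q^L\in\learnerstrat}\max_\psi u(Q^L,\psi)$. Hence an exact LP optimum is an exact minimax learner strategy, and an $\epsilon$-approximate optimum is a strategy whose worst-case payoff exceeds the game value by a controlled multiple of $\epsilon$.

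For (ii): undoing the positive affine rescaling that produced \eqref{eqn:objfn-moment} from the objective of Lemma~\ref{lem:boundincrease2} (it multiplies by $\eta$ and adds $2\eta^2 L_s$), the game-value bound of Lemma~\ref{lem:exists2} still holds for the solver's $Q^L_t$ up to an additive term of order $\eta\epsilon$; because $L_{t-1}\ge 4|\cG|nn'\ge 1$ this folds multiplicatively to give $\E_{(\bmu,\bmk)\sim Q^L_t}[\tL_t\mid\pi_{t-1}]\le L_{t-1}(1+\eta c+2\eta^2)$ with $c$ of the form $\tfrac1{rn}+\tfrac1{rn'}+O(\epsilon)$ --- exactly the hypothesis of Theorem~\ref{thm:general-bounds} with $X_0=4|\cG|nn'$. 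The moment surrogate has the same one-step structure as the mean surrogate, so the increment bound $|Z_t-Z_{t-1}|\le 2\eta$ (the moment analogue of Lemma~\ref{lem:martingale-bounded-mean}, needed for Theorem~\ref{thm:hpcalibration2}) still holds, and the high-probability half of Theorem~\ref{thm:general-bounds} applies too. Substituting $X_0$, $c$, and the stated $\eta$ into \eqref{eqn:expectedbound} and \eqref{eqn:hpbound} and repeating the final arithmetic from the proofs of Theorems~\ref{thm:momentmulti} and~\ref{thm:hpcalibration2} reproduces the displayed $\E[\alpha]$ and high-probability $\alpha$ bounds, with the extra $\epsilon$ materializing inside the logarithm (in-expectation) and under the square root (high probability), consistent with the stated choices of $\eta$; Observation~\ref{obs:momentcalibration} (via Lemma~\ref{lem:nonlinear}) then converts the bound on $\max_{G,i,j}(|V^{G,i,j}_T|,|M^{G,i,j}_T|)$ into $(\alpha,\beta,n,n')$-mean-conditioned moment multicalibration with $\beta=(k+1)\alpha+\tfrac{k}{2n}$.

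For (iii): computing $C^{\bmu,\bmk}_{t-1}(x_t)$, $D^{\bmu,\bmk}_{t-1}(x_t)$, $(F^{\bmu,\bmk}_{\ell,t-1}(x_t))_\ell$ over the $4nn'$ support points (Equations~\eqref{eqn:def-C}, \eqref{eqn:def-D}, \eqref{eq:def-F1}, \eqref{eq:def-Fk}) is a sum over $G\in\cG(x_t)$, costing $O(|\cG(x_t)|\,nn'\,k)$ arithmetic operations, hence linear in $|\cG|$ in the worst case and independent of $r$ (the support size $4nn'$ has no $r$ in it). LP~\ref{lp:moments} has $4nn'+1$ variables and $2^k+1$ constraints; for constant $k$ it is polynomially sized and solved directly. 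For large $k$ I would apply Theorem~\ref{thm:ellipsoid}, which needs a separation oracle and rational data. The oracle: check the equality and nonnegativity constraints in $O(nn')$ time, and for the $2^k$ payoff constraints write $u(Q^L,\cdot)=a_0+\sum_{\ell=1}^k a_\ell\psi_\ell$ --- the coefficients $a_0,a_1,\dots,a_k$ computed from $Q^L$ and the precomputed $C,D,F_\ell$ in $O(k\,nn')$ time --- so the maximizing vertex is $\psi^*_\ell=\ind[a_\ell>0]$, $\max_\psi u(Q^L,\psi)=a_0+\sum_\ell (a_\ell)^+$ is found in $O(k)$ further time, and if it exceeds $\gamma$ we output the violated constraint $\psi^*$; total oracle cost $O(k\,nn')$. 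To get rational data I would pass to a ``rational modification'' replacing each transcendental coefficient $\exp(\pm\eta V^{G,i,j}_{t-1}),\exp(\pm\eta M^{G,i,j}_{t-1})$ by a dyadic rational within $2^{-\mathrm{poly}}$ of it (polynomial-time computable), so the perturbed LP has polynomial-bit-complexity rational coefficients and its exact solution is still an $\epsilon$-approximate solution of the original; Theorem~\ref{thm:ellipsoid} then gives per-round runtime polynomial in the number of variables, the bit complexities, and the $O(k\,nn')$ oracle cost --- i.e. polynomial in $n,n',T,k,\log(1/\epsilon)$, linear in $|\cG|$, and independent of $r$. (The runtime clause's reference to the interval algorithm is a typo for Algorithm~\ref{alg:momentalg}.)

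The conceptual work is already done by Lemmas~\ref{lem:reducingstrats-moment} and~\ref{lem:exists2}, so the main obstacle is bookkeeping, concentrated in two spots. First, verifying that the rational perturbation of the transcendental coefficients simultaneously preserves the $\epsilon$-approximation guarantee and keeps every bit complexity polynomial, as Theorem~\ref{thm:ellipsoid} requires. Second, tracking precisely how the per-round slack from the $\epsilon$-approximate solve propagates through Theorem~\ref{thm:general-bounds} so that it appears exactly as ``$+\epsilon$'' inside the logarithm in the in-expectation bound and ``$+2\epsilon$'' under the square root in the high-probability bound, together with re-checking the parameter regime in which $\eta\in(0,1/2)$ and $c\le 1$ so that Theorem~\ref{thm:general-bounds} remains applicable.
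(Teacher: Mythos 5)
There is a fundamental mismatch here: the statement you were asked to prove is Theorem~\ref{thm:ellipsoid} itself, i.e.\ the classical result (Corollary~14.1a in \cite{schrijver98}) that the Ellipsoid algorithm exactly solves a linear objective over a rational polyhedron, given a separation oracle, in time polynomial in the dimension $q$, the facet bit complexity $\phi$, the objective bit complexity $c$, and the oracle runtime. The paper does not prove this statement---it imports it verbatim from Schrijver and uses it as a black box inside the proof of Theorem~\ref{thm:momentalg-efficient}. Your proposal does the same thing: in part (iii) you write ``For large $k$ I would apply Theorem~\ref{thm:ellipsoid}'', and the rest of your argument is a (reasonable) sketch of Theorem~\ref{thm:momentalg-efficient}, namely that Algorithm~\ref{alg:momentalg} implements Algorithm~\ref{alg:meanexistential2}, that the $\epsilon$-approximate LP solve propagates through Theorem~\ref{thm:general-bounds}, and that the per-round cost is polynomial. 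As a proof of the stated theorem this is circular: the statement to be established appears as an assumption, and nothing in the proposal addresses why the Ellipsoid method has the claimed guarantee.

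An actual proof of the statement would require the ellipsoid machinery that neither you nor the paper reproduces: the central-cut iteration with per-step volume shrinkage of roughly $e^{-1/(2(q+1))}$; the fact that a rational polyhedron with facet complexity $\phi$ has vertex complexity bounded by $\mathrm{poly}(q,\phi)$, so that (after restricting to a large box and, if needed, perturbing to ensure full-dimensionality) the feasible region is sandwiched between balls of radii $2^{\pm\mathrm{poly}(q,\phi)}$, which bounds the number of oracle calls; optimization via a sliding objective or binary search over objective values of bounded bit size $\mathrm{poly}(q,\phi,c)$; and finally rounding the approximate iterate to an \emph{exact} optimal rational solution, e.g.\ by continued-fraction/simultaneous-Diophantine-approximation arguments exploiting the vertex-complexity bound. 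If the intent was instead to prove Theorem~\ref{thm:momentalg-efficient}, your outline matches the paper's route closely (reduction to $\learnerstrat$ via Lemma~\ref{lem:reducingstrats-moment}, vertex constraints for $\psi$, rational truncation of coefficients, the $O(k)$-per-coordinate greedy best response as separation oracle, and error propagation as in Lemma~\ref{lem:momentepsmultivalidity}), but that is a different statement from the one in question.
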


Linear Program~\ref{lp:moments} has finitely many constraints so its feasible region is a polyhedron. However, exponential terms in the coefficients of the constraints associated with the adversarial best-responses (which are due to our definition of the soft-max surrogate loss) prevent it from being \emph{rational}. To fix this, we only keep $O(\log \frac{1}{\epsilon})$ bits of precision after the integer part of every coefficient of  LP \ref{lp:moments}, resulting in a new LP whose coefficients are all rational and within $\pm \frac{\epsilon}{2}$ from their original values in LP~\ref{lp:moments}. The new LP indeed has a rational polyhedron as its feasible region.
We now pause to see that solving the rational LP achieves value within $\epsilon$ of the desired optimum of LP~\ref{lp:moments}. This is shown more generally in the following technical lemma, which we will reuse in Section~\ref{sec:intervalalg}; its proof is deferred to the Appendix.

\begin{restatable}{lemma}{lpapprox}
\label{lem:lpapprox}
Consider a linear program of the following form, with variables $x \in \mathbb{R}^m$, $\gamma \in \mathbb{R}$ for some $m$:
\[
    \text{Minimize } \gamma,
    \quad \text{subject to: } \quad Ax \leq \gamma \textbf{1}^m, \,
    x \cdot \textbf{1}^m = 1, \,
    x \geq 0.
\]
Here, $\textbf{1}^m \in \mathbb{R}^m$ is the all-ones vector, and $A = (a_{ji})$ is a finite matrix with real entries. 

Take any $\epsilon > 0$. Modify the above linear program by replacing matrix $A$ with matrix $\tilde{A} = (\tilde{a}_{ji})$, where each $\tilde{a}_{ji}$ is a rational number within $\pm \frac{\epsilon}{2}$ from $a_{ji}$, obtained by truncating $a_{ji}$ to $O(\log \frac{1}{\epsilon})$ bits of precision. Then, any optimal solution $(x^{*, r}, \gamma^{*, r})$ of the resulting rational linear program is an $\epsilon$-approximately optimal feasible solution of the original linear program.
\end{restatable}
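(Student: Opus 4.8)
Write $(x^*,\gamma^*)$ for an optimal solution of the original linear program and $(x^{*,r},\gamma^{*,r})$ for an optimal solution of the rational one. The key observation I would use up front is that both programs share the two constraints $x\cdot\mathbf{1}^m=1$ and $x\ge 0$, which are untouched by the modification, so every feasible $x$ for either program is a probability vector. Consequently, an entrywise perturbation of the constraint matrix of size at most $\epsilon/2$ moves each left-hand side by a controlled amount: $\big|\sum_i (a_{ji}-\tilde a_{ji})\,x_i\big|\le \tfrac{\epsilon}{2}\sum_i x_i=\tfrac{\epsilon}{2}$ whenever $x$ is a probability vector. In other words, replacing $A$ by $\tilde A$ is equivalent, on the feasible set, to perturbing the right-hand side of each inequality constraint by at most $\epsilon/2$. (The prescribed truncation keeping $\lceil\log_2(2/\epsilon)\rceil$ bits past the integer part is exactly what guarantees $|a_{ji}-\tilde a_{ji}|\le\epsilon/2$.) The whole argument is then bookkeeping of this $\pm\epsilon/2$ slack in two directions.

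First I would bound $\gamma^{*,r}$ from above by plugging $x^*$ into the rational program. For each row $j$,
\[
\sum_i \tilde a_{ji}\,x^*_i \;\le\; \sum_i\Big(a_{ji}+\tfrac{\epsilon}{2}\Big)x^*_i \;=\; \sum_i a_{ji}\,x^*_i+\tfrac{\epsilon}{2}\;\le\;\gamma^*+\tfrac{\epsilon}{2},
\]
using $x^*\ge 0$, $\sum_i x^*_i=1$, and $(Ax^*)_j\le\gamma^*$. Hence $(x^*,\gamma^*+\tfrac{\epsilon}{2})$ is feasible for the rational program, and therefore $\gamma^{*,r}\le\gamma^*+\tfrac{\epsilon}{2}$.

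Second I would run the same computation in reverse on $x^{*,r}$: since $\tilde A x^{*,r}\le\gamma^{*,r}\mathbf{1}^m$, for each $j$,
\[
\sum_i a_{ji}\,x^{*,r}_i\;\le\;\sum_i\Big(\tilde a_{ji}+\tfrac{\epsilon}{2}\Big)x^{*,r}_i\;=\;\sum_i \tilde a_{ji}\,x^{*,r}_i+\tfrac{\epsilon}{2}\;\le\;\gamma^{*,r}+\tfrac{\epsilon}{2}\;\le\;\gamma^*+\epsilon.
\]
Because $x^{*,r}$ is itself a probability vector, it is feasible for the original program, and its objective value there — the least $\gamma$ with $Ax^{*,r}\le\gamma\mathbf{1}^m$, namely $\max_j (Ax^{*,r})_j$ — is at most $\gamma^*+\epsilon$. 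This is exactly the assertion that $(x^{*,r},\gamma^{*,r})$ is an $\epsilon$-approximately optimal feasible solution of the original linear program, and it finishes the proof.

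I do not expect a genuine technical obstacle here; the one point that needs care is the precise meaning of ``$\epsilon$-approximately optimal feasible solution,'' since $\gamma^{*,r}$ need not satisfy $Ax^{*,r}\le\gamma^{*,r}\mathbf{1}^m$ exactly (it can be violated by up to $\epsilon/2$). The correct reading, made precise in the last step, is that $x^{*,r}$ re-evaluated under the \emph{true} matrix $A$ attains objective value within $\epsilon$ of the true optimum. The structural fact that makes the argument work is precisely that the $x$-component of the feasible region is the probability simplex, hence $\ell_1$-bounded; without the normalization $x\cdot\mathbf{1}^m=1$ an entrywise matrix perturbation could not be absorbed into a bounded additive slack on the constraint right-hand sides, and the conclusion would fail.
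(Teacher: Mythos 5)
Your proof is correct and takes essentially the same approach as the paper's: plug $x^*$ into the rational LP to get $\gamma^{*,r}\le\gamma^*+\tfrac{\epsilon}{2}$, then evaluate $x^{*,r}$ under the true matrix $A$ to conclude $\max_j (Ax^{*,r})_j\le\gamma^{*,r}+\tfrac{\epsilon}{2}\le\gamma^*+\epsilon$, with the simplex constraint absorbing the entrywise $\pm\tfrac{\epsilon}{2}$ perturbation. Your closing remark about the precise meaning of ``$\epsilon$-approximately optimal feasible solution'' is a fair point the paper leaves implicit, but it does not change the substance of the argument.
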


Linear Program~\ref{lp:moments} is of the type given in Lemma~\ref{lem:lpapprox}, so we  have that solving the rational LP gives the desired $\epsilon$-approximation to the optimum of Linear Program~\ref{lp:moments}. Now we verify that all linear constraints of the rational version of LP~\ref{lp:moments} have polynomial bit complexity. Recall that the left side of any constraint bounding the objective function can be written as:
\begin{align*}
u(Q^L, \psi) =\!\! \underbrace{\sum_{(\bmu, \bmk)} Q^L(\bmu, \bmk) \left(\!\! -\bmu C^{\bmu,\bmk}_{t-1} \!+\! \centerbucketmu^k D^{\bmu,\bmk}_{t-1} \!\!-\! \bmk D^{\bmu,\bmk}_{t-1} \!\right)}_{(*)}
+ \underbrace{\sum_{\ell=1}^{k} \psi_\ell \sum_{\substack{i \in [n],\\ j \in [n']}}F_\ell^{i,j}\left( \sum_{(\bmu, \bmk) \in B(i,j)} Q^L(\bmu,\bmk) \right)}_{(**)}.
\end{align*} 
There are $4nn' +1$ variables. 
We can bound the coefficient in which any $Q^L(\bmu, \bmk)$ appears in (*) by:
\begin{align*}
\max_{\bmu, \bmk} \sum_G \exp(\eta V_{t-1}^{G, i, j}) \!-\! \exp(-\eta V_{t-1}^{G, i, j}) \!+\! 2\left(\!\exp(\eta M_{t-1}^{G, i, j}) \!-\! \exp(-\eta M_{t-1}^{G, i, j})\!\right)
\leq |\cG| ( 6\exp(\eta 2T))
\leq 6|\cG| \exp(2T).
\end{align*}
The coefficient of any variable $Q^L(\bmu, \bmk)$ in (**) is at most:
\begin{align*}
\sum_{\ell=1}^k \psi_{\ell} \sum_{\substack{i \in [n],\\ j \in [n']}}F_\ell^{i,j} \leq k \cdot (n n') \cdot \max_{i, j} \left\{ 2^k \left( \sum_G 2\exp(\eta M_T^{G, i, j}) \right) \right\}\leq 2^{k+1}k|\cG|n n' \cdot \exp(2T).
\end{align*} 
Recalling that we are also keeping $O(\log \frac{1}{\epsilon})$ bits of precision for each coefficient, it follows that the maximum bit complexity of any constraint is bounded by \[O\left(2 \cdot 4nn' \cdot \left(\log \left(2^{k+1}k|\cG|n n' \cdot \exp(2T) \right) + \log \frac{1}{\epsilon} \right)\right) = \mathrm{poly}\left(n, n', |\cG|, T, k, \log \frac{1}{\epsilon}\right).\]
Of course, the objective value, which is simply $\gamma$, also has polynomial bit complexity.

Next, we describe an efficient separation oracle for the LP. Consider a candidate solution $(Q^L, \gamma)$. 
The constraint requiring that $Q^L$ be a probability distribution can be checked explicitly. Thus, it remains to either find a violated constraint corresponding to some pure strategy $\psi \in \{0,1\}^k$ of the adversary, or to assert that none exists. But this reduces to the problem of finding the most violated such constraint, which corresponds to the adversary's pure best response problem. Note that only the (**) term of the objective function (see the formula above) depends on the adversary's action. Thus, the best response problem of the adversary corresponds to finding
\[\psi^* = \arg\max_{\psi \in \{0,1\}^k}\sum_{\ell=1}^{k} \psi_\ell \sum_{i \in [n],\\ j \in [n']}F_\ell^{i,j} \sum_{(\bmu, \bmk) \in B(i,j)} Q^L(\bmu,\bmk).\]

The best response for the adversary given a fixed distribution $Q^L$ can be computed by setting each coordinate $\ell \in [k]$ independently to be either $0$ or $1$: namely, $\psi_\ell = 1$ if $\sum_{\substack{i \in [n],\\ j \in [n']}}F_\ell^{i,j}\left( \sum_{(\bmu, \bmk) \in B(i,j)} Q^L(\bmu,\bmk) \right) \geq 0$ and $\psi_\ell = 0$ otherwise. This takes $O(k)$ iterations, at each of which the expression whose sign determines $\psi_\ell$ is computed in polynomial time. Once the adversary's best response has been computed, the oracle simply outputs the corresponding constraint if it is violated, and otherwise it asserts that the proposed solutions is feasible. Thus, we have a polynomial-time separation oracle for Linear Program~\ref{lp:moments}.

This completes the proof that Linear Program~\ref{lp:moments} can be solved, at each round, to precision $\epsilon > 0$ in time polynomial in $n, n', \log |\mathcal{G}|, T, k, \log \frac{1}{\epsilon}$. The runtime of Algorithm~\ref{alg:momentalg} is therefore also $\mathrm{poly}(n, n', |\cG|, T, k, \log \frac{1}{\epsilon})$, where the dependence on $|\cG|$ is  $O(|\cG|)$ --- since at the beginning of each round $t$, we precompute the coefficients of the linear program in time linear in $|\cG|$, and the Ellipsoid runs in time polynomial in $\log |\cG|$.

Finally, we need to demonstrate that the claimed multivalidity guarantees (which are a function of the chosen $\epsilon > 0$) indeed hold. If we were exactly solving the linear program, this would be immediate  from Lemma~\ref{lem:reducingstrats-moment} and the fact that Linear Program~\ref{lp:moments} is directly solving for:
$$\argmin_{Q^L \in \learnerstrat} \max_{\psi \in \{0,1\}^k} u(Q^L, \psi).$$ We only need to verify that our approximate guarantees follow from approximately solving the linear program. 
\begin{restatable}{lemma}{momentepsmultivalidity}\label{lem:momentepsmultivalidity}
Algorithm~\ref{alg:momentalg} achieves the multivalidity guarantees specified in Theorem  \ref{thm:momentalg-efficient}. 
\end{restatable}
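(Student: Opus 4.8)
The plan is to show that the only difference between Algorithm~\ref{alg:momentalg} and the existential Algorithm~\ref{alg:meanexistential2} is that at round $t$ the former plays an \emph{$\epsilon$-approximate} minimax strategy of the \emph{reduced} game (objective \eqref{eqn:objfn-moment}, learner strategy set $\learnerstrat$) rather than an exact minimax strategy of the full game of Lemma~\ref{lem:exists2}, and then to push this single additional source of slack through the analysis of Section~\ref{sec:momentexistential}. By Lemma~\ref{lem:reducingstrats-moment} the reduced game has the same value as the full game, so an exact solver for Linear Program~\ref{lp:moments} would reproduce the existential guarantees exactly; by Lemma~\ref{lem:lpapprox} the solution $Q_t^L$ that Algorithm~\ref{alg:momentalg} actually computes (via the rational modification of Linear Program~\ref{lp:moments}) is a genuine probability distribution over $\learnerstrat$ whose value $\max_{\psi \in \{0,1\}^k} u(Q_t^L,\psi)$ (with $u$ the rescaled objective of \eqref{eqn:objfn-moment}) exceeds the optimum of Linear Program~\ref{lp:moments} by at most $\epsilon$. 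Combining this with Lemma~\ref{lem:exists2} and the affine rescaling used to pass from $\Delta_{s+1}$ to \eqref{eqn:objfn-moment} bounds this value by $L_{t-1}\bigl(\tfrac{1}{rn}+\tfrac{1}{rn'}\bigr) + \epsilon$.

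Next I would reinstate the factor $\eta$ and the additive $2\eta^2 L_{t-1}$ that were stripped off in passing to \eqref{eqn:objfn-moment}, and invoke Lemma~\ref{lem:boundincrease2} (which bounds $\Delta_t$ by the un-rescaled objective, evaluated at the adversary's induced first $k$ moments $\psi$, whose feasible values form a subset of $[0,1]^k$). This yields, against any adversary, the one-step inequality
\[
  \E_{Q_t^L}\!\left[\tL_t \mid \pi_{t-1}\right] \;=\; L_{t-1} + \E_{Q_t^L}\!\left[\Delta_t(\pi_{t-1},x_t,\bmu,\bmk)\right] \;\le\; L_{t-1}\!\left(1 + \tfrac{\eta}{rn} + \tfrac{\eta}{rn'} + 2\eta^2\right) + \eta\epsilon ,
\]
i.e.\ the conclusion of Corollary~\ref{cor:exists-moment} perturbed by the additive term $\eta\epsilon$. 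To invoke Theorem~\ref{thm:general-bounds}, whose hypothesis demands the clean multiplicative form $\E[\tilde X_t\mid\pi_{t-1}]\le X_{t-1}(1+\eta c+2\eta^2)$, I would absorb this slack into the parameters of the template --- using $L_{t-1}\ge L_0 = 4|\cG|nn'\ge 1$ to fold $\eta\epsilon$ into either the constant $c$ or (equivalently, after mildly inflating $\eta$) into the $\eta^2$ term --- which is exactly where the $\epsilon$ inside the logarithm / under the square root of the stated bounds comes from, with the chosen $\eta = \sqrt{\ln(4|\cG|nn'+\epsilon)/2T}$ (resp.\ $\eta = \sqrt{(\ln(4|\cG|nn') + \epsilon T)/2T}$) being the value that makes the accounting close. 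The bounded-increment hypothesis required by the high-probability half of Theorem~\ref{thm:general-bounds} holds verbatim as in the proof of Theorem~\ref{thm:hpcalibration2}: adding a round changes each $V_t^{G,i,j}$ and each $M_t^{G,i,j}$ by at most $1$ in absolute value, so $\tL_t/\tL_{t-1}\in[e^{-\eta},e^{\eta}]$ regardless of which near-optimal $Q_t^L$ is used.

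With the perturbed recursion in place, the remainder is a line-for-line repeat of the proofs of Theorems~\ref{thm:momentmulti} and \ref{thm:hpcalibration2}: apply Theorem~\ref{thm:general-bounds} to $\tL$; use $\tL_T\ge\exp(\eta\max_{G,i,j}|\tV_T^{G,i,j}|)$ and $\tL_T\ge\exp(\eta\max_{G,i,j}|\tM_T^{G,i,j}|)$, together with Jensen's inequality, to extract a bound on $\tfrac1T\E[\max_{G,i,j}\max(|\tV_T^{G,i,j}|,|\tM_T^{G,i,j}|)]$ (respectively its high-probability analogue); substitute the stated $\eta$; and convert to mean and moment consistency via Observation~\ref{obs:momentcalibration} and Lemma~\ref{lem:nonlinear}, obtaining $(\alpha,\beta,n,n')$-mean-conditioned moment multicalibration with $\beta=(k+1)\alpha+\tfrac{k}{2n}$. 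Choosing $r$ as in the theorem statement then demotes $\tfrac1{rn}+\tfrac1{rn'}$ to a low-order term, completing the proof. I expect the one genuinely fiddly step to be the middle paragraph --- precisely tracking the factor of $\eta$ that the LP error picks up under the rescaling and then showing that, after telescoping over $T$ rounds, it is absorbable into exactly the $\epsilon$-dependent form asserted in Theorem~\ref{thm:momentalg-efficient}, rather than leaving a stray additive $\epsilon$; everything else is mechanical.
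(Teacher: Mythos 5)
Your setup is right and matches the paper through the one–step inequality: Lemma~\ref{lem:reducingstrats-moment} reduces the strategy space, Lemma~\ref{lem:lpapprox} gives an $\epsilon$-approximate equilibrium of the rescaled game, and multiplying back by $\eta$ and adding $2\eta^2 L_{t-1}$ gives exactly
\[
  \E_{Q^L_{t}}[\tL_t\mid\pi_{t-1}] \le L_{t-1}\left(1+\tfrac{\eta}{rn}+\tfrac{\eta}{rn'}+2\eta^2\right) + \eta\epsilon,
\]
which is also what the paper derives. Your bounded-increment observation for the martingale is likewise correct.

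The gap is in your middle paragraph. Folding $\eta\epsilon$ into $c$ (via $L_{t-1}\ge 1$, so $c'=c+\epsilon$) and then invoking Theorem~\ref{thm:general-bounds} unchanged yields $\E[\tL_T]\le L_0\exp(T\eta(c+\epsilon)+2T\eta^2)$, which after taking logs and dividing by $\eta T$ produces an \emph{additive} $\epsilon$ sitting next to $\tfrac1{rn}+\tfrac1{rn'}$, \emph{outside} the square root. That is a valid bound, but it is not the bound stated in Theorem~\ref{thm:momentalg-efficient}, which has $\epsilon$ \emph{inside} the logarithm. You assert that folding into $c$ (or ``mildly inflating $\eta$'') ``is exactly where the $\epsilon$ inside the logarithm comes from,'' but it is not: no choice of $\eta$ turns a multiplicatively absorbed slack into a logarithmic one. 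What the paper actually does is \emph{not} invoke Theorem~\ref{thm:general-bounds} verbatim; it keeps the additive $+\epsilon$ term in the recursion (using $\eta<1$ so that $\eta\epsilon\le\epsilon$) and re-runs the telescoping by hand (in the style of Theorem~\ref{thm:general-bounds-group}), which yields the prefactor $(4|\cG|nn'+\epsilon T)$ in front of the exponential rather than $4|\cG|nn'$. It then reparametrizes $\epsilon\leftarrow\epsilon'/T$, so the $\epsilon T$ becomes the constant $\epsilon'$ inside the log, and chooses $\eta=\sqrt{\ln(4|\cG|nn'+\epsilon')/(2T)}$ accordingly. For the high-probability half, the analogous modification is to the bound on $\sum_t\bigl(\E[\ln\tL_t\mid\pi_{t-1}]-\ln L_{t-1}\bigr)$: using $\ln(x+y)\le\ln x+y/x$ and $L_{t-1}\ge 1$ gives an extra $+\epsilon$ per round, hence $T(\eta c+2\eta^2+\epsilon)$ in total, and then $\eta=\sqrt{(\ln(4|\cG|nn')+\epsilon T)/(2T)}$ balances the terms. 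You flagged this middle step as the fiddly one, which is the right instinct, but the mechanism you propose would overshoot the target bound rather than hit it.
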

The proof of this lemma involves repeating several calculations from Section \ref{sec:momentexistential} with an $\epsilon$ error term, and so is deferred to the Appendix.
\end{proof}

\section{Online Multivalid Marginal Coverage}
\label{sec:onlinemultivalid}

\subsection{An Outline of Our Approach}
\label{sec:interval-outline}
In this section, we derive an online algorithm for supplying prediction intervals with a coverage target $1-\delta$ that are multivalid with respect to some collection of groups $\cG$. When $\cG = \{\cX\}$, this corresponds to giving simple marginal prediction intervals --- a similar problem as solved by conformal prediction\footnote{In fact, even with $\cG = \{\cX\}$ the guarantees are  stronger than the marginal guarantees promised by conformal prediction techniques, because they remain valid even conditioning on the prediction. This is important and rules out trivial solutions, like predicting the full  interval with probability $1-\delta$ and an empty interval with probability $\delta$.}, but without requiring distributional assumptions. For richer classes $\cG$, we obtain correspondingly stronger guarantees. We follow the same basic strategy that we developed in Section~\ref{sec:onlinemeanmulti} for making multicalibrated mean predictions, with a couple of important deviations.
\begin{enumerate}
\item First, we observe that even in the distributional setting, it is \emph{not} always possible to provide prediction intervals that have coverage probability exactly $1-\delta$. Consider, for example, the case in which the label distribution is a point mass. Then, any prediction interval will have coverage probability either $0$ or $1$ --- in both cases, bounded away from the target $1-\delta$. More generally, if we are giving prediction intervals with endpoints in some discrete set $\{0,1/rn,\ldots,1\}$, in order for there to exist prediction intervals with approximately the desired coverage probability in the distributional setting, the distribution must not be overly concentrated on any sub-interval of width $1/rn$. We define a sufficient smoothness condition (Definition~\ref{def:smoothness}) for appropriately tight prediction intervals to be guaranteed to exist in the distributional setting --- a condition that becomes increasingly mild as we take our discretization parameter $r$ to be larger.  We then derive  --- existentially, using the minimax theorem --- the existence of an online algorithm that gives prediction intervals that are multivalid at the desired coverage probability when played against an adversary who is constrained at every round to play smooth label distributions. We observe (Remark~\ref{rem:perturb}) that our smoothness condition is very mild, in the sense that we can \emph{enforce it ourselves} by adding noise $U[-\epsilon,\epsilon]$ to the adversary's labels, rather than making assumptions about the adversary. When we do this, the intervals we obtain continue to have valid coverage if we widen both endpoints by $\epsilon$. 
\item To instantiate our algorithm, we again need to compute equilibrium strategies for an appropriately defined game for our learner to sample from. Unlike in the cases of mean and moment multicalibration, however, the equilibrium strategies in this case do not appear to have any nice  structure. We can still derive an efficient algorithm, however, by solving a linear program at each round to compute an equilibrium of the corresponding game. Because we assume that our adversary plays label distributions that are appropriately smooth, the adversary has exponentially many pure strategies in this game, and so we cannot efficiently enumerate all of the constraints in our equilibrium computation program. Instead, we show that a simple greedy algorithm is able to implement a separation oracle, which allows us to solve the linear program efficiently using the Ellipsoid algorithm. 
\end{enumerate}

\subsection{An Existential Derivation of the Algorithm and Multicoverage Bounds}
\label{sec:intervalexistential}
Our goal in this section is to derive an algorithm which at each round, makes predictions $(\bell_t,\bu_t) \in \cPinterval$ that are multivalid with respect to some target coverage probability $1-\delta$.

Towards this end, we define the coverage error of a group $G$ and interval $(\ell, u)$:

\begin{definition}
\label{def:coverageerror}
Given a transcript $\pi_s = (x_t,(\bell_t,\bu_t),y_t)_{t=1}^s$,  we define the coverage  error for a group $G \in \cG$ and bucket $(i,j) \in [n]\times [n]$ at time $s$ to be:
\begin{align*}
&V_s^{G,(i,j)} = \sum_{t=1}^s \ind[x_t \in G, (\bell_t,\bu_t) \in \Bm(i,j)] \cdot  v_\delta((\bell_t, \bu_t), y_t),\\
\text{where }&v_\delta((\ell,u), y) = \mathrm{Cover}((\ell, u), y) - (1-\delta).
\end{align*}
\end{definition}
Just as before, our coverage error serves as a bound on our multicoverage error. 

\begin{obs} 
\label{obs:multicoverage}
Fix a transcript $\pi_T$. 
If for all $G \in \cG$, and buckets $(i,j) \in [n]\times [n]$, we have that:
$$\left|V_T^{G,(i,j)}\right| \leq \alpha T$$
then the corresponding sequence of prediction intervals are $(\alpha,n)$-multivalid with respect to $\cG$. 
\end{obs}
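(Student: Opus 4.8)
The plan is to unwind the definitions exactly as in Observation~\ref{obs:meancalibration} for the mean case: the quantity $V_T^{G,(i,j)}$ is just an unnormalized version of the coverage-consistency gap on the subsequence $G_T(i,j)$, so the claim is pure bookkeeping. First I would fix an arbitrary group $G \in \cG$ and a bucket pair $i \le j \in [n]$, and note that by Definition~\ref{def:coverageerror} the indicator $\ind[x_t \in G,\,(\bell_t,\bu_t) \in \Bm(i,j)]$ is precisely the indicator of $t \in G_T(i,j)$, so that
\[
V_T^{G,(i,j)} = \sum_{t \in G_T(i,j)} v_\delta((\bell_t,\bu_t),y_t) = \sum_{t \in G_T(i,j)} \bigl(\mathrm{Cover}((\bell_t,\bu_t),y_t) - (1-\delta)\bigr).
\]

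Next I would pull out the cardinality $|G_T(i,j)|$: by the definition of $\bH$, the right-hand side equals $|G_T(i,j)|\bigl(\bH(G_T(i,j)) - (1-\delta)\bigr)$, and hence $|V_T^{G,(i,j)}| = |G_T(i,j)|\cdot|\bH(G_T(i,j)) - (1-\delta)|$. Plugging in the hypothesis $|V_T^{G,(i,j)}| \le \alpha T$ and dividing by $|G_T(i,j)|$ (whenever it is positive) yields
\[
|\bH(G_T(i,j)) - (1-\delta)| \le \alpha\,\frac{T}{|G_T(i,j)|},
\]
which is exactly $\alpha$-consistency on $G_T(i,j)$ with respect to coverage probability $1-\delta$. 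Since $G$ and $i \le j$ were arbitrary, Definition~\ref{def:multivalid} is satisfied, so the predictions are $(\alpha,n)$-multivalid with respect to $\cG$.

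There is essentially no obstacle here; the only points worth a sentence are edge cases. If $|G_T(i,j)| = 0$ then $V_T^{G,(i,j)} = 0$ automatically and the $\alpha$-consistency inequality is vacuous (reading $T/|G_T(i,j)|$ as $+\infty$), so nothing needs checking; and the hypothesis ranges over all $(i,j) \in [n]\times[n]$ whereas multivalidity only requires $i \le j$, so it is if anything stronger than needed (buckets with $i>j$ are never populated since $\cPinterval$ contains only pairs with $\ell \le u$). The real content — actually forcing $|V_T^{G,(i,j)}|$ to be small — is deferred to the later lemmas, exactly as in Section~\ref{sec:onlinemeanmulti}.
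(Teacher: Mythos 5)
Your proof is correct and is the standard unwinding of definitions that the paper leaves implicit (the observation is stated without proof, mirroring Observation~\ref{obs:meancalibration} in the mean case). The chain $V_T^{G,(i,j)} = |G_T(i,j)|\bigl(\bH(G_T(i,j)) - (1-\delta)\bigr)$ followed by division is exactly what is intended, and your handling of the empty-bucket and $i>j$ edge cases is careful and correct.
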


We now pause to observe that even in the easier distributional setting where data are drawn from a fixed distribution: $(x,y) \sim \cD$ --- there may not be any interval $(\ell,u) \in \cPinterval$ that satisfies the desired target coverage value, i.e. that guarantees that $|\E_{(x,y) \sim \cD}[v_\delta((\ell,u),y]|$ is small. Consider for example a label distribution that places all its mass on a single value $y = i \in [0,1]$. Then any interval $(\ell,u)$ covers the label with probability $1$ or probability $0$, which for $\delta \not\in \{0,1\}$ is bounded away from our target coverage probability. Of course, if achieving the target coverage is impossible in the easier distributional setting, then it is also impossible in the more challenging online adversarial setting. With this in mind, we define a class of smooth distributions for which achieving (approximately) the target coverage is always possible for some interval $(\ell,u)$ defined over an appropriately finely discretized range:
$$
\cPinterval^{rn} = \left\{(i,j) \in \cPinterval : i,j \in  \cP^{rn} \right\},
$$
where as before, $\cP^{rn}$ is the uniform grid on $[0,1]$, $\{ 0, \frac{1}{rn}, \ldots, 1\}$. 
We show that we can similarly achieve (approximately) our target coverage goals in the online adversarial setting when the adversary is constrained to playing smooth distributions. 

\begin{definition}
\label{def:smoothness}
A label distribution $Q \in \Delta\cY$ is $(\rho,rn)$-\emph{smooth} if for any $0 \leq a \leq b \leq 1$ such that $|a-b| \leq \frac{1}{rn}$,
$$\Pr_{y \sim Q}[y \in [a,b]] \leq \rho.$$
We say that a joint distribution $\cD \in \Delta (\cX \times \cY)$ is $(\rho,rn)$-\emph{smooth} if for every $x \in \cX$, the marginal label distribution conditional on $x$,  $\cD\vert_{x}$, is $(\rho,rn)$-smooth. 
\end{definition}

\begin{obs}
\label{obs:smooth}
For any $\delta \in [0,1]$ and any fixed ($\rho,rn$)-smooth label distribution $Q$, there always exists some interval $(\bell,\bu) \in \cPinterval^{rn}$  such that $\vert \Pr_{y \sim Q}[\mathrm{Cover}((\ell, u), y)] - (1-\delta)| \le \rho$. 
\end{obs}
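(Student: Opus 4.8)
The plan is to prove the statement by a discrete intermediate-value argument applied to a one-parameter family of nested intervals anchored at the left endpoint $0$. Concretely, for each $j \in \{0, 1, \dots, rn\}$ let $I_j = (0,\, j/(rn))$; every such interval has both endpoints in the grid $\cP^{rn}$, so $I_j \in \cPinterval^{rn}$, and it therefore suffices to exhibit one $j$ with $\left|\Pr_{y \sim Q}[\mathrm{Cover}(I_j, y)] - (1-\delta)\right| \le \rho$.

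Define $g(j) = \Pr_{y \sim Q}[\mathrm{Cover}(I_j, y)]$. First I would record the two boundary values: $g(0) = 0$, since $I_0$ is the empty interval, and $g(rn) = 1$, since $I_{rn} = (0,1)$ has right endpoint $u = 1$, so by the coverage convention $\mathrm{Cover}(I_{rn}, y) = \ind(y \in [0,1]) = 1$ for all $y \in \cY$. Because the intervals $I_j$ are nested and increasing in $j$, the function $g$ is nondecreasing. The key step is to bound the increments: for $0 \le j \le rn-2$ we have $g(j+1) - g(j) = \Pr_{y \sim Q}[y \in [j/(rn),\, (j+1)/(rn))]$, and for the final step $g(rn) - g(rn-1) = \Pr_{y \sim Q}[y \in [(rn-1)/(rn),\, 1]]$; in either case this is the $Q$-mass of an interval of width $1/(rn)$ (the half-open interval being dominated by its closure), so $(\rho, rn)$-smoothness of $Q$ gives $g(j+1) - g(j) \le \rho$ for every $j$.

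It then remains to observe that a nondecreasing sequence running from $g(0) = 0$ to $g(rn) = 1$ with all increments at most $\rho$ must come within $\rho$ of any value in $[0,1]$, in particular of $1-\delta$: taking $j^* = \max\{\, j : g(j) \le 1-\delta \,\}$ (well defined since $g(0) = 0 \le 1-\delta$), either $j^* = rn$, in which case $g(rn) = 1 \le 1-\delta$ forces $\delta = 0$ and we have exact coverage, or $j^* < rn$ and $g(j^*) \le 1-\delta < g(j^*+1)$, so $0 \le (1-\delta) - g(j^*) < g(j^*+1) - g(j^*) \le \rho$. Setting $(\bell, \bu) = I_{j^*}$ completes the argument. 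I expect no serious obstacle here; the only point requiring care is the change in the coverage convention at $u = 1$, which is precisely what makes $g(rn) = 1$ exactly (rather than merely $\Pr_{y \sim Q}[y \in [0,1))$) and hence lets the discrete intermediate-value step close without a boundary gap.
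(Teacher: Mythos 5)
Your proof is correct. The paper states this observation without proof, treating it as routine; your discrete intermediate-value argument over the nested family $I_j = (0, j/(rn))$ is exactly the kind of argument it implicitly relies on. You correctly note that $(\rho,rn)$-smoothness bounds each increment $g(j+1)-g(j)$ (since the half-open width-$1/(rn)$ interval is dominated by its closure), you handle the $u=1$ convention so that $g(rn)=1$ exactly, and you dispatch the $\delta=0$ edge case where $j^*=rn$. No gaps.
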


\begin{remark}
The assumption of $(\rho,rn)$-smoothness  becomes more mild for any $\rho$ as $r \rightarrow \infty$.  Just as for mean and moment multicalibration, in which our error bounds inevitably depend on the level of discretization $r$ that we choose, here our error bounds will depend on the smoothness level $\rho$ of the adversary's distributions at the discretization level $r$ that we choose. Finally, observe that smoothness is an extremely mild condition in that we can enforce it ourselves if we so choose, rather than assuming that the adversary is constrained. We elaborate on this in Remark~\ref{rem:perturb}.
\end{remark}

\begin{definition}
We write $\mathcal{Q}_{\rho,rn}$ for the set of all $(\rho,rn)$ smooth distributions over $[0,1]$. We write $\hat{\mathcal{Q}}_{\rho,rn}$ for the set of all $(\rho,rn)$-smooth distributions whose support belongs to the grid $\cP^{rn} = \{0,\frac{1}{rn},\ldots,1\}$:
\[\hat{\mathcal{Q}}_{\rho,rn} \equiv  \Delta \cP^{rn}  \cap \mathcal{Q}_{\rho,rn}.\]
\end{definition}

We will show (in Lemma~\ref{lem:discreteok}) that when the learner is restricted to selecting intervals from $\cPinterval^{rn}$, without loss of generality, rather than considering adversaries that play arbitrary distributions over $\mathcal{Q}_{\rho,rn}$, it suffices to consider adversaries that play discrete distributions from  $\hat{\mathcal{Q}}_{\rho,rn}$, which will be more convenient for us. 

To bound the maximum absolute value of our coverage errors across all groups and interval predictions, we again introduce the same style of surrogate loss function:

\begin{definition}[Surrogate loss]
Fixing a transcript $\pi_s \in \Pi^*$ and a parameter $\eta \in (0,1/2)$, define a surrogate coverage loss function at day $s$ as:
\[
L_s(\pi_s) = \sum_{\substack{G \in \cG,\\(i, j) \in [n]\times[n]}}\left(\exp(\eta V_s^{G,(i, j)}) + \exp(-\eta V_s^{G,(i, j)}) \right),
\]
where $V_s^{G,(i, j)}$ are implicitly functions of $\pi_s$. When the transcript is clear from context we will sometimes simply write $L_s$. 
\end{definition}
Once again, $0 < \eta < \frac{1}{2}$ is a parameter that we will set later. 

As before, we proceed by bounding the conditional change in the surrogate loss function:

\begin{definition}[Conditional Change in Surrogate Loss]
Fixing $\pi_s \in \Pi^*$, $x_{s+1} \in \cX$ and an interval $(\ell,u) \in \cPinterval^{rn}$, define the conditional change in surrogate loss to be:  
\[\Delta_{s+1}(\pi_s,x_{s+1},(\bell_{t+1},\bu_{t+1})) = \E_{\tilde y_{s+1}}[\tilde L_{s+1}-L_s | x_{s+1}, (\bell_{s+1},\bu_{s+1}), \pi_s].\]
\end{definition}

\begin{restatable}{lemma}{deltaboundinterval}
\label{lem:deltaboundinterval}
For every transcript $\pi_s \in \Pi^*$, every $x_{s+1} \in \cX$, and every $(\bell_{s+1},\bu_{s+1}) \in \Bm(i,j)$ we have that:
\begin{align*}
   &\Delta_{s+1}(\pi_s,x_{s+1},(\bell_{s+1},\bu_{s+1})) \leq  \left(\eta (\E_{\ty_{s+1}}[v_\delta((\bell_{s+1},\bu_{s+1}), \ty_{s+1})])\right)C^{i,j}_s(x_{s+1})  + 2\eta^2 L_s,\\
   \intertext{where for each $i \leq j \in [n]$, we have defined}
   &C^{i,j}_s(x_{s+1}) \equiv \sum_{\cG(x_{s+1})}\exp(\eta V_s^{G,(i,j)}) - \exp(-\eta V_s^{G,(i,j)}).
\end{align*}
When $x_{s+1}$ is clear from context, for notational economy, we will elide it and simply write $C_s^{i,j}$.
\end{restatable}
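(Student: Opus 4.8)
The plan is to follow the proof of Lemma~\ref{lem:boundincrease} (and its moment analogue, Lemma~\ref{lem:boundincrease2}) essentially line for line: the only structural change is that the per-round increment to the calibration-style quantity is now $v_\delta((\bell_{s+1},\bu_{s+1}),y_{s+1})$ rather than $y_{s+1}-\bmu_{s+1}$. First I would note that if the learner plays $(\bell_{s+1},\bu_{s+1}) \in \Bm(i,j)$, the only summands of $L_{s+1}$ differing from those of $L_s$ are the ones indexed by $(G,(i,j))$ for $G \in \cG(x_{s+1})$, and for each of these $V_{s+1}^{G,(i,j)} = V_s^{G,(i,j)} + v_\delta((\bell_{s+1},\bu_{s+1}),y_{s+1})$. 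Writing $\exp(\pm\eta V_{s+1}^{G,(i,j)}) = \exp(\pm\eta V_s^{G,(i,j)})\exp(\pm\eta v_\delta(\cdot,y_{s+1}))$ and subtracting expresses $\Delta_{s+1}(\pi_s,x_{s+1},(\bell_{s+1},\bu_{s+1}))$ as $\E_{\ty_{s+1}}$ of a finite sum over $G \in \cG(x_{s+1})$ of terms $\exp(\eta V_s^{G,(i,j)})(\exp(\eta v_\delta)-1) + \exp(-\eta V_s^{G,(i,j)})(\exp(-\eta v_\delta)-1)$.

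Next I would invoke the elementary bound $\exp(x)\le 1+x+2x^2$, valid for $|x|<\tfrac12$; this applies since $|v_\delta((\bell_{s+1},\bu_{s+1}),\ty_{s+1})| = |\mathrm{Cover}((\bell_{s+1},\bu_{s+1}),\ty_{s+1}) - (1-\delta)| \le \max(\delta,1-\delta) \le 1$ and $\eta\in(0,\tfrac12)$, so $|\eta v_\delta| < \tfrac12$. Substituting $\exp(\pm\eta v_\delta)-1 \le \pm\eta v_\delta + 2\eta^2 v_\delta^2$ and then using $v_\delta^2\le 1$ splits the bound into two pieces: a linear piece $\eta\,(\E_{\ty_{s+1}}[v_\delta((\bell_{s+1},\bu_{s+1}),\ty_{s+1})])\sum_{G\in\cG(x_{s+1})}\big(\exp(\eta V_s^{G,(i,j)})-\exp(-\eta V_s^{G,(i,j)})\big)$, which is exactly $\eta\,(\E_{\ty_{s+1}}[v_\delta((\bell_{s+1},\bu_{s+1}),\ty_{s+1})])\,C_s^{i,j}(x_{s+1})$ by the definition in the statement; and a quadratic piece $2\eta^2\sum_{G\in\cG(x_{s+1})}\big(\exp(\eta V_s^{G,(i,j)})+\exp(-\eta V_s^{G,(i,j)})\big)$, which is at most $2\eta^2 L_s$ since it is a subsum of the nonnegative terms defining $L_s$. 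Linearity of expectation (to move $\E_{\ty_{s+1}}$ through the finite sum before splitting) is the only place where order of operations matters, and passing to $\E_{\ty_{s+1}}[v_\delta(\cdot,\ty_{s+1})]$ only requires that the coverage indicator has a well-defined expectation. Combining the two pieces yields the claimed inequality.

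I do not expect a real obstacle here: this lemma is the prediction-interval counterpart of results already established for means and moments, and the argument is a short direct computation. The only points needing a sentence of justification are the uniform bound $|v_\delta|\le 1$ (immediate from $\mathrm{Cover}\in\{0,1\}$ and $1-\delta\in[0,1]$), which licenses the use of $\exp(x)\le 1+x+2x^2$, and the observation that the exponential sum over $G\in\cG(x_{s+1})$ restricted to the single affected bucket $(i,j)$ is dominated by the full surrogate loss $L_s$.
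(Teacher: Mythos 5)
Your proof is correct and follows essentially the same line-by-line computation as the paper's: expanding $\Delta_{s+1}$ as a sum over $G \in \cG(x_{s+1})$ of $\exp(\pm\eta V_s^{G,(i,j)})(\exp(\pm\eta v_\delta)-1)$, applying $\exp(x)\le 1+x+2x^2$ for $|x|<\tfrac12$, and bounding the resulting quadratic term by $2\eta^2 L_s$ since the group sum restricted to bucket $(i,j)$ is a nonnegative subsum of $L_s$. The only cosmetic difference is that you make explicit the intermediate step $v_\delta^2\le 1$ before dropping to $2\eta^2$, which the paper absorbs silently.
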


As in Section~\ref{sec:onlinemoment}, we defer proofs that mirror previous arguments to the Appendix.

Next, we abuse notation and write $V_s^{G, (\ell,u)}$ to denote $V_s^{G,(i,j)}$ for $i,j \in [n] \times [n]$ such that $(\ell,u) \in \Bm(i,j)$. Given $(\ell, u) \in \cPinterval$ such that $(\ell, u) \in B_n(i,j)$, we let $C^{\ell,u}_s \equiv C^{i,j}_s$, with the latter defined in the statement of Lemma~\ref{lem:deltaboundinterval}. That is, fixing $\pi_s$ and $x_{s+1}$, for any $(\ell, u) \in \cPinterval$ such that $(\ell, u) \in B_n(i,j)$,
\begin{equation}\label{eqn:def-C-interval}
    C^{\ell,u}_s(x_{s+1}) \equiv C^{i,j}_s(x_{s+1}) = \sum_{\cG(x_{s+1})}\exp(\eta V_s^{G,(i,j)}) - \exp(-\eta V_s^{G,(i,j)}),
\end{equation}
where in turn the $V$'s are as defined in Definition~\ref{def:coverageerror}.

\begin{lemma}[Value of the Game]
\label{lem:existsinterval}
For any $x_{s+1} \in \cX$, any adversary restricted to playing $(\rho,rn)$-smooth distributions, and any transcript $\pi_s \in \Pi^*$, there exists a distribution over predictions for the learner $Q^L_{s+1} \in \Delta \cPinterval^{rn}$ which guarantees that:
\[
    \E_{(\bell,\bu) \sim Q^L_{s+1}}\left[\Delta_{s+1}(\pi_s,x_{s+1},(\bell_{s+1},\bu_{s+1}))\right] \le L_s\cdotk \left(\eta \rho + 2\eta^2\right).
\]
\end{lemma}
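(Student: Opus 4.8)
The plan is to follow the same three-step template used in Lemmas~\ref{lem:exists} and~\ref{lem:exists2}: define a zero-sum game whose objective is the per-round surrogate-loss bound from Lemma~\ref{lem:deltaboundinterval}, bound the value of that game using the adversary's smoothness, and then invoke Sion's minimax theorem (Theorem~\ref{thm:minimax}) to extract a single fixed learner strategy.

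First I would set up the game. The minimization player (the learner) has finite pure-strategy set $\cPinterval^{rn}$, so $\mathcal{Q}^L = \Delta \cPinterval^{rn}$. For the maximization player (the adversary), observe that the right-hand side of Lemma~\ref{lem:deltaboundinterval} depends on the chosen label distribution $Q$ only through $\E_{y\sim Q}[v_\delta((\bell_{s+1},\bu_{s+1}), y)] = \Pr_{y\sim Q}[\mathrm{Cover}((\bell_{s+1},\bu_{s+1}), y)] - (1-\delta)$, which is \emph{linear} in $Q$. Since the learner is anyway restricted to grid intervals $\cPinterval^{rn}$, I would invoke Lemma~\ref{lem:discreteok} to replace the adversary's (infinite-dimensional) strategy space of all $(\rho,rn)$-smooth distributions by the set $\hat{\mathcal{Q}}_{\rho,rn}$ of grid-supported $(\rho,rn)$-smooth distributions, which is a compact, convex subset of $\Delta \cP^{rn}$ (being that simplex intersected with the linear smoothness constraints). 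Then I would take the game's objective to be the bound from Lemma~\ref{lem:deltaboundinterval},
\[
u\big((\bell,\bu), Q\big) = \eta\big(\Pr_{y\sim Q}[\mathrm{Cover}((\bell,\bu), y)] - (1-\delta)\big) C^{\bell,\bu}_s(x_{s+1}) + 2\eta^2 L_s,
\]
extended bilinearly to $\mathcal{Q}^L \times \hat{\mathcal{Q}}_{\rho,rn}$; it is linear in each argument and both strategy spaces are compact and convex, so Sion's minimax theorem applies.

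Next I would bound the value of the game by upper bounding $\max_{Q}\min_{(\bell,\bu)} u((\bell,\bu), Q)$. Fix any $(\rho,rn)$-smooth $Q$. By Observation~\ref{obs:smooth} there is an interval $(\bell,\bu) \in \cPinterval^{rn}$ with $|\Pr_{y\sim Q}[\mathrm{Cover}((\bell,\bu), y)] - (1-\delta)| \le \rho$; having the learner play it gives $u((\bell,\bu), Q) \le \eta\rho\,|C^{\bell,\bu}_s(x_{s+1})| + 2\eta^2 L_s$. Combining with the termwise estimate $|C^{i,j}_s(x_{s+1})| \le \sum_{G\in\cG(x_{s+1})}\big(\exp(\eta V_s^{G,(i,j)}) + \exp(-\eta V_s^{G,(i,j)})\big) \le L_s$ yields $\min_{(\bell,\bu)} u((\bell,\bu), Q) \le L_s(\eta\rho + 2\eta^2)$, and since $Q$ was an arbitrary smooth distribution, $\max_{Q}\min_{(\bell,\bu)} u \le L_s(\eta\rho + 2\eta^2)$. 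Sion's theorem then produces a learner distribution $Q^L_{s+1} \in \Delta \cPinterval^{rn}$ with $\max_{Q} u(Q^L_{s+1}, Q) \le L_s(\eta\rho + 2\eta^2)$; and since Lemma~\ref{lem:deltaboundinterval} gives $\Delta_{s+1}(\pi_s, x_{s+1}, (\bell,\bu)) \le u((\bell,\bu), Q)$ pointwise for the adversary's actual distribution $Q$, averaging over $(\bell,\bu)\sim Q^L_{s+1}$ gives $\E_{(\bell,\bu)\sim Q^L_{s+1}}[\Delta_{s+1}(\pi_s,x_{s+1},(\bell,\bu))] \le u(Q^L_{s+1}, Q) \le L_s(\eta\rho + 2\eta^2)$ for every $(\rho,rn)$-smooth adversary, which is the claim.

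The main obstacle is the measure-theoretic care required to apply Sion's theorem over the adversary's smooth-distribution strategy space: one needs this space to be convex and compact, and the objective concave/convex. This is exactly what the reduction to grid-supported smooth distributions (Lemma~\ref{lem:discreteok}) delivers, and that reduction is legitimate only because the objective depends on the label distribution linearly and the learner is already confined to grid intervals. Everything else — the bilinearity of $u$, the termwise bound on $C^{i,j}_s$, and the use of Observation~\ref{obs:smooth} to locate a near-target interval — is routine and mirrors the mean and moment cases.
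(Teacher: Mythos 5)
Your proof is correct and follows essentially the same route as the paper's: set up a zero-sum game whose objective is the Lemma~\ref{lem:deltaboundinterval} upper bound, invoke Lemma~\ref{lem:discreteok} to replace the adversary's strategy space by the compact convex set of grid-supported $(\rho,rn)$-smooth distributions, bound the max-min value using Observation~\ref{obs:smooth} together with $|C^{i,j}_s(x_{s+1})| \le L_s$, and conclude via Sion's minimax theorem. The closing step you spell out---transferring the equilibrium bound on the game's objective back to a bound on $\E_{(\bell,\bu)\sim Q^L_{s+1}}[\Delta_{s+1}]$ against an arbitrary smooth adversary---is a routine detail the paper leaves implicit.
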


\begin{proof}
We again proceed by defining a zero-sum game with objective function equal to the upper bound on $\Delta_{s+1}(\pi_s,x_{s+1},(\bell_{s+1},\bu_{s+1}))$ that we proved in Lemma~\ref{lem:deltaboundinterval}: 
\[
u((\ell,u), y) = \eta \cdot v_\delta((\ell, u), y) \cdot C^{\ell,u}_s  + 2\eta^2 L_s.
\]
Here, the strategy space for the learner (the minimization player) is the set of all distributions over $\cPinterval^{rn}$: $\mathcal{Q}^L = \Delta\cPinterval^{rn}$. A priori, the strategy space for the adversary is $\mathcal{Q}_{\rho,rn}$  the set of all $(\rho,rn)$-smooth distributions, but we show that it suffices to take $\mathcal{Q}^A = \hat{\mathcal{Q}}_{\rho,rn}$, the set of all discrete $(\rho,rn)$-smooth distributions (i.e. restricting the adversary in this way does not change the value of the game). 

\begin{lemma}
\label{lem:discreteok}
For any strategy $Q^L \in \Delta \cPinterval^{rn}$ for the learner, the adversary has a best response amongst the set of all $(\rho,rn)$-smooth distributions with support only over the discretization $\{0,1/{rn},\ldots,1\}$.  In other words, for any   $Q^L \in \Delta \cPinterval^{rn}$, there exists a $\hat{Q}^A \in \hat{\mathcal{Q}}_{\rho,rn}$ such that:
\[\hat{Q}^A \in \argmax_{Q^A \in  \mathcal{Q}_{\rho,rn}} \E_{\substack{(\ell,u) \sim Q^L, \\y \sim Q^A}}[u((\ell,u), y)].\]

\end{lemma}
\begin{proof}
Fix any $Q^{A'} \in \argmax_{Q^A \in  \mathcal{Q}_{\rho,rn}} \E_{(\ell,u) \sim Q^L, y \sim Q^A}[u((\ell,u), y)]$ --- i.e.  an arbitrary $(\rho,rn)$-smooth best response for the maximization player. We will construct a discrete $(\rho,rn)$-smooth $\hat{Q}^A \in \hat{\mathcal{Q}}_{\rho,rn}$ that obtains the same objective value, as follows. For each $\tfrac{i}{rn} \in \{0, 1/rn, \ldots, 1 \}$, let:
\[\Pr_{y \sim Q^A}\left[y = \tfrac{i}{rn}\right] = \Pr_{y \sim Q^{A'}}\left[y \in \Bigl[\frac{i}{rn}, \frac{i+1}{rn}\Bigr)\right].\]
Observe first by construction that $Q^A$ is a discrete probability distribution (because $Q^{A'}$ is a probability distribution over $[0,1]$, and the set of intervals $[\tfrac{i}{rn}, \tfrac{i+1}{rn})$ partition the unit interval), and that $Q^A$ is $(\rho,rn)$-smooth because $Q^{A'}$ is $(\rho,rn)$-smooth --- we have $\Pr_{y \sim Q^A} [y = \tfrac{i}{rn}] \leq \rho$ for all $i$. Finally observe that  (by definition) for any $(\ell,u) \in \cPinterval^{rn}$, $\ell,u \in \{0, 1/rn, \ldots, 1\}$. 

Therefore, we have that for any $(\ell, u) \in \cPinterval^{rn}$, any $i \in \{0, 1,\ldots, n\}$, and any $y, y' \in \Bigl[\tfrac{i}{rn}, \frac{i+1}{rn}\Bigr)$, $u((\ell,u), y) = u((\ell,u), y')$. To see this, note that $y \geq \ell$ if and only if $y' \geq \ell$, and $y < u$ if and only if $y' < u$. Since $v_\delta((\ell,u),y)$ is a function only of the indicators of the event that $\ell \leq y < u$, this proves the claim. 
\end{proof}

Recall (from Observation~\ref{obs:smooth}) that for any $(\rho,rn)$-smooth label distribution $Q^A$, there exists an interval $(\ell,u) \in \cPinterval^{rn}$ such that $| \Pr_{y \sim Q^A}[y \in [\ell,u)] - (1-\delta)|  \le \rho$, meaning there exists $(\bell, \bu)$ such that $\E_{\ty_{s+1}}[v_\delta((\bell,\bu), \ty_{s+1})] \le \rho$. We can thus bound the value of the game we have defined as follows:

\begin{eqnarray*}
\max_{Q^A \in \hat{\mathcal{Q}}_{\rho,rn}}\min_{(\ell,u) \in \cPinterval^{rn}} \E_{y \sim Q^A} [u(\ell,u),y] &\leq& 
\sum_{\cG(x_{s+1})}\exp(\eta V_s^{G,(\ell,u)})\left(\eta \rho \right) + \exp(-\eta V_s^{G,(\ell,u)})\left(\eta \rho \right) + 2\eta^2 L_s,\\
&\leq& L_s (\eta \rho + 2\eta^2 ).
\end{eqnarray*}
It is easy to verify that $\Delta \cPinterval^{rn}$ and $\hat{\mathcal{Q}}_{\rho,rn}$ are both compact sets (closed and bounded in a finite dimensional Euclidean space) and convex.  The lemma then follows by applying the minimax theorem (Theorem~\ref{thm:minimax}).
\end{proof}

\begin{corollary}
\label{cor:intervalexists}
For every $s \in [T]$, $\pi_s \in \Pi^*$, and $x_{s+1} \in \cX$ (which fixes $L_s$ and $Q^L_{s+1}$), and any distribution over $\cY$:

\[
    \E_{ (\ell,u) \sim Q^L_{s+1}}[\tL_{s+1}|\pi_s] \leq L_s +
     \E_{(\bell,\bu) \sim Q^L_{s+1}}\left[\Delta_{s+1}(\pi_s,x_{s+1},(\bell_{s+1},\bu_{s+1}))\right] < L_s\cdotk\left(1 + \eta\rho + 2\eta^2\right).
\]  
\end{corollary}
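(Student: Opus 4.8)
The plan is to derive this as an immediate consequence of Lemma~\ref{lem:existsinterval}, exactly mirroring how Corollary~\ref{cor:exists-mean} follows from Lemma~\ref{lem:exists} and Corollary~\ref{cor:exists-moment} follows from Lemma~\ref{lem:exists2}. First I would fix $s \in [T]$, a transcript $\pi_s \in \Pi^*$, a feature vector $x_{s+1} \in \cX$, and the adversary's (smooth) label distribution; note that fixing $\pi_s$ fixes $L_s$, and fixing $\pi_s$ together with $x_{s+1}$ fixes the learner's equilibrium distribution $Q^L_{s+1}$ guaranteed by Lemma~\ref{lem:existsinterval}. I would then simply unfold the definition of the conditional change in surrogate loss: for each fixed interval prediction $(\bell_{s+1},\bu_{s+1})$ drawn from $Q^L_{s+1}$,
\[
\E_{\ty_{s+1}}\left[\tL_{s+1} \mid x_{s+1}, (\bell_{s+1},\bu_{s+1}), \pi_s\right] = L_s + \Delta_{s+1}(\pi_s,x_{s+1},(\bell_{s+1},\bu_{s+1})),
\]
by Definition of $\Delta_{s+1}$. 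Taking expectation over $(\bell_{s+1},\bu_{s+1}) \sim Q^L_{s+1}$ and using linearity of expectation (and that $L_s$ is a constant once $\pi_s$ is fixed) gives
\[
\E_{(\bell,\bu) \sim Q^L_{s+1}}[\tL_{s+1} \mid \pi_s] = L_s + \E_{(\bell,\bu) \sim Q^L_{s+1}}\left[\Delta_{s+1}(\pi_s,x_{s+1},(\bell_{s+1},\bu_{s+1}))\right].
\]

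Finally I would plug in the bound from Lemma~\ref{lem:existsinterval}, namely $\E_{(\bell,\bu) \sim Q^L_{s+1}}[\Delta_{s+1}] \le L_s(\eta\rho + 2\eta^2)$, which holds regardless of the adversary's choice of smooth label distribution, to conclude
\[
\E_{(\bell,\bu) \sim Q^L_{s+1}}[\tL_{s+1} \mid \pi_s] \le L_s + L_s(\eta\rho + 2\eta^2) = L_s(1 + \eta\rho + 2\eta^2).
\]
The strict inequality in the corollary statement is harmless: since $L_s > 0$ always (it is a sum of exponentials) and $\eta\rho + 2\eta^2 > 0$, in fact one gets the weak inequality, and the corollary as stated with ``$<$'' follows provided $\eta > 0$; I would either write it with ``$\le$'' or note that strictness holds whenever $\eta, \rho > 0$. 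There is essentially no obstacle here — the only things to be careful about are (i) correctly tracking what is held fixed versus what is averaged over (the order of conditioning), and (ii) making sure the bound from Lemma~\ref{lem:existsinterval} is invoked with the adversary quantifier in the right place, i.e.\ the bound holds uniformly over all $(\rho,rn)$-smooth adversary distributions, which is exactly what the corollary needs.
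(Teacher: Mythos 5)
Your proposal is correct and follows exactly the approach the paper intends: the corollary is an immediate consequence of the definition of $\Delta_{s+1}$ together with Lemma~\ref{lem:existsinterval}, paralleling how Corollary~\ref{cor:exists-mean} follows from Lemma~\ref{lem:exists}. You are also right that the first relation in the displayed chain is in fact an equality (as written in Corollary~\ref{cor:exists-mean}) and that the final strict inequality ``$<$'' is a slight overstatement --- the lemma only gives ``$\leq$'' --- though this is immaterial to every downstream use of the bound.
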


As with mean multicalibration, Lemma~\ref{lem:existsinterval} defines (existentially) an algorithm that the learner can use to make predictions --- Algorithm~\ref{alg:intervalexistential}. We will now show that Algorithm~\ref{alg:intervalexistential} (if we could compute the distributions $Q^L_t$) results in multivalid prediction intervals. 

\begin{algorithm}[H]
\SetAlgoLined
\begin{algorithmic}
\FOR{$t=1, \dots, T$}
	\STATE Observe $x_t$. Given $\pi_{t-1}$ and $x_t$, let $Q^L_t \in \Delta \cPinterval^{rn}$ be the distribution over prediction intervals whose existence is established in Lemma~\ref{lem:existsinterval}.
	\STATE Sample $(\bell,\bu) \sim Q^L_t$ and predict $(\bell_t,\bu_t) = (\bell,\bu)$
\ENDFOR
\end{algorithmic}
\caption{A Generic Multivalid Predictor}
\label{alg:intervalexistential}
\end{algorithm}
\begin{lemma}
\label{lem:surrogatelossinterval}
Against any adversary who is constrained to playing $(\rho,rn)$-smooth distributions, Algorithm~\ref{alg:intervalexistential} results in surrogate loss satisfying:
\[
    \E_{\tpi_T}[\tL_T] \le 2|\cG|n^2\cdotk \exp\left(T\eta \rho+2T\eta^2\right).
\]

\end{lemma}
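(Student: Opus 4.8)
The plan is to apply the first part of Theorem~\ref{thm:general-bounds} to the process $\tL$, in exact parallel with the proof of Corollary~\ref{cor:surrogateloss-mean} in the mean case. The process $\tL_t$ is nonnegative (a sum of exponentials) and is a deterministic function of the transcript $\pi_t$, hence adapted to the filtration $\mathcal{F}_t = \sigma(\pi_t)$, and $\tL_0$ is a deterministic constant. So the first thing I would do is identify $X_0$: before any round is played every coverage error vanishes, $V_0^{G,(i,j)} = 0$ for all $G \in \cG$ and $(i,j) \in [n]\times[n]$, hence
\[
L_0 = \sum_{\substack{G \in \cG,\\ (i,j) \in [n]\times[n]}} \bigl(\exp(0) + \exp(0)\bigr) = 2|\cG|n^2 .
\]

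Next I would fix the learner's strategy to be Algorithm~\ref{alg:intervalexistential} and fix an arbitrary adversary constrained to play $(\rho,rn)$-smooth label distributions. Under these choices, Corollary~\ref{cor:intervalexists} states precisely that for every round $s$ and every prefix transcript $\pi_s$,
\[
\E_{(\bell,\bu)\sim Q^L_{s+1}}\bigl[\tL_{s+1}\mid \pi_s\bigr] \le L_s\bigl(1 + \eta\rho + 2\eta^2\bigr),
\]
which is the one-step multiplicative-growth hypothesis required by Theorem~\ref{thm:general-bounds}, with its constant $c$ instantiated as $\rho$. Since $\rho$ is an upper bound on a probability we have $\rho \in [0,1]$, and $\eta \in (0,1/2) \subseteq [0,\tfrac12]$ by the standing assumption on the surrogate loss parameter, so both preconditions of the theorem are satisfied.

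Finally I would plug these into inequality~\eqref{eqn:expectedbound} of Theorem~\ref{thm:general-bounds} with $X_0 = L_0 = 2|\cG|n^2$ and $c = \rho$, which gives
\[
\E_{\tpi_T}\bigl[\tL_T\bigr] \le L_0 \exp\!\bigl(T\eta\rho + 2T\eta^2\bigr) = 2|\cG|n^2 \exp\!\bigl(T\eta\rho + 2T\eta^2\bigr),
\]
the claimed bound. I do not expect any genuine obstacle here: the substantive work — the per-round minimax argument and, crucially, the use of Observation~\ref{obs:smooth} to bound the value of the per-round game against a smooth adversary by $L_s(\eta\rho + 2\eta^2)$ — was already done in Lemma~\ref{lem:existsinterval} and Corollary~\ref{cor:intervalexists}. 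The only points worth stating explicitly are the two bookkeeping facts used above: that $L_0 = 2|\cG|n^2$, and that $\rho \le 1$ so that $c = \rho$ is an admissible constant for Theorem~\ref{thm:general-bounds}.
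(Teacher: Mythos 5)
Your proof is correct and follows the paper's argument exactly: both invoke Corollary~\ref{cor:intervalexists} to establish the one-step hypothesis of Theorem~\ref{thm:general-bounds} with $L_0 = 2|\cG|n^2$ and $c = \rho$, then read off the bound from~\eqref{eqn:expectedbound}. The only difference is that you spell out the bookkeeping ($V_0^{G,(i,j)} = 0$ so $L_0 = 2|\cG|n^2$, and $\rho \in [0,1]$) that the paper leaves implicit.
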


\begin{proof}
Using Corollary~\ref{cor:intervalexists}, the first part of Theorem~\ref{thm:general-bounds} applies in this case to the process $L$ with $L_0 = 2 |G|n^2$ and $c = \rho$. The bound follows by plugging these values into \eqref{eqn:expectedbound}. 
\end{proof}

Finally, we can calculate a bound on our expected multivalidity  error. The proof (which mirrors similar claims in previous sections) is in the Appendix.

\begin{restatable}{theorem}{intervalmulti}
\label{thm:intervalmulti}
When Algorithm~\ref{alg:intervalexistential} is run using $n$ buckets, discretization parameter $r$ and $\eta = \sqrt{\frac{\ln(2|\cG|n^2)}{2T}} \in (0, 1/2)$, then against any adversary constrained to playing $(\rho,rn)$-smooth distributions, its sequence of interval predictions is $\alpha$-multivalid with respect to $\cG$ in expectation over the randomness of the transcript $\pi_T$, where:
$$\E [\alpha] \leq \rho + 2\cdotk \sqrt{\frac{2\ln(2|\cG|n^2)}{T}}.$$
\end{restatable}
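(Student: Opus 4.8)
The plan is to mimic the proof of Theorem~\ref{thm:meanmulti} almost verbatim, with the surrogate-loss bound of Lemma~\ref{lem:surrogatelossinterval} in place of Corollary~\ref{cor:surrogateloss-mean}. By Observation~\ref{obs:multicoverage}, it suffices to show that
\[
\frac{1}{T}\,\E_{\tpi_T}\!\left[\max_{G \in \cG,\ i \le j \in [n]} \big|\tV_T^{G,(i,j)}\big|\right] \le \rho + 2\sqrt{\frac{2\ln(2|\cG|n^2)}{T}}.
\]
First I would bound the exponential of this expectation. Applying Jensen's inequality to the convex map $z \mapsto \exp(\eta z)$ gives $\exp\!\big(\eta\,\E_{\tpi_T}[\max_{G,(i,j)}|\tV_T^{G,(i,j)}|]\big) \le \E_{\tpi_T}\big[\exp(\eta \max_{G,(i,j)}|\tV_T^{G,(i,j)}|)\big]$. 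Then, using $\exp(\eta|v|)\le\exp(\eta v)+\exp(-\eta v)$ and bounding a maximum of nonnegative terms by their sum, the right-hand side is at most $\E_{\tpi_T}\big[\sum_{G,(i,j)}(\exp(\eta \tV_T^{G,(i,j)})+\exp(-\eta \tV_T^{G,(i,j)}))\big] = \E_{\tpi_T}[\tL_T]$, which is exactly the surrogate loss.

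Next I would invoke Lemma~\ref{lem:surrogatelossinterval} --- which applies precisely because the adversary is constrained to play $(\rho,rn)$-smooth distributions --- to get $\E_{\tpi_T}[\tL_T]\le 2|\cG|n^2\exp(T\eta\rho+2T\eta^2)$. Chaining the inequalities, taking logarithms, and dividing through by $\eta T$ yields
\[
\frac{1}{T}\,\E_{\tpi_T}\!\left[\max_{G,(i,j)}\big|\tV_T^{G,(i,j)}\big|\right] \le \frac{\ln(2|\cG|n^2)}{\eta T} + \rho + 2\eta.
\]
Finally, substituting $\eta = \sqrt{\ln(2|\cG|n^2)/(2T)}$ balances the first and last terms, so that $\frac{\ln(2|\cG|n^2)}{\eta T}+2\eta = 2\sqrt{2\ln(2|\cG|n^2)/T}$, which is exactly the claimed bound on $\E[\alpha]$.

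There is essentially no hard step here: the argument is a mechanical transcription of the mean case, with the role of the $n$ buckets replaced by the (at most) $n^2$ interval buckets $(i,j)$ with $i \le j$, and $L_0 = 2|\cG|n^2$. The only point that genuinely needs care is the hypothesis of Lemma~\ref{lem:surrogatelossinterval}: the per-round bound of Corollary~\ref{cor:intervalexists} --- and hence the application of Theorem~\ref{thm:general-bounds} inside that lemma --- holds only when the adversary is restricted to $(\rho,rn)$-smooth distributions, which is why the $\rho$ term (rather than a $\tfrac{1}{rn}$-type discretization term, as in Theorem~\ref{thm:meanmulti}) appears in the final bound. I would also note in passing, as in the mean case, that the statement presumes $\eta \le \tfrac12$, which holds once $T$ is large enough relative to $\ln(2|\cG|n^2)$.
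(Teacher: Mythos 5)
Your proof is correct and follows the paper's argument exactly: Jensen's inequality on $\exp(\eta\,\cdot\,)$, the soft-max/sum bound to reduce to $\E[\tL_T]$, Lemma~\ref{lem:surrogatelossinterval} (valid under the $(\rho,rn)$-smooth adversary restriction) with $L_0 = 2|\cG|n^2$ and $c=\rho$, then taking logs and optimizing $\eta$. No gaps; this is precisely the paper's proof.
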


We can also use the second part of Theorem~\ref{thm:general-bounds} to prove a high probability bound on the multicalibration error of Algorithm~\ref{alg:intervalexistential}. The proof is in the Appendix. 

\begin{restatable}{theorem}{hpintervalcalibration}
\label{thm:hp-interval-calibration}
When Algorithm~\ref{alg:intervalexistential} is run using $n$ buckets, discretization parameter $r$ and $\eta = \sqrt{\frac{\ln(2|\cG|n^2)}{2T}} \in (0, 1/2)$, then against any adversary who is constrained to playing $(\rho,rn)$-smooth distributions, its sequence of interval predictions is $\alpha$-multivalid with respect to $\cG$ with probability $1-\lambda$ over the randomness of the transcript $\pi_T$: 
\[ 
    \alpha \leq \rho +  4 \cdotk \sqrt{\frac{2}{T} \ln\left(\frac{2|\cG|n^2}{\lambda} \right)}.
\]
\end{restatable}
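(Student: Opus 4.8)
The plan is to follow exactly the template used for Theorem~\ref{thm:hpcalibration} in the mean case, applying the second (high-probability) part of the helper Theorem~\ref{thm:general-bounds} to the surrogate coverage loss $\tilde L_t$ of this section. The ingredients are already in place: Corollary~\ref{cor:intervalexists} gives $\E_{(\ell,u)\sim Q^L_{s+1}}[\tilde L_{s+1}\mid\pi_s] \le L_s(1+\eta\rho+2\eta^2)$ against any $(\rho,rn)$-smooth adversary, so the process $L$ satisfies the hypothesis of Theorem~\ref{thm:general-bounds} with multiplicative-drift constant $c=\rho$ and initial value $L_0 = 2|\cG|n^2$. What remains, to invoke the concentration half of that theorem, is to verify that the associated martingale difference sequence $\tilde Z_t = Z_{t-1} + \ln\tilde L_t - \E[\ln\tilde L_t\mid\pi_{t-1}]$ has increments bounded by $2\eta$ in absolute value.

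First I would prove the analogue of Lemma~\ref{lem:martingale-bounded-mean}, i.e.\ that $|Z_t - Z_{t-1}|\le 2\eta$ for every round and every realized transcript. The key observation is that going from $\pi_{t-1}$ to $\pi_t$, the only summands of $L_s$ that change are those indexed by $(G,(i,j))$ with $G\in\cG(x_t)$ and $(\bell_t,\bu_t)\in\Bm(i,j)$; for each such pair, $V_t^{G,(i,j)} = V_{t-1}^{G,(i,j)} + v_\delta((\bell_t,\bu_t),y_t)$, and since $v_\delta\in\{-(1-\delta),\delta\}\subseteq[-1,1]$, the corresponding terms $\exp(\pm\eta V^{G,(i,j)})$ are each multiplied by a factor in $[e^{-\eta},e^{\eta}]$. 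Hence $e^{-\eta}L_{t-1}\le L_t\le e^{\eta}L_{t-1}$, so $|\ln\tilde L_t - \ln L_{t-1}|\le\eta$ surely; taking the conditional expectation shows $|\E[\ln\tilde L_t\mid\pi_{t-1}] - \ln L_{t-1}|\le\eta$ as well, and the triangle inequality gives $|Z_t - Z_{t-1}| = |\ln\tilde L_t - \E[\ln\tilde L_t\mid\pi_{t-1}]|\le 2\eta$. Note that smoothness is not needed here --- it was only used to bound the per-round \emph{expected} drift in Lemma~\ref{lem:existsinterval}.

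With the increment bound in hand, the second part of Theorem~\ref{thm:general-bounds} applied with $X_0 = L_0 = 2|\cG|n^2$ and $c=\rho$ yields, with probability $1-\lambda$,
\[
\ln(L_T(\pi_T)) \le \ln(2|\cG|n^2) + T\Bigl(\eta\rho + 2\eta^2\Bigr) + \eta\sqrt{8T\ln(1/\lambda)}.
\]
Then, exactly as in the mean case, $\exp(\eta\max_{G,(i,j)}|V_T^{G,(i,j)}|) = \max_{G,(i,j)}\exp(\eta|V_T^{G,(i,j)}|) \le \sum_{G,(i,j)}\bigl(\exp(\eta V_T^{G,(i,j)}) + \exp(-\eta V_T^{G,(i,j)})\bigr) = L_T(\pi_T)$, so taking logs and dividing by $\eta T$ gives
\[
\frac{1}{T}\max_{G,(i,j)}|V_T^{G,(i,j)}| \le \frac{\ln(2|\cG|n^2)}{\eta T} + \rho + 2\eta + \sqrt{\frac{8\ln(1/\lambda)}{T}}.
\]
Plugging in $\eta = \sqrt{\ln(2|\cG|n^2)/(2T)}$ and combining the two square-root terms into one via $\sqrt{a}+\sqrt{b}\le 2\sqrt{a+b}$ (matching the constant $4$ in the statement), we get $\frac1T\max_{G,(i,j)}|V_T^{G,(i,j)}| \le \rho + 4\sqrt{\tfrac{2}{T}\ln(2|\cG|n^2/\lambda)}$. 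Finally, Observation~\ref{obs:multicoverage} translates this bound on the coverage errors into the claimed $(\alpha,n)$-multivalidity guarantee with $\alpha \le \rho + 4\sqrt{\tfrac{2}{T}\ln(2|\cG|n^2/\lambda)}$, completing the proof.

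I expect no genuine obstacle here: the only slightly delicate step is the martingale-increment bound, and even that reduces to the elementary fact that $v_\delta$ is bounded by $1$, so each round perturbs the surrogate loss by at most a multiplicative $e^{\pm\eta}$ factor --- everything else is a verbatim replay of the mean-multicalibration argument with $L_0$ replaced by $2|\cG|n^2$ and the drift constant $\tfrac{1}{rn}$ replaced by $\rho$.
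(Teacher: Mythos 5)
Your proposal is correct and follows essentially the same path as the paper's proof in Appendix~\ref{app:intervalproofs}: both verify the $|Z_t-Z_{t-1}|\le 2\eta$ increment bound (the paper's Lemma~\ref{lem:martingale-bounded-interval} argues via a uniform bound on $\ln(L_t(\pi_t)/L_t(\pi_t'))$ over transcripts agreeing through round $t-1$, which is equivalent to your triangle-inequality phrasing), then invoke the second part of Theorem~\ref{thm:general-bounds} with $L_0=2|\cG|n^2$ and $c=\rho$, and finally pass from the bound on $\ln L_T$ to the coverage error via Observation~\ref{obs:multicoverage}. The only cosmetic difference is that you make explicit the $\sqrt{a}+\sqrt{b}\le 2\sqrt{a+b}$ step that collapses the two square-root terms into the constant $4$.
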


\begin{remark}
\label{rem:perturb}
The hypothesis of our theorems has an assumption: that the adversary is restricted to playing $(\rho,rn)$-smooth distributions. This may be reasonable if we are not in a truly adversarial setting, and are simply concerned with unknown distribution shift. But what if we are truly in an adversarial environment? It turns out that in order to have a useful algorithm, we need not make \emph{any} assumptions on the adversary at all. Observe that if we randomly perturb observed labels with uniform noise: $\hat y_t = y_t + U(-\epsilon,\epsilon)$, then the distribution on our perturbed points will be $\left(\frac{1}{2rn\epsilon},rn\right)$-smooth by construction. Now recall that $r$ is a parameter that we can select. By taking $r = \frac{1}{2\rho n\epsilon}$, we obtain that the distribution on the perturbed points is $(\rho, rn)$-smooth, for a value of $\rho$ that we can take as small as we like. Taking $\rho = 1/\sqrt{T}$ ($r = \frac{\sqrt{T}}{2n\epsilon}$) makes the contribution of $\rho$ to the multivalidity error a low order term. If we feed these perturbed labels to our algorithm, we will obtain prediction intervals that are multivalid for the perturbed labels. But observe that if we simply widen each of our prediction intervals by $\epsilon$ at each end, so that we predict the interval $[\bell_t - \epsilon, \bu_t +  \epsilon)$, then our intervals continue to have coverage probability at least $1-\delta$ for the original, unperturbed labels. We can similarly take $\epsilon$ as small as we like. Our algorithm in Section~\ref{sec:intervalalg} will have running time depending polynomially on $r$, so with this construction obtains a polynomial dependence on $1/\epsilon$. 
\end{remark}

\subsection{Deriving an Efficient Algorithm via Equilibrium Computation}
\label{sec:intervalalg}

In this section, we show how to implement Algorithm~\ref{alg:intervalexistential} to efficiently sample from the distributions $Q^L_t$ whose existence we established in Lemma~\ref{lem:existsinterval}. We do this by efficiently computing an equilibrium strategy $Q^L_t$ using the Ellipsoid algorithm by solving the linear program in Figure~\ref{lp:init}. This linear program has $(rn)^2+1$ variables and (a priori) an infinite number of constraints. However, as we will show:
\begin{enumerate}
    \item The number of constraints can in fact be taken to be finite (albeit exponentially large), and
    \item We have an efficient separation oracle to identify violated constraints.
\end{enumerate}
Together, this allows us to apply the Ellipsoid algorithm. 
\vspace{-7mm}
\begin{figure}[H]
\begin{align*}
    &\min_{Q^L \in \cPinterval^{rn}} \,\,  \gamma  \text{ s.t.}\\
\forall Q^A \in \hat{\mathcal{Q}}_{\rho,rn}:   &\sideset{}{_{y \in \cP^{rn}}} \sum
     Q^A(y)   
    \left(\sideset{}{_{(\ell,u) \in \cPinterval^{rn}}}\sum  Q^L((\ell,u))  \left(v_{\delta}((l,u),y)  C^{\ell,u}_{t-1}(x_t)\right) \right)  \leq \gamma, \\
    &\sideset{}{_{(\ell,u) \in \cPinterval^{rn}}}\sum \,\, Q^L((\ell,u)) = 1, \\
 \forall \, (\ell, u) \in \cPinterval^{rn}:    & \,\, Q^L((\ell,u)) \geq 0 .
\end{align*}
\caption{A Linear Program for Computing a Minimax Equilibrium Strategy for the Learner at Round $t$. }
\label{lp:init}
\end{figure}


\begin{algorithm}[H]
\SetAlgoLined
\begin{algorithmic}
\STATE \textbf{INPUT}: $\epsilon > 0$.
\FOR{$t=1, \dots, T$}
	\STATE Observe $x_t$ and compute  $C_{t-1}^{\ell, u}(x_t)$ for each $(\ell,u) \in \cPinterval^{rn}$ as in \eqref{eqn:def-C-interval}.
\STATE Solve the Linear Program from Figure~\ref{lp:init} using the Ellipsoid algorithm, with Algorithm~\ref{alg:intervalseparation} as a separation oracle, to obtain an $\epsilon$-approximate solution $Q^L_t \in \Delta \cPinterval^{rn}$.
\STATE Predict $(\bell_t,\bu_t) = (\ell, u)$ with probability $Q^L_t((\ell,u))$.
\ENDFOR
\end{algorithmic}
\caption{Von Neumann's Multivalid Predictor}
\label{alg:intervalalg}
\end{algorithm}

\begin{theorem}
\label{thm:interval-efficient}
Algorithm~\ref{alg:intervalalg} implements Algorithm~\ref{alg:intervalexistential}. In particular, it obtains multivalidity guarantees arbitrarily close to those of Theorems \ref{thm:intervalmulti} and \ref{thm:hp-interval-calibration}. Namely, for any desired $\epsilon > 0$, we have the following.

Choosing $\eta = \sqrt{\frac{\ln(2|\cG| n^2 + \epsilon)}{2T}} \in (0, 1/2)$, we have against any adversary constrained to playing $(\rho,rn)$-smooth distributions that the sequence of prediction intervals produced by Algorithm~\ref{alg:intervalalg} is $\alpha$-multivalid with respect to $\cG$ in expectation over the randomness of the transcript $\pi_T$, where:
\[\E [\alpha] \leq \rho + 2\sqrt{\frac{2\ln(2|\cG| n^2 + \epsilon)}{T}}.\]

Moreover, choosing $\eta = \sqrt{\frac{\ln(2|\cG|n^2) + \epsilon T}{2T}} \in (0, 1/2)$,
we have, with probability $1-\lambda$ over the randomness of the transcript $\pi_T$,
\[\alpha \leq \rho + 4 \cdotk \sqrt{\frac{2}{T} \ln\left(\frac{2|\cG|n^2}{\lambda} \right)  + 2\epsilon}.\]

The runtime of Algorithm~\ref{alg:intervalalg} is linear in $|\cG|$, and polynomial in $r, n, T$, and $\log(\frac{1}{\epsilon})$.
\end{theorem}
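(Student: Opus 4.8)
The plan is to mirror the proof of Theorem~\ref{thm:momentalg-efficient}. The first step is to observe that Linear Program~\ref{lp:init} directly encodes the minimax problem whose solution defines the equilibrium strategy $Q^L_t$ of Algorithm~\ref{alg:intervalexistential}: starting from the game of Lemma~\ref{lem:existsinterval} with objective $u((\ell,u),y) = \eta\, v_\delta((\ell,u),y)\, C^{\ell,u}_s + 2\eta^2 L_s$, the equilibrium structure is preserved under the positive affine transformation that drops the additive term $2\eta^2 L_s$ and the positive scalar $\eta$, so it is equivalent to solve $\argmin_{Q^L \in \Delta\cPinterval^{rn}} \max_{Q^A \in \hat{\mathcal{Q}}_{\rho,rn}} \sum_{y} Q^A(y) \sum_{(\ell,u)} Q^L((\ell,u))\, v_\delta((\ell,u),y)\, C^{\ell,u}_{t-1}(x_t)$, where restricting the adversary to discrete smooth distributions $\hat{\mathcal{Q}}_{\rho,rn}$ is without loss of generality by Lemma~\ref{lem:discreteok}. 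Solving this LP exactly would reproduce Algorithm~\ref{alg:intervalexistential} exactly; the bulk of the work is to show it can be solved to additive error $\epsilon$ in polynomial time, and that an $\epsilon$-approximate solution still yields the claimed (slightly weakened) multivalidity bounds.

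\textbf{Finitely many constraints and a separation oracle.} Since the left-hand side of the adversary constraint is linear in $Q^A$ and $\hat{\mathcal{Q}}_{\rho,rn}$ is a polytope (the simplex over the grid $\cP^{rn}$ intersected with the box-type smoothness constraints of Definition~\ref{def:smoothness}), the maximum over $Q^A$ is attained at a vertex; there are finitely many (though exponentially many) vertices, so the feasible region is a polyhedron. To apply the Ellipsoid algorithm I would supply a polynomial-time separation oracle: given a candidate $(Q^L,\gamma)$, the simplex and nonnegativity constraints on $Q^L$ are checked directly, and the adversary constraints are handled by solving the best-response problem $\max_{Q^A \in \hat{\mathcal{Q}}_{\rho,rn}} \sum_y Q^A(y)\, c(y)$, where $c(y) \equiv \sum_{(\ell,u)\in\cPinterval^{rn}} Q^L((\ell,u))\, v_\delta((\ell,u),y)\, C^{\ell,u}_{t-1}(x_t)$ is computable in polynomial time for each grid point $y$. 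I would then show that this is solved by a simple greedy procedure (Algorithm~\ref{alg:intervalseparation}): order the grid points by $c(y)$ in decreasing order and allocate probability mass greedily subject to the smoothness caps; if the resulting value exceeds $\gamma$, output the corresponding constraint as violated, otherwise declare $(Q^L,\gamma)$ feasible. This greedy oracle is the key new ingredient relative to the moment case, where the adversary had only $2^k$ pure strategies and its best response decomposed coordinatewise --- and establishing its correctness over the exponentially many vertices of the smooth-distribution polytope is the step I expect to be the main obstacle.

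\textbf{Rationality and bit complexity.} The coefficients $C^{\ell,u}_{t-1}(x_t) = \sum_{G\in\cG(x_t)} \exp(\eta V^{G,(i,j)}_{t-1}) - \exp(-\eta V^{G,(i,j)}_{t-1})$ are irrational, so exactly as in the moment proof I would truncate every coefficient of the LP to $O(\log\tfrac1\epsilon)$ bits of precision; by Lemma~\ref{lem:lpapprox} (whose hypotheses Linear Program~\ref{lp:init} satisfies, with $x = Q^L$ and rows of $A$ indexed by the vertices $Q^A$) the optimum of the resulting rational LP is an $\epsilon$-approximately optimal feasible solution of the original. Using $|V^{G,(i,j)}_{t-1}| \le T$, $\eta \le \tfrac12$, $|v_\delta| \le 1$ and $\sum_y Q^A(y) = 1$, every constraint coefficient is bounded by $2|\cG|\exp(T/2)$, so each constraint has bit complexity $\mathrm{poly}(\log|\cG|, T, \log\tfrac1\epsilon)$, and there are $(rn)^2+1$ variables; Theorem~\ref{thm:ellipsoid} then gives a running time polynomial in these parameters together with the (polynomial) separation-oracle runtime.

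\textbf{Runtime and the multivalidity guarantees.} Precomputing $C^{i,j}_{t-1}(x_t)$ for all $n^2$ bucket pairs takes $O(|\cG|\, n^2)$ time (or $O(|\cG(x_t)|\, n^2)$ if $\cG(x_t)$ is efficiently enumerable), after which the $(rn)^2$ values $C^{\ell,u}_{t-1}(x_t)$ are table lookups; combined with the above, each round runs in time linear in $|\cG|$ and polynomial in $r,n,T,\log\tfrac1\epsilon$. It then remains to verify that the $\epsilon$-approximate equilibrium still yields multivalidity: solving the scaled LP to additive error $\epsilon$ makes $\E_{Q^L_t}[\Delta_{s+1}] \le L_s(\eta\rho + 2\eta^2) + O(\eta\epsilon)$, and re-running Corollary~\ref{cor:intervalexists}, Lemma~\ref{lem:surrogatelossinterval}, and Theorems~\ref{thm:intervalmulti} and~\ref{thm:hp-interval-calibration} with this extra $O(\epsilon)$-per-round term (and the $\eta$ choices in the statement) produces the stated bounds on $\E[\alpha]$ and on $\alpha$ with probability $1-\lambda$. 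This last step is the exact analog of Lemma~\ref{lem:momentepsmultivalidity}, and I would defer its routine recalculations to the appendix.
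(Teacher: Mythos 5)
Your proposal is correct and follows essentially the same route as the paper: affine-transform the game objective, cast equilibrium computation as Linear Program~\ref{lp:init}, bound the number of vertices of $\hat{\mathcal{Q}}_{\rho,rn}$ to show the feasible region is a polyhedron, truncate coefficients to $O(\log\tfrac1\epsilon)$ bits and invoke Lemma~\ref{lem:lpapprox}, bound bit complexity, apply Theorem~\ref{thm:ellipsoid} with a greedy separation oracle, and re-run the earlier convergence arguments with an additive $\epsilon$ per-round slack. The one place you hedge --- correctness of the greedy separation oracle over the vertices of the smooth-distribution polytope --- is exactly the content of Lemma~\ref{lem:greed}, and it is less delicate than you anticipate: after fixing $Q^L$, the adversary's best-response problem is literally a fractional knapsack (items $y \in \cP^{rn}$ with values $W_y = \sum_{(\ell,u): \mathrm{Cover}((\ell,u),y)=1} Q^L((\ell,u))\,C^{\ell,u}_{t-1}$, per-item cap $\rho$, total budget $1$), for which the sort-and-fill greedy is optimal by the standard exchange argument; stating this closes the gap. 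Your coefficient bound $2|\cG|\exp(T/2)$ is marginally sharper than the paper's $2|\cG|\exp(T)$ but both give the same polynomial bit-complexity conclusion.
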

\begin{remark}
As with all of our other algorithms, the dependence on $|\cG|$ can be replaced at each round with a possibly substantially smaller dependence on the number of groups which contain $x_t$, $|\cG(x_t)|$, whenever this set is efficiently enumerable.
\end{remark} 
\begin{proof}
Recall that at each round $t$ we need to find an equilibrium strategy for the learner in the zero-sum game defined by the objective function:
\begin{eqnarray*}
u((\ell,u), y)
&=& \eta v_\delta((\ell, u), y)C^{\ell,u}_{t-1}  + 2\eta^2 L_{t-1}\\
&=& \eta \left(\mathrm{Cover}((\ell,u), y) - (1-\delta)\right) C^{\ell,u}_{t-1} + 2\eta^2 L_{t-1}.
\end{eqnarray*}
In this game, the strategy space for the learner is the set of all distributions over discrete intervals: $\mathcal{Q}^L = \Delta \cPinterval^{rn}$, and (by Lemma~\ref{lem:discreteok}), the action space for the adversary can be taken to be the set of all discrete smooth distributions: $\mathcal{Q}^A = \hat{\mathcal{Q}}_{\rho,rn}$.

The equilibrium structure of a game is invariant to adding and multiplying the objective function by a constant. Hence we can proceed to solve the game with the objective function: 
$$u((\ell,u), y) = \left(\mathrm{Cover}((\ell,u), y) -(1- \delta)\right)C^{\ell,u}_{t-1}.$$
To compute an  equilibrium of the game, we need to solve for a distribution $Q^L$ satisfying:
\[
    Q^L \in \argmin_{Q^L \in \Delta \cPinterval^{rn}}\max_{Q^A \in \hat{\mathcal{Q}}_{\rho,rn}}\E_{\substack{y \sim Q^A,\\ (\ell,u) \sim Q^L}}[u(\ell,u),y)].
\]

We can write this as a linear program, over the $O((rn)^2)$ variables $Q^L((\ell,u))$: see Figure~\ref{lp:init}. A priori, this linear program has infinitely many constraints.\footnote{Although in fact, in the proof of Lemma~\ref{lem:greed}, we will show that without loss of generality we can equivalently impose only finitely (but exponentially) many constraints.} Nevertheless, we show that we can efficiently implement a \emph{separation oracle}, which given a candidate solution $(Q^L, \gamma)$, can find a violated constraint whenever one exists. This is sufficient to efficiently find, using the Ellipsoid algorithm, a feasible solution of the linear program achieving value within any desired $\epsilon > 0$ of the optimum. 

\begin{algorithm}[H]
\label{alg:oracle1}
\SetAlgoLined
\begin{algorithmic}
\STATE \textbf{INPUT}: A proposed solution $Q^L$, $\gamma$ for Linear Program~\ref{lp:init}
\STATE \textbf{OUTPUT}: A violated constraint of Linear Program~\ref{lp:init} if one exists, or a certification of feasibility. 
\FOR{$i=0, 1 \dots, rn$}
	\STATE Compute
	$$W_i \equiv \sideset{}{_{(\ell, u) \in \cPinterval^{rn} :  \mathrm{Cover}((\ell,u),\frac{i}{rn}) = 1}}\sum Q^L((\ell,u)) C^{\ell,u}_{t-1}$$
\ENDFOR
\STATE Let $\sigma:\{0,\ldots, rn\}\rightarrow \{0,\ldots,rn\}$ be a permutation such that:
$$W_{\sigma(0)} \geq W_{\sigma(1)} \geq \ldots \geq W_{\sigma(rn)}.$$
\FOR{$i=0, 1 \dots, rn$}
    \STATE Set $Q^A(\sigma(i)) = \min(\rho, 1 - \sum_{j=0}^{i-1} Q^A(\sigma(j))$
\ENDFOR
\IF{$\sideset{}{_{y \in \cP^{rn}}} \sum
     Q^A(y)   
    \left(\sideset{}{_{(\ell,u) \in \cPinterval^{rn}}}\sum  Q^L((\ell,u))  \left(v_{\delta}((l,u),y)  C^{\ell,u}_{t-1}\right) \right)  > \gamma,$ or $Q^L$ not a prob.\ dist.\ }
    \RETURN the violated constraint.
\ENDIF
\RETURN FEASIBLE
\end{algorithmic}
\caption{A Separation Oracle for Linear Program~\ref{lp:init}}
\label{alg:intervalseparation}
\end{algorithm}

We will identify the output of Algorithm~\ref{alg:intervalexistential} with the distribution $Q^A$ associated with the constraint it outputs. Observe that if there is a violation (i.e. the proposed solution $Q^L, \gamma$ is infeasible), and there are ties, i.e. indices $i$ and $j$ such that $W_i = W_j$, then there are multiple candidate $Q^A$'s that could be the output of Algorithm~\ref{alg:intervalseparation}.  To that end, note that a solution $Q^A$ can be output by Algorithm~\ref{alg:intervalseparation} if and only if it is \emph{greed-induced}:
\begin{definition}
Let $W_i$ be defined as in Algorithm~\ref{alg:intervalseparation} for $i \in \{0,\ldots,rn\}$. We say that a distribution $Q^L \in \hat{\mathcal{Q}}_{\rho,rn}$ is \emph{greed-induced} if for every pair of indices $i$ and $j$ such that $W_i > W_j$:
$$Q^A(j) > 0 \implies Q^A(i) = \rho.$$
\end{definition}

\begin{lemma}
\label{lem:greed}
    Algorithm~\ref{alg:oracle1} is a separation oracle for the Linear Program in Figure~\ref{lp:init}. It runs in time $O((rn)^3)$.
\end{lemma}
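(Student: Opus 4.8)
The plan is to verify that Algorithm~\ref{alg:intervalseparation}, on input a candidate pair $(Q^L,\gamma)$, either returns a constraint of Linear Program~\ref{lp:init} violated by $(Q^L,\gamma)$ or correctly certifies feasibility, and then to bound its running time by $O((rn)^3)$. The $Q^L$-feasibility constraints (that $Q^L$ sum to $1$ and be coordinatewise nonnegative) number only $O((rn)^2)$ and are checked by inspection, so the heart of the matter is the family of constraints indexed by $Q^A\in\hat{\mathcal{Q}}_{\rho,rn}$. For those it suffices to compute the adversary's best response $\max_{Q^A\in\hat{\mathcal{Q}}_{\rho,rn}}\sum_{y\in\cP^{rn}}Q^A(y)\bigl(\sum_{(\ell,u)\in\cPinterval^{rn}}Q^L((\ell,u))\,v_\delta((\ell,u),y)\,C^{\ell,u}_{t-1}\bigr)$ and compare it to $\gamma$: if the maximizing value is $\le\gamma$ then every $Q^A$-constraint holds, and otherwise the maximizing $Q^A$ indexes a violated constraint to output.

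The first step is to recognize this best-response problem as a finite linear optimization. Expanding $v_\delta((\ell,u),y)=\mathrm{Cover}((\ell,u),y)-(1-\delta)$, the inner parenthesized quantity equals $W_{i(y)}-(1-\delta)K$, where $i(y)$ is the index of $y$ in the grid $\cP^{rn}$, where $W_i$ is precisely the quantity computed in Algorithm~\ref{alg:intervalseparation}, and where $K=\sum_{(\ell,u)}Q^L((\ell,u))C^{\ell,u}_{t-1}$ is independent of $y$. Hence the adversary's payoff depends on $Q^A$ only through its atom vector $q=(Q^A(i/rn))_{i=0}^{rn}$, and — using the fact (as in the proof of Lemma~\ref{lem:discreteok}) that a grid-supported distribution lies in $\hat{\mathcal{Q}}_{\rho,rn}$ exactly when each atom has mass at most $\rho$ — the best response reduces, up to the $q$-independent additive constant $-(1-\delta)K$, to $\max\{\sum_{i=0}^{rn}q_iW_i : q\ge 0,\ \sum_iq_i=1,\ q_i\le\rho\ \forall i\}$. (Here we may assume $(rn+1)\rho\ge 1$, which is necessary for $\hat{\mathcal{Q}}_{\rho,rn}$ to be nonempty; when it fails the $Q^A$-family is vacuous and only the $Q^L$-constraints need be checked.)

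The second and, I expect, main step is to show that the greedy rule executed by Algorithm~\ref{alg:intervalseparation} — sort the $W_i$ into nonincreasing order via a permutation $\sigma$, then allocate mass $\rho$ to $\sigma(0),\sigma(1),\dots$ until the budget $1$ is used up — outputs an exact optimizer $Q^A\in\hat{\mathcal{Q}}_{\rho,rn}$ of this capacitated program. Feasibility of the output is immediate (atoms are $\le\rho$ by the truncation, sum to exactly $1$ since $(rn+1)\rho\ge 1$, and are supported on $\cP^{rn}$), and the output is \emph{greed-induced}: whenever $W_i>W_j$, index $i$ strictly precedes $j$ under $\sigma$, so if $Q^A(j)>0$ then the budget was not exhausted before $j$'s turn, forcing every earlier index — in particular $i$ — to have received the full mass $\rho$. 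Optimality then follows from a standard exchange argument: if some feasible $q'$ had $\sum_iq'_iW_i>\sum_iq_iW_i$, then $q'-q$ would transfer mass from coordinates $j$ with $q_j>0$ to coordinates $i$ with $q_i<\rho$; greed-inducedness gives $W_i\le W_j$ for every such pair, so this transfer cannot raise $\sum_iq_iW_i$ — a contradiction. Thus the maximizing value and a maximizer are computed correctly, and the oracle's final test (together with the direct check that $Q^L$ is a probability distribution) decides feasibility and produces a genuinely violated constraint when one exists.

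Finally, for the running time I would account as follows. Computing the $rn+1$ quantities $W_i$ costs, per index $i$, a pass over the $O((rn)^2)$ intervals of $\cPinterval^{rn}$ with an $O(1)$ coverage test, hence $O((rn)^3)$ in total; sorting the $W_i$ costs $O(rn\log(rn))$; the greedy allocation loop is $O(rn)$; and the final feasibility check — written naively as the double sum over $y\in\cP^{rn}$ and $(\ell,u)\in\cPinterval^{rn}$ in the pseudocode, or more cheaply via $g(i/rn)=W_i-(1-\delta)K$ and the precomputed $W_i$ — costs at most $O((rn)^3)$; checking the $O((rn)^2)$ distribution constraints on $Q^L$ is $O((rn)^2)$. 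Summing gives the claimed $O((rn)^3)$. Combined with the polynomial bit-complexity of the constraints (bounded exactly as for Linear Program~\ref{lp:moments}, after the rational truncation of Lemma~\ref{lem:lpapprox}), this separation oracle is what the Ellipsoid algorithm requires to solve Linear Program~\ref{lp:init} to any accuracy $\epsilon>0$ in polynomial time.
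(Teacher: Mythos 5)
Your proof is correct and follows essentially the same approach as the paper's: you reduce the adversary's best-response problem to a fractional knapsack over the atom vector $(Q^A(i/rn))_{i}$, verify that the greedy (greed-induced) allocation solves it, and account for the $O((rn)^3)$ cost of computing the $W_i$'s plus sorting. You fill in a few details the paper leaves implicit --- the additive $-(1-\delta)K$ constant that drops out, the nonemptiness condition $(rn+1)\rho \ge 1$, and the explicit exchange argument for greedy optimality --- but this is the same route.
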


\begin{proof}
Recall that a separation oracle is given a candidate distribution $Q^L \in \Delta\cPinterval^{rn}$ and a value $\gamma \in \mathbb{R}$, and must determine if there is any $Q^A \in \hat{\mathcal{Q}}_{\rho,rn}$ such that:
$$\sideset{}{_{y \in \cP^{rn}}} \sum
     Q^A(y)   
    \left(\sideset{}{_{(\ell,u) \in \cPinterval^{rn}}}\sum  Q^L((\ell,u))  \left(v_{\delta}((l,u),y)  C^{\ell,u}_{t-1}\right) \right)  > \gamma. $$
Suppose the learner is playing a distribution $Q^L \in \Delta \cPinterval^{rn}$ over intervals.
The adversary will seek to maximize the objective function over the set of $(\rho,rn)$-smooth distributions $Q^A \in \hat{\mathcal{Q}}_{\rho,rn}$. Recall that $v_{\delta}((\ell,u),y) = \textrm{Cov}((\ell,u),y) - (1-\delta).$ Therefore, fixing a distribution $Q^L$ for the learner, there are terms in the objective function that are independent of the adversary's actions (roughly, those corresponding to the $(1-\delta)$ term), and hence irrelevant to the inner maximization problem (i.e the adversary's best response). We define the following quantity $\tilde u$ which eliminates these $y$-independent terms: 
\begin{eqnarray*}
\tilde u(Q^L,Q^A) &=& \sum_{i \in \{0,\ldots,rn\}}Q^A\left(\frac{i}{rn}\right)\sideset{}{_{(\ell, u) \in \cPinterval^{rn} :  \mathrm{Cover}((\ell,u),\frac{i}{rn}) = 1}}\sum Q^L((\ell,u)) C^{\ell,u}_{t-1},\\
&=& \sum_{i \in \{0,\ldots, rn\}}Q^A\left(\frac{i}{rn}\right) W_i.
\end{eqnarray*}
Observe that for any $Q^L \in \Delta \cPinterval$:
$$\argmax_{Q^A \in \hat{\mathcal{Q}}_{\rho,rn}} \left(\E_{\substack{\tilde y \sim Q^A, \\(\tilde\ell,\tilde u)\sim Q^L}}[u((\tilde \ell,\tilde u),\tilde y)] \right) = \argmax_{Q^A \in \hat{\mathcal{Q}}_{\rho,rn}} \tilde u (Q^L,Q^A).$$

Hence, to derive a separation oracle, it suffices to find an algorithm which maximizes $\tilde u$ given a fixed distribution over intervals $Q^L$ for the learner. This is how we proceed.

Observe that by the argument above, the adversary's problem is equivalent to solving: 
\begin{align*}
    \max_{Q^A} &\sideset{}{_{i \in \{0,\ldots, rn\}}} \sum Q^A\left(\frac{i}{rn}\right) W_i,\\ 
    & \sideset{}{_{i \in \{0,\ldots, rn\}}} \sum Q^A\left(\frac{i}{rn}\right)=1,\\
    \forall \, i \in \{0,\ldots, rn\}:& \,\, Q^A\left(\frac{i}{rn}\right) \leq \rho,\\
    \forall \, i \in \{0,\ldots, rn\}:& \,\,Q^A\left(\frac{i}{rn}\right) \geq 0.
\end{align*}
By observation, this is a fractional knapsack problem---the value of each item $i \in \{0, \dots, rn\}$ is $W_i$, the quantity of each item $i$ is $\rho$, and the total capacity is $1$. Therefore the optimal solution is greed-induced. 



To bound the runtime of Algorithm~\ref{alg:intervalseparation}, first observe that checking that $Q^L$ is a probability distribution  takes time $O((rn)^2 \log rn)$. Now, we focus on the remaining constraints.
Since the quantities $C_{t-1}^{\ell, u}$ are precomputed at the beginning of round $t$, the separation oracle computes $W_i$ for each $i \in \{0,\ldots, rn\}$ in time $O((rn)^2)$, and hence we can compute all $W_i$'s in time $O((rn))^3$.  All that remains is to sort the indices $W_i$ which takes time $O(rn\ln rn)$, which is a low order term. Altogether, this results in a runtime of $O((rn)^3)$ for Algorithm~\ref{alg:intervalseparation}.
\end{proof}

Now, we verify that Algorithm~\ref{alg:intervalalg} runs efficiently --- to do so, we need to show that the Ellipsoid algorithm can efficiently (approximately) solve Linear Program~\ref{lp:init}. 
\begin{lemma}
Each run of the Ellipsoid algorithm within Algorithm~\ref{alg:intervalalg} solves the LP to a desired accuracy $\epsilon> 0$ in runtime $\mathrm{poly}(rn,
\log |\mathcal{G}|, T, \log \frac{1}{\epsilon})$.  Consequently, Algorithm~\ref{alg:intervalalg} runs in time $\mathrm{poly}(rn, |\mathcal{G}|, T, \log \frac{1}{\epsilon})$, where the dependence on $|\cG|$ is $O(|\cG|)$.
\end{lemma}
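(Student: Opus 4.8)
The plan is to verify the two claimed properties of the Ellipsoid algorithm from Theorem~\ref{thm:ellipsoid} for Linear Program~\ref{lp:init}: polynomially many variables, a rational polyhedron with polynomial bit complexity, and an efficient separation oracle. The variable count is immediate: there are $|\cPinterval^{rn}| = O((rn)^2)$ variables $Q^L((\ell,u))$ plus the objective variable $\gamma$. The separation oracle is exactly Algorithm~\ref{alg:intervalseparation}, shown correct and $O((rn)^3)$-time in Lemma~\ref{lem:greed}. So the main work is (i) replacing LP~\ref{lp:init}, whose constraint coefficients $C^{\ell,u}_{t-1}(x_t)$ involve exponentials $\exp(\pm\eta V_s^{G,(i,j)})$ and hence are irrational, by a rational LP with coefficients truncated to $O(\log\frac1\epsilon)$ bits, and (ii) bounding the bit complexity of that rational LP.

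First I would invoke Lemma~\ref{lem:lpapprox}, noting that LP~\ref{lp:init} is exactly of the form treated there (minimize $\gamma$ subject to $Ax \le \gamma\mathbf 1$, $x\cdot\mathbf 1=1$, $x\ge 0$), where the rows of $A$ are indexed by pure adversary strategies $Q^A$ and the matrix entries are the coefficients $v_\delta((\ell,u),y)\,C^{\ell,u}_{t-1}(x_t)$. (Technically one should first argue, as the footnote promises via Lemma~\ref{lem:greed}, that it suffices to impose only the finitely many — though exponentially many — constraints corresponding to greed-induced $Q^A$; the separation oracle only ever returns such constraints, so the Ellipsoid run is unaffected.) Lemma~\ref{lem:lpapprox} then guarantees that any exact optimum of the truncated rational LP is an $\epsilon$-approximately optimal feasible solution of LP~\ref{lp:init}, which is what Algorithm~\ref{alg:intervalalg} needs.

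Next I would bound the bit complexity. Each $V_s^{G,(i,j)}$ is a sum of at most $T$ terms in $[-1,1]$, so $|V_s^{G,(i,j)}|\le T$, and hence each summand $\exp(\pm\eta V_s^{G,(i,j)})$ of $C^{\ell,u}_{t-1}(x_t)$ is at most $\exp(\eta T)\le\exp(T)$; summing over groups, $|C^{\ell,u}_{t-1}(x_t)|\le 2|\cG|\exp(T)$. Since $|v_\delta|\le 1$, every coefficient of $A$ is bounded in absolute value by $2|\cG|\exp(T)$, so after truncation to $O(\log\frac1\epsilon)$ bits of precision each entry $\tilde a_{ji}$ has bit complexity $O\!\big(T + \log|\cG| + \log\frac1\epsilon\big)$. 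A constraint of the rational LP has $O((rn)^2)$ such entries (plus the coefficient of $\gamma$, which is $1$), so its total bit complexity is $\mathrm{poly}(rn,\log|\cG|,T,\log\frac1\epsilon)$; the objective $\gamma$ trivially has bit complexity $O(1)$. Feeding these bounds, together with the separation oracle of Lemma~\ref{lem:greed}, into Theorem~\ref{thm:ellipsoid} gives an exact solution of the rational LP — hence an $\epsilon$-approximate solution of LP~\ref{lp:init} — in time $\mathrm{poly}(rn,\log|\cG|,T,\log\frac1\epsilon)$.

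Finally I would assemble the per-round and overall runtime. At the start of round $t$, computing each $C^{\ell,u}_{t-1}(x_t)$ via Equation~\eqref{eqn:def-C-interval} is a sum over $G\in\cG(x_t)$, done in time $O(|\cG|)$ (or $O(|\cG(x_t)|)$ when that set is efficiently enumerable), across all $O((rn)^2)$ intervals; this is the only place the dependence on $|\cG|$ is worse than logarithmic. The Ellipsoid call is $\mathrm{poly}(rn,\log|\cG|,T,\log\frac1\epsilon)$, and sampling from $Q^L_t$ is $O((rn)^2)$. Summing over $T$ rounds yields total runtime $\mathrm{poly}(rn,|\cG|,T,\log\frac1\epsilon)$ with the $|\cG|$-dependence being $O(|\cG|)$, and the multivalidity guarantees then follow from Theorems~\ref{thm:intervalmulti} and~\ref{thm:hp-interval-calibration} applied to the $\epsilon$-approximate equilibrium — an approximation-tracking computation analogous to Lemma~\ref{lem:momentepsmultivalidity} and Lemma~\ref{lem:lpapprox}, which I would defer to the appendix. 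The main obstacle is the bookkeeping in step (ii): making sure the irrational exponential coefficients are handled cleanly by Lemma~\ref{lem:lpapprox} and that the $\epsilon$-error introduced by truncation propagates to exactly the stated additive $2\epsilon$ (resp.\ $\epsilon$) terms in the final $\alpha$ bounds, rather than anything worse.
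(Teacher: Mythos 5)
Your proposal follows essentially the same route as the paper's proof: reduce to finitely many (greed-induced) constraints to get a polyhedron, truncate the exponential coefficients to $O(\log\frac1\epsilon)$ bits and invoke Lemma~\ref{lem:lpapprox}, bound each entry of $A$ by $2|\cG|\exp(T)$ to get polynomial bit complexity, plug the separation oracle of Lemma~\ref{lem:greed} into Theorem~\ref{thm:ellipsoid}, and then assemble the per-round and total runtime. The only divergences are cosmetic bookkeeping (e.g.\ you state the objective's bit complexity as $O(1)$ where the paper writes $O((rn)^2)$, counting the zero coefficients explicitly), which do not affect the argument.
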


\begin{proof}
To ensure the Ellipsoid has polynomial runtime, we need to satisfy the conditions of Theorem~\ref{thm:ellipsoid}.

We first check that the feasible set of Linear Program~\ref{lp:init} is a polyhedron, i.e.\ that it has \emph{finitely many faces}. By Lemma~\ref{lem:greed} above, the adversary always has a \emph{greed-induced} best-response $Q^A$ constructed by Algorithm~\ref{alg:oracle1}. Every distribution $Q^A$ output by Algorithm~\ref{alg:oracle1} corresponds to selecting $\lfloor \frac{1}{\rho}\rfloor$ ``full" buckets that will have probability $\rho$ each and one bucket for the remaining probability mass, so there are at most $rn \cdot {rn \choose \lfloor \frac{1}{\rho}\rfloor} = O(rn \cdot 2^{rn})$ such distributions. The feasible set of Linear Program~\ref{lp:init} is thus equivalently given by the corresponding finitely many ($O(rn \cdot 2^{rn})$) constraints. 

Thus, the feasible region of LP~\ref{lp:init} is indeed a polyhedron; however, exponential terms in the coefficients of the constraints associated with the adversarial best-responses (which are due to our definition of the soft-max surrogate loss) prevent it from being \emph{rational}. To fix this, we only keep $O(\log \frac{1}{\epsilon})$ bits of precision after the integer part of every coefficient of the original LP, resulting in a new LP whose coefficients are all rational and within $\pm \frac{\epsilon}{2}$ from their original values in LP~\ref{lp:init}. The new LP indeed has a rational polyhedron as its feasible region.

We now observe that Linear Program~\ref{lp:init} has the form given in Lemma~\ref{lem:lpapprox}. This implies that by solving the just described rational LP corresponding to LP~\ref{lp:init} \emph{exactly}, we will obtain the desired \emph{$\epsilon$-approximate} solution to Linear Program~\ref{lp:init}. With this in mind, it remains to bound the bit complexity of the rational LP.


Consider any constraint of the rational LP. The coefficient of each variable $Q^L((\ell,u))$ has absolute value at most:
\begin{eqnarray*}
\max_{(\ell,u) \in \cPinterval}\sum_{G \in \cG}\exp(\eta V_{t-1}^{G,(\ell, u)}) -  \exp(-\eta V_{t-1}^{G,(\ell,u)}) &\leq& |\mathcal{G}| 2\exp\left(\eta \max_{G \in \cG, (\ell,u) \in \cPinterval} \left|V_{t-1}^{G,(\ell,u)} \right|\right) \\
&\leq& 2|\cG|\exp(\eta T) \\
&\leq& 2|\cG|\exp(T).
\end{eqnarray*}
Thus, every constraint in the rational LP has bit complexity at most: \[O\left((rn)^2 \cdot \left(\log |\cG| + T + \log \frac{1}{\epsilon}\right) \right),\] where the $\log \frac{1}{\epsilon}$ term reflects the chosen precision. This is polynomial in $r, n, T, \log |\cG|$, and $\log \frac{1}{\epsilon}$. Also, the objective function, which is simply $\gamma$, takes $O((rn)^2)$ bits to write down.

We may now apply Theorem~\ref{thm:ellipsoid} with the parameters $q = O((rn)^2)$, $\phi = O\left((rn)^2(\log |\cG| + T + \log \frac{1}{\epsilon}) \right)$, 
$c = O((rn)^2)$. The runtime of the separation oracle (which, we note, applies to the rational LP just as it did for the original LP) is $O((rn)^3)$ by Lemma~\ref{lem:greed}. Hence, the Ellipsoid algorithm will solve Linear Program~\ref{lp:init} with accuracy $\epsilon$ in time $\mathrm{poly}(rn, \log |\mathcal{G}|, T, \log \frac{1}{\epsilon})$.

Hence, Algorithm~\ref{alg:intervalalg} has time complexity $\mathrm{poly}(rn, |\cG|, T, \log \frac{1}{\epsilon})$ --- where the dependence on $|\cG|$ is linear, because we precompute the $C_{t-1}^{\ell, u}$'s once at the beginning of each round $t$, taking time linear in $|\cG|$, and the runtime of the Ellipsoid algorithm is polylogarithmic in $|\cG|$. (We remark once more that the dependence on $|\cG|$ can be reduced to a dependence on $|\cG(x_t)|$ if $\cG(x_t)$ is efficiently enumerable, and that this might be much smaller.)
\end{proof}

Finally, we need to demonstrate that the claimed multivalidity guarantees (which are a function of the chosen $\epsilon > 0$) indeed hold.
\begin{restatable}{lemma}{intervalepsmultivalidity}
Algorithm~\ref{alg:intervalalg} achieves the  multivalidity guarantees stated in Theorem \ref{thm:interval-efficient}.
\end{restatable}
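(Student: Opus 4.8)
The plan is to mirror the (deferred) proof of Lemma~\ref{lem:momentepsmultivalidity}: show that approximately solving Linear Program~\ref{lp:init} at each round produces a learner strategy $Q^L_t$ that is a near-equilibrium of the zero-sum game underlying Lemma~\ref{lem:existsinterval}, carry an additive $\epsilon$ error term through the surrogate-loss recursion, and then re-run the telescoping arguments behind Theorems~\ref{thm:intervalmulti} and~\ref{thm:hp-interval-calibration}, this time absorbing $\epsilon$ into the modified choices of $\eta$ specified in Theorem~\ref{thm:interval-efficient}. First I would argue that $Q^L_t$ is a near-equilibrium. By Lemma~\ref{lem:greed}, Algorithm~\ref{alg:intervalseparation} correctly computes the adversary's best response (the greed-induced optimum of a fractional knapsack) and hence is a valid polynomial-time separation oracle, so the Ellipsoid algorithm solves the rational truncation of Linear Program~\ref{lp:init} exactly; by Lemma~\ref{lem:lpapprox}, the returned $Q^L_t$ is then a feasible point of the untruncated Linear Program~\ref{lp:init} whose objective value is within $\epsilon$ of optimal. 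Since the optimum of Linear Program~\ref{lp:init} is exactly the min--max value of the game with objective $v_\delta((\ell,u),y)\,C^{\ell,u}_{t-1}(x_t)$, which is at most $L_{t-1}\rho$ by the value-of-the-game computation in Lemma~\ref{lem:existsinterval} (via Observation~\ref{obs:smooth}), feasibility of $Q^L_t$ gives $\max_{Q^A\in\hat{\mathcal{Q}}_{\rho,rn}}\E_{(\ell,u)\sim Q^L_t,\,y\sim Q^A}[v_\delta((\ell,u),y)\,C^{\ell,u}_{t-1}(x_t)]\le L_{t-1}\rho+\epsilon$.

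Next I would translate this into a per-round surrogate-loss bound. By Lemma~\ref{lem:discreteok}, any $(\rho,rn)$-smooth label distribution the adversary actually plays can be replaced by its discretization in $\hat{\mathcal{Q}}_{\rho,rn}$ without changing $\E[v_\delta((\ell,u),\ty_t)\,C^{\ell,u}_{t-1}]$ (because $v_\delta((\ell,u),\cdot)$ with $(\ell,u)\in\cPinterval^{rn}$ depends only on which grid cell the label lies in), so the bound from the previous step applies to the true adversary as well. Feeding this into Lemma~\ref{lem:deltaboundinterval} yields $\E_{(\bell,\bu)\sim Q^L_t}[\Delta_t(\pi_{t-1},x_t,(\bell_t,\bu_t))]\le \eta(L_{t-1}\rho+\epsilon)+2\eta^2 L_{t-1}$, i.e.
\[
    \E[\tL_t\mid\pi_{t-1}]\le L_{t-1}\bigl(1+\eta\rho+2\eta^2\bigr)+\eta\epsilon\le L_{t-1}\bigl(1+\eta(\rho+\epsilon)+2\eta^2\bigr),
\]
the last inequality using $L_{t-1}\ge L_0\ge 1$. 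This is the analogue of Corollary~\ref{cor:intervalexists} with the extra $\epsilon$ carried along.

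For the in-expectation bound I would then apply the first part of Theorem~\ref{thm:general-bounds} to $\tL_t$ with $X_0=2|\cG|n^2$ and $c=\rho$, but tracking the geometric sum of the additive $\eta\epsilon$ terms so as to obtain a bound of the form $\E[\tL_T]\le(2|\cG|n^2+\epsilon')\exp(T\eta\rho+2T\eta^2)$ for a quantity $\epsilon'$ comparable to $\epsilon$, and then repeat verbatim the chain of inequalities in the proof of Theorem~\ref{thm:intervalmulti} (Jensen, $\exp(\eta|V|)\le\exp(\eta V)+\exp(-\eta V)$, $\max\le\sum$), dividing by $\eta T$ and choosing $\eta=\sqrt{\ln(2|\cG|n^2+\epsilon)/(2T)}$ to land on $\E[\alpha]\le\rho+2\sqrt{2\ln(2|\cG|n^2+\epsilon)/T}$. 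For the high-probability bound I would first observe that the martingale-increment bound (the analogue of Lemma~\ref{lem:martingale-bounded-mean}) depends only on the fact that each $V^{G,(i,j)}$ changes by at most $1$ per round, since $v_\delta((\ell,u),y)\in\{-(1-\delta),\delta\}$, and not on which $Q^L_t$ the learner plays, so $|Z_t-Z_{t-1}|\le 2\eta$ is unaffected; I would then apply the second part of Theorem~\ref{thm:general-bounds} with the perturbed drift and choose $\eta=\sqrt{(\ln(2|\cG|n^2)+\epsilon T)/(2T)}$ so the extra error is swallowed into a modified $2\eta^2$ contribution, yielding $\alpha\le\rho+4\sqrt{\tfrac2T\ln(2|\cG|n^2/\lambda)+2\epsilon}$ with probability $1-\lambda$. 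Observation~\ref{obs:multicoverage} then converts these bounds on $\max_{G,(i,j)}|V_T^{G,(i,j)}|$ into the claimed $(\alpha,n)$-multivalidity for Algorithm~\ref{alg:intervalalg}.

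The conceptual ingredients are all already established (Lemma~\ref{lem:existsinterval}, Lemma~\ref{lem:greed}, Lemma~\ref{lem:lpapprox}, Theorem~\ref{thm:general-bounds}), so I expect no new ideas are needed; the main obstacle is purely the bookkeeping in the last step --- correctly propagating the additive $\epsilon$ from the approximate LP solve through the telescoped (resp.\ Azuma-type) recursion and verifying that it is absorbed exactly by the stated modifications of $\eta$, while also double-checking that every estimate in the first two steps (separation-oracle correctness, and the continuous-to-discrete reduction for the adversary's label distribution) continues to hold when the learner plays the approximate rather than the exact equilibrium strategy.
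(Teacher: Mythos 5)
Your proposal tracks the paper's own proof almost step for step: identify the per-round $\epsilon$ slack from the approximate LP solve via Lemmas~\ref{lem:greed} and~\ref{lem:lpapprox}, propagate it through the recursion of Theorem~\ref{thm:general-bounds}, and absorb it by re-tuning $\eta$ as specified in Theorem~\ref{thm:interval-efficient}. The one place you deviate slightly --- folding $\eta\epsilon$ multiplicatively into the drift as $(1+\eta(\rho+\epsilon)+2\eta^2)$ --- would land you on $\E[\alpha]\le\rho+\epsilon+2\sqrt{2\ln(2|\cG|n^2)/T}$ rather than the stated $\rho+2\sqrt{2\ln(2|\cG|n^2+\epsilon)/T}$, so you are right to discard that option in favor of tracking the additive $\eta\epsilon\le\epsilon$ term; the paper keeps it additive throughout, telescopes to $\E[\tL_T]\le(2|\cG|n^2+\epsilon T)\exp(T\eta\rho+2T\eta^2)$, and then re-labels $\epsilon\leftarrow\epsilon'/T$ to hit the exact form of Theorem~\ref{thm:interval-efficient} --- precisely the bookkeeping you flag as the last remaining obstacle. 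Your explicit observations that Lemma~\ref{lem:discreteok} and the martingale-increment bound are insensitive to which $Q^L_t$ the learner plays are correct and worth making, even though the paper leaves them implicit.
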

The proof of this lemma involves repeating several calculations from Section \ref{sec:intervalexistential} with an $\epsilon$ error term, and so is deferred to the Appendix.
\end{proof}

\section{Augmenting an Existing Learning Algorithm}
For simplicity of exposition, throughout this paper, we have described our algorithms as predicting properties of the arriving labels $y_t$ directly. But often that is not what we want: instead, we have some procedure $f_t:\cX\rightarrow \cY$ making point predictions --- that is, mapping features to labels --- and we are interested in properties of the \emph{residuals} $f_t(x_t) - y_t$. For example, $f_t$ may be some complicated (but powerful) learning procedure --- for example, maybe at every round, we train a neural network on the data we have observed so far to predict the labels of new observations. It may be that the labels $y$ have high variance, but that the residuals $y_t-f_t(x_t)$ are tightly concentrated around zero (because $f_t$ is highly accurate). To quantify the uncertainty of our predictions, we want to provide prediction intervals related to our predictions $f_t(x_t)$ --- that is, to compute prediction intervals for the residuals. We may similarly be interested in the variance of the residuals, etc. 

We can easily use the algorithms we have developed in this paper for this. We have no understanding of $f_t$ or the distribution on predictions $f_t(x_t)$ it induces (say, because $f_t$ varies substantially from round to round because of retraining) --- but because our algorithms handle adversarially chosen sequences of examples, they apply equally well when we feed them the residuals rather than the original labels. We have derived our algorithms under the scaling that $y_t \in [0,1]$, and the residuals $y_t - f_t(x_t)$ may lie in $[-1,1]$, so to apply the same bounds we have derived, we need to compute \emph{centered residuals} $y'_t = \frac{1}{2} + \frac{1}{2}(y_t-f_t(x_t))$. (This simply corresponds to a rescaling and a shift so that the residuals again lie in $[0,1]$. Thus, the following algorithm is able to provide prediction intervals around the predictions of an arbitrary sequence of predictors $f_t$ (and similar constructions work for predicting means and variances of the residuals): 

\begin{algorithm}[H]
\label{alg:conformal}
\SetAlgoLined
\begin{algorithmic}
\STATE Instantiate $\mathcal{A}$, a copy of Algorithm~\ref{alg:intervalalg}.
\FOR{$t=1,\ldots,T$}
    \STATE Observe $x_t$, and compute a point prediction $f_t(x_t)$ (for an arbitrary procedure $f_t$).
    \STATE Feed $x_t$ to $\mathcal{A}$ and receive a prediction interval $(\bell_t,\bu_t)$.
    \STATE Output point prediction $f_t(x_t)$ and prediction interval $(f_t(x_t) + 2\bell_t-1, f_t(x_t) + 2\bu_t-1)$.
    \STATE Observe $y_t$ and feed the centered residual $y'_t = \frac{1}{2} + \frac{1}{2}(y_t-f_t(x_t))$ to $\mathcal{A}$
\ENDFOR
\end{algorithmic}
\caption{Endowing Arbitrary Point Predictors with Prediction Intervals}
\label{alg:conformal-arbit}
\end{algorithm}

We observe that $y_t \in [f_t(x_t) + 2\bell_t-1, f_t(x_t) + 2\bu_t-1)$ if and only if $y'_t \in [\bell_t,\bu_t)$ by construction, and so the prediction intervals produced by Algorithm~\ref{alg:conformal-arbit} inherit the $(\alpha,n)$-multivalidity guarantees of Algorithm~\ref{alg:intervalalg} (Theorems \ref{thm:intervalmulti} and \ref{thm:hp-interval-calibration}): that with probability $1-\lambda$:
\[ 
\alpha \leq \rho + 4 \cdotk \sqrt{\frac{2}{T} \ln\left(\frac{2|\cG|n^2}{\lambda} \right)}.
\]
(the bound on expected multivalidity error holds as well).
Here $\rho$ is a smoothness parameter that depends on both the discretization $r$ we choose for our algorithm and the distribution over residuals at each round. Note that as discussed in Remark~\ref{rem:perturb}, with an appropriate selection of $r$, for any $\epsilon > 0$, we can make $\rho$ as small as we like by perturbing the centered residuals $y_t'$ with uniform noise $U(-\epsilon,\epsilon)$, at the cost of needing to widen our prediction intervals by $\epsilon$ on each end, i.e. predicting at each round: 
$$(f_t(x_t) + 2\bell_t-1-\epsilon, f_t(x_t) + 2\bu_t-1+\epsilon).$$
The computational cost of this is polynomial in $1/\epsilon$ and $1/\rho$, and the gain that we get by applying these perturbations is that we need assume nothing at all about either the adversarial sequence of examples, or about the properties of our predictors $f_t$. 

\subsection*{Acknowledgements}
We thank Aaditya Ramdas for helpful discussions about conformal prediction, as well as pointers to the literature. We thank Sergiu Hart, Dean Foster, Drew Fudenberg, and Rakesh Vohra for helpful discussions about calibration, as well as pointers to the literature. We also thank Ashish Rastogi for discussions about uncertainty estimation in practice. 
Gupta, Jung, Noarov, and Roth are supported in part by NSF grants CCF-1763307 and CCF-1934876, and a grant from the Simons Foundation. Pai is supported in part by NSF grant CCF-1763349.

\bibliographystyle{plainnat}
\bibliography{refs}

\appendix

\section{Batch Prediction}
\label{sec:batch}
\subsection{Preliminaries}
In the batch setting, there is an (unknown) probability distribution $\cD$ over $\cX \times \cY$. Let $\cD_\cX$ refer to the induced marginal distribution on $\cX$ and let $\cD_{\cY}$ refer to the induced marginal distribution on $\cY$. In the batch setting, rather than talking about a sequence of predictions, we need to refer to calibration properties of a single predictor with respect to the data distribution. We  here modify the definition of consistency and calibration accordingly --- but we will show how to convert calibration guarantees from the online setting to calibration guarantees in the offline setting. 

Given $n$ independent draws from  $\cD$, denoted by  $D=\{(x_t, y_t)\}_{t=t}^T$ the corresponding dataset. Given some $x$, our goal is to predict various properties of $\cD|x$.

\paragraph{Mean Predictions}
For mean prediction, we use a (possibly randomized) predictor $\bmu: \cX \to [0,1]$ that tries to predict the conditional mean $\E[y | x]$. Given a set $S \subseteq \cX$, we write 
\[
\mu(S) = \E_{\bmu}[\E_{\cD}[y|x \in S]], \quad \bmu(S) = \E_{\bmu}[\E_{\cD}[\bmu(x)] | x \in S]]
\]
for the conditional mean of labels on the distribution conditional on $x \in S$ and our conditional mean prediction. For calibration guarantees, we will be  concerned with sets that  depend on realizations of the randomized predictor $\bmu$, so it is important that in the above expressions, $S$ appears inside the expectation over $\bmu$.  Otherwise, we essentially use the same notation as in the online setting except instead of averaging over the empirical distribution, we average over the true distribution.

As in the online setting, we ``bucket'' our real valued predictions into $n$ buckets of width $\frac{1}{n}$, which serves as a measure of granularity of our calibration guarantee. Given a set $S\subseteq \cX$ and mean  predictor $\bmu$, we write 
\[S(\bmu, i) \equiv \left\{x \in S: \bmu(x) \in \Bm(i) \right\}\]
to be the set of points in $S$ whose mean predictions fall into the $i\tth$ bucket. When $\bmu$ is a randomized predictor, we think of $S(\bmu,i)$ as a random set where the randomness is over the random bits of $\bmu$.

\begin{definition}[Mean Consistency]
Call a mean predictor $\bmu$  $\alpha$-mean consistent on a set $S$ over distribution $\cD$ if 
\[
\left\vert \mu\left(S\right) - \bmu\left(S\right)  \right\vert \le \frac{\alpha }{\Pr_{\bmu, \cD_{\cX}}[x \in S]}.
\]

We note that we include the randomness of $\bmu$ when writing the measure of the set $S$ because we will be interested in random sets $S$ defined as a function of randomized predictors $\bmu$.
\end{definition}

We are now ready to define calibration, which asks for mean consistency on particular sets defined by the mean predictor itself:
\begin{definition}[Mean-Multicalibration] Fix a set $S \subseteq \cX$ and a true distribution $\cD$.
 A mean predictor $\bmu$ is $(\alpha, n)$-mean calibrated on a set $S$ over distribution $\cD$ if it is $\alpha$-mean consistent on every set $S(\bmu,i)$ over $\cD$, i.e. if  for each $i \in [n]$:
\[
\left\vert \mu\left(S(\bmu, i)\right) - \bmu(S(\bmu, i)) \right\vert \le \frac{\alpha }{\Pr_{\bmu, \cD_{\cX}}[x \in S(\bmu, i)]}.
\]
We say that $\bmu$ is  $\alpha$-mean multicalibrated  with respect to (a collection of sets) $\cG$ over $\cD$ if it is  $\alpha$-mean calibrated on every $G \in \cG$ over $\cD$.
\end{definition}

\paragraph{(Mean, Moment) Prediction}
In this case, we use a (randomized) predictor $\bmu: \cX \to [0,1]$ that tries to predict the conditional label mean $\E[y|x]$ and a  (randomized) predictor $\bmk: \cX \to [0,1]$ that tries to predict the conditional $k\tth$ central moment of the label distribution $\mk(x) = \E[(y - \E[y|x])^k | x]$. We again assume that $k$ is even so that the range of the $k\tth$ moment remains non-negative, but there is no obstacle other than notation to handling odd moments as well. Although for notational convenience we write $\bmk$ and $\bmu$ as separate functions, they may use correlated randomness. 

Analogously to our notation for mean prediction, we write for any $S \subseteq \cX$,
\[
\mk(S) = \E_{\bmu,\bmk}[\E_\cD[(y- \mu(S))^k | x \in S]] \quad \bmk(S) = \E_{\bmu,\bmk}[\E_{\cD}[ \bmk(x)| x \in S]].
\]
to denote the empirical $k^{\text{th}}$ central moment of the label distribution on the subsequence $S$ and for the average of the moment prediction on $S$, respectively. 

\begin{definition}[Moment Consistency]
We say that $(\bmu,\bmk)$ is $\alpha$-moment consistent on set $S \subseteq \cX$ if 
\[ 
|\mk(S) - \bmk(S)| \le \frac{\alpha}{\Pr_{\bmu, \bmk, \cD_\cX}[x \in S]}.
\]
Once again we include the randomness of $\bmu,\bmk$ because we will be concerned with sets that are defined in terms of $\bmu$ and $\bmk$. 
\end{definition}

For any $S \subseteq \cX$ and $i \in [n], j \in [n']$, we write
\[
    S(\bmu, i, \bmk, j) = \left\{x \in S: \bmu(x) \in \Bm(i), \bmk(x) \in \Bmp(j) \right\}.
\]
In words, $S(\bmu, i, \bmk, j)$ corresponds to the subset of points in $S$ in which our predicted mean falls in $\Bm(i)$ and $\Bmp(j)$. 

\begin{definition}[Mean-Conditioned Moment Multicalibration] 
We say that $(\bmu, \bmk)$ is $(\alpha,\beta, n, n')$-mean-conditioned moment multicalibrated with respect to $\cG$ over $\cD$, if for every $i \in [n], j \in [n']$, and $G \in \cG$, we have that $\bmu$ is $\alpha$-mean consistent on $G(\bmu, i, \bmk, j)$ and  $\bmk$ is $\beta$-moment consistent on $G(\bmu, i, \bmk, j)$:
\begin{align*}
&    |\mu(G(\bmu, i, \bmk, j)) - \bmu(G(\bmu, i, \bmk, j))| \le \frac{\alpha}{\Pr_{\bmu, \bmk, \cD_\cX}[x \in G(\bmu, i, \bmk, j)]},\\
&    |\mk(G(\bmu, i, \bmk, j)) - \bmk(G(\bmu, i, \bmk, j))| \le \frac{\beta}{\Pr_{\bmu, \bmk, \cD_\cX}[x \in G(\bmu, i, \bmk, j)]} .
\end{align*}
\end{definition}

For convenience, we sometimes combine the mean and moment predictor into a single predictor $h: \cX \to [0,1] \times [0,1]$ and write $\hbmu(x) = h(x)[0]$ to refer to its mean prediction and $\hbmk(x) = h(x)[1]$ to refer to its moment prediction. Also, we write $h(x) \in \Bmmp(i,j)$ if $\hbmu(x) \in \Bm(i)$ and $\hbmk(x) \in \Bmp(j)$. If $n$ and $n'$ are clear from the context, we just write $h(x) \in B(i,j)$.

\paragraph{Interval Prediction}
In this case, we want to come up with randomized predictors $\bell: \cX \to [0,1]$ and $\bu: \cX \to [0,1]$ such that the probability that $y$ falls between $\bell(x)$ and $\bu(x)$ is approximately $1-\delta$ for some specified failure probability $\delta$.  Although for notational convenience we write $\bell$ and $\bu$ as separate functions, they may use correlated randomness.  Using the notation given in Section \ref{sec:prelims}, we wish to devise $\bell, \bu$ such that $\E[\mathrm{Cover}((\bell(x), \bu(x)), y) | x] \approx 1-\delta$.

For any $S \subseteq \cX$, we write
\[  
    \bH_{\bell, \bu}(S) = \E_{\bell, \bu}[\E_{\cD}[\mathrm{Cover}((\bell(x), \bu(x)), x) | x \in S]].
\]

We again bucket our coverage intervals using a discretization parameter $n$, using the same notation as we used for moment predictions. For any $S \subseteq \cX$ and $i \leq j \in [n]$, we write
\[
    S(\bell, i, \bu, j) = \left\{x \in S: \bell(x) \in \Bm(i), \bu(x) \in \Bm(j) \right\}.
\]
\newcommand{\hbu}{h^{\bu}}
\newcommand{\hbell}{h^{\bell}}

For simplicity, we combine $\bell$ and $\bu$ into a single predictor $h: \cX \to [0,1] \times [0,1]$ and write $\hbell(x) = h(x)[0]$ and $\hbu(x) = h(x)[1]$. We say $h(x) \in \Bm(i,j)$ if $\hbell(x) \in \Bm(i)$ and $\hbu(x) \in \Bm(j)$. Also, when $n$ is clear from the context, we just write $B(i,j)$.

We can now define multivalidity in a way analogous to how we have defined multicalibration. 

\begin{definition}
We say that interval predictor $(\bell, \bu)$ is $\alpha$-consistent on set $S$ with respect to the failure probability $\delta \in (0,1)$, if we have the following
\[
    |\bH_{\bell,\bu}(S) - (1-\delta)| \le \frac{\alpha }{\Pr_{\bell, \bu, \cD}[ x \in S]}.
\]
\end{definition}

\begin{definition}
The interval predictors $(\bell, \bu)$ are $(\alpha,n)$-multivalid with respect to $\delta$ and $\cG$ over $\cD$, if for every $i\le j \in [n]$ and $G \in \cG$, we have that the interval predictions are  $\alpha$-consistent on $G(\bell, i, \bu, j)$ with respect to coverage probability $1-\delta$:
\[
    |\bH_{\bell,\bu}(G(\bell, i, \bu, j)) - (1-\delta)| \le \frac{\alpha}{\Pr_{\bell, \bu, \cD}[G(\bell, i, \bu,  j)]}.
\]
\end{definition}


\subsection{Online to Batch Conversion}
In this section, we show how to use our online algorithms to solve the corresponding batch multicalibration problems. In doing so we obtain improved sample complexity bounds for mean and mean-conditioned moment multicalibration for the batch problem, compared to prior work \cite{multicalibration,momentmulti}. However, in contrast to prior work which in the batch case solves for deterministic predictors, we obtain a randomized predictor via our online-to-offline reduction.

Previously, for any sequence of feature and label pairs $\{(x_t, y_t)\}_{t=1}^T$, we have shown how to construct a sequence of randomized predictors $\{h_t\}_{t=1}^T$ such that the sequence of predictions made from the predictors $\{p_t = h_t(x_t)\}_{t=1}^T$ is multivalid. We viewed the functions $h_t(x)$ only implicitly before, but we consider them explicitly here: for mean multicalibration, $h_t(x)$ is simply the distribution on label predictions $\bmu$ that would be made by Algorithm \ref{alg:meantemplate} at round $t$, given as input $x_t = x$ after a history defined by the sequence of examples $\{(x_s,y_s)\}_{s=1}^{t-1}$. 

In this section, we show that if we have a sample $D = \{(x_t, y_t)\}_{t=1}^T$ that is drawn independently from $\cD$, we can feed each element in this sample $D$ one-by-one to our online learning algorithm so as to obtain a sequence of predictors $\{h_t\}_{t=1}^T$. From this, we construct a single (randomized) predictor $h$  that is multivalid over the distribution $\cD$. $h$ will simply be the uniform mixture over the set of predictors  $\{h_t\}_{t=1}^T$.

\subsubsection{Mean prediction}

\newcommand{\hmeanavg}{h^{\text{mean}}}
\newcommand{\hmomentavg}{h^{\text{mean, moment}}}
\begin{algorithm}[H]
\label{alg:online-batch-mean}
\SetAlgoLined
\begin{algorithmic}
\STATE \textbf{INPUT}: Training dataset $D = \{(x_t, y_t)\}_{t=1}^T$
\STATE \textbf{Training:} Run Algorithm \ref{alg:meantemplate} on the sequence of examples $D$ to generate a transcript $\pi_T$. 
\STATE Denote by $h_t(x)$ the (randomized) mapping from $\cX$ to $[0,1]$ that Algorithm \ref{alg:meantemplate} induces as a function of transcript $\pi_{t-1}$ (the prefix of $\pi_T$ of length $t-1$). 
\STATE \textbf{Prediction:} On input $x$, sample $\hmeanavg(x)$ by selecting $t \sim [T]$ uniformly at random, and then sampling from $h_t(x)$. 
\STATE \hrulefill 
\STATE More explicitly, select $t \sim [T]$ uniformly at random and:
	\STATE Compute for each $i \in [n]$	$C^i_{t-1}(x) $ as defined in \eqref{eqn:csi} conditioning on $\pi_{t-1}$. 
    \IF {$C^i_{t-1}(x) > 0$ for all $i \in [n]$}
        \STATE Predict $\hmeanavg(x) = 1$.
    \ELSIF {$C^i_{t-1}(x) < 0$ for all $i \in [n]$}
        \STATE Predict $\hmeanavg(x) = 0$.  
    \ELSE 
        \STATE Find $i^* \in [n-1]$ such that $C^{i^*}_{t-1}(,x) \cdot C^{i^*+1}_{t-1},x) \leq 0$
        \STATE Define $0 \leq q_t \leq 1$:  (using the convention that 0/0 = 1)
        $$q_t = \frac{|C^{i^*+1}_{t-1}(x)|}{|C^{i^*+1}_{t-1}(x)| + |C^{i^*}_{t-1}(x)|}$$
       
        \STATE Predict $\hmeanavg(x) = \frac{i^*}{n}- \frac{1}{rn}$ with probability $q_t$ and $\hmeanavg(x) = \frac{i^*}{n}$ with probability $1-q_t$.
    \ENDIF
\end{algorithmic}
\caption{Von Neumann's Batch Mean Multicalibrator}
\end{algorithm}

\begin{theorem}
\label{thm:batch-mean}
Let $D=\{(x_t, y_t)\}_{t=1}^T$ be a dataset drawn i.i.d. from $\cD$, and suppose $T$ is large enough such that $\eta$ specified in Theorem \ref{thm:hpcalibration} falls in $(0,1/2)$. Let $\epsilon, \lambda > 0$. For an appropriately small choice of the discretization parameter $r$, with probability $1-\lambda$, Algorithm \ref{alg:online-batch-mean} produces a predictor $\hmeanavg$ that is $(\alpha, n)$-mean multicalibrated with respect to $\cG$ over $\cD$ where \[
\alpha = \left(6+\epsilon\right) \cdotk \sqrt{\frac{2}{T} \ln\left(\frac{4|\cG|n}{\lambda} \right)}
\]
\end{theorem}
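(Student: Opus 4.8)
The plan is to run the online algorithm (Algorithm~\ref{alg:meantemplate}) on the i.i.d. sample $D$, and then argue that the uniform mixture $\hmeanavg$ inherits \emph{distributional} multicalibration by combining the online high-probability guarantee (Theorem~\ref{thm:hpcalibration}) with a martingale concentration argument that bridges the realized sequence and the distribution $\cD$. The only genuinely new ingredient is identifying the correct ``distributional counterpart'' of the empirical calibration error $V_T^{G,i}$ and controlling the gap between them.

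First I would set up notation. Fix $G \in \cG$, $i \in [n]$, and let $\mathcal{F}_t = \sigma(\pi_t)$ be the transcript filtration; the prediction rule $h_t$ used at round $t$ is $\mathcal{F}_{t-1}$-measurable. Write $v_t^{G,i} = \ind[\bmu_t \in \Bm(i),\, x_t \in G](y_t - \bmu_t)$ for the $t$-th increment of $V_T^{G,i}$. Since $(x_t,y_t)$ is drawn fresh from $\cD$ and $\bmu_t \sim h_t(x_t)$, the quantity $\bar V_t^{G,i} := \E[v_t^{G,i}\mid \mathcal{F}_{t-1}] = \E_{(x,y)\sim\cD,\ \bmu \sim h_t(x)}[\ind[\bmu \in \Bm(i),\, x\in G](y-\bmu)]$ is $\mathcal{F}_{t-1}$-measurable. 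The key identity is that, conditioned on the realized transcript $\pi_T$ (which fixes $h_1,\dots,h_T$), averaging these over $t\sim[T]$ reproduces exactly the unnormalized distributional calibration error of $\hmeanavg$ on $G(\hmeanavg,i)$: swapping $\E_{(x,y)\sim\cD}$ with $\E_{t\sim[T]}$,
\[
\Pr_{\hmeanavg,\cD_{\cX}}\!\big[x \in G(\hmeanavg,i)\big]\cdot\big(\mu(G(\hmeanavg,i)) - \hmeanavg(G(\hmeanavg,i))\big) \;=\; \frac{1}{T}\sum_{t=1}^T \bar V_t^{G,i}.
\]
So it suffices to show $\tfrac{1}{T}\big|\sum_t \bar V_t^{G,i}\big|$ is small for all $G,i$ with high probability.

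Next I would control $\sum_t \bar V_t^{G,i}$ in two pieces. The increments $D_t := v_t^{G,i} - \bar V_t^{G,i}$ form a martingale difference sequence adapted to $\{\mathcal{F}_t\}$ with $|D_t| \le 2$, so Azuma--Hoeffding together with a union bound over the $|\cG|n$ pairs $(G,i)$ gives, with probability at least $1-\lambda/2$, that $\big|V_T^{G,i} - \sum_t \bar V_t^{G,i}\big| \le 2\sqrt{2T\ln(4|\cG|n/\lambda)}$ for all $G,i$ simultaneously. Independently, invoking Theorem~\ref{thm:hpcalibration} with failure probability $\lambda/2$ in place of $\lambda$ gives, with probability at least $1-\lambda/2$, $|V_T^{G,i}| \le \tfrac{T}{rn} + 4\sqrt{2T\ln(4|\cG|n/\lambda)}$ for all $G,i$. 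On the intersection of these two events (probability $\ge 1-\lambda$), dividing by $T$, applying the triangle inequality, and using the identity above yields
\[
\big|\mu(G(\hmeanavg,i)) - \hmeanavg(G(\hmeanavg,i))\big| \;\le\; \frac{\tfrac{1}{rn} + 6\sqrt{\tfrac{2}{T}\ln(4|\cG|n/\lambda)}}{\Pr_{\hmeanavg,\cD_{\cX}}[x \in G(\hmeanavg,i)]},
\]
which is precisely $(\alpha,n)$-mean multicalibration over $\cD$ with $\alpha = \tfrac{1}{rn} + 6\sqrt{\tfrac{2}{T}\ln(4|\cG|n/\lambda)}$. Taking the discretization fine enough that $\tfrac{1}{rn} \le \epsilon\sqrt{\tfrac{2}{T}\ln(4|\cG|n/\lambda)}$ (which, as in the online theorems, costs nothing in runtime or convergence rate) absorbs the first term and gives the claimed bound.

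The main obstacle, and the only delicate step, is getting the conditioning exactly right: one must track that $h_t$ depends on $\pi_{t-1}$ (so the $\bar V_t^{G,i}$ are genuinely random), that the $t$-th training point is independent of the past, and — for the identity relating $\tfrac1T\sum_t\bar V_t^{G,i}$ to the distributional calibration error of $\hmeanavg$ — that conditioned on $x$, the label $y$ is independent of both the mixture index $t$ and the sampled prediction $\bmu$. Everything else (the Azuma bound, the union bound, the choice of $r$) is routine and mirrors Section~\ref{sec:meanexistential}. A minor point to dispatch cleanly is the case $\Pr_{\hmeanavg,\cD_{\cX}}[x \in G(\hmeanavg,i)]=0$, in which the consistency inequality holds vacuously.
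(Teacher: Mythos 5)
Your proposal is correct and follows essentially the same route as the paper's proof: you reduce distributional multicalibration to the unnormalized expectation $\frac1T\sum_t \E_{(x,y)\sim\cD,h_t}[\cdot]$, bound the empirical term via Theorem~\ref{thm:hpcalibration} at confidence $1-\lambda/2$, bridge to the distributional quantity via an Azuma martingale bound with a union bound over $(G,i)$, and combine with the triangle inequality to obtain the $(6+\epsilon)$ constant after absorbing $\tfrac{1}{rn}$ by choice of $r$. The decomposition, martingale, failure-probability split, and final arithmetic all match the paper's argument.
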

\begin{proof}
In order to show that $\hmeanavg$ is $(\alpha, n)$-mean multicalibrated with respect to $\cG$ over $\cD$, it is sufficient to show for all $G \in \cG$ and $i \in [n]$
\begin{align*}
     \left\vert \E_{(x,y) \sim \cD, \hmeanavg}\left[\ind[\hmeanavg(x) \in B(i), G(x) = 1] \cdot \left(y - \hmeanavg(x) \right)\right]  \right\vert \le \alpha.
\end{align*}
We can calculate:
\begin{align}
    &\E_{(x,y) \sim \cD, \hmeanavg}\left[ \ind[\hmeanavg(x) \in B(i), G(x) = 1] \cdot \left( y-\hmeanavg(x)\right)\right] \nonumber \\
    &=  \sum_{(x,y)} \sum_{t=1}^T \cD[(x,y)] \cdot \Pr[\hmeanavg = h_t]  \cdot \Pr[h_t(x) \in B(i)] \cdot \ind[G(x) = 1] \cdot \left( y-h_t(x)\right)  \nonumber\\
    &=  \frac{1}{T} \sum_{(x,y)} \sum_{t=1}^T \cD[(x,y)] \cdot \Pr[h_t(x) \in B(i)] \cdot \ind[G(x) = 1] \cdot \left( y-h_t(x)\right)  \nonumber\\
    &=\frac{1}{T} \sum_{t=1}^T  \E_{(x, y) \sim \cD, h_t}\left[\ind[h_t(x) \in B(i), G(x) = 1]\cdot \left( y-h_t(x)\right)\right]  \label{eqn:online-to-batch-cal-error}
\end{align} 
Therefore, our goal is to upper bound the absolute value of \eqref{eqn:online-to-batch-cal-error}. We will show that if $D=\{(x_t, y_t)\}_{t=1}^T$ is sampled i.i.d. from $\cD$, the  empirical calibration error on the transcript $\pi_T$ generated during training  serves as a good estimate for \eqref{eqn:online-to-batch-cal-error}. And because we know from Theorem \ref{thm:hpcalibration} that for every sequence of examples, Algorithm $2$ produces predictions that will be empirically calibrated with high probability, our bound will follow. 

In particular, we know from Theorem \ref{thm:hpcalibration} that (for an appropriate choice of $r$) with probability $1-\lambda/2$ over the randomness of $\pi_T$ produced in training that for all $i \in [n], G \in \cG$:
\begin{align*}
    \left| \frac{1}{T}\sum_{t=1}^T \ind\left[\bmu_t \in B(i) , G(x_t)=1\right] \cdot \left(y_t - \bmu_t \right)\right| \le \left( 2+\epsilon\right) \cdotk \sqrt{\frac{2}{T} \ln\left(\frac{4|\cG|n}{\lambda} \right)}.
\end{align*}

Now, fixing $G \in \cG$ and $i \in [n]$, we use the following martingale argument to show that $\eqref{eqn:online-to-batch-cal-error}$ is close to the empirical calibration error with respect to $G$ and $i$  with high probability. Consider the following martingale sequence adapted to the filtration $\mathcal{F}_s = \sigma(\{(x_t, y_t), \bmu_t\}_{t=1}^s)$:
\[
    \tZ_s = Z_{s-1} + \E_{(x, y) \sim \cD, h_s}\left[\ind\left[h_s(x) \in B(i) , G(x)=1\right] \cdot \left(y - h_s(x)\right) | \pi_{s-1} \right] - \ind\left[\bmu_s \in B(i) , G(x_s)=1\right] \cdot \left(y_s - \bmu_s \right).
\]
 It's easy to see that the above sequence is a martingale: because
 \begin{align*}
     &\E_{(x, y) \sim \cD, h_s}\left[\ind\left[h_s(x) \in B(i) , G(x)=1\right] \cdot \left(y - h_s(x)\right) | \pi_{s-1} \right]  \\
     &= \E_{(x_s, y_s) \sim \cD, \bmu_s}\left[\ind\left[\bmu_s \in B(i) , G(x_s)=1\right] \cdot \left(y_s - \bmu_s \right) | \pi_{s-1}\right],
 \end{align*}
and so we have $\E[\tZ_s] = Z_{s-1}$.

Therefore, because $|Z_s - Z_{s-1}| \le 2$, we can apply Azuma's inequality (Lemma~\ref{lem:azuma}) to get that with probability $1-\lambda/2$ over the randomness of $\pi_T$ and $D$,
\[
     \left\vert \sum_{t=1}^T \E_{(x, y) \sim \cD, h_t}\left[\ind\left[h_t(x) \in B(i) , G(x)=1\right] \cdot \left(y - h_t(x) \right)\right] - \sum_{t=1}^T \ind\left[\bmu_t \in B(i) , G(x_t)=1\right] \cdot \left(y_t - \bmu_t \right) \right\vert \le 2\sqrt{2T\ln\left(\frac{4}{\lambda}\right)}.
\]
Therefore, Union bounding the above Azuma's inequality over all $i \in [n]$ and $G \in \cG$ gives us the result: we have with probability $1-\lambda$ over the randomness of $\cD$ and $\pi_T$,
\begin{align*}
    &\frac{1}{T} \left\vert \sum_{t=1}^T \E_{(x, y)\sim \cD, h_t}\left[\ind\left[h_t(x) \in B(i) , G(x)=1\right] \cdot \left(y - h_t(x) \right)\right] \right\vert, \\
     \le& \frac{1}{T} \left\vert \sum_{t=1}^T \ind\left[\bmu_t \in B(i) , G(x_t)=1\right] \cdot \left(y_t - \bmu_t \right)\right\vert + 2\sqrt{\frac{2 \ln\left(\frac{4|\cG|n}{\lambda}\right)}{T}},\\
     \le& \left(6+\epsilon\right) \cdotk \sqrt{\frac{2}{T} \ln\left(\frac{4|\cG|n}{\lambda} \right)}
\end{align*}
for every $i \in [n]$ and $G \in \cG$.
\end{proof}

\subsubsection{(Mean, Moment) Prediction}
We can use the same argument to show that we can feed $D=\{(x_t, y_t)\}_{t=1}^T$ drawn i.i.d. from $\cD$ into our Algorithm \ref{alg:momentalg} to obtain a randomized predictor $\hmomentavg$ that is $(\alpha, \beta, n, n')$-mean-conditioned-moment multicalibrated with respect to $\cG$ over $\cD$.

\begin{algorithm}[H]
\label{alg:online-batch-moment}
\SetAlgoLined
\begin{algorithmic}
\STATE \textbf{INPUT}: Training dataset $D = \{(x_t, y_t)\}_{t=1}^T$
\STATE \textbf{Training:} Run Algorithm \ref{alg:momentalg} on the sequence of examples $D$ to generate a transcript $\pi_T$. 
\STATE Denote by $h_t(x)$ the (randomized) mapping from $\cX$ to $[0,1] \times [0,1]$ that Algorithm \ref{alg:momentalg} induces as a function of transcript $\pi_{t-1}$ (the prefix of $\pi_T$ of length $t-1$). 
\STATE \textbf{Prediction:} On input $x$, sample $\hmomentavg(x)$ by selecting $t \sim [T]$ uniformly at random, and then sampling from $h_t(x)$. 
\STATE \hrulefill 
\STATE More explicitly, select $t \sim [T]$ uniformly at random and:
    \STATE Compute  $C_{t-1}^{\bmu, \bmk}(x), D_{t-1}^{\bmu,\bmk}(x),F_{\ell, t-1}^{\bmu,\bmk}(x)$ for each $(\bmu,\bmk) \in \cPmeanreduced^{r,n} \times \cPmomentreduced^{r,n'}$ as in (\ref{eqn:def-C}, \ref{eqn:def-D}, \ref{eq:def-F1}, \ref{eq:def-Fk}) conditioning on $\pi_{t-1}$.
    \STATE Find an $\epsilon$-approximate solution to the linear program from Figure~\ref{lp:moments}, to obtain solution $Q_t^L \in \learnerstrat$. 
   \STATE Predict $\hmomentavg(x) = (\bmu, \bmk)$ with probability $Q^L_t((\bmu,\bmk))$.
\end{algorithmic}
\caption{Von Neumann's Batch Mean Moment Multicalibrator}
\end{algorithm}

\begin{theorem}
\label{thm:batch-moment}
Assume $T > 2 \ln(\frac{8|\cG|n \cdot n'}{\delta})$ and $T$ is sufficiently large such that $\eta$ used in Theorem \ref{thm:momentalg-efficient} is in $(0,1/2)$. Let $D=\{(x_t, y_t)\}_{t=1}^T$ be a dataset drawn i.i.d. from $\cD$. Let $\epsilon, \delta > 0$. For an appropriately small choice of the discretization parameter $r$, with probability $1-2\lambda$, Algorithm \ref{alg:online-batch-moment} produces a predictor $\hmomentavg$ that is $(\alpha, \beta, n, n')$-mean-conditioned moment multicalibrated with respect to $\cG$ over $\cD$ where \[
\alpha = \left(6 +\epsilon' \right) \cdotk \sqrt{\frac{2}{T} \ln\left(\frac{8|\cG|n \cdot n'}{\lambda} \right) + 2\epsilon} 
\]
\[
    \beta = (k+3)\left(\left(5 +\epsilon' \right) \cdotk \sqrt{\frac{2}{T} \ln\left(\frac{8|\cG|n \cdot n'}{\lambda} \right) + 2\epsilon} \right) + \frac{k}{2n}
\]
\end{theorem}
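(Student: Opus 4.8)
The plan is to mirror the online-to-batch argument of Theorem~\ref{thm:batch-mean}, upgraded to the two-dimensional (mean, moment) setting and combined with the nonlinearity lemma, Lemma~\ref{lem:nonlinear}, which converts control of linearly separable proxy errors into moment consistency. Write $\{h_t\}_{t=1}^T$ for the sequence of randomized predictors induced by running Algorithm~\ref{alg:momentalg} on $D$, so that $\hmomentavg$ predicts by drawing $t\sim[T]$ uniformly and then sampling from $h_t$. First I would reduce the distributional multivalidity desideratum to controlling, for each $G\in\cG$, $i\in[n]$, $j\in[n']$, the two quantities
\[
\bar V^{G,i,j} \equiv \frac1T\sum_{t=1}^T \E_{(x,y)\sim\cD,h_t}\!\left[\ind[h_t(x)\in B(i,j),\,x\in G]\,(y - h_t(x)[0])\right],
\]
\[
\bar M^{G,i,j} \equiv \frac1T\sum_{t=1}^T \E_{(x,y)\sim\cD,h_t}\!\left[\ind[h_t(x)\in B(i,j),\,x\in G]\,\big((y-\hat\mu_i)^k - h_t(x)[1]\big)\right].
\]
Expanding the prediction step of $\hmomentavg$ shows that these are exactly the distributional mean-calibration and bucket-centered moment-proxy errors of $\hmomentavg$ on the set $G(\hbmu,i,\hbmk,j)$. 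Then, by the distributional analogue of Lemma~\ref{lem:nonlinear}/Observation~\ref{obs:momentcalibration}, if $|\bar V^{G,i,j}|\le a$ and $|\bar M^{G,i,j}|\le a$ for all $G,i,j$, then $\hmomentavg$ is $(\alpha,\beta,n,n')$-mean-conditioned moment multicalibrated over $\cD$ with $\alpha$ a constant multiple of $a$ and $\beta$ of the claimed $(k+3)(\cdot)+\frac{k}{2n}$ shape, the extra additive constants over the online $(k+1)\alpha+\frac{k}{2n}$ absorbing the slack from bounding $|\hat\mu_i-\mu(G(\hbmu,i,\hbmk,j))|$ by the bucket half-width $\frac1{2n}$ plus the mean-consistency error. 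It is essential here that the moment proxy is centered at the bucket midpoint $\hat\mu_i$ as in Definition~\ref{def:meanmomentlosses}, so that $\bar M^{G,i,j}$ is genuinely an average of per-round terms and the mixture expansion is valid.

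Next I would bound $\bar V^{G,i,j}$ and $\bar M^{G,i,j}$ by their empirical counterparts $V^{G,i,j}_T$ and $M^{G,i,j}_T$ computed on the training transcript, up to an Azuma deviation term. Fixing $G,i,j$ and adapting to $\mathcal{F}_s=\sigma(\pi_s)$, consider the martingale $\tilde Z_s = Z_{s-1} + \E_{(x,y)\sim\cD,h_s}[\,\ind[h_s(x)\in B(i,j),x\in G](y-h_s(x)[0])\mid\pi_{s-1}] - \ind[\bmu_s\in B(i),\bmk_s\in B(j),x_s\in G](y_s-\bmu_s)$; since $D$ is i.i.d.\ from $\cD$, the $s$-th example has the same $\pi_{s-1}$-conditional law as a fresh draw, so $\E[\tilde Z_s\mid\pi_{s-1}]=Z_{s-1}$, and the increments are bounded by $2$ because $|y-\bmu|\le 1$. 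Azuma's inequality (Lemma~\ref{lem:azuma}) then gives $\frac1T|T\bar V^{G,i,j}-V^{G,i,j}_T|\le 2\sqrt{\frac{2\ln(1/\lambda')}{T}}$ with probability $1-\lambda'$; the identical argument applied to the moment proxy (whose per-round term lies in $[-1,1]$ since $k$ is even) bounds $\frac1T|T\bar M^{G,i,j}-M^{G,i,j}_T|$ the same way. Union-bounding over all $2|\cG|nn'$ such quantities with $\lambda'=\lambda/(2|\cG|nn')$ adds a term $2\sqrt{\frac2T\ln(\frac{8|\cG|nn'}{\lambda})}$ to each of $|\bar V^{G,i,j}|$ and $|\bar M^{G,i,j}|$, consuming half the failure-probability budget.

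Finally I would invoke the high-probability guarantee of Theorem~\ref{thm:momentalg-efficient}, which via Observation~\ref{obs:momentcalibration} states that with probability $1-\lambda$ over the training transcript, $\frac1T|V^{G,i,j}_T|$ and $\frac1T|M^{G,i,j}_T|$ are each at most $\frac1{rn}+\frac1{rn'}+(4+\epsilon')\sqrt{\frac2T\ln(\frac{4|\cG|nn'}{\lambda})+2\epsilon}$ simultaneously for all $G,i,j$; taking $r$ large (legitimate since the runtime of Algorithm~\ref{alg:momentalg} is $r$-independent) folds $\frac1{rn}+\frac1{rn'}$ into the $\epsilon'$ slack. Combining this with the previous step by the triangle inequality and a union bound over the two failure events yields $|\bar V^{G,i,j}|,|\bar M^{G,i,j}|\le(6+\epsilon')\sqrt{\frac2T\ln(\frac{8|\cG|nn'}{\lambda})+2\epsilon}$ for all $G,i,j$ with probability $1-2\lambda$. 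Feeding $a=(6+\epsilon')\sqrt{\frac2T\ln(\frac{8|\cG|nn'}{\lambda})+2\epsilon}$ into the distributional nonlinearity lemma of the first step gives the stated $\alpha$ and $\beta$; the hypotheses $T>2\ln(\frac{8|\cG|nn'}{\lambda})$ and $T$ large enough that the $\eta$ of Theorem~\ref{thm:momentalg-efficient} lies in $(0,\frac12)$ are precisely what is needed for these estimates and for Theorem~\ref{thm:momentalg-efficient} to apply.

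The main obstacle is the first step: establishing the distributional version of Lemma~\ref{lem:nonlinear}, namely that control of the bucket-centered, linearly separable proxy errors $\bar V,\bar M$ over $\cD$ implies mean-conditioned moment consistency over $\cD$ with the slightly inflated constant $(k+3)$. This is the one place where the argument is not a mechanical transcription of the mean case: one must redo the $|a^k-b^k|\le k|a-b|$ estimate with $a-b=\hat\mu_i-\mu(G(\hbmu,i,\hbmk,j))$, being careful that the set $G(\hbmu,i,\hbmk,j)$ is itself random (defined through the internal randomness of the predictor $\hmomentavg$) so that, exactly as the definitions in Appendix~\ref{sec:batch} are arranged, both $\mu(\cdot)$ and the normalizing measure $\Pr[x\in\cdot]$ sit inside the expectation over that randomness, and then bound $|\hat\mu_i-\mu(\cdot)|$ by $\frac1{2n}$ plus the already-established mean-consistency error. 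Everything downstream --- the Azuma conversion from the i.i.d.\ sample to the transcript and the appeal to the online guarantee --- is a routine repetition of the corresponding steps in the proof of Theorem~\ref{thm:batch-mean}.
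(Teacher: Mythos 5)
Your proposal is correct in structure, but it takes a genuinely different route from the paper for the moment-consistency half of the argument. The paper does not define a quantity like your $\bar M^{G,i,j}$ centered at the bucket midpoint $\centerbucket$. Instead it keeps the target centered at the true conditional mean (what it calls $A^G_{i,j}$) throughout, and bridges the distributional--empirical gap with \emph{two} separate Azuma martingales: Lemma~\ref{lem:batch-moment-lem-true-mean} compares the distributional and empirical moment errors both centered at $A^G_{i,j}$, and Lemma~\ref{lem:emp-true-mean-batch} bounds $|A^G_{i,j}-{A'}^G_{i,j}|$ so that the $|a^k-b^k|\le k|a-b|$ step can recenter the empirical quantity to the empirical mean before invoking Theorem~\ref{thm:momentalg-efficient}. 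Your route instead fixes the center at $\centerbucket$ so that $\bar M^{G,i,j}$ matches $M^{G,i,j}_T/T$ exactly in form, needs only a \emph{single} Azuma martingale per $(G,i,j)$ for the moment proxy, and pushes all recentering into a distributional analogue of Lemma~\ref{lem:nonlinear}. The thing your approach buys is cleanliness and a tighter parallel with the online case — you avoid the comparison between two random conditional means entirely, and the ``one new lemma'' you flag is indeed the whole nonmechanical content. The thing the paper's approach buys is that it never has to state or prove a distributional version of Lemma~\ref{lem:nonlinear}, instead absorbing the recentering into an in-line chain of inequalities; the price is the extra Azuma lemma and a somewhat delicate bookkeeping between $A^G_{i,j}$, ${A'}^G_{i,j}$, and the empirical mean $\mu(G_T(i,j))$ of the online guarantee.

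One small point worth noting: because you perform a single Azuma conversion and then apply the $(k+1)\cdot$ recentering factor from the distributional nonlinearity lemma, the $\beta$ you would actually obtain is of the form $(k+1)(6+\epsilon')\sqrt{\cdot}+\tfrac{k}{2n}$, whereas the theorem states $(k+3)(5+\epsilon')\sqrt{\cdot}+\tfrac{k}{2n}$. These two expressions are not ordered uniformly in $k$ — your constant is smaller for $k\le 9$ but larger for $k\ge 10$ — so strictly speaking your route does not reproduce the stated bound for all $k$ without a further loosening step. Given how loose the constants in the statement already are (the paper's own chain actually yields $(k+2)(5+\epsilon')$), this is a cosmetic mismatch rather than a conceptual gap, but it is worth flagging that the two decompositions produce slightly different constants and you should not expect to match the stated $\beta$ literally.
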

\begin{proof}
Note that in order to show that $\hmomentavg$ is $(\alpha, \beta, n,n')$-mean-conditioned moment multicalibrated with respect to $\cG$ over $\cD$, it's sufficient to prove the following for every $i\in [n]$, $j \in [n']$, and $G \in \cG$:
\begin{enumerate}
    \item Mean Consistency 
        \begin{align*}
            &\left\vert \E_{(x,y) \sim \cD, \hmeanavg}\left[ \ind[\hmomentavg(x) \in B(i,j), G(x) = 1] \cdot \left(y-\hmomentavg(x)[0] \right)\right] \right\vert\\
            &=  \left\vert \sum_{(x,y)} \sum_{t=1}^T \cD[(x,y)] \cdot \Pr[\hmomentavg = h_t]  \cdot \Pr_{h_t}[h_t(x) \in B(i,j)] \cdot \ind[G(x) = 1] \cdot \left( y-\hbmu_t(x)\right) \right\vert \\
            &=  \frac{1}{T} \left\vert \sum_{(x,y)} \sum_{t=1}^T \cD[(x,y)] \cdot \Pr_{h_t}[h_t(x) \in B(i,j)] \cdot \ind[G(x) = 1] \cdot \left( y- \hbmu_t(x)\right)  \right\vert \\
            &=\frac{1}{T} \left\vert \sum_{t=1}^T  \E_{(x, y) \sim \cD, h_t}\left[\ind[h_t(x) \in B(i,j), G(x) = 1]\cdot \left( y-\hbmu_t(x)\right) \right]\right\vert \\
            &\le \alpha
        \end{align*}
    \item Moment Consistency
        \begin{align*}
            &\left\vert \E_{(x,y) \sim \cD, \hmeanavg}\left[ \ind[\hmomentavg(x) \in B(i,j), G(x) = 1] \cdot \left((y - A^G_{i,j})^k-\hmomentavg(x)[1] \right)\right] \right\vert \\
            &=  \left\vert \sum_{(x,y)} \sum_{t=1}^T \cD[(x,y)] \cdot \Pr[\hmomentavg = h_t]  \cdot \Pr_{h_t}[h_t(x) \in B(i,j)] \cdot \ind[G(x) = 1] \cdot \left( (y - A^G_{i,j})^k - \hbmk_t(x)\right) \right\vert \\
            &=  \frac{1}{T} \left\vert \sum_{(x,y)} \sum_{t=1}^T \cD[(x,y)] \cdot \Pr_{h_t}[h_t(x) \in B(i,j)] \cdot \ind[G(x) = 1] \cdot \left( (y-A^G_{i,j})^k - \hbmk_t(x)\right)  \right\vert \\
            &=\frac{1}{T} \left\vert \sum_{t=1}^T  \E_{(x, y) \sim \cD, h_t}\left[\ind[h_t(x) \in B(i,j), G(x) = 1]\cdot \left( (y-A^G_{i,j})^k -\hbmk_t(x)\right) \right] \right\vert \\
            &\le \beta,
        \end{align*}
        where $A^G_{i,j}$ is the true conditional mean for $G(\bmu, i, \bmk, j)$:
        \begin{align*}
            A^G_{i,j} &= \E_{(x,y), \hmomentavg}\left[ \ind\left[ \hmomentavg(x) \in B(i,j), G(x) = 1\right] \cdot y \right]\\
            &= \frac{1}{T} \sum_{t=1}^T \E_{(x,y), h_t}\left[\ind\left[ h_t(x) \in B(i,j), G(x) = 1\right] \cdot y\right]
        \end{align*}
\end{enumerate}

As for mean consistency, the same approach works as in the proof of Theorem \ref{thm:batch-mean}. 
\begin{lemma}
    With probability $1-\lambda$ over the randomness of $\pi_T$, $\{(\bmu_t, \bmk_t)\}$, Algorithm \ref{alg:online-batch-moment} produces $\{h_t\}_{t=1}^T$ such that for every $i \in [n]$, $j \in [n']$, and $G \in \cG$
\begin{align*}
    \frac{1}{T} \left\vert \sum_{t=1}^T \E_{(x, y)\sim \cD, h_t}\left[\ind\left[h_t(x) \in B(i,j) , G(x)=1\right] \cdot \left(y - \hbmu_t(x) \right)\right] \right\vert \le \left( 4 +\epsilon\right) \cdotk \sqrt{\frac{2}{T} \ln\left(\frac{4|\cG|n \cdot n'}{\lambda} \right)}
\end{align*}
\end{lemma}
\begin{proof}
Fix $i \in[n], j\in[n']$ and $G \in \cG$, and consider the following martingale sequence adapted to the filtration $\mathcal{F}_s = \sigma(\{(x_t, y_t), h_t\}_{t=1}^s)$:
\begin{align*}
    \tZ_s = Z_{s-1} + \E_{(x, y) \sim \cD, h_t}\left[\ind\left[h_s(x) \in B(i,j) , G(x)=1\right] \cdot \left(y - \hbmu_s(x)\right)\right] - \ind\left[\bmu_s \in B(i,j) , G(x_s)=1\right] \cdot \left(y_s - \bmu_s \right).
\end{align*}
Applying Azuma's inequality (Lemma \ref{lem:azuma}) gives us that with probability $1-\lambda/2$ over the randomness of drawing $D$ from $\cD$ and $\pi_T$,
\begin{align*}
    &\frac{1}{T} \left\vert \sum_{t=1}^T \E_{(x, y)\sim \cD}\left[\ind\left[h_t(x) \in B(i,j) , G(x)=1\right] \cdot \left(y - \hbmu_t(x) \right)\right] \right\vert \\
     &\le \frac{1}{T} \left\vert \sum_{t=1}^T\ind\left[(\bmu_t, \bmk_t) \in B(i,j) , G(x_t)=1\right] \cdot \left(y_t - \bmu_t \right)\right\vert + \sqrt{\frac{8 \ln\left(\frac{4}{\lambda}\right)}{T}}
\end{align*}

Now, applying Theorem \ref{thm:momentalg-efficient} with failure probability $\frac{\lambda}{2}$ and union bounding the above azuma's inequality over every $i \in [n]$, $j \in [n']$ and $G \in \cG$ gives us the result: we have that with probability $1-\lambda$ over $\pi_T$ and $\cD$, 
\begin{align*}
    &\frac{1}{T} \left\vert \sum_{t=1}^T \E_{(x, y)\sim \cD, h_t}\left[\ind\left[h_t(x) \in B(i,j) , G(x)=1\right] \cdot \left(y - \hbmu_t(x) \right)\right] \right\vert \\
    &\le \left(4 +\epsilon' \right) \cdotk \sqrt{\frac{2}{T} \ln\left(\frac{8|\cG|n \cdot n'}{\lambda} \right) + 2\epsilon} + \sqrt{\frac{8 \ln\left(\frac{4|\cG|n\cdot n'}{\lambda}\right)}{T}}\\
    &\le \left(6 +\epsilon' \right) \cdotk \sqrt{\frac{2}{T} \ln\left(\frac{8|\cG|n \cdot n'}{\lambda} \right) + 2\epsilon} 
\end{align*}
for every $i \in [n]$, $j \in [n']$ and $G \in \cG$.
\end{proof}

As for the moment consistency, due to higher moments' non-linearity, we need an additional application of Azuma's inequality to show that the empirical conditional mean and the true conditional mean, denoted as $A$ above, must be similar. This is to handle the fact that the empirical moment is centered around the empirical mean but the true moment is centered around the true mean. 

For convenience, we write 
\[
    {A'}^G_{i,j} = \frac{1}{T} \sum_{t=1}^T \ind\left[ (\bmu_t, \bmk_t) \in B(i,j), G(x_t) = 1\right] \cdot y_t
\]
to denote the empirical conditional mean. 
\begin{lemma} 
\label{lem:emp-true-mean-batch}
Fix $i \in [n]$, $j \in [n']$, and $G \in \cG$.
    With probability $1-\lambda$ over the randomness of drawing $D$ from $\cD$ and $\pi_T$, we have
    \begin{align*}
        |{A}^G_{i,j} - {A'}^G_{i,j}| \le \sqrt{\frac{2 \ln\left(\frac{2}{\lambda}\right)}{T}}
    \end{align*}
\end{lemma}
\begin{proof}
Consider the following martingale sequence once again adapted to the filtration $\mathcal{F}_s = \sigma(\{(x_t,y_t)\}_{t=1}^s)$:
\begin{align*}
    \tZ_s = Z_{s-1} + \ind[(\bmu_s, \bmk_s) \in B(i,j), G(x_s) = 1] \cdot y_s - \E_{(x,y), h_s}[\ind[h_s(x) \in B(i,j), G(x) = 1] \cdot y]. 
\end{align*}

Applying Azuma's inequality (Lemma \ref{lem:azuma}) to the above martingale gives us the result.
\end{proof}

Finally, we show that the true and empirical conditional moments when centered around $A^{G}_{i,j}$ must be close through Azuma's inequality. 
\begin{lemma}
\label{lem:batch-moment-lem-true-mean}
Fix $i \in [n]$, $j \in [n']$, and $G \in \cG$. With probability $1-\lambda$ over the randomness of drawing $D$ from $\cD$ and $\pi_T$, we have 
    \begin{align*}
    &\Bigg\vert \frac{1}{T} \sum_{t=1}^T \E_{(x, y) \sim \cD, h_t}\left[\ind[h_t(x) \in B(i,j), G(x) = 1]\cdot \left( (y-A^{G}_{i,j})^k -\hbmk_t(x)\right) \right]  \\
    &- \frac{1}{T} \sum_{t=1}^T \ind[(\bmu_t, \bmk_t) \in B(i,j), G(x_t) = 1]\cdot \left( (y_t - A^{G}_{i,j})^k - \bmk_t\right) \Bigg\vert \\
    &\le \sqrt{\frac{8 \ln\left(\frac{2}{\lambda}\right)}{T}}
\end{align*}
\end{lemma}
\begin{proof}
    Consider the following martingale sequence adapted to the filtration $\mathcal{F}_s = \sigma(\{(x_t,y_t), h_t\}_{t=1}^s)$: \begin{align*}
        &\tZ_s = Z_{s-1} \\
    &+ \E_{(x, y) \sim \cD, h_s}\left[\ind\left[h_s(x) \in B(i,j) , G(x)=1\right] \cdot \left((y-A)^k - \hbmk_s(x)\right)\right] \\
    &- \ind\left[(\bmu_s,\bmk_s) \in B(i,j) , G(x_s)=1\right] \cdot \left((y_s-A)^k - \bmk_s \right). 
\end{align*}

Applying Azuma's to the above martingale gives us the result.
\end{proof}

Note that because $\{\bmu_t, \bmk_t\}_{t=1}^T$ is $(\alpha,\beta, n, n')$-mean-conditioned-moment mutlticalibrated with respect to $\cG$, we have
\begin{align*}
    \left\vert \frac{1}{T} \sum_{t=1}^T  \ind[(\bmu_t, \bmk_t) \in B(i,j), G(x_t) = 1]\cdot \left( (y_t-{A'}^G_{i,j})^k -\bmk_t\right) \right\vert \le \beta.
\end{align*}

Therefore, by union bounding over every $i \in[n]$, $j \in [n']$ and $G \in [\cG]$, we can show with probability $1-\lambda$ that for every $i,j$, and $G$
\begin{align*}
    &\frac{1}{T} \left\vert \sum_{t=1}^T  \E_{(x, y) \sim \cD, h_t}\left[\ind[h_t(x) \in B(i,j), G(x) = 1]\cdot \left( (y-{A}^G_{i,j})^k -\hbmk_t(x)\right) \right] \right\vert \\    
    &\le \frac{1}{T} \left\vert \sum_{t=1}^T  \ind[(\bmu_t, \bmk_t) \in B(i,j), G(x_t) = 1]\cdot \left( (y_t - {A}^G_{i,j})^k -\bmk_t\right)\right\vert + \sqrt{\frac{8 \ln\left(\frac{8|\cG|nn'}{\lambda}\right)}{T}} \tag{with probability $1-\lambda/4$ Lemma \ref{lem:batch-moment-lem-true-mean}}\\
    &\le  \frac{1}{T} \left\vert  \sum_{t=1}^T  \ind[(\bmu_t, \bmk_t) \in B(i,j), G(x_t) = 1]\cdot \left( (y_t-{A'}^G_{i,j})^k -\bmk_t\right) \right\vert + k\sqrt{\frac{2 \ln\left(\frac{8|\cG|nn'}{\lambda}\right)}{T}} + \sqrt{\frac{8 \ln\left(\frac{8|\cG|nn'}{\lambda}\right)}{T}} \tag{with probability $1-\lambda/4$ Lemma \ref{lem:emp-true-mean-batch}}\\
    &\le (k+2)\left(\left(4 +\epsilon' \right) \cdotk \sqrt{\frac{2}{T} \ln\left(\frac{8|\cG|n \cdot n'}{\lambda} \right) + 2\epsilon} \right) + \frac{k}{2n} + k\sqrt{\frac{2 \ln\left(\frac{8|\cG|nn'}{\delta}\right)}{T}} + \sqrt{\frac{8 \ln\left(\frac{8|\cG|nn'}{\delta}\right)}{T}} \tag{with probability $1-\lambda/2$ Theorem \ref{thm:momentalg-efficient}}\\
    &\le (k+3)\left(\left(5 +\epsilon' \right) \cdotk \sqrt{\frac{2}{T} \ln\left(\frac{8|\cG|n \cdot n'}{\lambda} \right) + 2\epsilon} \right) + \frac{k}{2n},
\end{align*}
where the second inequality holds because $|a^k - b^k| \le k|a-b|$ for any $a,b \in [0,1]$ and $T > 2 \ln(\frac{8|\cG|n \cdot n'}{\lambda})$. 

Because the mean consistency holds with probability $1-\lambda$ and the moment consistency holds with probability $1-\lambda$, $\hmomentavg$ is $(\alpha,\beta, n, n')$-mean-conditioned-moment multicalibrated with respect to $\cG$ over $\cD$ with probability $1-2\lambda$.
\end{proof}

\newcommand{\hintervalavg}{h^{\text{interval}}}

\subsubsection{Interval Prediction}
\begin{algorithm}[H]
\label{alg:online-batch-interval}
\SetAlgoLined
\begin{algorithmic}
\STATE \textbf{INPUT}: Training dataset $D = \{(x_t, y_t)\}_{t=1}^T$
\STATE \textbf{Training:} Run Algorithm \ref{alg:intervalalg} on the sequence of examples $D$ to generate a transcript $\pi_T$. 
\STATE Denote by $h_t(x)$ the (randomized) mapping from $\cX$ to $[0,1] \times [0,1]$ that Algorithm \ref{alg:momentalg} induces as a function of transcript $\pi_{t-1}$ (the prefix of $\pi_T$ of length $t-1$). 
\STATE \textbf{Prediction:} On input $x$, sample $\hintervalavg(x)$ by selecting $t \sim [T]$ uniformly at random, and then sampling from $h_t(x)$. 
\STATE \hrulefill 
\STATE More explicitly, select $t \sim [T]$ uniformly at random and:
\STATE Observe $x_t$ and compute  $C_{t-1}^{\ell, u}(x_t)$ for each $(\ell,u) \in \cPinterval^{rn}$ as in \eqref{eqn:def-C-interval} conditioning on $\pi_{t-1}$.
\STATE Solve the Linear Program from Figure \ref{lp:init} using the Ellipsoid algorithm, with Algorithm \ref{alg:intervalseparation} as a separation oracle, to obtain a solution $Q^L_t \in \Delta \cPinterval^{rn}$.
\STATE Predict $\hintervalavg(x) = (\ell, u)$ with probability $Q^L_t((\ell,u))$.
\end{algorithmic}
\caption{Von Neumann's Batch Multivalid Predictor}
\end{algorithm}

\begin{theorem}
Assume that $\cD$ is a $(\rho, rn)$-smooth distribution. Let $D=\{(x_t, y_t)\}_{t=1}^T$ be a dataset drawn i.i.d. from $\cD$. Let $\delta, \lambda > 0$. With probability $1-\lambda$, Algorithm \ref{alg:online-batch-interval} produces a predictor $\hintervalavg$ that is $(\alpha, n)$-multivalid with respect to $\delta$ and $\cG$ over $\cD$ where \[
\alpha = \rho + 6 \cdotk \sqrt{\frac{2}{T} \ln\left(\frac{4|\cG|n^2}{\lambda} \right)  + 2\epsilon}.
\]
\end{theorem}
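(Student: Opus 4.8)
The plan is to follow the online-to-batch template used in the proofs of Theorems~\ref{thm:batch-mean} and~\ref{thm:batch-moment}, substituting the coverage deviation $v_\delta$ for the mean residual $y-\bmu$ and the interval buckets $\Bm(i,j)$ for the mean buckets. \emph{Step 1 (reduce to an average of per-round population coverage errors).} Since $\cD$ is $(\rho,rn)$-smooth, every conditional label distribution $\cD|_x$ is $(\rho,rn)$-smooth, so running Algorithm~\ref{alg:intervalalg} on the i.i.d.\ sample $D$ during training is precisely an online interaction against an adversary constrained to $(\rho,rn)$-smooth distributions; hence the generated transcript $\pi_T$ satisfies the hypotheses of the high-probability part of Theorem~\ref{thm:interval-efficient}. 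Using the identity $\bH_{\bell,\bu}(S)-(1-\delta)=\frac{1}{\Pr[x\in S]}\,\E\big[\ind[x\in S]\,v_\delta((\bell(x),\bu(x)),y)\big]$, it suffices to show for every $i\le j\in[n]$ and $G\in\cG$ that $\big|\,\E_{(x,y)\sim\cD,\,\hintervalavg}\big[\ind[\hintervalavg(x)\in\Bm(i,j),\,x\in G]\cdot v_\delta(\hintervalavg(x),y)\big]\,\big|\le\alpha$. Unwinding $\hintervalavg$ as the uniform mixture over $h_1,\dots,h_T$ exactly as in the derivation of~\eqref{eqn:online-to-batch-cal-error}, the left-hand side equals $\frac{1}{T}\big|\sum_{t=1}^T\E_{(x,y)\sim\cD,\,h_t}\big[\ind[h_t(x)\in\Bm(i,j),\,x\in G]\cdot v_\delta(h_t(x),y)\big]\big|$.

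\emph{Step 2 (concentrate the population average around the empirical coverage error).} Fixing $G$ and $i\le j$, define $\tZ_s=Z_{s-1}+\E_{(x,y)\sim\cD,\,h_s}\big[\ind[h_s(x)\in\Bm(i,j),\,x\in G]\cdot v_\delta(h_s(x),y)\,\big|\,\pi_{s-1}\big]-\ind[(\bell_s,\bu_s)\in\Bm(i,j),\,x_s\in G]\cdot v_\delta((\bell_s,\bu_s),y_s)$ with $Z_0=0$. Because $h_s$ is a deterministic function of $\pi_{s-1}$ and $(x_s,y_s)\sim\cD$ conditionally on $\pi_{s-1}$, the conditional expectation of the last term equals the middle term, so $\{\tZ_s\}$ is a martingale adapted to $\sigma(\pi_s)$, with increments bounded by $2$ since $v_\delta\in[-1,1]$. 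Azuma's inequality (Lemma~\ref{lem:azuma}), union bounded over the at most $|\cG|n^2$ pairs $(G,i,j)$, gives that with probability $1-\lambda/2$ over $D$ and $\pi_T$, simultaneously for all $G$ and $i\le j$, $\frac{1}{T}\big|\sum_{t=1}^T\E_{h_t}[\,\cdot\,]-V_T^{G,(i,j)}\big|\le 2\sqrt{\tfrac{2}{T}\ln\tfrac{4|\cG|n^2}{\lambda}}$, where $V_T^{G,(i,j)}$ is the empirical coverage error of Definition~\ref{def:coverageerror}.

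\emph{Step 3 (invoke the online guarantee and combine).} By the high-probability part of Theorem~\ref{thm:interval-efficient} applied with failure probability $\lambda/2$ --- recalling that $(\alpha',n)$-multivalidity of the transcript is exactly the statement $\frac{1}{T}|V_T^{G,(i,j)}|\le\alpha'$ for all $G,i\le j$ (Observation~\ref{obs:multicoverage}) --- we get with probability $1-\lambda/2$ over $\pi_T$ that $\frac{1}{T}|V_T^{G,(i,j)}|\le\rho+4\sqrt{\tfrac{2}{T}\ln\tfrac{2|\cG|n^2}{\lambda}+2\epsilon}$ for all $G,i\le j$. Combining this with Step~2 by a union bound and the triangle inequality yields, for all $G$ and $i\le j$, $\frac{1}{T}\big|\sum_{t=1}^T\E_{h_t}[\,\cdot\,]\big|\le\rho+4\sqrt{\tfrac{2}{T}\ln\tfrac{2|\cG|n^2}{\lambda}+2\epsilon}+2\sqrt{\tfrac{2}{T}\ln\tfrac{4|\cG|n^2}{\lambda}}\le\rho+6\sqrt{\tfrac{2}{T}\ln\tfrac{4|\cG|n^2}{\lambda}+2\epsilon}$, which is the claimed $\alpha$. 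I do not expect a genuine obstacle: the argument is structurally identical to the mean case, and the only new point is the immediate observation that smoothness of $\cD$ is precisely what makes the online interval guarantee applicable to the training run. The mildly delicate bookkeeping is carrying the additive $\rho$ term and the $+2\epsilon$-under-the-root (inherited from the $\epsilon$-approximate LP solves inside Algorithm~\ref{alg:intervalalg}) through the two union bounds and checking that the constants add up to $6$.
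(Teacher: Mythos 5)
Your proof follows the paper's argument exactly: reduce $\hintervalavg$ to the uniform mixture of per-round predictors, apply Azuma to the martingale of centered per-round coverage errors, union-bound over $|\cG| n^2$ pairs, and combine with the high-probability online guarantee of Theorem~\ref{thm:interval-efficient} at failure probability $\lambda/2$. The only blemish is a bookkeeping slip in Step~3: applying Theorem~\ref{thm:interval-efficient} with failure probability $\lambda/2$ produces $\ln\!\left(\tfrac{4|\cG|n^2}{\lambda}\right)$ inside the root, not $\ln\!\left(\tfrac{2|\cG|n^2}{\lambda}\right)$ as you wrote — but since you then bound the combination by $\rho + 6\sqrt{\tfrac{2}{T}\ln\tfrac{4|\cG|n^2}{\lambda}+2\epsilon}$ anyway, the slip is harmless and the final claim is correct.
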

\begin{proof}
    In order to show that $\hintervalavg$ is $(\alpha, n)$-multivalid with respect to $\delta$ and $\cG$ over $\cD$, it is sufficient to show for all $G \in \cG$ and $i \le j \in [n]$
\begin{align*}
     \left\vert \E_{(x,y) \sim \cD, \hintervalavg}\left[\ind[\hintervalavg(x) \in B(i, j), G(x) = 1] \cdot \left(\mathrm{Cover}(\hintervalavg(x) ,x) - (1-\delta) \right)\right]  \right\vert \le \alpha.
\end{align*}
\newcommand{\cover}{\mathrm{Cover}}
We can calculate:
\begin{align*}
    &\E_{(x,y) \sim \cD, \hintervalavg}\left[\ind[\hintervalavg(x) \in B(i, j), G(x) = 1] \cdot \left(\cover(\hintervalavg(x), y) - (1-\delta) \right)\right]  \\
    &=  \sum_{(x,y)} \sum_{t=1}^T \cD[(x,y)] \cdot \Pr[\hintervalavg = h_t]  \cdot \Pr_{h_t}[h_t(x) \in B(i,j)] \cdot \ind[G(x) = 1] \cdot \left(\cover(h_t(x), y)-(1-\delta)\right) \\
    &=  \frac{1}{T} \sum_{(x,y)} \sum_{t=1}^T \cD[(x,y)] \cdot \Pr_{h_t}[h_t(x) \in B(i,j)] \cdot \ind[G(x) = 1] \cdot \left(\cover(h_t(x), y)-(1-\delta)\right) \\
    &=\frac{1}{T} \sum_{t=1}^T  \E_{(x, y) \sim \cD, h_t}\left[\ind[h_t(x) \in B(i,j), G(x) = 1]\cdot \left(\cover(h_t(x), y)-(1-\delta)\right)\right]  
\end{align*} 

Consider the following martingale sequence adapted to the filtration $\mathcal{F}_s = \sigma(\{(x_t, y_t), \bmu_t\}_{t=1}^s)$:
\begin{align*}
    \tZ_s &= Z_{s-1} + \E_{(x, y) \sim \cD, h_t}\left[\ind[h_s(x) \in B(i,j), G(x) = 1]\cdot \left(\cover(h_s(x), y)-(1-\delta)\right) | \pi_{s-1}\right] \\
    &- \ind[(\bell_s, \bu_s) \in B(i,j), G(x_s) = 1] \cdot \left(\cover((\bell_s, \bu_s), y_s)-(1-\delta)\right).
\end{align*}
Because $|Z_s - Z_{s-1}| \le 2$, we can apply Azuma's inequality (Lemma~\ref{lem:azuma}) to get that with probability $1-\lambda/2$ over the randomness of $\pi_T$ and $D$,
\begin{align*}
     &\Bigg\vert \sum_{t=1}^T  \E_{(x, y) \sim \cD, h_t}\left[\ind[h_t(x) \in B(i,j), G(x) = 1]\cdot \left(\cover(h_t(x), y)-(1-\delta)\right)\right]   \\
     &- \sum_{t=1}^T  \ind[(\bell_t, \bu_t) \in B(i,j), G(x_t) = 1] \cdot \left(\cover((\bell_t, \bu_t), y_t)-(1-\delta)\right) \Bigg\vert \\
     &\le 2\sqrt{2T\ln\left(\frac{4}{\lambda}\right)}.
\end{align*}

Note that from Theorem \ref{thm:interval-efficient} that with probability $1-\lambda/2$ over the randomness of $\pi_T$ produced in training that for all $i \le j \in [n], G \in \cG$:
\begin{align*}
    \left| \frac{1}{T} \sum_{t=1}^T  \ind[(\bell_t, \bu_t) \in B(i,j), G(x_t) = 1] \cdot \left(\cover((\bell_t, \bu_t), y_t)-(1-\delta)\right)\right| \le \rho + 4 \cdotk \sqrt{\frac{2}{T} \ln\left(\frac{4|\cG|n^2}{\lambda} \right)  + 2\epsilon}.
\end{align*}

Therefore, taking the union bound for the above Azuma's inequality over all $i \le j \in [n], G \in \cG$, we have that with probability $1-\lambda$,
\begin{align*}
    \left\vert \frac{1}{T}\sum_{t=1}^T  \E_{(x, y) \sim \cD, h_t}\left[\ind[h_t(x) \in B(i,j), G(x) = 1]\cdot \left(\cover(h_t(x), y)-(1-\delta)\right)\right] \right\vert \le \rho + 6 \cdotk \sqrt{\frac{2}{T} \ln\left(\frac{4|\cG|n^2}{\lambda} \right)  + 2\epsilon}
\end{align*}
for every $i \le j \in [n], G \in \cG$.

\end{proof}

\section{Unboundedly Many Groups, Bounded Group Membership}\label{sec:moregroups}
In this section, we briefly sketch how we can modify our results so that we can handle the case that there are a ``large number'' of groups (i.e. $|\cG|$ is infinite or  larger than $2^T$ --- a range in which the bounds we prove in the main body are vacuous). In this scenario, we maintain the assumption that any given $x \in \cX$ appears in at most $d$ groups, i.e. that $|\cG(x)| \leq d$ for all $x \in \cX$. As we have already noted, in this scenario, our running time dependence on $|\cG|$ can be replaced with $d$ --- here we show that we can do the same in our convergence bounds.

The first step is to redefine our surrogate loss function $L$. The way it was previously defined, $L_0$ was already a quantity at the scale of $|\cG|$, and so it would be hopeless to use it for infinite collections of groups. But a small modification solves this problem:
\begin{definition}[Surrogate loss function]
\label{def:meansurrogate-group}
Fixing a transcript $\pi_s \in \Pi^*$ and a parameter $\eta \in [0,\frac12]$, define a surrogate calibration loss function at day $s$ as:
$$L_s(\pi_s) = 1+ \sum_{\substack{G \in \cG,\\i \in [n]}}\left(\exp(\eta V_s^{G,i}) + \exp(-\eta V_s^{G,i}) -2 \right).$$
When the transcript $\pi_s$ is clear from context, we will sometimes simply write $L_s$. 
\end{definition}

Observe that this modified function  satisfies $L_0 = 1$, independently of the size of $|\cG|$, and still allows us to tightly upper bound our calibration loss:
\begin{obs}\label{obs:bounding-group} For any transcript $\pi_T$, and any $\eta \in [0,\frac12]$, we have that:
$$\max_{G \in \cG, i \in [n]}\left| V_T^{G,i}\right|\leq \frac{1}{\eta}\ln(L_T + 2dT) \leq \max_{G \in \cG, i \in [n]}\left| V_T^{G,i}\right| + \frac{\ln\left(dT \right)}{\eta}.$$
\end{obs}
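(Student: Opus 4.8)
The plan is to derive the two inequalities directly from the redefined surrogate loss in Definition~\ref{def:meansurrogate-group}, the only genuinely new ingredient being a \emph{sparsity} observation that lets $dT$ play the role that $|\cG|n$ plays in the soft-max bound following Definition~\ref{def:meansurrogate}. \textbf{Step 1 (sparsity of the calibration errors).} Recall $V_T^{G,i}=\sum_{t=1}^T \ind[\bmu_t\in B(i),\,x_t\in G]\,(y_t-\bmu_t)$. At any round $t$ the indicator is nonzero only for the single bucket $i=\Binv(\bmu_t)$ and for the groups $G\in\cG(x_t)$, of which there are at most $d$ by hypothesis; taking the union over $t\in[T]$, at most $dT$ pairs $(G,i)$ satisfy $V_T^{G,i}\neq 0$. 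For every other pair $\exp(\eta V_T^{G,i})+\exp(-\eta V_T^{G,i})-2=0$, so the sum defining $L_T$ has at most $dT$ nonzero terms, each nonnegative since $e^x+e^{-x}\ge 2$. Thus $L_T\ge 1$ always, and $L_0=1$.

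\textbf{Step 2 (lower bound).} Let $(G^\ast,i^\ast)$ attain $\max_{G,i}|V_T^{G,i}|$. Using $e^x+e^{-x}\ge e^{|x|}$ for the $(G^\ast,i^\ast)$ term and discarding the remaining nonnegative summands,
\[
\exp\!\big(\eta \max_{G,i}|V_T^{G,i}|\big) \;\leq\; \exp(\eta V_T^{G^\ast,i^\ast})+\exp(-\eta V_T^{G^\ast,i^\ast}) \;\leq\; L_T+1 \;\leq\; L_T+2dT,
\]
where the second inequality adds back the $+2$ for that term and the standalone $1$ in $L_T$, and the last uses $dT\ge 1$. Taking logarithms and dividing by $\eta$ gives the left-hand inequality of the Observation.

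\textbf{Step 3 (upper bound).} For any real $v$, $e^{\eta v}+e^{-\eta v}-2=e^{\eta|v|}+e^{-\eta|v|}-2\le e^{\eta|v|}$ since $e^{-\eta|v|}\le 2$; hence each of the $\le dT$ nonzero summands is at most $\exp(\eta\max_{G,i}|V_T^{G,i}|)$, so $L_T\le 1+dT\exp(\eta\max_{G,i}|V_T^{G,i}|)$. Using $\exp(\eta\max_{G,i}|V_T^{G,i}|)\ge 1$ and $dT\ge 1$,
\[
L_T+2dT \;\leq\; (1+3dT)\exp\!\big(\eta\max_{G,i}|V_T^{G,i}|\big) \;\leq\; 4dT\,\exp\!\big(\eta\max_{G,i}|V_T^{G,i}|\big),
\]
so $\tfrac1\eta\ln(L_T+2dT)\le \max_{G,i}|V_T^{G,i}|+\tfrac1\eta\ln(4dT)$. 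This is the displayed right-hand inequality up to the harmless constant $4$ inside the logarithm; since this appendix is a sketch one may either absorb $\ln 4/\eta$ into the (low-order) error terms or simply carry $\ln(4dT)=\ln(dT)+\ln 4$ through the subsequent optimization.

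I do not anticipate a real obstacle: once the sparsity count of Step~1 is in hand, the elementary inequalities $e^x+e^{-x}\ge e^{|x|}$ and $e^x+e^{-x}-2\le e^{|x|}$ do all the work, and the additive shift $2dT$ inside the logarithm is exactly what makes both directions go through with \emph{no} dependence on $|\cG|$. The only point needing care is matching the exact constant inside the logarithm to the stated $\ln(dT)$; as noted this costs at most an additive $\ln 4/\eta$ and is immaterial downstream, where this Observation plays the role (previously played in Section~\ref{sec:meanexistential} by the soft-max bound after Definition~\ref{def:meansurrogate}) of feeding Theorem~\ref{thm:general-bounds} with $L_0=1$, so that every $\log|\cG|$ in Theorems~\ref{thm:meanmulti} and~\ref{thm:hpcalibration} is replaced by a $\log(dT)$-scale (hence $\log d$-scale, after the usual tuning of $\eta$) term.
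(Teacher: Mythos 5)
Your argument is essentially the argument the paper had in mind — the paper gives no formal proof of this Observation, only the one‑line remark that at most $dT$ of the $V_T^{G,i}$ are nonzero, and your Step~1 is exactly that sparsity count; Steps~2–3 then apply the standard $e^x+e^{-x}$ inequalities to the re‑centered surrogate with $2dT$ playing the role previously played by $|\cG|n$.

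Your proof is correct, and it in fact exposes a small error in the stated constant: the right‑hand inequality of the Observation cannot hold as written with $\ln(dT)/\eta$. A direct check shows this: if every $V_T^{G,i}=0$ (e.g.\ $T=d=1$, $y_1=\bmu_1$), then $L_T=1$, the middle quantity is $\tfrac1\eta\ln(1+2dT)=\tfrac1\eta\ln 3>0$, yet the claimed right‑hand side is $0+\tfrac1\eta\ln(dT)=0$. The correct constant is what you derived, $\ln(4dT)/\eta$ (or more precisely $\ln(4dT+1)/\eta$ if one is careful), and this agrees with the $\ln(1+4dT)$ that actually appears inside the expectation bound in the proof of Theorem~\ref{thm:meanmulti-group}, which bypasses this Observation and works directly with the surrogate loss. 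So the Observation's displayed bound is off by an additive $O(1)/\eta$, which — as you observe — is harmless downstream since that term is absorbed into the logarithm after the usual tuning of $\eta$. No gap in your proof; the gap is in the statement being proved.
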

This observation uses the fact that because (by assumption) $|\cG(x_t)| \leq d$ for all $t$, after $T$ time steps, there are at most $dT$ quantities $V_T^{G,i}$ that are non-zero.

We can now provide a modified bound on $\Delta_{s+1}(\pi_s,x_{s+1}, \bmu_{s+1})$:
\begin{lemma}
\label{lem:boundincrease-group}
For any transcript $\pi_s \in \Pi^*$, any $x_{s+1} \in \cX$, and any $\bmu_{s+1} \in \cPmean$ such that $\bmu_{s+1} \in B(i)$ for some $i \in [n]$:
    \[
    \Delta_{s+1}(\pi_s,x_{s+1}, \bmu_{s+1}) \leq \eta \left(\E_{\tilde{y}_{s+1}}[\tilde{y}_{s+1}]-\bmu_{s+1}\right) C_s^i(x_{s+1}) + 2\eta^2\cdotk  L_{s} + 4d\eta^2,
    \]
where for each $i \in [n]$:
\begin{align*}
C_s^{i}(x_{s+1})  \equiv \sum_{\cG(x_{s+1})}\exp(\eta V_s^{G,i}) -  \exp(-\eta V_s^{G,i}).
\end{align*}
\end{lemma}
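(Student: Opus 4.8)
The plan is to mirror the proof of Lemma~\ref{lem:boundincrease} essentially verbatim, and to track carefully where the modified definition of $L_s$ (Definition~\ref{def:meansurrogate-group}) changes the bookkeeping. Recall that the only difference from the original surrogate loss is the additive constant: we now have $L_s = 1 + \sum_{G,i}\bigl(\exp(\eta V_s^{G,i}) + \exp(-\eta V_s^{G,i}) - 2\bigr)$. The key structural fact is that the increase $\tL_{s+1} - L_s$ only involves the at most $d$ terms indexed by groups $G \in \cG(x_{s+1})$ and the single bucket $i$ with $\bmu_{s+1} \in B(i)$ — every other $V^{G,i}$ is unchanged, and the $-2$ shifts and the leading $+1$ are constants that cancel in the difference. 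So $\Delta_{s+1}$ has exactly the same algebraic expression as in the proof of Lemma~\ref{lem:boundincrease}, namely an expectation over $\ty_{s+1}$ of a sum over $G \in \cG(x_{s+1})$ of $\exp(\eta V_s^{G,i})(\exp(\eta(\ty_{s+1}-\bmu_{s+1}))-1) + \exp(-\eta V_s^{G,i})(\exp(-\eta(\ty_{s+1}-\bmu_{s+1}))-1)$.

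First I would reproduce the first two lines of the original calculation, applying the inequality $\exp(x) \le 1 + x + 2x^2$ for $|x| < 1/2$ (valid since $|\eta(\ty_{s+1}-\bmu_{s+1})| \le \eta \le 1/2$) to get the upper bound $\eta(\E[\ty_{s+1}]-\bmu_{s+1})\sum_{\cG(x_{s+1})}(\exp(\eta V_s^{G,i}) - \exp(-\eta V_s^{G,i})) + 2\eta^2 \sum_{\cG(x_{s+1})}(\exp(\eta V_s^{G,i}) + \exp(-\eta V_s^{G,i}))$. The first term is exactly $\eta(\E[\ty_{s+1}]-\bmu_{s+1})C_s^i(x_{s+1})$. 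The only step that differs is bounding the second term: in the original proof one writes $\sum_{\cG(x_{s+1})}(\exp(\eta V_s^{G,i}) + \exp(-\eta V_s^{G,i})) \le L_s$ directly, but with the new $L_s$ this is no longer literally true because of the $-2$ per term. Instead I would write, for each $G \in \cG(x_{s+1})$, $\exp(\eta V_s^{G,i}) + \exp(-\eta V_s^{G,i}) = \bigl(\exp(\eta V_s^{G,i}) + \exp(-\eta V_s^{G,i}) - 2\bigr) + 2$, so summing over the at most $d$ groups in $\cG(x_{s+1})$ (and noting every summand $\exp(\eta V^{G,i}) + \exp(-\eta V^{G,i}) - 2 \ge 0$, so the sum over $\cG(x_{s+1})$ is at most the sum over all of $\cG$ and all buckets, which is $L_s - 1 \le L_s$) gives $\sum_{\cG(x_{s+1})}(\exp(\eta V_s^{G,i}) + \exp(-\eta V_s^{G,i})) \le L_s + 2d$. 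Multiplying by $2\eta^2$ yields the claimed extra additive term $4d\eta^2$, completing the bound $\Delta_{s+1} \le \eta(\E[\ty_{s+1}]-\bmu_{s+1})C_s^i(x_{s+1}) + 2\eta^2 L_s + 4d\eta^2$.

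I do not expect any genuine obstacle here — this is a routine adaptation. The only place requiring a moment's care is the very last inequality: I must use both that each term $\exp(\eta V^{G,i}) + \exp(-\eta V^{G,i}) - 2 \ge 0$ (so that restricting the sum to $\cG(x_{s+1}) \times \{i\}$ can only decrease it, giving $\le L_s - 1$) and the assumption $|\cG(x_{s+1})| \le d$ (so that the $\sum_{\cG(x_{s+1})} 2$ contributes at most $2d$). If one wanted to be slightly sharper one could write $L_s - 1 + 2d$ rather than $L_s + 2d$, but $4d\eta^2$ as stated is the clean form and matches the downstream use. I would present the proof as a short displayed chain of (in)equalities in an \texttt{align*} environment, flagging the single new step with an inline comment, and cite the convexity bound $\exp(x) \le 1 + x + 2x^2$ for $0 < |x| < 1/2$ exactly as in Lemma~\ref{lem:boundincrease}.
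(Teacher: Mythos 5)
Your proposal is correct and follows essentially the same route as the paper's own proof: apply $\exp(x)\le 1+x+2x^2$ termwise, collect into the $C_s^i$ term plus $2\eta^2$ times the local sum of $\exp(\eta V)+\exp(-\eta V)$, and then bound that local sum by $L_s+2d$ using the nonnegativity of $\exp(\eta V)+\exp(-\eta V)-2$ together with $|\cG(x_{s+1})|\le d$. Your explicit decomposition into $(\cdots-2)+2$ is exactly the observation the paper compresses into its single final inequality.
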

\begin{proof}
Fix any transcript $\pi_s \in \Pi^*$ (which defines $L_s$), feature vector $x_{s+1} \in \cX$, and $\bmu_{s+1}$ such that $\bmu_{s+1} \in B(i)$ for some $i \in [n]$. By direct calculation, we obtain:
\begin{align*}
&\,\,\;\Delta_{s+1}(\pi_s,x_{s+1}, \bmu_{s+1})\\ 
=& \E_{\ty_{s+1}}\left[\sum_{G \in \cG(x_{s+1})}\exp(\eta V_s^{G,i})\left(\exp(\eta (\ty_{s+1}-\bmu_{s+1}))-1\right) + \exp(-\eta V_s^{G,i})\left(\exp(-\eta (\ty_{s+1}-\bmu_{s+1}))-1\right)\right], \\
\leq& \E_{\ty_{s+1}}\left[\sum_{G \in \cG( x_{s+1})}\exp(\eta V_s^{G,i})\left(\eta (\ty_{s+1}-\bmu_{s+1})+2\eta^2\right) + \exp(-\eta V_s^{G,i})\left(-\eta (\ty_{s+1}-\bmu_{s+1})+2\eta^2\right)\right], \\
=& \eta \left(\E_{\ty_{s+1}}[\ty_{s+1}]-\bmu_{s+1}\right) \sum_{G \in \cG(x_{s+1})} \left(\exp(\eta V_s^{G,i}) \!-\!  \exp(-\eta V_s^{G,i})\right) + 2\eta^2 \!\! \sum_{G \in \cG(x_{s+1})}\left(\exp(\eta V_s^{G,i}) + \exp(-\eta V_s^{G,i}) \right), \\
\leq& \eta \left(\E_{\ty_{s+1}}[\ty_{s+1}]-\bmu_{s+1}\right)\cdotk \left(\sum_{G \in \cG(x_{s+1})}\exp(\eta V_s^{G,i}) -  \exp(-\eta V_s^{G,i})\right) + 2\eta^2\cdotk  L_{s} + 4d \eta^2,\\
=&\eta \left(\E_{\ty_{s+1}}[\ty_{s+1}]-\bmu_{s+1}\right) C_s^i(x_{s+1})  + 2\eta^2\cdotk  L_{s} + 4d \eta^2.
\end{align*}
Here, the first inequality follows from the fact that for $0 < |x| < \frac{1}{2}$, $\exp(x) \leq 1+x+2x^2$. 
\end{proof}

We can use this to provide a modified bound to Lemma \ref{lem:exists}.

\begin{lemma}
\label{lem:exists-group}
For any transcript $\pi_s \in \Pi^*$, any $x_{s+1} \in \cX$, and any $r \in \mathbb{N}$ there exists a distribution over predictions for the learner $Q^L_{s+1} \in \Delta \cP^{rn}$, such that regardless of the adversary's choice of distribution of $y_{s+1}$ over $\Delta \cY$,  we have that:
\[
    \E_{\bmu \sim Q^L_{s+1}}\left[\Delta_{s+1}(\pi_s,x_{s+1}, \bmu)\right] \le L_s\cdotk \left(\frac{\eta}{rn}+2\eta^2\right) +2d.
\]
\end{lemma}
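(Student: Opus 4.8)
The plan is to mirror the proof of Lemma~\ref{lem:exists} almost verbatim, with the only modification being that we now carry the extra additive $4d\eta^2$ term coming from Lemma~\ref{lem:boundincrease-group} through the minimax argument. First I would fix the transcript $\pi_s$ and feature vector $x_{s+1}$ (which fixes $L_s$ and each $C^i_s(x_{s+1})$), and define the same zero-sum game as before: the learner (minimizer) has pure strategy space $X_1 = \cP^{rn}$, and the adversary (maximizer) can without loss of generality be taken to choose $\E[\ty] \in [0,1]$, since the payoff depends on the adversary's label distribution only through its mean. The objective function is now taken to be the upper bound from Lemma~\ref{lem:boundincrease-group}:
\[
    u(\bmu,y) = \eta(y-\bmu)\,C^{\bmu}_s(x_{s+1}) + 2\eta^2 L_s + 4d\eta^2,
\]
where as before $C^{\bmu}_s \equiv C^i_s$ for the $i$ with $\bmu \in \Bm(i)$. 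This $u$ is linear (hence concave) in $\bmu$ and affine (hence convex) in $y$, the strategy spaces $\mathcal{Q}^L = \Delta\cP^{rn}$ and $\mathcal{Q}^A = [0,1]$ are compact and convex, so Sion's minimax theorem (Theorem~\ref{thm:minimax}) applies.

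Next I would bound the value of the game. For any fixed adversary choice of $\E[\ty]$, the learner best-responds by playing $\bmu^* = \argmin_{\bmu \in \cP^{rn}}|\E[\ty]-\bmu|$, which by the discretization satisfies $|\E[\ty]-\bmu^*| \le \frac{1}{rn}$. Hence
\[
    \max_{y\in[0,1]}\min_{\bmu^*\in\cP^{rn}} u(\bmu^*,y) \;\le\; \max_{\bmu\in\cP^{rn}}\frac{\eta}{rn}\bigl|C^{\bmu}_s(x_{s+1})\bigr| + 2\eta^2 L_s + 4d\eta^2 \;\le\; L_s\Bigl(\frac{\eta}{rn}+2\eta^2\Bigr) + 4d\eta^2,
\]
using $|C^{\bmu}_s(x_{s+1})| \le L_s$ by inspection of the definitions. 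Since $\eta \le \tfrac12$ we have $4d\eta^2 \le 2d$, which yields the claimed additive slack of $2d$ in the statement. Applying the minimax theorem, there is a fixed distribution $Q^L_{s+1}\in\mathcal{Q}^L$ guaranteeing $\E_{\bmu\sim Q^L_{s+1}}[u(\bmu,y)] \le L_s(\frac{\eta}{rn}+2\eta^2) + 2d$ for every $y\in[0,1]$, and since $\Delta_{s+1}(\pi_s,x_{s+1},\bmu) \le u(\bmu,\E[\ty])$ by Lemma~\ref{lem:boundincrease-group}, taking expectations gives $\E_{\bmu\sim Q^L_{s+1}}[\Delta_{s+1}(\pi_s,x_{s+1},\bmu)] \le L_s(\frac{\eta}{rn}+2\eta^2) + 2d$, as desired.

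There is no real obstacle here — the argument is a routine adaptation of Lemma~\ref{lem:exists}, and the only thing to be careful about is tracking the constant: the bound in Lemma~\ref{lem:boundincrease-group} contributes $4d\eta^2$, and one must observe that $\eta \le \tfrac12$ converts this into the stated $2d$ (alternatively, one could simply carry $4d\eta^2$ through and the downstream convergence analysis, which sets $\eta = \Theta(\sqrt{\log(dn)/T})$, would absorb it into a lower-order term). The mild point worth double-checking is that $|C^{\bmu}_s(x_{s+1})| \le L_s$ still holds under the redefined surrogate loss of Definition~\ref{def:meansurrogate-group}; this is immediate since $C^i_s(x_{s+1}) = \sum_{G\in\cG(x_{s+1})}(\exp(\eta V^{G,i}_s) - \exp(-\eta V^{G,i}_s))$ and each summand's absolute value is at most $\exp(\eta V^{G,i}_s)+\exp(-\eta V^{G,i}_s) \le \exp(\eta V^{G,i}_s)+\exp(-\eta V^{G,i}_s)-2 + 2$, and summing the $-2$ correction terms against the leading $1$ in $L_s$ still leaves $|C^{\bmu}_s| \le L_s$ (one checks that the $i$-indexed sum of the nonnegative quantities $\exp(\eta V^{G,i}_s)+\exp(-\eta V^{G,i}_s)-2$ over $\cG(x_{s+1})\times[n]$ plus $1$ dominates the single-$i$ sum $|C^i_s|$).
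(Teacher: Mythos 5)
The weak link you yourself flag --- that $|C^{\bmu}_s(x_{s+1})| \le L_s$ under the modified surrogate loss --- is in fact false, and the parenthetical argument you sketch for it does not close. You bound each summand $|\exp(\eta V^{G,i}_s) - \exp(-\eta V^{G,i}_s)|$ by $\exp(\eta V^{G,i}_s)+\exp(-\eta V^{G,i}_s) = (\exp(\eta V^{G,i}_s)+\exp(-\eta V^{G,i}_s)-2)+2$, so after summing over the (up to $d$) groups in $\cG(x_{s+1})$ you pick up a leftover of $2|\cG(x_{s+1})|$ beyond the nonnegative mass that $L_s$ (minus its constant $+1$) accounts for. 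To conclude $|C^i_s| \le L_s$ you would need $2|\cG(x_{s+1})|-1$ to be dominated by the sum of the $(\exp(\eta V^{G,i'}_s)+\exp(-\eta V^{G,i'}_s)-2)$ terms over pairs $(G,i')$ with $i' \neq i$; but those terms vanish when the corresponding $V^{G,i'}_s$ are zero. Concretely, take $|\cG(x_{s+1})| = d$, $V^{G,i}_s = V > 0$ for each such $G$, and all other calibration errors zero. Then $C^i_s = d(e^{\eta V}-e^{-\eta V})$ while $L_s = 1 + d(e^{\eta V}+e^{-\eta V}-2)$, so $C^i_s - L_s = 2d(1-e^{-\eta V}) - 1 \to 2d-1 > 0$ as $V\to\infty$ whenever $d\ge 1$. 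The correct bound --- and the one the paper uses --- is $|C^{\bmu}_s(x_{s+1})| \le L_s + 2d$, which is exactly what your rearrangement actually yields.

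Fortunately the repair costs nothing downstream. With the corrected bound, the value of the game is at most $\frac{\eta}{rn}(L_s + 2d) + 2\eta^2 L_s + 4d\eta^2 = L_s\bigl(\frac{\eta}{rn}+2\eta^2\bigr) + 2d\bigl(\frac{\eta}{rn} + 2\eta^2\bigr)$, and since $\eta \in (0,\frac12]$ and $rn \ge 1$ we have $\frac{\eta}{rn} + 2\eta^2 \le \frac12 + \frac12 = 1$, giving the stated $L_s\bigl(\frac{\eta}{rn}+2\eta^2\bigr) + 2d$. Everything else in your argument --- the game setup, the reduction of the adversary's strategy to $\E[\ty]$, the minimax application, and the discretization bound $|\E[\ty]-\bmu^*|\le \frac{1}{rn}$ --- matches the paper's proof and is correct.
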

\begin{proof}
As in the proof of Lemma \ref{lem:exists}, we construct a zero-sum game between the learner and the adversary. Fix the transcript $\pi_s$ and the feature vector $x_{s+1}$. We define the utility of this game to be the upper bound we proved on $ \Delta_{s+1}(\pi_s,x_{s+1}, \bmu)$ in Lemma~\ref{lem:boundincrease-group}. For each $\bmu \in \cP^{rn}$ and each $y \in [0,1]$, we let:
\[
    u(\bmu, y) = \eta \left(y-\bmu\right) C^{\bmu}_s(x_{s+1})   + 2\eta^2\cdotk  L_{s} + 4d\eta^2.
\]

We now establish the value of this game. Observe that for any strategy of the adversary (which fixes $\E[\ty])$, the learner can respond by playing $\bmu^* = \argmin_{\bmu \in \cP^{rn}} |\E[\ty] - \bmu|$, and that because of our discretization, $\min |\E[\ty] - \bmu^*| \leq \frac{1}{rn}$. Therefore, the value of the game is at most: 
\begin{eqnarray*}
\max_{y \in [0,1]} \min_{\bmu^* \in \cP^{rn}} u(\bmu^*,y) &\leq& \max_{\bmu \in \cP^{rn}} \frac{\eta}{rn} \left|C^{\bmu}_s(x_{s+1}) \right| + 2\eta^2\cdotk  L_{s}  + 4d\eta^2, \\
&\leq& L_s\cdotk \left(\frac{\eta}{rn} + 2\eta^2\right) +2d.
\end{eqnarray*}
Here the latter inequality follows since $C^{\bmu}_s(x_{s+1})  \leq L_s + 2d$ for all $\bmu \in \cP^{rn}$, by observation, and then since $\eta \in (0,\frac12)$ we have the bound.  We can now apply the minimax theorem (Theorem~\ref{thm:minimax}) to conclude that there exists a fixed distribution $Q^L_{s+1} \in \mathcal{Q}^L$ for the learner that guarantees that simultaneously for \emph{every} label $y \in [0,1]$ that might be chosen by the adversary:
$$\E_{\bmu \sim Q^L_{s+1}}\left[u(\bmu,y)\right] \leq L_s\cdotk \left(\frac{\eta}{rn}+2\eta^2\right) +2d,$$
as desired.
\end{proof}

\begin{corollary}\label{cor:exists-mean-group}
For every $r \in \mathbb{N}$, $s \in [T]$, $\pi_s \in \Pi^*$, and $x_{s+1} \in \cX$ (which fixes $L_s$ and $Q^L_{s+1}$), and any distribution over $\cY$:

\[
    \E_{\bmu^L_{s+1} \sim Q_{s+1}}[\tL_{s+1}|\pi_s] = L_s + \E_{\bmu_{s+1}\sim Q^L_{s+1}}[\Delta_{s+1}(\pi_{s+1},x_{s+1},\bmu_{s+1})] \leq L_s\cdotk\left(1 + \frac{\eta}{rn} + 2\eta^2\right) + 2d.
\]
\end{corollary}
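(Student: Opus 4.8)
The statement to prove is Corollary~\ref{cor:exists-mean-group}, which for the modified surrogate loss $L_s(\pi_s) = 1 + \sum_{G,i}\left(\exp(\eta V_s^{G,i}) + \exp(-\eta V_s^{G,i}) - 2\right)$ (Definition~\ref{def:meansurrogate-group}) asserts that, for the distribution $Q^L_{s+1}$ guaranteed by Lemma~\ref{lem:exists-group}, we have $\E_{\bmu_{s+1}\sim Q^L_{s+1}}[\tL_{s+1}\mid\pi_s] \le L_s(1 + \tfrac{\eta}{rn} + 2\eta^2) + 2d$. The plan is to observe that this is an essentially immediate consequence of Lemma~\ref{lem:exists-group} combined with the definition of $\Delta_{s+1}$.

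First I would unwind the definition of $\Delta_{s+1}$. By the definition of the Conditional Change in Surrogate Loss (the same definition used throughout Section~\ref{sec:onlinemeanmulti}, applied to the modified $L$), we have $\Delta_{s+1}(\pi_s, x_{s+1}, \bmu_{s+1}) = \E_{\ty_{s+1}}[\tL_{s+1} - L_s \mid x_{s+1}, \bmu_{s+1}, \pi_s]$, so that $\E_{\ty_{s+1}}[\tL_{s+1}\mid x_{s+1},\bmu_{s+1},\pi_s] = L_s + \Delta_{s+1}(\pi_s, x_{s+1}, \bmu_{s+1})$, since $L_s$ is a constant once $\pi_s$ is fixed. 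Now take the expectation over the learner's randomness $\bmu_{s+1}\sim Q^L_{s+1}$ and use the tower property (the label randomness and the learner randomness are independent given $\pi_s$ and $x_{s+1}$):
\begin{align*}
\E_{\bmu_{s+1}\sim Q^L_{s+1}}[\tL_{s+1}\mid \pi_s]
&= \E_{\bmu_{s+1}\sim Q^L_{s+1}}\!\left[L_s + \Delta_{s+1}(\pi_s, x_{s+1}, \bmu_{s+1})\right]\\
&= L_s + \E_{\bmu_{s+1}\sim Q^L_{s+1}}\!\left[\Delta_{s+1}(\pi_s, x_{s+1}, \bmu_{s+1})\right].
\end{align*}
This establishes the claimed equality. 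For the inequality, I would then simply substitute the bound from Lemma~\ref{lem:exists-group}, namely $\E_{\bmu\sim Q^L_{s+1}}[\Delta_{s+1}(\pi_s, x_{s+1}, \bmu)] \le L_s(\tfrac{\eta}{rn} + 2\eta^2) + 2d$, which holds regardless of the adversary's choice of label distribution. Adding $L_s$ to both sides gives $\E_{\bmu_{s+1}\sim Q^L_{s+1}}[\tL_{s+1}\mid\pi_s] \le L_s + L_s(\tfrac{\eta}{rn} + 2\eta^2) + 2d = L_s(1 + \tfrac{\eta}{rn} + 2\eta^2) + 2d$, as desired.

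There is really no main obstacle here — the corollary is a bookkeeping consequence of the preceding lemma, exactly mirroring how Corollary~\ref{cor:exists-mean} followed from Lemma~\ref{lem:exists} in the unbounded-$|\cG|$ case. The only point requiring a sentence of care is making sure the conditioning and independence are handled correctly (the learner's prediction $\bmu_{s+1}$ depends only on $\pi_s$ and $x_{s+1}$, and then the label is drawn from a distribution fixed by $\pi_s$ and $x_{s+1}$, so the two sources of randomness factor), and noting that $L_s$ is $\pi_s$-measurable so it may be pulled out of the expectation. I would also note explicitly that $x_{s+1}$ is suppressed in the conditioning event $\{\pi_s\}$ in the corollary statement only because, as the parenthetical remark says, it "fixes $L_s$ and $Q^L_{s+1}$" — i.e., all statements are understood for a fixed adversarial choice of $x_{s+1}$.
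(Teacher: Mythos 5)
Your proof is correct and matches the paper's intent exactly: the paper states Corollary~\ref{cor:exists-mean-group} without proof, treating it as the immediate consequence of Lemma~\ref{lem:exists-group} plus the definition of $\Delta_{s+1}$ that you spell out, just as Corollary~\ref{cor:exists-mean} follows from Lemma~\ref{lem:exists} in the main text. (You also correctly read past a minor typo in the corollary statement, where $\Delta_{s+1}(\pi_{s+1},\cdot,\cdot)$ should be $\Delta_{s+1}(\pi_{s},\cdot,\cdot)$.)
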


Lemma~\ref{lem:exists-group} shows that playing the minimax strategy of this zero-sum game (Algorithm \ref{alg:meanexistential}) continues to provide a low value to the learner. We now show the counterpart of the first part of Theorem \ref{thm:general-bounds} for these modified bounds:
\begin{theorem}\label{thm:general-bounds-group}
Consider a nonnegative random process $\tX_t$ adapted to the filtration $\mathcal{F}_t= \sigma(\pi_t)$, where $\tX_0$ is constant a.s.
Suppose we have that for any period $t,$ and any $\pi_{t-1}$, $\E[\tX_t|\pi_{t-1}] \leq X_{t-1} (1 + \eta c + 2\eta^2) +2d$ for some $\eta \in [0,\frac12], c \in [0,1], d>0$.
Then we have that: 
\begin{equation}
    \E_{\tpi_T}[\tX_T] \leq (X_0+2dT) \exp \left(T\eta c+ 2T\eta^2 \right). \label{eqn:expectedbound-group}
\end{equation}
\end{theorem}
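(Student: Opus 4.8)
The plan is to mirror the proof of the first part of Theorem~\ref{thm:general-bounds}, now tracking the extra additive $2d$ term through a geometric series. Write $a \equiv 1 + \eta c + 2\eta^2$, and observe $a \ge 1$ since $\eta, c \ge 0$. The hypothesis says that for every realization $\pi_{t-1}$ we have $\E[\tilde X_t \mid \pi_{t-1}] \le a\, X_{t-1} + 2d$. Taking the expectation of both sides over $\tilde \pi_{t-1}$ and applying the tower property $\E[\tilde X_t] = \E_{\tilde \pi_{t-1}}\big[\E[\tilde X_t \mid \tilde \pi_{t-1}]\big]$ yields the scalar recursion $\E[\tilde X_t] \le a\,\E[\tilde X_{t-1}] + 2d$, with base case $\E[\tilde X_0] = X_0$ (as $\tilde X_0$ is a.s.\ constant).

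Next I would unroll this recursion by induction on $t$ to obtain $\E[\tilde X_t] \le a^t X_0 + 2d\sum_{j=0}^{t-1} a^j$ for all $t \in \{0,\dots,T\}$: the base case $t=0$ is immediate, and the inductive step is the one-line computation $\E[\tilde X_t] \le a\big(a^{t-1}X_0 + 2d\sum_{j=0}^{t-2} a^j\big) + 2d = a^t X_0 + 2d\sum_{j=0}^{t-1} a^j$. In particular $\E[\tilde X_T] \le a^T X_0 + 2d\sum_{j=0}^{T-1} a^j$.

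Finally I would bound the two terms. Since $a \ge 1$, each of the $T$ summands $a^j$ for $0 \le j \le T-1$ is at most $a^T$, so $\sum_{j=0}^{T-1} a^j \le T a^T$; hence $\E[\tilde X_T] \le a^T X_0 + 2dT a^T = (X_0 + 2dT)\, a^T$. Using $1+x \le e^x$ gives $a^T = (1 + \eta c + 2\eta^2)^T \le \exp\!\big(T(\eta c + 2\eta^2)\big)$, which yields the claimed bound $\E_{\tilde \pi_T}[\tilde X_T] \le (X_0 + 2dT)\exp(T\eta c + 2T\eta^2)$, matching \eqref{eqn:expectedbound-group}.

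There is no serious obstacle here; the only point requiring a little care is ensuring the additive errors accumulate only polynomially before being amplified by the exponential factor, which is exactly what the crude bound $\sum_{j=0}^{T-1} a^j \le T a^T$ accomplishes (and this bound also gracefully covers the degenerate case $a = 1$, where the geometric sum equals $T$). One could be marginally sharper by keeping $\tfrac{a^T - 1}{a-1}$ instead of $T a^T$, but the stated form is already what the downstream application needs, where one plugs in $X_0 = 1$, $d$ the bound on $|\cG(x_t)|$, and $c = 1/rn$ as in Corollary~\ref{cor:exists-mean-group}.
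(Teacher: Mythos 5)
Your proof is correct and follows essentially the same route as the paper's: condition on $\pi_{t-1}$, take the tower expectation to obtain the scalar recursion $\E[\tilde X_t]\le a\,\E[\tilde X_{t-1}]+2d$ with $a=1+\eta c+2\eta^2$, unroll it, bound the geometric sum $\sum_{j=0}^{T-1}a^j\le T a^T$, and finish with $1+x\le e^x$. The only cosmetic difference is that the paper applies $\ln(1+x)\le x$ inside $\exp(T\ln a)$ rather than using $1+x\le e^x$ directly, which is the same inequality.
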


\begin{proof}
First, observe that:
\begin{align*}
\E_{\tpi_T}[\tX_T] 
=& \E_{\tpi_{T-1}}\left[ \E[\tX_T | \pi_{T-1}]\right],\\
\le& \E_{\tpi_{T-1}}\left[ \E[\left(1 + \eta c + 2\eta^2\right) X_{T-1} +2d | \pi_{T-1}] \right]\\
=& \left(1 + \eta c + 2\eta^2\right) \E_{\tpi_{T-1}}\left[ \tX_{T-1}  \right]  + 2d,\\
&\;\;\vdots \\
\leq& X_0 \left(1 + \eta c  + 2\eta^2\right)^T + 2d\sum_{t=0}^{T-1} (1+c\eta + 2 \eta^2)^t, \\
\leq& X_0 \left(1 + \eta c  + 2\eta^2\right)^T +2dT (1+c\eta + 2 \eta^2)^T,\\
=& (X_0 +2dT) \exp\left(T\ln\left(1 + \eta c  + 2\eta^2\right) \right) ,\\
\leq& (X_0 +2dT) \exp\left(T \eta c + 2T\eta^2\right),
\end{align*}
where the last inequality holds because $\ln(1+x) \leq x$ for any $x > -1$. This concludes the proof of \eqref{eqn:expectedbound-group}.
\end{proof}

We are now ready to bound our multicalibration error. As a straightforward consequence of Corollary~\ref{cor:exists-mean-group} and  Theorem~\ref{thm:general-bounds-group}, we have the following Corollary. 
\begin{corollary}
\label{cor:surrogateloss-mean-group}
Against any adversary, Algorithm~\ref{alg:meanexistential} instantiated with discretization parameter $r$ results in surrogate loss satisfying:
\[
    \E_{\tpi_T}[\tL_T] \le (1+2dT)\cdotk \exp\left(\frac{T\eta}{rn}+2T\eta^2\right).
\]
\end{corollary}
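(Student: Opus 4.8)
The plan is to apply the modified helper result Theorem~\ref{thm:general-bounds-group} directly to the surrogate loss process $\tL_t$ arising from Algorithm~\ref{alg:meanexistential} (in its ``bounded group membership'' variant with the modified surrogate loss of Definition~\ref{def:meansurrogate-group}). First I would observe that Corollary~\ref{cor:exists-mean-group} gives us exactly the per-round hypothesis required by Theorem~\ref{thm:general-bounds-group}: for every $\pi_{t-1}$ we have $\E[\tL_t \mid \pi_{t-1}] \le L_{t-1}(1 + \tfrac{\eta}{rn} + 2\eta^2) + 2d$, which is the bound $\E[\tX_t\mid \pi_{t-1}] \le X_{t-1}(1+\eta c + 2\eta^2) + 2d$ with the identification $c = \tfrac{1}{rn}$. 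Since $L$ is a nonnegative process adapted to the filtration $\mathcal{F}_t = \sigma(\pi_t)$, and since with the modified surrogate loss $L_0 = 1$ is constant almost surely (each $V_0^{G,i} = 0$, so each summand $\exp(\eta V_0^{G,i}) + \exp(-\eta V_0^{G,i}) - 2 = 0$), all hypotheses of Theorem~\ref{thm:general-bounds-group} are met.

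The second step is simply to instantiate the conclusion~\eqref{eqn:expectedbound-group}: plugging in $X_0 = L_0 = 1$, $c = \tfrac{1}{rn}$, and the given value of $d$, we obtain
\[
\E_{\tpi_T}[\tL_T] \le (1 + 2dT)\exp\!\left(\frac{T\eta}{rn} + 2T\eta^2\right),
\]
which is precisely the claimed bound. So the proof is a two-line invocation of machinery already established in the section: check hypotheses, substitute constants.

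There is essentially no obstacle here — this corollary is the analogue of Corollary~\ref{cor:surrogateloss-mean} with $L_0 = 2|\cG|n$ replaced by $L_0 = 1$ and an extra additive $+2d$ per round propagated through the recursion. The only point worth stating carefully in the write-up is why $L_0 = 1$ under Definition~\ref{def:meansurrogate-group} (it is the ``$1 + \sum(\cdots - 2)$'' form evaluated on the empty transcript, where every $V_0^{G,i}$ vanishes), and the observation that Algorithm~\ref{alg:meanexistential} run in this regime draws each $\bmu_t$ from the distribution $Q^L_t$ whose existence is guaranteed by Lemma~\ref{lem:exists-group} rather than Lemma~\ref{lem:exists}. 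If one wanted to be thorough, one would note that the downstream convergence bounds (the analogues of Theorems~\ref{thm:meanmulti} and~\ref{thm:hpcalibration} with $\log|\cG|$ replaced by $\log(dT)$) then follow from this corollary combined with Observation~\ref{obs:bounding-group}, exactly mirroring the argument in Section~\ref{sec:meanexistential} — but that is beyond the statement being proved here.

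\medskip
\begin{proof}
We apply Theorem~\ref{thm:general-bounds-group} to the random process $\tL_t$ induced by Algorithm~\ref{alg:meanexistential} when the learner plays, at each round $s+1$, the distribution $Q^L_{s+1}$ whose existence is guaranteed by Lemma~\ref{lem:exists-group}, and where the surrogate loss is as in Definition~\ref{def:meansurrogate-group}.

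First, $\tL_t$ is nonnegative and adapted to the filtration $\mathcal{F}_t = \sigma(\pi_t)$. Moreover, on the empty transcript $\pi_0$ we have $V_0^{G,i} = 0$ for every $G \in \cG$ and $i \in [n]$, so each summand $\exp(\eta V_0^{G,i}) + \exp(-\eta V_0^{G,i}) - 2$ equals $0$, and hence $L_0 = 1$ is constant almost surely.

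Second, by Corollary~\ref{cor:exists-mean-group}, for every $s \in [T]$ and every realized transcript $\pi_{s}$,
\[
\E_{\bmu_{s+1} \sim Q^L_{s+1}}[\tL_{s+1} \mid \pi_s] \le L_s\cdotk\left(1 + \frac{\eta}{rn} + 2\eta^2\right) + 2d.
\]
This is exactly the hypothesis of Theorem~\ref{thm:general-bounds-group} with $X_{t} = \tL_t$, $c = \tfrac{1}{rn} \in [0,1]$, and the given $\eta \in [0,\tfrac12]$.

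Therefore~\eqref{eqn:expectedbound-group} applies, and substituting $X_0 = L_0 = 1$ and $c = \tfrac{1}{rn}$ yields
\[
\E_{\tpi_T}[\tL_T] \le (1 + 2dT)\cdotk \exp\left(\frac{T\eta}{rn} + 2T\eta^2\right),
\]
as claimed.
\end{proof}
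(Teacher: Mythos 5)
Your proof is correct and follows essentially the same route as the paper's own one-line argument: both invoke Theorem~\ref{thm:general-bounds-group} with $L_0 = 1$ and $c = \tfrac{1}{rn}$, and plug into~\eqref{eqn:expectedbound-group}. You merely make explicit the (true, and worth noting) verification that $L_0 = 1$ under Definition~\ref{def:meansurrogate-group} and that Corollary~\ref{cor:exists-mean-group} supplies the per-round hypothesis.
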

\begin{proof}
Note that the first part of Theorem~\ref{thm:general-bounds-group} applies to the process $L$ with $L_0 =1$ and $c = \frac{1}{rn}$. The bound follows by plugging these values into \eqref{eqn:expectedbound-group}. 
\end{proof}

Next, we can convert this into a bound on Algorithm~\ref{alg:meanexistential}'s expected calibration error:

\begin{theorem}
\label{thm:meanmulti-group}
When Algorithm~\ref{alg:meanexistential} is run using $n$ buckets for calibration, discretization $r \in \mathbb{N}$, and $\eta = \sqrt{\frac{\ln(1+2dT)}{2T}}$, then against any adversary, its sequence of mean predictions are $(\alpha,n)$-multicalibrated with respect to $\cG$, where:
\begin{equation*}
    \E[\alpha] \leq \frac{1}{rn} + 2\cdotk \sqrt{\frac{2\ln(1+4dT)}{T}}.
\end{equation*}
For $r = \frac{\sqrt{T}}{\epsilon n\sqrt{2\ln(1+4dT})}$ this gives:
$$\E[\alpha] \leq \left( 2+\epsilon\right) \cdotk \sqrt{\frac{2}{T} \ln\left(1+4dT \right)}.$$
Here the expectation is taken over the randomness of the transcript $\pi_T$.
\end{theorem}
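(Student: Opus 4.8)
The plan is to mirror exactly the proof of Theorem~\ref{thm:meanmulti} from Section~\ref{sec:meanexistential}, but substituting the modified surrogate loss of Definition~\ref{def:meansurrogate-group} and the modified convergence machinery (Theorem~\ref{thm:general-bounds-group} and Corollary~\ref{cor:surrogateloss-mean-group}) in place of their counterparts in the main body. The only genuinely new ingredients are the bookkeeping changes coming from the additive $2d$ terms and from the fact that $L_0 = 1$ rather than $2|\cG|n$, and the slight loss of a factor of $2$ inside the logarithm (replacing $\ln(2dT)$ by $\ln(4dT)$) when converting between $L_T$ and $\max_{G,i}|V_T^{G,i}|$ via Observation~\ref{obs:bounding-group}. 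Everything else is routine.

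Concretely, first I would invoke Observation~\ref{obs:meancalibration}, so that it suffices to bound $\frac1T \E_{\tpi_T}[\max_{G,i}|\tV_T^{G,i}|]$. Next, following the chain of inequalities in the proof of Theorem~\ref{thm:meanmulti}, I would write
\begin{align*}
\exp\!\left(\eta\, \E_{\tpi_T}\!\left[\max_{G,i}|\tV_T^{G,i}|\right]\right)
&\le \E_{\tpi_T}\!\left[\exp\!\left(\eta\max_{G,i}|\tV_T^{G,i}|\right)\right]
= \E_{\tpi_T}\!\left[\max_{G,i}\exp\!\left(\eta|\tV_T^{G,i}|\right)\right] \\
&\le \E_{\tpi_T}\!\left[\max_{G,i}\left(\exp(\eta \tV_T^{G,i}) + \exp(-\eta \tV_T^{G,i})\right)\right],
\end{align*}
the first step being Jensen's inequality. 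Now here is where Observation~\ref{obs:bounding-group} (and the fact that at most $dT$ of the $V_T^{G,i}$ are nonzero) enters: I bound the right-hand side by $\E_{\tpi_T}[\tL_T + 2dT]$ rather than by $\E_{\tpi_T}[\tL_T]$, since $\tL_T = 1 + \sum_{G,i}(\exp(\eta V_T^{G,i}) + \exp(-\eta V_T^{G,i}) - 2)$ and the sum $\sum_{G,i} \max(\cdot)$ exceeds $\tL_T$ by at most the $2$'s subtracted off on the $\le dT$ active coordinates, i.e.\ by at most $2dT$, plus the stray $-1$. Applying Corollary~\ref{cor:surrogateloss-mean-group}, which gives $\E_{\tpi_T}[\tL_T] \le (1+2dT)\exp(\tfrac{T\eta}{rn} + 2T\eta^2)$, I get $\E_{\tpi_T}[\tL_T + 2dT] \le (1 + 4dT)\exp(\tfrac{T\eta}{rn} + 2T\eta^2)$ (using $\exp(\cdot)\ge 1$). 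Taking logs and dividing by $\eta T$:
\[
\frac1T \E_{\tpi_T}\!\left[\max_{G,i}|\tV_T^{G,i}|\right] \le \frac{\ln(1+4dT)}{\eta T} + \frac{1}{rn} + 2\eta.
\]
Finally, plugging in $\eta = \sqrt{\ln(1+2dT)/(2T)}$ — and bounding $\ln(1+2dT) \le \ln(1+4dT)$ in the numerator terms — yields $\E[\alpha] \le \tfrac1{rn} + 2\sqrt{2\ln(1+4dT)/T}$, and the stated choice of $r$ absorbs the $\tfrac1{rn}$ term into an $\epsilon$-fraction of the main term, giving the second displayed bound. (One should double-check that $\eta \in (0,1/2)$ for $T$ large enough, exactly as in the main-body theorem.)

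The main obstacle — really the only subtle point — is getting the constants right in the step that relates $\sum_{G,i}\max(\exp(\eta V),\exp(-\eta V))$ (or $\exp(\eta\max|V|)$) to $L_T$ under the new definition of $L$, since $L$ now has the $1+\sum(\cdots - 2)$ form and is designed so $L_0=1$; the bounded-membership assumption $|\cG(x_t)|\le d$ is what makes the ``$+2dT$'' correction finite, and one has to track this carefully rather than quoting the clean $L_T \ge \exp(\eta\max|V|)$ identity that held in the main body. Everything downstream (Jensen, log, optimize $\eta$) is mechanical, and the high-probability analogue would proceed identically using a martingale bound of the type in Theorem~\ref{thm:general-bounds-group}'s second-half counterpart, though the statement here only asks for the in-expectation bound.
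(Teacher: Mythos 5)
Your proposal follows the paper's own proof almost line by line: invoke Observation~\ref{obs:meancalibration}, apply Jensen, bound the max of $\exp(\eta V)+\exp(-\eta V)$ by the sum over the at-most-$dT$ active $(G,i)$ pairs (whence the additive $2dT$ correction), apply Corollary~\ref{cor:surrogateloss-mean-group}, fold $(1+2dT)\exp(\cdot)+2dT$ into $(1+4dT)\exp(\cdot)$ using $\exp(\cdot)\ge 1$, take logs, and optimize $\eta$. Your bookkeeping for the ``$+2dT$'' step is a little loosely worded (the sum over active coordinates equals $L_T - 1 + 2|\text{active}|$, so the sign of the ``stray $-1$'' is actually favorable rather than a further correction), but the conclusion $\max_{G,i}(\exp(\eta V)+\exp(-\eta V)) \le L_T + 2dT$ is correct.

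The one genuine gap is in the final optimization step. Plugging $\eta = \sqrt{\ln(1+2dT)/(2T)}$ into $\frac{\ln(1+4dT)}{\eta T} + 2\eta$ does \emph{not} yield $2\sqrt{2\ln(1+4dT)/T}$, no matter which terms you weaken: the first term has $\ln(1+2dT)$ in the \emph{denominator} (under the square root), so replacing it by $\ln(1+4dT)$ there goes the wrong way, and the best you can do via $(1+2dT)^2 \ge 1+4dT$ gives a constant of $\sqrt{2}+1 \approx 2.41$ rather than $2$. More plainly, for any $\eta$ of the form $\sqrt{c/(2T)}$ the quantity $\frac{b}{\eta T}+2\eta$ equals $2\sqrt{2b/T}$ only when $c=b$; for $c<b$ AM--GM forces it to be strictly larger. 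To get the stated constant one must set $\eta = \sqrt{\ln(1+4dT)/(2T)}$, which is in fact what the paper's proof does at the last line --- the $\ln(1+2dT)$ appearing in the theorem statement's hypothesis on $\eta$ is a typo. So your instinct to stay faithful to the stated $\eta$ is admirable, but the reconciliation you attempt does not close; you should instead use $\eta = \sqrt{\ln(1+4dT)/(2T)}$ (and note the discrepancy in the theorem statement).
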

\begin{proof}
From Observation~\ref{obs:meancalibration}, it suffices to show that $$\frac{1}{T} \E_{\tpi_T}\left[\max_{G \in \cG, i \in [n]}|\tV_T^{G,i}|\right] \leq \frac{1}{rn} + 2\cdotk \sqrt{\frac{2\ln(1+4dT)}{T}}.$$ 

We begin by computing a bound on the (exponential of) the expectation of this quantity:
\begin{eqnarray*}
\exp\left(\eta \cdotk \E_{\tpi_T}\left[\max_{G,i}|\tV_T^{G,i}|\right]\right)&\leq&\E_{\tpi_T}\left[\exp\left(\eta\cdotk \max_{G,i}|\tV_T^{G,i}|\right)\right], \\
&=&\E_{\tpi_T}\left[\max_{G,i}\exp\left(\eta\cdotk|\tV_T^{G,i}|\right)\right], \\
&\leq& \E_{\tpi_T}\left[\max_{G,i}\left(\exp\left(\eta\cdotk \tV_T^{G,i}\right)+\exp\left(-\eta\cdotk \tV_T^{G,i}\right)\right)\right], \\
&\leq& \E_{\tpi_T}\left[\sum_{\substack{G,i\\G_T(i) \neq \phi}}\left(\exp\left(\eta\cdotk \tV_T^{G,i}\right)+\exp\left(-\eta\cdotk \tV_T^{G,i}\right)\right)\right] ,\\
&=& \E_{\tpi_T}[\tL_T+2dT], \\
&\leq& (1+2dT)\cdotk \exp\left(\frac{T\eta}{rn}+2T\eta^2\right) +2dT,\\
&\leq& (1+4dT)\cdotk \exp\left(\frac{T\eta}{rn}+2T\eta^2\right).
\end{eqnarray*}
Here the first step is by Jensen's inequality and the second last one follows from Corollary~\ref{cor:surrogateloss-mean-group}. Taking the logarithm of both sides and dividing by $\eta\cdotk T$, we have
$$\frac{1}{T}\E_{\tpi_T}\left[\max_{G,i}|\tV_T^{G,i}|\right] \leq \frac{\ln(1+4dT)}{\eta\cdotk T} + \frac{1}{rn} + 2\eta.$$
Choosing $\eta = \sqrt{\frac{\ln(1+4dT)}{2T}}$, we thus obtain the desired inequality
$$ \frac{1}{T}\E_{\tpi_T}\left[\max_{G,i}|\tV_T^{G,i}|\right]\leq \frac{1}{rn} + 2\cdotk \sqrt{\frac{2\ln(1+4dT)}{T}}.$$
\end{proof}

The corresponding high-probability bounds are omitted for brevity. They have the analogous dependence on $dT$ replacing $|\cG|$. Similar bounds can be obtained for the case of moment-multicalibration and multivalid intervals with the same approach.

\section{Mean Conditioned Moment Multicalibrators Can Randomize Over Small Support}
\label{sec:momentellipsoid}

In Section \ref{sec:momentalg}, we derived a linear programming based algorithm for making mean conditioned moment multicalibrated predictors. Although we proved that we could reduce the pure strategy space of the learner from $(r^2nn')$ to $4nn'$, a priori, the solutions we find via linear programming could have full support. Here we prove that this need not be the case --- there always exists a basic feasible solution of the linear program that we solve that has support only over $k+1$ pure strategies for the learner.

\begin{lemma}\label{lem:minimaxstructure}
For any game with objective function (\ref{eqn:objfn-moment}), there exists a minimax strategy for the learner $\hat{Q}^L \in \learnerstrat$, such that
$|\text{support}(\hat{Q}^L)| \leq k+1$. 
\end{lemma}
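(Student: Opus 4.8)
The plan is to exploit the fact that the learner's minimax problem against a finite adversary strategy space reduces to a linear program, and that linear programs always admit a basic optimal solution whose support size is controlled by the number of active constraints. Recall from Lemma~\ref{lem:reducingstrats-moment} that the learner's equilibrium problem can be taken to be $\min_{Q^L \in \learnerstrat} \max_{\psi \in \{0,1\}^k} u(Q^L, \psi)$, and by Linear Program~\ref{lp:moments} this is a linear program in the $4nn'$ variables $Q^L((\bmu,\bmk))$ together with the scalar $\gamma$. First I would pass to an optimal \emph{vertex} (basic feasible solution) $(\hat{Q}^L, \gamma^*)$ of this LP; such a solution exists because the feasible region is a bounded polyhedron (it is contained in the product of a simplex with a bounded interval for $\gamma$) and the objective is linear, so the optimum is attained at an extreme point.

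The key step is then to count the support of $\hat{Q}^L$ at such a vertex. Let $v$ be the value of the game. The constraints defining the polytope are: (i) the $2^k$ adversary constraints $u(Q^L,\psi) \le \gamma$ for $\psi \in \{0,1\}^k$; (ii) the normalization $\sum Q^L((\bmu,\bmk)) = 1$; and (iii) the nonnegativity constraints $Q^L((\bmu,\bmk)) \ge 0$. Here is the crucial observation: by the minimax theorem (Sion, Theorem~\ref{thm:minimax}), an optimal $\hat{Q}^L$ is a minimax strategy, so against it the adversary is indifferent among \emph{all} of its best responses, and a best response exists in the support of the adversary's equilibrium mixed strategy $Q^{A*}$. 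The set of adversary pure strategies tight against $\hat{Q}^L$ — those $\psi$ with $u(\hat{Q}^L,\psi) = \gamma^* = v$ — must include the support of some equilibrium strategy $Q^{A*}$. Now apply a dimension/support bound on \emph{the adversary side}: the equilibrium adversary strategy lies in the simplex $\Delta\{0,1\}^k$, but more usefully, by Carathéodory-type reasoning one can choose $Q^{A*}$ supported on at most $k+1$ pure strategies (since the relevant payoff vectors live effectively in a $k$-dimensional space — the objective $u((\bmu,\bmk),\psi)$ is \emph{linear} in $\psi \in [0,1]^k$, so only the $k$ coordinates $\psi_1,\dots,\psi_k$ matter, putting the adversary's reduced action set in $[0,1]^k$, an affine space of dimension $k$). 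With the adversary's equilibrium supported on $\le k+1$ actions, the learner's equilibrium problem is effectively a zero-sum game against a $(k+1)$-action opponent; by the standard fact that such a game has an equilibrium for the row player supported on at most $k+1$ actions (the learner's best response to a fixed $(k+1)$-column payoff matrix, i.e.\ a basic solution of the corresponding LP with $k+1$ nontrivial inequality constraints plus normalization), we conclude $|\mathrm{support}(\hat{Q}^L)| \le k+1$.

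Concretely, I would carry this out in the order: (1) reduce to the LP of Figure~\ref{lp:moments} via Lemma~\ref{lem:reducingstrats-moment}; (2) observe $u$ is linear (hence affine) in $\psi$, so the adversary's effective strategy space is the $k$-dimensional cube $[0,1]^k$; (3) invoke minimax to get equilibrium strategies $(\hat{Q}^L, Q^{A*})$ and use a Carathéodory argument to thin $Q^{A*}$ to support size $\le k+1$; (4) fix this $Q^{A*}$ and view the learner as solving $\min_{Q^L \in \learnerstrat} \E_{\psi \sim Q^{A*}}[u(Q^L,\psi)]$, which is a linear program over the simplex $\learnerstrat$ with $k+1$ payoff-type linear functionals plus the normalization constraint; (5) take a basic optimal solution and count: at a vertex of a simplex $\Delta^{4nn'-1}$ intersected with $k+1$ additional (equality-or-active) linear constraints, at most $k+1$ coordinates are nonzero, giving the bound; (6) check this $\hat{Q}^L$ is still globally minimax by verifying $\max_\psi u(\hat{Q}^L,\psi) = v$ (it is, since it is optimal against $Q^{A*}$ which is an equilibrium adversary strategy, and $\E_{\psi\sim Q^{A*}}[u(\hat{Q}^L,\psi)] \ge \max_{Q^L}\min_{Q^A}u = v$ forces equality throughout).

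The main obstacle I anticipate is step (3)–(5): making the "thin the adversary then thin the learner" argument airtight. One has to be careful that after restricting the adversary to its $(k+1)$-action support, the learner's new minimax value is unchanged (it is, because $Q^{A*}$ is still available to the restricted adversary and $\hat{Q}^L$ still achieves value $v$ against everything) and that a basic solution of the learner's restricted LP has the claimed support. An alternative, possibly cleaner, route that avoids Carathéodory on the adversary: work directly with the LP in Figure~\ref{lp:moments}, take a basic feasible optimal solution $(\hat{Q}^L,\gamma^*)$, and argue that although there are $2^k$ constraints of type (i), the vertex can have at most $4nn' - |\mathrm{support}(\hat{Q}^L)| + 1$ of the non-nonnegativity constraints tight; the subtlety is then bounding how many type-(i) constraints can be tight in a way that forces small support — and this is exactly where the affine-in-$\psi$ structure (effective dimension $k$) must be used to show that tight type-(i) constraints contribute at most $k$ independent equations beyond normalization, pinning $\mathrm{support}(\hat{Q}^L) \le k+1$. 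I would present whichever of these two arguments turns out to need the least bookkeeping, most likely the game-theoretic "thin both sides" version since it makes the role of the parameter $k$ most transparent.
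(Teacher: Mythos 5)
Your approach is genuinely different from the paper's, and while the high-level idea (exploit the $k$-dimensional structure of the adversary's effective strategy space) is the right intuition, the ``thin the adversary, then thin the learner'' route as you have laid it out in steps (3)--(6) has a real gap, not just a bookkeeping issue.

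The problem is in the passage from step (4) to step (6). In step (4) you propose that the learner solve the best-response LP $\min_{Q^L} \E_{\psi \sim Q^{A*}}[u(Q^L,\psi)]$ against the fixed equilibrium mixed strategy $Q^{A*}$. That objective is a \emph{single} linear functional of $Q^L$, so every vertex of that LP is a pure strategy and the ``$k+1$ payoff-type linear functionals'' you describe do not appear. More importantly, a best response to a fixed equilibrium opponent strategy is generically \emph{not} a minimax strategy: in matching pennies, pure ``heads'' is a best response to the opponent's 50--50 but has $\max_\psi u(H,\psi) = 1 > 0 = v$. Your justification in step (6) --- that $\E_{\psi\sim Q^{A*}}[u(\hat{Q}^L,\psi)] = v$ ``forces equality throughout'' --- establishes only that $\hat{Q}^L$ attains $v$ against $Q^{A*}$; it does not control $u(\hat{Q}^L,\psi)$ for $\psi$ outside the support of $Q^{A*}$. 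Nor does the fix you gesture at (restrict the adversary to $\mathrm{supp}(Q^{A*})$ and compute a minimax of the \emph{restricted} game) work: the restricted game can have spurious minimax strategies that are exploitable by the deleted adversary actions. For instance, with learner strategies $\{A,B,C\}$, adversary strategies $\{1,2,3\}$, and payoffs $u(\cdot,1) = u(\cdot,2) = 0$ for all learner strategies, $u(A,3)=0$, $u(B,3)=10$, $u(C,3)=-10$, the game value is $0$ and the adversary has an equilibrium strategy with support $\{1\}$ (or $\{1,2\}$); restricting to that support makes \emph{every} learner strategy restricted-minimax, including pure $B$, which attains $\max_\psi u(B,\psi) = 10 \neq 0$. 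So ``$v' = v$'' is true but does not imply that a restricted-game minimax is a full-game minimax.

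Your second, LP-counting route is the correct one: at a vertex of Linear Program~\ref{lp:moments}, the tight constraints of type (i) have coefficient vectors of the form $(a_0 + \sum_\ell \psi_\ell a_\ell,\, -1)$, which as $\psi$ ranges over $\{0,1\}^k$ span a subspace of dimension at most $k+1$; together with normalization this forces at least $4nn' - k - 1$ nonnegativity constraints to be active, giving support $\leq k+1$. You flag this as ``subtle'' and do not carry it out, opting for the game-theoretic route which is the one that breaks. The paper sidesteps both thinning and the rank computation by a third argument: it fixes the sign pattern of the quantities $\sum_{\bmu,\bmk} F_\ell^{\bmu,\bmk} Q(\bmu,\bmk)$ induced by the minimax strategy $Q^*$, observes that this partitions $[k]$ into sets $L_+, L_-, L_=$ and that $Q^*$ must be optimal for an auxiliary LP whose feasibility constraints are precisely these $k$ sign conditions plus normalization, and then reads off the support bound from a basic feasible solution of that $(k+1)$-constraint system. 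This avoids both Carathéodory on the adversary and any explicit rank argument over $2^k$ constraints.
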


\begin{proof}
Suppose that $Q^*$ is a minimax strategy for the learner. 

Observe that the adversary's best response in this problem is straightforward: we have that $\psi_\ell = 1$ if $\sum_{\bmu,\bmk} F_\ell^{\bmu,\bmk} Q^*(\bmu,\bmk) >0$, that $\psi_\ell = 0$ if $\sum_{\bmu,\bmk} F_\ell^{\bmu,\bmk} Q^*(\bmu,\bmk) <0$, and otherwise the adversary is indifferent. Define 
\begin{align*}
L_+ &= \{ \ell \in [k]:   \sum_{\bmu,\bmk} F_\ell^{\bmu,\bmk} Q^*(\bmu,\bmk) >0 \},\\
L_- &= \{ \ell \in [k]:   \sum_{\bmu,\bmk} F_\ell^{\bmu,\bmk} Q^*(\bmu,\bmk) <0 \},\\
L_= &= \{ \ell \in [k]:   \sum_{\bmu,\bmk} F_\ell^{\bmu,\bmk} Q^*(\bmu,\bmk) =0 \}.
\end{align*}
Note that $L_+ \cup L_- \cup L_= = [k]$.

Since $Q^*$ is a minimax strategy, it must solve the following linear program, which corresponds to minimizing the learner's objective value over all strategies $Q$ which engender the same best response for the adversary as $Q^*$:
\begin{align*}
    \min_{Q \in \learnerstrat} &\sum_{\bmu,\bmk} Q(\bmu, \bmk) \left( \bmu C^{\bmu,\bmk}_s + \bmk D^{\bmu,\bmk}_s - \centerbucket^k D^{\bmu,\bmk}_s \right)\\
    \text{subject to:}\\
    \forall \ell \in L_+: & \sum_{\bmu,\bmk} F_\ell^{\bmu,\bmk} Q(\bmu,\bmk) \geq 0,\\
    \forall \ell \in L_-: & \sum_{\bmu,\bmk} F_\ell^{\bmu,\bmk} Q(\bmu,\bmk) \leq 0,\\
    \forall \ell \in L_{=}: & \sum_{\bmu,\bmk} F_\ell^{\bmu,\bmk} Q(\bmu,\bmk) =0,\\
    & \sum_{\bmu, \bmk} Q(\bmu, \bmk) =1,\\
    & Q \geq 0. 
\end{align*}
Further, any solution to this LP must also be a minimax strategy for the learner. Observe that this has $k + 1$ linear constraints. Any such linear program has a basic feasible solution: so there exists a solution $\hat{Q}^L$ (viewed as a vector) with exactly the number of non-zero entries as the number of binding constraints, i.e. $\leq k+1$, as desired.%
\footnote{As an aside, we point out that this also implies the square submatrix with rows corresponding to binding constraints and corresponding to non-zero variables is of full rank. Textbook treatments that we are aware of consider either LPs with all inequality constraints or all equality constraints. So for completeness we include the following argument. Convert the LP above into a LP in standard form $\min c^T x \text{ s.t. } Ax =b, x\geq 0$ by adding/subtracting non-negative slack variables to the inequality constraints $L_+, L_-$. This is a system of $k+1$ linear equality constraints in $4 n n' + |L_-| + |L_+| +1$ variables. We know that there exists an optimal of this LP that is a Basic feasible solution (BFS) (see e.g. Theorem 4.7 of \cite{vohra2004advanced}), i.e. an optimal solution with exactly $k+1$ non-zero variable with the corresponding $(k+1) \times (k+1)$ sub-matrix of $A$, denoted $\hat{A}$, of full rank.  By observation, the number of non-zero $Q$'s in this BFS must equal the number of constraints that bind at equality in the original LP (any non-zero slack variable will correspond to a slack constraint in the original). The sub-matrix of $\bar{A}$ corresponding to the non-zero $Q$'s as columns and  binding constraints of the original LP as rows  must be of full rank, because these rows have all $0$'s in the columns corresponding to the slack variables in $\bar{A}$.} 
This is exactly the statement of the Lemma.
\end{proof}

\section{Proofs from Section \ref{sec:onlinemeanmulti}}
\label{app3}

\helperthm*
\begin{proof}
First, observe that:
\begin{align*}
\E_{\tpi_T}[\tX_T] 
=& \E_{\tpi_{T-1}}\left[ \E[\tX_T | \pi_{T-1}]\right],\\
\le& \E_{\tpi_{T-1}}\left[ \E[\left(1 + \eta c + 2\eta^2\right) X_{T-1} | \pi_{T-1}]\right]\\
=& \left(1 + \eta c + 2\eta^2\right) \E_{\tpi_{T-1}}\left[ \tX_{T-1}  \right],\\
&\;\;\vdots \\
\leq& X_0 \left(1 + \eta c  + 2\eta^2\right)^T, \\
=& X_0 \exp\left(T\ln\left(1 + \eta c  + 2\eta^2\right) \right) ,\\
\leq& X_0 \exp\left(T \eta c + 2T\eta^2\right),
\end{align*}
where the last inequality holds because $\ln(1+x) \leq x$ for any $x > -1$. This concludes the proof of \eqref{eqn:expectedbound}.

Towards demonstrating the high-probability bound~\ref{eqn:hpbound}, we first show the following statement.
\begin{lemma}
\label{lem:bounded-expected-increase-main}
For any  $\pi_T$, we have 
\[
        \sum_{t=1}^T \left(\E_{\tilde{\pi_t}}\left[ \ln( \tX_{t} )\middle| \pi_{t-1}\right]  - \ln(X_{t-1}(\pi_{t-1})) \right) \le T \left(\eta c + 2\eta^2\right).
    \]    
\end{lemma}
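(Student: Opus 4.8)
The plan is to reduce the statement to the one-step inequality we already have and then telescope. From Lemma~\ref{lem:bounded-expected-increase-main} (which I am asked to prove), the key is that at each round $t$ the conditional expected increase in $\ln \tX_t$ is bounded by $\eta c + 2\eta^2$; summing over $t = 1,\dots,T$ immediately gives the claimed bound $T(\eta c + 2\eta^2)$. So the whole content is the single-round estimate
\[
    \E_{\tilde\pi_t}\left[\ln(\tX_t) \middle| \pi_{t-1}\right] - \ln(X_{t-1}(\pi_{t-1})) \le \eta c + 2\eta^2.
\]

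First I would use the hypothesis of Theorem~\ref{thm:general-bounds}, namely $\E[\tX_t \mid \pi_{t-1}] \le X_{t-1}(1 + \eta c + 2\eta^2)$, together with Jensen's inequality applied to the concave function $\ln$. Concretely, since $\ln$ is concave, $\E[\ln \tX_t \mid \pi_{t-1}] \le \ln \E[\tX_t \mid \pi_{t-1}]$. Combining this with the hypothesis gives
\[
    \E[\ln \tX_t \mid \pi_{t-1}] \le \ln\!\big(X_{t-1}(1 + \eta c + 2\eta^2)\big) = \ln X_{t-1} + \ln(1 + \eta c + 2\eta^2).
\]
Then subtract $\ln X_{t-1}(\pi_{t-1})$ from both sides and use the elementary inequality $\ln(1+x) \le x$ valid for all $x > -1$ (here $x = \eta c + 2\eta^2 \ge 0$) to conclude $\E[\ln \tX_t \mid \pi_{t-1}] - \ln X_{t-1} \le \eta c + 2\eta^2$. (One should note $\tX_t$ is nonnegative and in fact the hypotheses ensure it is a.s.\ positive on the relevant event, so $\ln \tX_t$ is well-defined; this is implicit in the statement since $\tZ_t$ is defined in terms of $\ln \tX_t$.) Summing the per-round bound over $t=1,\dots,T$ telescopes the $\ln X_{t-1}$ terms away on the left-hand side in the sense stated (the sum is literally $\sum_t (\E[\ln \tX_t \mid \pi_{t-1}] - \ln X_{t-1})$, not a telescoping of the $\ln$'s, so no cancellation is even needed — just termwise bounding), yielding $\le T(\eta c + 2\eta^2)$, which is exactly Lemma~\ref{lem:bounded-expected-increase-main}.

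I do not expect any real obstacle here: the argument is two lines (Jensen plus $\ln(1+x)\le x$). The only thing to be slightly careful about is the measurability/adaptedness bookkeeping — that $X_{t-1}$ is $\mathcal{F}_{t-1}$-measurable so that conditioning on $\pi_{t-1}$ treats it as a constant — and ensuring $\tX_t > 0$ so the logarithm makes sense; both are guaranteed by the setup ($\tX_t$ nonnegative, adapted to $\mathcal{F}_t = \sigma(\pi_t)$, and the surrogate losses in our applications are sums of exponentials, hence strictly positive). After Lemma~\ref{lem:bounded-expected-increase-main} is in hand, the remainder of the proof of Theorem~\ref{thm:general-bounds} (the high-probability bound~\eqref{eqn:hpbound}) would proceed by writing $\ln X_T(\pi_T) = \ln X_0 + \sum_{t=1}^T (\ln \tX_t - \E[\ln \tX_t \mid \pi_{t-1}]) + \sum_{t=1}^T(\E[\ln \tX_t \mid \pi_{t-1}] - \ln X_{t-1})$, bounding the second sum by Lemma~\ref{lem:bounded-expected-increase-main} and the first sum — which is exactly $Z_T$, a martingale with increments bounded by $2\eta$ by hypothesis — via the Azuma–Hoeffding inequality, giving the $\eta\sqrt{8T\ln(1/\lambda)}$ term with probability $1-\lambda$.
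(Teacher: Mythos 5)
Your argument is correct and is exactly the paper's own proof: Jensen's inequality applied to $\ln$, the hypothesis $\E[\tX_t\mid\pi_{t-1}]\le X_{t-1}(1+\eta c+2\eta^2)$, the bound $\ln(1+x)\le x$, and termwise summation over $t$. The additional remarks on adaptedness and positivity of $\tX_t$ are sound housekeeping but do not change the substance of the argument.
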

\begin{proof}
Fixing $\pi_T$ and taking any $t \leq T$, we have 
\begin{align*}
    \E_{\tilde{\pi}_t}\left[ \ln( \tX_{t} ) |\pi_{t-1}\right] &\le \ln\left(\E_{\tilde{\pi}_t}[ \tX_{t}|\pi_{t-1} ]\right), &\text{(Jensen's inequality)}\\
    &\le \ln(X_{t-1}(\pi_{t-1})) + \ln\left(1 + c\eta + 2\eta^2\right), & \text{(by assumption)}\\
    &\le \ln(X_{t-1}(\pi_{t-1})) + \left(c\eta + 2\eta^2\right). &\text{($\ln(1+x) \le x$ for any $ x> -1$)}
\end{align*}
Summing over every round $t \in [T]$ gives us the result.
\end{proof}

Now observe that for any $\pi_{t-1}$, we have $\E[\tZ_t| \pi_{t-1}] = Z_{t-1}$, so the process $\tZ_t$ is a martingale. Further, its increments are bounded by assumption. Recall Azuma's inequality for martingales with bounded increments (see e.g. \cite{dubhashi2009concentration}):
\begin{lemma}[Azuma's Inequality] \label{lem:azuma}
    For any martingale $\{\tZ_{t}\}_{t=1}^T$ with $|Z_{t}- Z_{t-1}| \le c$ a.s., for all $T$ it holds
    \[  
        \Pr\left[\tZ_T - \tZ_0 \ge \epsilon\right ] \le \exp\left( - \frac{\epsilon^2}{2 c^2 T}\right).
    \]
\end{lemma}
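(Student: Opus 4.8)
The plan is to prove this via the standard exponential-moment (Chernoff) method applied to the martingale difference sequence, which is the textbook route to Azuma--Hoeffding. First I would write $\tZ_T - \tZ_0 = \sum_{t=1}^T D_t$ with $D_t \equiv \tZ_t - \tZ_{t-1}$, and record the two facts we have: by the martingale property $\E[D_t \mid \mathcal{F}_{t-1}] = 0$, and by hypothesis $|D_t| \le c$ almost surely. For any $\lambda > 0$, Markov's inequality applied to the nonnegative random variable $e^{\lambda(\tZ_T - \tZ_0)}$ gives $\Pr[\tZ_T - \tZ_0 \ge \epsilon] \le e^{-\lambda \epsilon}\,\E[e^{\lambda(\tZ_T - \tZ_0)}]$, so it suffices to control the moment generating function of the sum.

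The key step is a per-increment bound: conditioned on $\mathcal{F}_{t-1}$, the variable $D_t$ is mean-zero and supported in $[-c,c]$, so Hoeffding's lemma yields $\E[e^{\lambda D_t}\mid \mathcal{F}_{t-1}] \le e^{\lambda^2 c^2/2}$. I would then peel the increments off one at a time using the tower property: $\E[e^{\lambda(\tZ_T - \tZ_0)}] = \E\big[e^{\lambda(\tZ_{T-1}-\tZ_0)}\,\E[e^{\lambda D_T}\mid\mathcal{F}_{T-1}]\big] \le e^{\lambda^2 c^2/2}\,\E[e^{\lambda(\tZ_{T-1}-\tZ_0)}]$, and iterating this $T$ times gives $\E[e^{\lambda(\tZ_T-\tZ_0)}] \le e^{T\lambda^2 c^2/2}$. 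Combining with the Markov step gives $\Pr[\tZ_T-\tZ_0\ge\epsilon]\le \exp(-\lambda\epsilon + T\lambda^2 c^2/2)$ for every $\lambda>0$, and optimizing this free parameter by taking $\lambda = \epsilon/(Tc^2)$ produces exactly $\exp(-\epsilon^2/(2c^2 T))$, as claimed.

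The only genuinely non-routine ingredient is Hoeffding's lemma itself, i.e. that a mean-zero random variable $X$ with $|X|\le c$ satisfies $\E[e^{\lambda X}]\le e^{\lambda^2 c^2/2}$; the standard argument bounds $e^{\lambda x}$ on $[-c,c]$ above by the chord through its endpoints (convexity), takes expectations to write $\E[e^{\lambda X}] \le e^{\psi(\lambda)}$ for an explicit $\psi$ with $\psi(0)=\psi'(0)=0$ and $\psi''(\lambda)\le c^2$, and concludes $\psi(\lambda)\le \lambda^2 c^2/2$ by a second-order Taylor expansion around $0$. Since this is entirely classical and the lemma is only ever invoked here on the concrete martingales $\tZ$ constructed elsewhere (whose bounded-increment property is checked separately, e.g.\ in Lemma~\ref{lem:martingale-bounded-mean}), in the write-up I would either simply cite \cite{dubhashi2009concentration} for the statement or include the short convexity argument; no adaptation to the online-learning setting is needed.
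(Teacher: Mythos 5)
Your proof is correct, but note that the paper does not supply a proof of this lemma at all: it records Azuma's inequality as a known result and simply cites a standard reference (\cite{dubhashi2009concentration}). Your argument is the canonical Chernoff--Hoeffding route --- decompose $\tZ_T - \tZ_0$ into martingale increments $D_t$, bound the conditional moment generating function of each increment by Hoeffding's lemma ($\E[e^{\lambda D_t} \mid \mathcal{F}_{t-1}] \le e^{\lambda^2 c^2/2}$ since $D_t$ is conditionally mean-zero and supported in $[-c,c]$), peel the increments via the tower property to get $\E[e^{\lambda(\tZ_T - \tZ_0)}] \le e^{T\lambda^2 c^2/2}$, apply Markov's inequality, and optimize $\lambda = \epsilon/(Tc^2)$ --- and the algebra closes exactly to $\exp(-\epsilon^2/(2c^2T))$. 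This is precisely the proof that the cited reference would give, so there is no deviation in approach, only in whether the proof is written out; the paper's choice to cite rather than reprove is appropriate since, as you observe, the only place a non-trivial argument is needed for this paper's purposes is in verifying the bounded-increment hypothesis for the particular martingales $\tilde{Z}_t$ it constructs (e.g.\ Lemma~\ref{lem:martingale-bounded-mean}), not in the inequality itself.
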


By assumption, we may apply Azuma's inequality with $c = 2\eta$, and we obtain
\begin{align*}
&    \Pr_{\tpi_T} \left[  \sum_{t=1}^T \left(\ln(X_t(\pi_t)) - \E_{\tpi_t}[\ln X_t(\tpi_t)|\pi_{t-1}]\right) \geq \epsilon \right] \leq \exp\left(-\frac{\epsilon^2}{8\eta^2 T}\right).
\intertext{
So, with probability $1-\lambda$, it holds that }
&\sum_{t=1}^T \left(\ln(X_t(\pi_t)) - \E_{\tpi_t}[\ln X_t(\tpi_t)|\pi_{t-1}]\right) \leq \eta \sqrt{8 T \ln\left(\frac{1}{\lambda}\right) }\\
\implies& \ln(X_T(\pi_T)) \le \ln(X_0) + \left(\sum_{t=1}^T \E_{\tpi_t}\left[ \ln( X_{t}(\tpi_t)) |\pi_{t-1}\right]  - \ln(X_{t-1}(\pi_{t-1}))\right) + \eta \sqrt{8 T \ln\left(\frac{1}{\lambda}\right) }\\
\implies& \ln(X_T(\pi_T)) \le \ln(X_0 ) + T \left(\eta c + 2\eta^2\right) + \eta \sqrt{8 T \ln\left(\frac{1}{\lambda}\right) },
\end{align*}
where the last inequality follows from Lemma~\ref{lem:bounded-expected-increase-main}.
\end{proof}

\martingalebounded*
\begin{proof} Observe that
\begin{align*}
    |Z_t - Z_{t-1}|
    =& \left|\ln(L_{t}(\pi_t)) - \E\left[ \ln( L_{t}(\tpi_t) ) |\pi_{t-1}\right]  \right|\\
    =& \left|\E \left[\ln\left(\frac{L_{t}(\pi_t)}{L_{t}(\tpi_t)}\right) \middle| \pi_{t-1} \right]\right|
\end{align*}
Note that for any $\pi_t$,
    $$L_t(\pi_{t}) = L_{t-1}(\pi_{t-1}) + \Delta_{t}(\pi_{t-1}, x_t, y_t, \bmu_t)$$
where:    
$$\Delta_{t}(\pi_{t\!-\!1}, x_t, y_t, \bmu_t)  \!=\! \sum_{\cG(x_t)}\exp(\eta V_{t-1}^{G,\Binv(\bmu_t)})\left(\exp(\eta (y_{t}-\bmu_t))\!-\!1\right) + \exp(\!-\eta V_{t-1}^{G,\Binv(\bmu_t)})\left(\exp(-\eta (y_{t}-\bmu_t))-1\right).$$
Since $y_t - \bmu_t$ must lie in $[-1,1]$, we have that:
$$(\exp(-\eta) - 1) \cdotk L_{t-1}(\pi_{t-1}) \le \Delta_{t}(\pi_{t-1}, x_t, y_t,\bmu_t) \le (\exp(\eta) -1) \cdotk L_{t-1}(\pi_{t-1})$$ which implies:
$$\exp(-\eta) \cdotk L_{t-1}(\pi_{t-1}) \le L_t(\pi_t) \le \exp(\eta) \cdotk L_{t-1}(\pi_{t-1}).$$
Hence, for any two transcripts $\pi_t, \pi'_t$ which are equal over the first $t-1$ periods, we have
\begin{align*}
    \left| \ln\left(\frac{L_t(\pi_{t})}{L_t(\pi'_{t})}\right) \right| \le \ln\left(\frac{\exp(\eta)}{\exp(-\eta)}\right) = 2\eta.
\end{align*}
Therefore, $\left|\E \left[\ln\left(\frac{L_{t}(\pi_t)}{L_{t}(\tpi_t)}\right) \middle| \pi_{t-1} \right]\right| \leq 2\eta$ as desired.
\end{proof}

\section{Proofs from Section \ref{sec:onlinemoment}}
\momentmulti*
\begin{proof}

From Observation \ref{obs:momentcalibration}, it suffices to show that: 
\begin{align*}
&\frac{1}{T} \E_{\tpi_T}\left[\max_{G \in \cG, i \in [n], j \in [n']}|\tV_T^{G,i,j}|\right]\leq \frac{1}{r n} + \frac{1}{rn'} + 2\cdotk \sqrt{\frac{2\ln(4|\cG|n\cdot n')}{T}},\\
&\frac{1}{T} \E_{\tpi_T}\left[\max_{G \in \cG, i \in [n], j \in [n']}|\tM_T^{G,i,j}|\right]  \leq \frac{1}{r n} + \frac{1}{rn'} + 2\cdotk \sqrt{\frac{2\ln(4|\cG|n\cdot n')}{T}}.
\end{align*}
We begin by computing a bound on the (exponential of) the expectation of the first quantity:
\begin{eqnarray*}
\exp\left(\eta \cdotk \E_{\tpi_T}[\max_{G,i,j}|\tV_T^{G,i,j}|]\right)&\leq&\E_{\tpi_T}\left[\exp\left(\eta\cdotk \max_{G,i,j}|\tV_T^{G,i,j}|\right)\right], \\
&=&\E_{\tpi_T}\left[\max_{G,i,j}\exp\left(\eta\cdotk|\tV_T^{G,i,j}|\right)\right], \\
&\leq& \E_{\tpi_T}\left[\max_{G,i,j}\left(\exp\left(\eta\cdotk \tV_T^{G,i,j}\right)+\exp\left(-\eta\cdotk \tV_T^{G,i,j}\right)\right)\right], \\
&\leq& \E_{\tpi_T}\left[\sum_{G,i,j}\left(\exp\left(\eta\cdotk \tV_T^{G,i,j}\right)+\exp\left(-\eta\cdotk \tV_T^{G,i,j}\right) + \exp\left(\eta\cdotk \tM_T^{G,i,j}\right)+\exp\left(-\eta\cdotk \tM_T^{G,i,j}\right)\right)\right] ,\\
&=& \E_{\tpi_T}[\tL_T], \\
&\leq& 4|\cG|n \cdot n' \cdot \cdotk \exp\left(\frac{T\eta}{rn}+\frac{T\eta}{rn'}+2T\eta^2\right).
\end{eqnarray*}
Here the first inequality follows from Jensen's inequality and the last one follows from Corollary \ref{cor:surrogateloss-mean-moment}.  Taking the log of both sides and dividing by $\eta\cdotk T$ we obtain
$$\frac{1}{T}\E_{\tpi_T}[\max_{G,i}|\tV_T^{G,i}|] \leq \frac{\ln(4|\cG|n \cdot n')}{\eta\cdotk T} + \frac{1}{rn} + \frac{1}{rn'} + 2\eta.$$
Choosing $\eta = \sqrt{\frac{\ln(4|\cG|n \cdot n')}{2T}}$, we have
$$ \frac{1}{T}\E_{\tpi_T}[\max_{G,i}|\tV_T^{G,i}|]\leq \frac{1}{rn} + \frac{1}{rn'}+ 2\cdotk \sqrt{\frac{2\ln(4|\cG|n\cdot n')}{T}}. $$
Repeating the same steps, we get an identical bound for $\frac{1}{T} \E_{\tpi_T}[\max_{G \in \cG, i \in [n], j \in [n']}|\tM_T^{G,i,j}|]$.\end{proof}

Now, given $\tilde{L}$, define $\tilde{Z}$ analogously to the second part of Theorem \ref{thm:general-bounds}. Next, we can show that the increments of $\tilde{Z}$ thus defined, at any round $t$, can be bounded.
\begin{lemma}
\label{lem:martingale-bounded-moment}
At any round $t \in [T]$ and for any realized transcript $\pi_t$, $|Z_t - Z_{t-1}|\le 2\eta.$
\end{lemma}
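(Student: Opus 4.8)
\textbf{Proof plan for Lemma~\ref{lem:martingale-bounded-moment}.}

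The plan is to mimic exactly the argument used for the mean case (Lemma~\ref{lem:martingale-bounded-mean}). By definition of $\tZ$, for any realized transcript $\pi_t$ we have
\[
|Z_t - Z_{t-1}| = \left| \ln(L_t(\pi_t)) - \E\big[\ln(L_t(\tpi_t)) \,\big|\, \pi_{t-1}\big] \right| = \left| \E\!\left[ \ln\!\left( \frac{L_t(\pi_t)}{L_t(\tpi_t)} \right) \,\middle|\, \pi_{t-1} \right] \right|,
\]
so it suffices to show that for any two transcripts $\pi_t, \pi_t'$ that agree on the first $t-1$ rounds, $\big|\ln(L_t(\pi_t)/L_t(\pi_t'))\big| \le 2\eta$. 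Since the expectation of a quantity bounded in $[-2\eta, 2\eta]$ is itself in $[-2\eta,2\eta]$, this gives the claim.

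First I would write $L_t(\pi_t) = L_{t-1}(\pi_{t-1}) + \Delta_t$, where $\Delta_t$ is the realized (not expected) increment, obtained from Definition~\ref{def:meanmomentloss}: each of the four exponential terms in the summand picks up a multiplicative factor $\exp(\pm\eta(y_t - \bmu_t))$ or $\exp(\pm\eta((y_t - \centerbucketmut)^k - \bmk_t))$, respectively. The key observation is that each of these four exponents lies in $[-\eta,\eta]$: indeed $y_t - \bmu_t \in [-1,1]$ since both lie in $[0,1]$, and $(y_t - \centerbucketmut)^k - \bmk_t \in [-1,1]$ as well, because for $k$ even and $y_t, \centerbucketmut \in [0,1]$ we have $(y_t - \centerbucketmut)^k \in [0,1]$, and $\bmk_t \in [0,1]$, so their difference lies in $[-1,1]$. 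Hence, term by term, the new $L_t(\pi_t)$ is sandwiched: $\exp(-\eta)\,L_{t-1}(\pi_{t-1}) \le L_t(\pi_t) \le \exp(\eta)\,L_{t-1}(\pi_{t-1})$.

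Since $\pi_t$ and $\pi_t'$ agree through round $t-1$, they share the same $L_{t-1}(\pi_{t-1})$, so both $L_t(\pi_t)$ and $L_t(\pi_t')$ lie in the interval $[\exp(-\eta)L_{t-1}, \exp(\eta)L_{t-1}]$, whence
\[
\left| \ln\!\left( \frac{L_t(\pi_t)}{L_t(\pi_t')} \right) \right| \le \ln\!\left( \frac{\exp(\eta)}{\exp(-\eta)} \right) = 2\eta,
\]
and taking the conditional expectation over $\tpi_t$ given $\pi_{t-1}$ yields $|Z_t - Z_{t-1}| \le 2\eta$. The only step requiring a moment's care — and the one I would flag as the (very mild) main obstacle — is verifying that all four perturbation exponents lie in $[-1,1]$, in particular the moment term $(y_t - \centerbucketmut)^k - \bmk_t$; this is where the evenness of $k$ and the fact that we center at $\centerbucketmut \in [0,1]$ (rather than at the empirical mean) are used, and it is exactly the point at which the argument differs cosmetically from the mean case.
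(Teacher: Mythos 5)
Your proposal is correct and follows essentially the same approach as the paper's own proof: both establish the per-round multiplicative sandwich $\exp(-\eta)L_{t-1}\le L_t\le\exp(\eta)L_{t-1}$ by bounding each of the four exponents in the increment by $\pm\eta$ (using $y_t-\bmu_t\in[-1,1]$ and $(y_t-\centerbucketmut)^k-\bmk_t\in[-1,1]$, the latter via evenness of $k$), then take the log of the ratio and pass the bound through the conditional expectation. The paper does not spell out the evenness-of-$k$ justification that you flag, but it is implicit and your remark is accurate.
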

\begin{proof}  Observe that
\begin{align*}
    |Z_t - Z_{t-1}|
    =& \left|\ln(L_{t}(\pi_t)) - \E\left[ \ln( L_{t}(\tpi_t) ) |\pi_{t-1}\right]  \right|\\
    =& \left|\E \left[\ln\left(\frac{L_{t}(\pi_t)}{L_{t}(\tpi_t)}\right) \middle| \pi_{t-1} \right]\right|
\end{align*}
Note that for any $\pi_t$,
    $$L_t(\pi_{t}) = L_{t-1}(\pi_{t-1}) + \Delta_{t}(\pi_{t-1}, x_t, y_t, \bmu_t,\bmk_t)$$
where:    
\begin{align*}
&\Delta_{t}(\pi_{t-1}, x_t, y_t, \bmu_t,\bmk_t) \\
=&\hphantom{+} \sum_{\cG(x_t)}\exp(\eta V_{t-1}^{G,\Binv(\bmu_t),\Binv(\bmk_t)})\left(\exp(\eta (y_{t}-\bmu_t))-1\right) + \exp(-\eta V_{t-1}^{G,\Binv(\bmu_t),\Binv(\bmk_t)})\left(\exp(-\eta (y_{t}-\bmu_t))-1\right),\\
& + \sum_{\cG(x_t)}\exp(\eta M_{t-1}^{G,\Binv(\bmu_t),\Binv(\bmk_t)})\left(\exp(\eta ((y_{t}-\centerbucketmut)^k -\bmk_t))-1\right) \\
&+ \exp(-\eta M_{t-1}^{G,\Binv(\bmu_t),\Binv(\bmk_t)})\left(\exp(-\eta ((y_{t}-\centerbucketmut)^k -\bmk_t))-1\right).
\end{align*}
Since $(y_t - \bmu_t)$ and $((y_{t}-\centerbucketmut)^k -\bmk_t)$ must lie in $[-1,1]$, we have that:
$$(\exp(-\eta) - 1) \cdotk L_{t-1}(\pi_{t-1}) \le \Delta_{t}(\pi_{t-1}, x_t, y_t,\bmu_t,\bmk_t) \le (\exp(\eta) -1) \cdotk L_{t-1}(\pi_{t-1})$$ which implies:
$$\exp(-\eta) \cdotk L_{t-1}(\pi_{t-1}) \le L_t(\pi_t) \le \exp(\eta) \cdotk L_{t-1}(\pi_{t-1}).$$
Therefore, for any two $\pi_t, \pi'_t$ such that the corresponding transcripts for the first $t-1$ periods is the same, we have
\begin{align*}
    \left| \ln\left(\frac{L_t(\pi_{t})}{L_t(\pi'_{t})}\right) \right| \le \ln\left(\frac{\exp(\eta)}{\exp(-\eta)}\right) = 2\eta.
\end{align*}
Therefore we have $\left|\E \left[\ln\left(\frac{L_{t}(\pi_t)}{L_{t}(\tpi_t)}\right) \middle| \pi_{t-1} \right]\right| \leq 2\eta$ as desired.
\end{proof}

\momenthp*
\begin{proof} 
By Lemma \ref{lem:martingale-bounded-moment}, the second part of Theorem \ref{thm:general-bounds} applies, and plugging in  $L_0 = 4|\cG|n \cdot n'$ and $c = \frac{1}{rn} + \frac{1}{rn'}$, we have that, with probability $(1-\lambda)$ over the randomness of the transcript:
 \begin{align*}
& \ln(L_T(\pi_T)) \le \ln(4|\cG|n \cdot n) + T \left(\frac{\eta}{rn}  + \frac{\eta}{rn'}+ 2\eta^2\right) + \eta \sqrt{8 T \ln\left(\frac{1}{\lambda}\right) }.
\end{align*}
Now, note that 
\begin{align*}
\exp\left(\eta\cdotk \max_{G,i,j}|V_T^{G,i,j}|\right) &=\max_{G,i,j}\exp\left(\eta\cdotk|V_T^{G,i,j}|\right), \\
&\leq \max_{G,i,j}\left(\exp\left(\eta\cdotk V_T^{G,i,j}\right)+\exp\left(-\eta\cdotk V_T^{G,i,j}\right)\right), \\
&\leq \sum_{G,i,j}\left(\exp\left(\eta\cdotk V_T^{G,i,j}\right)+\exp\left(-\eta\cdotk V_T^{G,i,j}\right) + \exp\left(\eta\cdotk M_T^{G,i,j}\right)+\exp\left(-\eta\cdotk M_T^{G,i,j}\right)\right) , \\
&= L_T(\pi_T). 
\end{align*}
By an analogous argument we have that $\exp\left(\eta\cdotk \max_{G,i,j}|M_T^{G,i,j}|\right) \leq  L_T(\pi_T) $.Taking log on both sides and dividing both sides by $\eta T$, we get
\begin{align*}
    \frac{1}{T} \max_{G,i} |V^{G,i,j}_{T}| \le \frac{1}{\eta T} \ln(L_T(\pi_T))
    \le \frac{\ln(4|\cG|n\cdot n')}{\eta\cdotk T} + \frac{1}{rn}  + \frac{1}{rn'}+ 2\eta + \sqrt{\frac{8  \ln\left(\frac{1}{\lambda}\right)}{T}}.
\end{align*}
Choosing $\eta = \sqrt{\frac{\ln(4|\cG|n \cdot n')}{2T}}$, we obtain:
\begin{align*}
\frac{1}{T} \max_{G,i,j}|V_T^{G,i,j}| &\le \frac{1}{rn} + \frac{1}{rn'}+ 2 \sqrt{\frac{2\ln(4|\cG|n \cdot n')}{T}} + \sqrt{\frac{8  \ln\left(\frac{1}{\lambda}\right)}{T}}\\
&\le \frac{1}{rn} + \frac{1}{rn'} + 4 \cdotk \sqrt{\frac{2}{T} \ln\left(\frac{4|\cG|n \cdot n'}{\lambda} \right)},\\
\intertext{and, by an analogous argument,} 
\frac{1}{T} \max_{G,i,j}|M_T^{G,i,j}| &\leq \frac{1}{rn} + \frac{1}{rn'} + 4 \cdotk \sqrt{\frac{2}{T} \ln\left(\frac{4|\cG|n \cdot n'}{\lambda} \right)},
\end{align*}
as desired.
\end{proof}

\lpapprox*

\begin{proof}
Let $(x^*, \gamma^*)$ be the optimal solution of the original LP. Consider the constraint of the original (resp.\ rational) LP associated with any row $j$ of matrix $A$ (resp.\ $\tilde{A}$). This constraint is written as $\sum_i a_{ji} x_i \leq \gamma$ in the original LP, and $\sum_i \tilde{a}_{ji} x_i \leq \gamma$ in the rational LP. Here and below, $i$ ranges over $[m]$. Now, we have that \[\sum_i \tilde{a}_{ji} x^*_i \leq \sum_i \left(a_{ji} + \frac{\epsilon}{2} \right) x^*_i = \sum_i a_{ji} x^*_i + \frac{\epsilon}{2} \sum_i x^*_i \leq \gamma^* + \frac{\epsilon}{2} \sum_i x^*_i = \gamma^* + \frac{\epsilon}{2}.\] Since this holds for any row $j$ of the matrix, then setting $x = x^*$ achieves value at most $\gamma^* + \frac{\epsilon}{2}$ with respect to the rational LP. 

Conversely, consider an optimal solution $(x^{*, r}, \gamma^{*, r})$ of the rational LP --- by the above, we immediately have $\gamma^{*, r} \leq \gamma^* + \frac{\epsilon}{2}$. We claim it achieves value at most $\gamma^* + \epsilon$ with respect to the original LP. Indeed, for any matrix row $j$,
\[\sum_i a_{ji} x^{*, r}_i \leq \sum_i \left(\tilde{a}_{ji} \!+\! \frac{\epsilon}{2} \right) x^{*, r}_i = \sum_i \tilde{a}_{ji} x^{*, r}_i + \frac{\epsilon}{2} \sum_i x^{*, r}_i = \sum_i \tilde{a}_{ji} x^{*, r}_i + \frac{\epsilon}{2} \leq \gamma^{*, r} + \frac{\epsilon}{2}  \leq \left(\gamma^* + \frac{\epsilon}{2}\right) + \frac{\epsilon}{2} = \gamma^* + \epsilon.\]
Therefore, by solving the rational LP, we obtain an $\epsilon$-approximate solution to the original LP, as desired.
\end{proof}

\momentepsmultivalidity*

\begin{proof}
We briefly argue that the additive $\epsilon$-approximation to the (shifted and rescaled) value of the game results in the claimed dependence of the multivalidity guarantees on $\epsilon$.
When the learner achieves an $\epsilon$ approximation to the value of the game at each round, the statement of Corollary~\ref{cor:exists-moment} becomes: 
\[
    \E_{Q^L_{s+1}}[\tL_{s+1}|\pi_s] \leq L_s\cdotk\left(1 + \frac{\eta}{rn} +\frac{\eta}{rn'}+  2\eta^2\right) + \eta \epsilon  \leq L_s\cdotk\left(1 + \frac{\eta}{rn} +\frac{\eta}{rn'}+  2\eta^2\right) + \epsilon.
\]
Indeed, recall that the linear program that we solve at each round solves for the value of the game that has been shifted by $2\eta^2 L_s$ \emph{and divided by $\eta$}. For the second inequality, recall that $\eta < 1$.

Now, using the telescoping argument from the first part of the proof of Theorem~\ref{thm:general-bounds}, we obtain 
\begin{align*}
\exp\left(\eta \cdotk \E_{\tpi_T}[\max_{G,(i,j)}|\tV_T^{G,(i,j)}|]\right) 
\leq& 4|\cG|n \cdot n' \left((1 + \frac{\eta}{rn} +\frac{\eta}{rn'}+  2\eta^2\right)^T + \epsilon\sum_{t=0}^{T-1} \left(1 + \frac{\eta}{rn} +\frac{\eta}{rn'}+  2\eta^2 \right)^t, \\
\leq& 4|\cG|n \cdot n' \left((1 + \frac{\eta}{rn} +\frac{\eta}{rn'}+  2\eta^2\right)^T +\epsilon T \left(1 + \frac{\eta}{rn} +\frac{\eta}{rn'}+  2\eta^2 \right)^T,\\
=& (4|\cG|n \cdot n' +\epsilon T) \exp\left(T\ln\left(1 + \frac{\eta}{rn} +\frac{\eta}{rn'} + 2\eta^2\right) \right) ,\\
\leq& (4|\cG|n \cdot n' +\epsilon T) \exp\left(\frac{T \eta}{rn} +\frac{T \eta}{rn'} + 2T\eta^2\right),
\end{align*}
Taking logs and dividing by $\eta T$, we get
\[\frac{1}{T}\E_{\tpi_T}[\max_{G,(i,j)}|\tV_T^{G,(i,j)}|] \leq \frac{\ln(4|\cG|n \cdot n' +\epsilon T)}{\eta T} +\frac{1}{rn} +\frac{1}{rn'}+ 2\eta.\] Setting the two terms involving $\eta$ equal, we have:
\[\eta = \sqrt{\frac{\ln(4|\cG|n \cdot n' +\epsilon T)}{2T}}.\] For this choice of $\eta$, we obtain the following \emph{in-expectation} multivalidity guarantee (and the same guarantee for the $M$'s):
\[\frac{1}{T}\E_{\tpi_T}[\max_{G,(i,j)}|\tV_T^{G,(i,j)}|] \leq \frac{1}{rn} +\frac{1}{rn'} + 2\sqrt{\frac{2\ln(4|\cG|n \cdot n' +\epsilon T)}{T}}.\]
Now, setting $\epsilon = \frac{\epsilon'}{T}$ for any desired $\epsilon' > 0$, we obtain the guarantee (and same for the $M$'s) that 
\[\frac{1}{T}\E_{\tpi_T}[\max_{G,(i,j)}|\tV_T^{G,(i,j)}|] \leq \frac{1}{rn} +\frac{1}{rn'} + 2\sqrt{\frac{2\ln(4|\cG|n \cdot n' +\epsilon')}{T}} \quad \text{ if we set } \eta = \sqrt{\frac{\ln(4|\cG|n \cdot n' +\epsilon')}{2T}},\] and the resulting runtime will be polynomial in $T$ and $\log \frac{1}{\epsilon}$ and thus polynomial in $T$ and $\log \frac{1}{\epsilon'}$.

Now, we show the \emph{high-probability} multivalidity guarantee. In the proof of Theorem~\ref{thm:general-bounds}, the statement of Lemma~\ref{lem:bounded-expected-increase-main} changes to:
\begin{restatable}{lemma}{highprobeps}
For any  $\pi_T$, we have
\[
        \sum_{t=1}^T \left(\E_{\tilde{\pi_t}}\left[ \ln( \tX_{t} )\middle| \pi_{t-1}\right]  - \ln(X_{t-1}(\pi_{t-1})) \right) \le T \left(\eta c + 2\eta^2 + \epsilon\right).
    \]    
\end{restatable}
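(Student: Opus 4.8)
The statement to prove is the $\epsilon$-perturbed version of Lemma~\ref{lem:bounded-expected-increase-main}, which I will call the high-probability helper lemma: for any transcript $\pi_T$,
\[
    \sum_{t=1}^T \left(\E_{\tilde{\pi}_t}\left[ \ln( \tX_{t} )\middle| \pi_{t-1}\right]  - \ln(X_{t-1}(\pi_{t-1})) \right) \le T \left(\eta c + 2\eta^2 + \epsilon\right).
\]
The plan is to simply repeat the proof of Lemma~\ref{lem:bounded-expected-increase-main} verbatim, but starting from the $\epsilon$-weakened one-step bound. Recall that in the approximate algorithms (Theorem~\ref{thm:momentalg-efficient}), solving the linear program only to accuracy $\epsilon$ gives the weakened recursion $\E[\tX_t|\pi_{t-1}] \le X_{t-1}(1 + \eta c + 2\eta^2) + \epsilon$ (or with $\eta\epsilon$, which since $\eta < 1$ is dominated by $\epsilon$). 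First I would fix $\pi_T$ and an arbitrary round $t \le T$. By Jensen's inequality, $\E_{\tilde\pi_t}[\ln \tX_t \mid \pi_{t-1}] \le \ln\big(\E_{\tilde\pi_t}[\tX_t \mid \pi_{t-1}]\big)$.

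Next I would plug in the perturbed one-step bound and use that $X_{t-1} \ge$ some positive constant (here the surrogate losses are bounded below — e.g. $L_{t-1} \ge 4|\cG|nn' \ge 1$, so dividing $\epsilon$ by $X_{t-1}$ only helps; more simply, one factors out $X_{t-1}$). Writing $\E_{\tilde\pi_t}[\tX_t\mid\pi_{t-1}] \le X_{t-1}\big(1 + \eta c + 2\eta^2 + \epsilon/X_{t-1}\big) \le X_{t-1}(1 + \eta c + 2\eta^2 + \epsilon)$ using $X_{t-1}\ge 1$, so that
\[
\E_{\tilde\pi_t}[\ln \tX_t\mid\pi_{t-1}] \le \ln(X_{t-1}(\pi_{t-1})) + \ln(1 + \eta c + 2\eta^2 + \epsilon) \le \ln(X_{t-1}(\pi_{t-1})) + \eta c + 2\eta^2 + \epsilon,
\]
where the last step uses $\ln(1+x) \le x$ for $x > -1$. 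Rearranging gives the per-round bound $\E_{\tilde\pi_t}[\ln\tX_t\mid\pi_{t-1}] - \ln(X_{t-1}) \le \eta c + 2\eta^2 + \epsilon$, and summing over $t = 1, \ldots, T$ yields the claim. The rest of the high-probability argument (the Azuma bound on the martingale $\tilde Z_t$, whose increments are still bounded by $2\eta$ since those only depend on the soft-max structure, not on how well the LP is solved) then goes through unchanged, producing the stated bound $\alpha \le \ldots + 2\epsilon$ after the choice of $\eta$ and a reparametrization $\epsilon \leftarrow \epsilon/T$ or similar.

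The only mild subtlety — and the step I would be most careful about — is making sure the additive $\epsilon$ (rather than multiplicative) is handled correctly when we divide through by $X_{t-1}$: we need $X_{t-1}$ bounded away from $0$, which holds because each surrogate loss $L_s$ is a sum of exponentials that is at least its number of terms (all exponents at a fresh bucket are $0$, contributing $1$ each, and exponentials are positive), so $L_s \ge L_0 \ge 1$; and we need to track whether the per-round slack is $\epsilon$ or $\eta\epsilon$ (the LP solves for the value shifted by $2\eta^2 L_s$ and divided by $\eta$, so the raw slack on $\Delta_{s+1}$ is $\eta\epsilon$, which is $\le \epsilon$). Neither is a real obstacle; this lemma is purely bookkeeping, and the honest statement is that the proof is "the proof of Lemma~\ref{lem:bounded-expected-increase-main} with an extra $+\epsilon$ carried through every line."
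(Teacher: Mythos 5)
Your proof is correct and follows essentially the same route as the paper's: Jensen's inequality, absorb the additive $\epsilon$ slack into the multiplicative factor using $X_{t-1}\ge 1$, then $\ln(1+x)\le x$. One small slip in your justification of $X_{t-1}\ge 1$: it is \emph{not} true that $L_s\ge L_0$ in general (the surrogate loss can decrease when some $V_s^{G,i}$ moves toward $0$), but the conclusion holds anyway because every summand $\exp(\eta V)+\exp(-\eta V)\ge 2$ by convexity, so $L_s\ge 2$ at every round.
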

\begin{proof}
Fixing $\pi_T$ and taking any $t \leq T$, we have 
\begin{align*}
    &\E_{\tilde{\pi}_t}\left[ \ln( \tX_{t} ) |\pi_{t-1}\right] \le \ln\left(\E_{\tilde{\pi}_t}[ \tX_{t}|\pi_{t-1} ]\right), &\text{(Jensen's inequality)}\\
    &\le \ln \left(X_{t-1}(\pi_{t-1}) \cdot \left(1 + c\eta + 2\eta^2\right) + \epsilon\right), & \text{(since we computed an $\epsilon$-approximation)}\\
    &\le \ln \left(X_{t-1}(\pi_{t-1}) \cdot \left(1 + c\eta + 2\eta^2\right)\right) + \frac{\epsilon}{X_{t-1}(\pi_{t-1}) \cdot \left(1 + c\eta + 2\eta^2\right)}, & \text{($\ln(x+y) \leq \ln(x) + \frac{y}{x}$ for $x, y \geq 0$)}\\
    &\le \ln(X_{t-1}(\pi_{t-1})) + \ln\left(1 + c\eta + 2\eta^2\right) + \epsilon, & \text{(since the loss satisfies $X_{t-1}(\pi_{t-1}) \geq 1$)}\\
    &\le \ln(X_{t-1}(\pi_{t-1})) + \left(c\eta + 2\eta^2 + \epsilon\right). &\text{($\ln(1+x) \le x$ for any $ x> -1$)}
\end{align*}
Summing over every round $t \in [T]$ gives us the result.
\end{proof}
Thus, the statement of the second part of Theorem~\ref{thm:general-bounds} becomes that with probability $1-\lambda$,
\[\ln(X_T(\pi_T)) \le \ln(X_0 ) + T \left(\eta c + 2\eta^2 + \epsilon\right) + \eta \sqrt{8 T \ln\left(\frac{1}{\lambda}\right) }.\]

Now, applying it to the setting at hand, we obtain:
\begin{align*}
& \ln(L_T(\pi_T)) \le \ln(4|\cG|n \cdot n') + T \left(\eta \left(\frac{1}{rn} +\frac{1}{rn'} \right) + 2\eta^2 + \epsilon\right) + \eta \sqrt{8 T \ln\left(\frac{1}{\lambda}\right) }.
\end{align*}

Thus, taking log on both sides and dividing both sides by $\eta T$, we get
\begin{align*}
    \frac{1}{T} \max_{G,i,j} |V^{G,(i,j)}_{T}| \le \frac{1}{\eta T} \ln(L_T(\pi_T))
    \le \frac{\ln(4|\cG|n \cdot n')}{\eta\cdotk T} + \frac{1}{rn} +\frac{1}{rn'} + 2\eta + \frac{\epsilon}{\eta} + \sqrt{\frac{8  \ln\left(\frac{1}{\lambda}\right)}{T}}.
\end{align*}
Choosing $\eta = \sqrt{\frac{\ln(4|\cG|n \cdot n') + \epsilon T}{2T}}$, we obtain (and the same holds for the $M$'s):
\begin{align*}
\frac{1}{T} \max_{G,i,j}|V_T^{G,i,j}| &\le \frac{1}{rn} +\frac{1}{rn'} + 2 \sqrt{\frac{2(\ln(4|\cG|n \cdot n') + \epsilon T)}{T}} + \sqrt{\frac{8  \ln\left(\frac{1}{\lambda}\right)}{T}}\\
&\le \frac{1}{rn} +\frac{1}{rn'} + 4 \cdotk \sqrt{\frac{2}{T} \ln\left(\frac{4|\cG|n \cdot n'}{\lambda} \right)  + 2\epsilon},
\end{align*}
as desired.
\end{proof}

\section{Proofs from Section \ref{sec:onlinemultivalid}}\label{app:intervalproofs}

\deltaboundinterval*

\begin{proof} 
We calculate:
\begin{eqnarray*}
 && \Delta_{s+1}(\pi_s,x_{s+1},(\bell_{s+1},\bu_{s+1}))\\
 &=& \E_{\ty_{s+1}}\left[\sum_{\cG(x_{s+1})}\exp(\eta V_s^{G,(i,j)})\left(\exp(\eta v_\delta((\bell_{s+1}, \bu_{s+1}), \ty_{s+1}))-1\right) + \exp(-\eta V_s^{G,(i,j)})\left(\exp(-\eta v_\delta((\bell_{s+1},\bu_{s+1}), \ty_{s+1})-1\right)\right] \\
 &\leq& \E_{\ty_{s+1}}\left[\sum_{\cG(x_{s+1})}\exp(\eta V_s^{G,(i,j)})\left(\eta v_\delta((\bell_{s+1},\bu_{s+1}), \ty_{s+1})+2\eta^2\right) + \exp(-\eta V_s^{G,(i,j)})\left(-\eta v_\delta((\bell_{s+1},\bu_{s+1}), \ty_{s+1})+2\eta^2\right)\right] \\
  &=& \eta (\E_{\ty_{s+1}}[v_\delta((\bell_{s+1},\bu_{s+1}), \ty_{s+1})])
   C^{i,j}_s + 2\eta^2 
  \sum_{\cG(x_{s+1})}\exp(\eta V_s^{G,(i,j)}) + \exp(-\eta V_s^{G,(i,j)}) \\
  &\leq& \eta (\E_{\ty_{s+1}}[v_\delta((\bell_{s+1},\bu_{s+1}), \ty_{s+1})])
   C^{i,j}_s + 2\eta^2 
  L_s,
\end{eqnarray*}
as desired. 
Here the first inequality follows from the fact that for $0 < |x| < \frac{1}{2}$, $\exp(x) \leq 1+x+2x^2$, the following equality from organizing terms and the final inequality by noting that $\sum_{\cG(x_{s+1})}\exp(\eta V_s^{G,(i,j)}) + \exp(-\eta V_s^{G,(i,j)}) \leq L_s$ by definition of $L$.  
\end{proof}

\intervalmulti*

\begin{proof} 
From Observation \ref{obs:multicoverage}, it suffices to show that $\frac{1}{T} \E_{\pi_T}[\max|V_T^{G,(i,j)}|] \leq \alpha$. 

We begin by computing a bound on the (exponential of) the expectation of this quantity:
\begin{eqnarray*}
\exp\left(\eta \cdotk \E_{\tpi_T}[\max_{G,(i,j)}|\tV_T^{G,(i,j)}|]\right)&\leq&\E_{\tpi_T}\left[\exp\left(\eta\cdotk \max_{G,(i,j)}|\tV_T^{G,(i,j)}|\right)\right] \\
&=&\E_{\tpi_T}\left[\max_{G,(i,j)}\exp\left(\eta\cdotk|\tV_T^{G,(i,j)}|\right)\right] \\
&\leq& \E_{\tpi_T}\left[\max_{G,(i,j)}\left(\exp\left(\eta\cdotk \tV_T^{G,(i,j)}\right)+\exp\left(-\eta\cdotk V_T^{G,(i,j)}\right)\right)\right] \\
&\leq& \E_{\tpi_T}\left[\sum_{G,(i,j)}\left(\exp\left(\eta\cdotk \tV_T^{G,(i,j)}\right)+\exp\left(-\eta\cdotk \tV_T^{G,(i,j)}\right)\right)\right] \\
&=& \E_{\tpi_T}[\tL_T(\tpi_T)] \\
&\leq& 2|\cG|n^2\cdotk \exp\left(T\eta\rho+2T\eta^2\right).
\end{eqnarray*}
Here the first inequality follows from Jensen's inequality and the last one follows from Lemma \ref{lem:surrogatelossinterval}. Taking the log of both sides and dividing by $\eta\cdotk T$ we obtain:
$$\frac{1}{T}\E_{\tpi_T}[\max_{G,(i,j)}|\tV_T^{G,(i,j)}|] \leq \frac{\ln(2|\cG|n^2)}{\eta\cdotk T} + \rho + 2\eta.$$
Choosing $\eta = \sqrt{\frac{\ln(2|\cG|n^2)}{2T}}$  we obtain:
$$ \frac{1}{T}\E_{\tpi_T}[\max_{G,(i,j)}|\tV_T^{G,(i,j)}|]\leq \rho + 2\cdotk \sqrt{\frac{2\ln(2|\cG|n^2)}{T}},$$
as desired.
\end{proof}

Now, given $\tilde{L}$, define $\tilde{Z}$ analogously to the second part of Theorem \ref{thm:general-bounds}. Next, we can show that the increments of $\tilde{Z}$ thusly defined, at any round $t$, can be bounded.
\begin{lemma}
\label{lem:martingale-bounded-interval}
At any round $t \in [T]$ and for any realized transcript $\pi_t$, $|Z_t - Z_{t-1}|\le 2\eta.$
\end{lemma}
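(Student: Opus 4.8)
The statement to prove is Lemma~\ref{lem:martingale-bounded-interval}: at any round $t \in [T]$ and for any realized transcript $\pi_t$, we have $|Z_t - Z_{t-1}| \le 2\eta$, where $\tilde Z$ is the martingale associated with $\tilde L$ as in the second part of Theorem~\ref{thm:general-bounds}, i.e.\ $\tilde Z_t = Z_{t-1} + \ln \tilde L_t - \E[\ln(\tilde L_t)\mid \pi_{t-1}]$. The plan is to mirror exactly the proofs of Lemma~\ref{lem:martingale-bounded-mean} and Lemma~\ref{lem:martingale-bounded-moment}: the key point is that the surrogate loss $L_t$ can change by at most a multiplicative factor of $e^{\pm\eta}$ from round to round, because each coverage-error increment $v_\delta((\bell_t,\bu_t), y_t)$ lies in $[-1,1]$.

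\textbf{Key steps.} First I would unwind the definition of $Z_t - Z_{t-1}$: from $\tilde Z_t = Z_{t-1} + \ln \tilde L_t - \E[\ln(\tilde L_t)\mid \pi_{t-1}]$ we get $Z_t - Z_{t-1} = \ln(L_t(\pi_t)) - \E[\ln(L_t(\tilde\pi_t))\mid \pi_{t-1}] = \E\!\left[\ln\!\left(\frac{L_t(\pi_t)}{L_t(\tilde\pi_t)}\right)\,\middle|\,\pi_{t-1}\right]$, where the expectation is over the adversary's remaining randomness (the draw of $y_t$ and, if relevant, the learner's sampling) consistent with the prefix $\pi_{t-1}$. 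So it suffices to bound $\left|\ln\!\left(\frac{L_t(\pi_t)}{L_t(\pi'_t)}\right)\right|$ uniformly over any two transcripts $\pi_t, \pi'_t$ agreeing on the first $t-1$ rounds. Second, I would write $L_t(\pi_t) = L_{t-1}(\pi_{t-1}) + \Delta_t$ where, by Definition~\ref{def:coverageerror} and the structure of the surrogate loss,
\begin{align*}
\Delta_t = \sum_{G \in \cG(x_t)} &\exp(\eta V_{t-1}^{G,(i,j)})\left(\exp(\eta\, v_\delta((\bell_t,\bu_t),y_t)) - 1\right) \\
&+ \exp(-\eta V_{t-1}^{G,(i,j)})\left(\exp(-\eta\, v_\delta((\bell_t,\bu_t),y_t)) - 1\right),
\end{align*}
with $(i,j)$ such that $(\bell_t,\bu_t) \in B_n(i,j)$. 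Third, since $v_\delta((\bell_t,\bu_t),y_t) = \mathrm{Cover}((\bell_t,\bu_t),y_t) - (1-\delta) \in [-1,1]$, each factor $\exp(\pm\eta\, v_\delta) - 1$ lies in $[\exp(-\eta)-1,\ \exp(\eta)-1]$, so $(\exp(-\eta)-1) L_{t-1}(\pi_{t-1}) \le \Delta_t \le (\exp(\eta)-1) L_{t-1}(\pi_{t-1})$, which gives $\exp(-\eta) L_{t-1}(\pi_{t-1}) \le L_t(\pi_t) \le \exp(\eta) L_{t-1}(\pi_{t-1})$. Finally, since $L_{t-1}$ depends only on $\pi_{t-1}$ which is common to $\pi_t$ and $\pi'_t$, we get $\left|\ln\!\left(\frac{L_t(\pi_t)}{L_t(\pi'_t)}\right)\right| \le \ln\!\left(\frac{\exp(\eta)}{\exp(-\eta)}\right) = 2\eta$, and hence $|Z_t - Z_{t-1}| = \left|\E\!\left[\ln\!\left(\frac{L_t(\pi_t)}{L_t(\tilde\pi_t)}\right)\,\middle|\,\pi_{t-1}\right]\right| \le 2\eta$.

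\textbf{Main obstacle.} There is essentially no difficulty here; the argument is a verbatim adaptation of the previous two analogous lemmas, and the only thing to check carefully is that the coverage-error increment $v_\delta$ indeed has range contained in $[-1,1]$ (it does, since $\mathrm{Cover} \in \{0,1\}$ and $1-\delta \in [0,1]$), so that the key multiplicative bound $e^{-\eta} \le L_t/L_{t-1} \le e^{\eta}$ goes through. The one place requiring a sentence of care is noting that $L_{t-1}$ is a constant once $\pi_{t-1}$ is fixed, so it cancels in the ratio; everything else is routine.
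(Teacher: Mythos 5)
Your proof is correct and matches the paper's own argument essentially verbatim: you unwind $Z_t - Z_{t-1}$ as a conditional expectation of a log-ratio, write $L_t = L_{t-1} + \Delta_t$, use $v_\delta \in [-1,1]$ to get $e^{-\eta}L_{t-1} \le L_t \le e^{\eta}L_{t-1}$, and conclude the ratio bound of $2\eta$. No gaps.
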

\begin{proof}
Observe that
\begin{align*}
    |Z_t - Z_{t-1}|
    =& \left|\ln(L_{t}(\pi_t)) - \E\left[ \ln( L_{t}(\tpi_t) ) |\pi_{t-1}\right]  \right|\\
    =& \left|\E \left[\ln\left(\frac{L_{t}(\pi_t)}{L_{t}(\tpi_t)}\right) \middle| \pi_{t-1} \right]\right|
\end{align*}
Note that for any $\pi_t$,
    $$L_t(\pi_{t}) = L_{t-1}(\pi_{t-1}) + \Delta_{t}(\pi_{t-1}, x_t, y_t, (\ell_t, \mu_t))$$
where:    
\begin{align*}
&\Delta_{t}(\pi_{t-1}, x_t, y_t, (\ell_t, u_t)) \\
=& \sum_{\cG(x_t)}\exp(\eta V_{t-1}^{G,\Bminv(\ell_t, u_t)})\left(\exp(\eta v_\delta((\ell_t, u_t), y_t))-1\right) + \exp(-\eta V_{t-1}^{G,\Bminv(\ell_t, u_t)})\left(\exp(-\eta v_\delta((\ell_t, u_t), y_t)-1\right).
\end{align*}
Since $v_\delta((\ell_t, u_t), y_t)$ must lie in $[-1,1]$ (actually $[-(1-\delta), \delta]$), we have that:
$$(\exp(-\eta) - 1) \cdotk L_{t-1}(\pi_{t-1}) \le \Delta_{t}(\pi_{t-1}, x_t, y_t,(\ell_t,u_t)) \le (\exp(\eta) -1) \cdotk L_{t-1}(\pi_{t-1})$$ which implies:
$$\exp(-\eta) \cdotk L_{t-1}(\pi_{t-1}) \le L_t(\pi_t) \le \exp(\eta) \cdotk L_{t-1}(\pi_{t-1}).$$
Therefore, for any two $\pi_t, \pi'_t$ such that the corresponding transcripts for the first $t-1$ periods are the same, we have
\begin{align*}
    \left| \ln\left(\frac{L_t(\pi_{t})}{L_t(\pi'_{t})}\right) \right| \le \ln\left(\frac{\exp(\eta)}{\exp(-\eta)}\right) = 2\eta.
\end{align*}
Therefore we have $\left|\E \left[\ln\left(\frac{L_{t}(\pi_t)}{L_{t}(\tpi_t)}\right) \middle| \pi_{t-1} \right]\right| \leq 2\eta$ as desired.
\end{proof}

\hpintervalcalibration*

\begin{proof}
By Lemma \ref{lem:martingale-bounded-interval}, the second part of Theorem \ref{thm:general-bounds} applies, and plugging in  $L_0 = 2|\cG|n^2$ and $c = \rho$, we have that, with probability $(1-\lambda)$ over the randomness of the transcript:
\begin{align*}
& \ln(L_T(\pi_T)) \le \ln(2|\cG|n^2) + T \left(\eta \rho+ 2\eta^2\right) + \eta \sqrt{8 T \ln\left(\frac{1}{\lambda}\right) }.
\end{align*}
Now, note that 
\begin{align*}
\exp\left(\eta\cdotk \max_{G,i,j}|V_T^{G,(i,j)}|\right) &=\max_{G,i,j}\exp\left(\eta\cdotk|V_T^{G,(i,j)}|\right), \\
&\leq \max_{G,i,j}\left(\exp\left(\eta\cdotk V_T^{G,(i,j)}\right)+\exp\left(-\eta\cdotk V_T^{G,(i,j)}\right)\right), \\
&\leq \sum_{G,i,j}\left(\exp\left(\eta\cdotk V_T^{G,(i,j)}\right)+\exp\left(-\eta\cdotk V_T^{G,(i,j)}\right) \right), \\
&= L_T(\pi_T). 
\end{align*}
Taking log on both sides and dividing both sides by $\eta T$, we get
\begin{align*}
    \frac{1}{T} \max_{G,i,j} |V^{G,(i,j)}_{T}| \le \frac{1}{\eta T} \ln(L_T(\pi_T))
    \le \frac{\ln(2|\cG|n^2)}{\eta\cdotk T} + \rho+ 2\eta + \sqrt{\frac{8  \ln\left(\frac{1}{\lambda}\right)}{T}}.
\end{align*}
Choosing $\eta = \sqrt{\frac{\ln(2|\cG|n^2)}{2T}}$, we obtain
\begin{align*}
\frac{1}{T} \max_{G,i,j}|V_T^{G,i,j}| &\le \rho+ 2 \sqrt{\frac{2\ln(2|\cG|n^2)}{T}} + \sqrt{\frac{8  \ln\left(\frac{1}{\lambda}\right)}{T}}\\
&\le \rho + 4 \cdotk \sqrt{\frac{2}{T} \ln\left(\frac{2|\cG|n^2}{\lambda} \right)},
\end{align*}
as desired.
\end{proof}

\intervalepsmultivalidity*

\begin{proof}
We briefly argue that the additive $\epsilon$-approximation to the (shifted and rescaled) value of the game results in the claimed dependence of the multivalidity guarantees on $\epsilon$.
When the learner achieves an $\epsilon$ approximation to the value of the game at each round, the statement of Corollary~\ref{cor:intervalexists} becomes: \[\E_{ (\ell,u) \sim Q^L_{s+1}}[\tL_{s+1}|\pi_s] \leq L_s\cdotk\left(1 + \eta\rho + 2\eta^2\right) + \eta\epsilon \leq L_s\cdotk\left(1 + \eta\rho + 2\eta^2\right) + \epsilon.\]
Indeed, recall that the linear program that we solve at each round solves for the value of the game that has been shifted by $2\eta^2 L_s$ \emph{and divided by $\eta$}. For the second inequality, recall that $\eta < 1$.

Now, using the telescoping argument from the first part of the proof of Theorem~\ref{thm:general-bounds}, we obtain 
\begin{align*}
\exp\left(\eta \cdotk \E_{\tpi_T}[\max_{G,(i,j)}|\tV_T^{G,(i,j)}|]\right) 
\leq& 2|\cG| n^2 \left(1 + \eta \rho  + 2\eta^2\right)^T + \epsilon\sum_{t=0}^{T-1} (1 + \eta \rho + 2 \eta^2)^t, \\
\leq& 2|\cG| n^2 \left(1 + \eta \rho  + 2\eta^2\right)^T +\epsilon T (1 + \eta \rho + 2 \eta^2)^T,\\
=& (2|\cG| n^2 +\epsilon T) \exp\left(T\ln\left(1 + \eta \rho  + 2\eta^2\right) \right) ,\\
\leq& (2|\cG| n^2 +\epsilon T) \exp\left(T \eta \rho + 2T\eta^2\right),
\end{align*}
Taking logs and dividing by $\eta T$, we get
\[\frac{1}{T}\E_{\tpi_T}[\max_{G,(i,j)}|\tV_T^{G,(i,j)}|] \leq \frac{\ln(2|\cG| n^2 + \epsilon T)}{\eta T} + \rho + 2\eta.\] Setting the two terms involving $\eta$ equal, we have:
\[\eta = \sqrt{\frac{\ln(2|\cG| n^2 + \epsilon T)}{2T}}.\] For this choice of $\eta$, we obtain the following \emph{in-expectation} multivalidity guarantee:
\[\frac{1}{T}\E_{\tpi_T}[\max_{G,(i,j)}|\tV_T^{G,(i,j)}|] \leq \rho + 2\sqrt{\frac{2\ln(2|\cG| n^2 + \epsilon T)}{T}}.\]
Now, setting $\epsilon = \frac{\epsilon'}{T}$ for any desired $\epsilon' > 0$, we obtain the guarantee that 
\[\frac{1}{T}\E_{\tpi_T}[\max_{G,(i,j)}|\tV_T^{G,(i,j)}|] \leq \rho + 2\sqrt{\frac{2\ln(2|\cG| n^2 + \epsilon')}{T}} \quad \text{ if we set } \eta = \sqrt{\frac{\ln(2|\cG| n^2 + \epsilon')}{2T}},\] and the resulting runtime will be polynomial in $T$ and $\log \frac{1}{\epsilon}$ and thus polynomial in $T$ and $\log \frac{1}{\epsilon'}$.

Now, we show the \emph{high-probability} multivalidity guarantee. In the proof of Theorem~\ref{thm:general-bounds}, the statement of Lemma~\ref{lem:bounded-expected-increase-main} changes to:
\highprobeps*
We show this updated claim in the proof of Lemma~\ref{lem:momentepsmultivalidity} of Section~\ref{sec:momentalg}.

Thus, the statement of the second part of Theorem~\ref{thm:general-bounds} becomes that with probability $1-\lambda$,
\[\ln(X_T(\pi_T)) \le \ln(X_0 ) + T \left(\eta c + 2\eta^2 + \epsilon\right) + \eta \sqrt{8 T \ln\left(\frac{1}{\lambda}\right) }.\]

Now, applying it to the setting at hand, we obtain:
\begin{align*}
& \ln(L_T(\pi_T)) \le \ln(2|\cG|n^2) + T \left(\eta \rho+ 2\eta^2 + \epsilon\right) + \eta \sqrt{8 T \ln\left(\frac{1}{\lambda}\right) }.
\end{align*}

Thus, taking log on both sides and dividing both sides by $\eta T$, we get
\begin{align*}
    \frac{1}{T} \max_{G,i,j} |V^{G,(i,j)}_{T}| \le \frac{1}{\eta T} \ln(L_T(\pi_T))
    \le \frac{\ln(2|\cG|n^2)}{\eta\cdotk T} + \rho+ 2\eta + \frac{\epsilon}{\eta} + \sqrt{\frac{8  \ln\left(\frac{1}{\lambda}\right)}{T}}.
\end{align*}

Choosing $\eta = \sqrt{\frac{\ln(2|\cG|n^2) + \epsilon T}{2T}}$, we obtain:
\begin{align*}
\frac{1}{T} \max_{G,i,j}|V_T^{G,i,j}| &\le \rho+ 2 \sqrt{\frac{2(\ln(2|\cG|n^2) + \epsilon T)}{T}} + \sqrt{\frac{8  \ln\left(\frac{1}{\lambda}\right)}{T}}\\
&\le \rho + 4 \cdotk \sqrt{\frac{2}{T} \ln\left(\frac{2|\cG|n^2}{\lambda} \right)  + 2\epsilon},
\end{align*}
as desired.
\end{proof}

\end{document}